\newcommand*\circled[1]{\tikz[baseline=(char.base)]{
            \node[shape=circle,draw,inner sep=0.5pt] (char) {#1};}}
\newtheorem{theorem}{Theorem}[section]
\newtheorem{proposition}[theorem]{Proposition}
\theoremstyle{definition}
\newtheorem{definition}[theorem]{Definition}
\theoremstyle{remark}
\newenvironment{remarkqed}
  {\pushQED{\qed}\remark}
  {\popQED\endremark}
\begin{document}

\title{h-calibration: Rethinking Classifier Recalibration with Probabilistic Error-Bounded Objective}

\author{Wenjian~Huang, Guiping~Cao, Jiahao~Xia, Jingkun~Chen,
        Hao~Wang, and~Jianguo~Zhang
        
\IEEEcompsocitemizethanks{
\IEEEcompsocthanksitem W. Huang, G. Cao, H. Wang and J. Zhang are with Research Inst. of Trustworthy Autonomous Systems \& Dept. of Computer Science and Engineering, SUSTech, China. E-mail: wjhuang@pku.edu.cn (also: \protect\\ huangwj@sustech.edu.cn); \{12131099,12232399\}@mail.sustech.edu.cn; zhangjg@sustech.edu.cn. \textit{(Corresponding author: J. Zhang)}
\IEEEcompsocthanksitem J. Zhang is also with Guangdong Provincial Key Laboratory of Brain-inspired Intelligent Computation, Dept. of Computer Science and Engineering, SUSTech, and Peng Cheng Lab, China.
\IEEEcompsocthanksitem J. Xia is with Faculty of Engineering and Information Technology, University of Technology Sydney, Australia and J. Chen is with Inst. of Biomedical Engineering, Dept. of Engineering Science, University of Oxford, UK. E-mail: Jiahao.Xia-1@uts.edu.au; jingkun.chen@eng.ox.ac.uk.
}

\thanks{This work was supported in part by the Shenzhen Postdoctoral Research Grant (granted to Wenjian Huang), National Natural Science Foundation of China (Grant No. 62276121), the TianYuan funds for Mathematics of the National Science Foundation of China (Grant No. 12326604) and the Shenzhen International Research Cooperation Project (Grant No. GJHZ20220913142611021).}
}

%
%

\markboth{}%
{Shell \MakeLowercase{\textit{et al.}}: Bare Demo of IEEEtran.cls for Computer Society Journals}
%



\IEEEtitleabstractindextext{%
\begin{abstract}
Deep neural networks have demonstrated remarkable performance across numerous learning tasks but often suffer from miscalibration, resulting in unreliable probability outputs. This has inspired many recent works on mitigating miscalibration, particularly through post-hoc recalibration methods that aim to obtain calibrated probabilities without sacrificing the classification performance of pre-trained models. In this study, we summarize and categorize previous works into three general strategies: intuitively designed methods, binning-based methods, and methods based on formulations of ideal calibration. Through theoretical and practical analysis, we highlight ten common limitations in previous approaches. To address these limitations, we propose a probabilistic learning framework for calibration called $h$-calibration, which theoretically constructs an equivalent learning formulation for canonical calibration with boundedness. On this basis, we design a simple yet effective post-hoc calibration algorithm. Our method not only overcomes the ten identified limitations but also achieves markedly better performance than traditional methods, as validated by extensive experiments. We further analyze, both theoretically and experimentally, the relationship and advantages of our learning objective compared to traditional proper scoring rule. In summary, our probabilistic framework derives an approximately equivalent differentiable objective for learning error-bounded calibrated probabilities, elucidating the correspondence and convergence properties of computational statistics with respect to theoretical bounds in canonical calibration. The theoretical effectiveness is verified on standard post-hoc calibration benchmarks by achieving state-of-the-art performance. This research offers valuable reference for learning reliable likelihood in related fields. The code is available at https://github.com/WenjianHuang93/h-Calibration.
\end{abstract}

\begin{IEEEkeywords}
Confidence Calibration, Canonical Calibration, Post-hoc Recalibration, Deep Learning, Reliable Likelihood Learning.
\end{IEEEkeywords}}

\maketitle

\IEEEdisplaynontitleabstractindextext

%
\IEEEpeerreviewmaketitle

\IEEEraisesectionheading{\section{Introduction}}
\label{sec:introduction}

%
%
%
%
\IEEEPARstart{T}{he} notion of calibration study has a rich history, with roots going back to the weather or general statistical forecasting \cite{no.155,no.156,no.157}, and predates the birth of machine learning by decades \cite{no.5,no.33,no.68}. For classification task, two fundamental and complementary criteria by which we judge the quality and reliability of a probabilistic predictor are accuracy and calibration \cite{no.11,no.66,no.79,no.45}. A probabilistic predictor is considered ``well calibrated" when its predicted probabilities closely align with the actual likelihoods/frequencies of the corresponding events \cite{no.5,no.21,no.158,no.46,no.79,no.64,no.11,no.51,no.22}. A relatively narrow concept is confidence calibration (or termed top-label calibration by \cite{no.16,no.133,no.23,no.18,no.51,no.52,no.58,no.60,no.75}), which refers to the predicted confidence (maximal classification probability) matching the likelihood of correct class assignments \cite{no.50,no.4,no.77,no.34,no.58,no.66,no.60,no.37,no.16}. For example, among samples on which the model predicts a class with 0.9 probability confidence, approximately 90\% of them should indeed be classified correctly. Calibration ensures that machine learning models provide meaningful and interpretable predicted probabilities, consistent with realized outcomes, making it a key mathematical formulation for model reliability \cite{no.13,no.76,no.24,no.30,no.16,no.159,no.158,no.76,no.182}. This importance is underscored in various fields, including healthcare diagnosis \cite{no.160,no.60,no.41,no.161,no.162,no.33,no.163,no.164,no.165,no.166}, meteorological forecasting \cite{no.156,no.167,no.157,no.168,no.169}, economics analysis \cite{no.170,no.171,no.172}, natural language processing \cite{no.144,no.173,no.174}, fairness studies \cite{no.175,no.176,no.177} and many others \cite{no.178,no.179,no.180,no.181}. Moreover, since well-calibrated probability enables adjusting decision rules in a standardized way \cite{no.5}, such as applying universal decision threshold, it has been shown to be beneficial in numerous machine learning scenarios, including knowledge distillation \cite{no.26,no.208}, curriculum learning \cite{no.42}, multimodal learning \cite{no.45}, out-of-distribution learning \cite{no.82,no.56,no.190}, object detection \cite{no.25,no.65,no.95} and segmentation \cite{no.41,no.57}, domain adaptation \cite{no.183}, dynamic network learning \cite{no.15}, reinforcement learning \cite{no.184}, zero-shot learning \cite{no.185}, ensemble learning \cite{no.186}, improving explainability \cite{no.187} and active learning \cite{no.188,no.189}. Unfortunately, many machine learning models lack inherent calibration \cite{no.109,no.21,no.5,no.71,no.46}. Furthermore, the seminal work of Guo et al. \cite{no.109} empirically demonstrated that \emph{popular modern neural networks often suffer from severer miscalibration issue}, particularly tending towards over-confidence, than shallow models, despite having significantly improved accuracy in diverse classification tasks over the past decade. This has subsequently inspired many recent works on model calibration for deep classifiers.

\subsection{Training-Time Calibration}
The underlying cause for overconfidence of modern neural networks is hypothesised or empirically observed to be associated with model overfitting \cite{no.1,no.12,no.16,no.22,no.24,no.41,no.52,no.55,no.60,no.62,no.64,no.71,no.109}, or the assignment of high confidence to misclassified samples (generally unobserved events/outcomes) \cite{no.12,no.18,no.47,no.64,no.66}. Given these observations, numerous regularization or ensemble techniques have been employed during training of the target task to alleviate overfitting or prevent overconfident predictions by penalizing high-confidence outputs \cite{no.3,no.16,no.47,no.67} or by encouraging high entropy \cite{no.13,no.16,no.48,no.58,no.60,no.64,no.73,no.78} of the predicted distribution. Specific approaches include applying \emph{implicit regularizations}, such as mixup \cite{no.3,no.6,no.24,no.32,no.49,no.55,no.73,no.90,no.91}, label smoothing \cite{no.72,no.93,no.111,no.41,no.42,no.55}, early stopping \cite{no.12}, weight decay \cite{no.109,no.24}, \emph{explicit regularization} terms, such as entropy-regularized loss (ERL) \cite{no.94}, S-AvUC loss \cite{no.58}, focal loss \cite{no.64},  norm in function/logit space \cite{no.53,no.103}, VWCI loss \cite{no.102}, CS-KD loss \cite{no.107}, DWB loss \cite{no.48}, as well as model ensembling \cite{no.2,no.70,no.113,no.106,no.114,no.115,no.20,no.24,no.33,no.77,no.139}. Addtionally, some modality-specific augmentation techniques, e.g, AutoLabel \cite{no.43}, Augmix \cite{no.95}, Cutmix \cite{no.32}, and augmentation methods in \cite{no.24,no.34,no.104}, or structure-dependent regularization techniques, e.g., LRSA \cite{no.18} or SGPA \cite{no.44} for Transformer-based models, have been found beneficial in mitigating overconfidence. However, these empirical augmentation, regularization or ensemble techniques based on reducing overfitting, increasing entropy, or discouraging overconfident outputs still lack compelling theoretical guarantees, such as an inherent direct connection to miscalibration metrics \cite{no.63}. This leads to controversies regarding the effectiveness of related methods in many scenarios, as detailed in \emph{ Appendix \ref{sec:controversies_training_regularizer}} with negative reports \cite{no.90,no.57,no.2,no.24,no.48,no.120,no.12,no.119,no.61,no.53,no.3,no.47} on these strategies' effectiveness.
Moreover, these modified training schemes requires retraining models for recalibration. This incurs high computational costs and diminishes their effectiveness, especially when the model has already been deployed in real-world scenarios \cite{no.40,no.7,no.16,no.60,no.1,no.29}. More importantly, modified training schemes suffer from a decline in classification accuracy compared to the original models specifically trained for improved classification \cite{no.24,no.29,no.33,no.53}.

\subsection{Post-Hoc Recalibration}
Differing from above training-time calibration studies, another category of methods is post-hoc recalibration\footnote{Recalibration inherently implies a post-hoc context. We use `calibration' and `recalibration' interchangeably in post-hoc setting.} \cite{no.1,no.17,no.20,no.21,no.23,no.29,no.34,no.40,no.46,no.57,no.60,no.67,no.68,no.70,no.71,no.75,no.78,no.80,no.85,no.86,no.87,no.78,no.80,no.85,no.86,no.87,no.88,no.89,no.109,no.124,no.109,no.124,no.109,no.124,no.125,no.126,no.127,no.128,no.133,no.135,no.136,no.138,no.145,no.146,no.147,no.150,no.151}, which rectify miscalibrated predictions by applying calibration mapping (fitted on a held-out dataset) to the output (probabilities or logits) of an already trained model. As post-hoc methods can reduce calibration error without requiring retraining and potentially keeping classification accuracy (when strictly monotonic mappings applied), it has become a primary research direction in calibration studies. Recalibrators are frequently acquired through the optimization of \emph{proper scoring rules (PSR)}, such as cross-entropy loss \cite{no.20,no.40,no.57,no.67,no.78,no.80,no.109} (also referred to as logarithmic score, ignorance score, predictive deviance, negative Shannon entropy, NLL for negative log-likelihood) and mean squared error loss \cite{no.21,no.23,no.70,no.75,no.86,no.88} (MSE, also called Brier or quadratic score). This preference is grounded in the theoretical underpinning that the expected score of a PSR is minimized when the model's sample-wise classification probabilities align with the actual probabilities, which is indeed calibrated. \emph{However, extensive research has revealed empirically that optimizing common PSR can produce miscalibrated predictions} \cite{no.24,no.1,no.16,no.22,no.52,no.58,no.60,no.62,no.64,no.14-1,no.24,no.41,no.76,no.13,no.53,no.63}. This emphasizes the central role of learning objective design in calibration.
\emph{As a response to the theoretical preference-experimental finding discrepancy, this paper will offer a potential theoretical explanation in Section \ref{sec:psranalysis} and \ref{method:sec5} to illustrate why PSR is not suitable as a learning objective for recalibration.} It will be shown that scoring rules are prone to overfitting, influenced by unquantifiable approximation errors, thereby giving rise to uncontrollable calibration errors. In contrast, the estimation error of our learning objective is both quantifiable and controllable, ensuring manageable calibration errors. Detailed comparison with PSR will be presented in Section \ref{method:sec5} and \ref{sec:ablation-psr-comparison}.

\subsection{Categorizing Calibration Learning Strategies}
Due to the inadequacy in calibration by directly optimizing networks using PSRs, various alternative learning strategies have been proposed in the literature, encompassing both posthoc and training phases. After systematically reviewing the existing literature, we broadly classify relevant works into the following \emph{three categories}: \textbf{(1)} \emph{Intuitively designed or empirically validated methods}, such as DFL \cite{no.13}, CALL \cite{no.24}, SCTL \cite{no.29}, MHML \cite{no.33}, ATTA \cite{no.34}, SBTS \cite{no.40}, AutoLabel \cite{no.43}, LECE \cite{no.46}, DWB \cite{no.48}, IFL \cite{no.53}, EOW-Softmax \cite{no.54}, MiSLAS \cite{no.55}, AvUC \cite{no.62}, S-AvUC \cite{no.58}, DBLE \cite{no.63}, FL \cite{no.64}, CRL \cite{no.66}, Relaxed Softmax \cite{no.81}, ERL \cite{no.94}, VWCI \cite{no.102}, CS-KD \cite{no.107}, MbLS \cite{no.111}, AdaTS \cite{no.128}, GSD \cite{no.130}, ATS \cite{no.147}, and the approaches in \cite{no.14-0,no.42,no.78,no.103,no.150}; or methods based on posterior estimation of parametric models, such as Dirichlet \cite{no.71}, GP \cite{no.68}, GPD \cite{no.84}, Beta \cite{no.85}, Bayes-Iso \cite{no.127}, and the approach in \cite{no.151}. It is noteworthy that these methods, classified as intuitively designed or empirically validated, do not imply no theory involved, but exhibit a deficiency in establishing inherent direct connection with common calibration evaluation metrics; \textbf{(2)} \emph{Non-parametric binning-based methods}, inspired by binning-based calibration error evaluators, aligning average confidences and the frequencies of event occurrence, such as I-Max \cite{no.1}, PCS \cite{no.12}, EC \cite{no.17}, Mix-n-Match \cite{no.70}, SB-ECE \cite{no.58}, Scaling-binning \cite{no.75}, Histogram Binning \cite{no.86}, BBQ \cite{no.87}, Isotonic Regression \cite{no.88}, ENIR \cite{no.89}, DECE \cite{no.99}, DCA \cite{no.100}, MDCA \cite{no.101},  RB \cite{no.125}, CBT \cite{no.126}, M2B \cite{no.133}, and \cite{no.21}; \textbf{(3)} Methods grounded in \emph{equivalent formulation of ideal calibration}, including ESD \cite{no.22}, KDE-XE \cite{no.52}, Spline \cite{no.60}, and MMCE \cite{no.83}.

\subsection{Limitations for Existing Calibrators}

\subsubsection{Theoretical Gaps}
However, above three learning strategies confront specific theoretical challenges. As mentioned above, the \emph{first category} of methods suffer from a \textbf{deficiency in statistical guarantees to establish a bridge between the learning objective and common evaluation criteria} (\textbf{limitation \#1}). 

The \emph{second category} of strategies is prone to overfitting. Specifically, unlike common PSRs assigning an anchor target (e.g., one-hot vector) to each predicted probability, binning-based methods aim to align the mean statistics of bin-wise confidences and event occurrences. This absence of a unique target per prediction can cause overfitting to the binning scheme, 
failing to guarantee genuinely effective calibration. For instance, predictions with zero calibration error are numerous under specific binning setups, and the error under another binning setup is not assured to be small. The variability under different binnings has been observed in many studies \cite{no.50,no.51,no.52,no.31,no.60,no.70}. Researchers broadly term the problem arising from the non-differentiability of binning, making binning-based methods unsuitable as gradient-based optimization objectives, as the ``non-differentiability" problem \cite{no.12,no.19,no.52,no.58,no.101}. This problem worsens with small batch sizes, potentially due to the impact of the bias-variance tradeoff induced by binning operations \cite{no.1,no.21,no.52,no.70,no.76,no.51}, resulting in a larger bias in bin-wise statistics with fewer bin-wise samples. Here, we prefer to frame this problem as a form of \textbf{overfitting}, as \textbf{achieving low (even zero) calibration error under a specific binning is only a necessary condition for being well-calibrated} (\textbf{limitation \#2}). In such cases, the intrinsic calibrated error is uncontrolled, posing the problem of underestimation, as empirically validated by \cite{no.79,no.75,no.68,no.1,no.74}. This susceptibility to overfitting makes binning-based metrics unsuitable as learning objectives, though they can serve as common evaluators. In \emph{Appendix \ref{sec:comparing_nll_ece_training}}, we demonstrate 
shows further evidence using cross-entropy and binning-based ECE as post-hoc learning objectives. Although some empirical variants, such as SB-ECE \cite{no.58} and DECE \cite{no.99}, have been proposed to address the non-differentiability of hard binning, along with evaluators like FCE \cite{no.37}, the effectiveness of these soft variants still lacks clear theoretical guarantees. Additionally, determining the membership function of soft binning, which can greatly affect results, remains challenging. 

The \emph{third category} of strategies solely outlines the equivalent forms associated with ideally calibrated probabilities, treating them as learning objectives. In practice, however, probabilities are not perfectly calibrated \cite{no.27,no.5,no.136} due to the influence of various inductive biases, such as network structure \cite{no.11}. Achieving ideally zero calibration error with loss function forms theoretically equivalent to perfect calibration is essentially unattainable. Accordingly, the \textbf{character and extent of imperfect calibration in the calibrated probabilities, with respect to objective statistics, remain unclear} (\textbf{limitation \#3}). The study of equivalent forms of realistically imperfectly calibrated probabilities is still underexplored. Furthermore, existing strategies in modeling ideal calibration, i.e., $P(y_q|p_q(x)=v)=v$, involve empirical estimation of distributions or conditional distributions and subsequently deriving the learning objectives, creating inherent tradeoff. Specifically, when focusing on one-dimensional distribution cases, like top-label calibration (e.g., ESD \cite{no.22}, MMCE \cite{no.83}, where $q$ represents the top-label variable), or classwise calibration (e.g., Spline \cite{no.60}, where $q$ represents the class variable), errors in empirical distribution or expectation estimations in one-dimension are relatively controllable. However, the drawback is that both top-label calibration and classwise calibration are \textbf{weaker than canonical calibration} (\textbf{limitation \#4}). In contrast, focusing on high-dimensional situations for canonical calibration, such as the optimization objective in KDE-XE \cite{no.52}, and calibration error evaluators in KDE-ECE \cite{no.70} and SKCE \cite{no.79}, presents the \textbf{challenge of the curse of dimensionality} \cite{no.52,no.70,no.46,no.88} (\textbf{limitation \#5}). For instance, the estimation of conditional density $\mathbb E[Y|p(X)]$ in KDE-XE \cite{no.52} and KDE-ECE \cite{no.70}, as well as SKCE's estimation of the joint distribution of $(e_Y, p(X))$ (where $e_Y$ represents the one-hot class vector), entails biases that are difficult to control in high-dimensional cases (e.g., ImageNet task with a dimensionality of 1000). To illustrate, the expected bias in canonical calibration error estimates based on kernel density estimation (KDE) increases with the increase in class dimensions, necessitating an exponential growth in sample size to counteract this bias growth (see theorem in \cite{no.70}). In comparison, our approach can model strong canonical calibration while avoiding the issue of high-dimensional density estimation, providing an equivalent constraint form for one-dimensional scalar directly from asymptotic theory.
 
It is noteworthy that \textbf{limitation \#4} and \textbf{limitation \#5} \emph{are not confined to methods in category} \textbf{(3)}; \emph{rather, they extend to} \textbf{(1)} \emph{intuitively designed} or \textbf{(2)} \emph{binning-based strategies.} Concerning \textbf{limitation \#4}, numerous methods focus on top-label or classwise calibration rather than canonical calibration in their modeling approaches and evaluations. \emph{Appendix \ref{sec:addtional_comment_limit4}} provides a summary for these numerous studies. Regarding \textbf{limitation \#5}, binning-based or empirical methods that directly model the predicted probabilistic vector in high-dimension can also encounter this issue. As discussed in \cite{no.52,no.65,no.70,no.74,no.133}, high-dimensional binning, like the Sierpinski, Grid-style, or projection-based binning in \cite{no.133}, or other strategies directly constructing estimators and constraints from high-dimensional neighborhoods, as in \cite{no.46}, inherently suffer from the curse of dimensionality. This issue underscores a substantial need for extensive data to counterbalance the sparsity in sample distribution resulting from the increased dimensionality.

\subsubsection{Methodological Dependencies}
In addition to the aforementioned theoretical challenges, we identify five other common deficiencies shared across \emph{different categories} of methods. Due to page limit, we briefly summarize them in this and the next section, categorized as methodological and practical issues. Detailed discussions can be found in \emph{Appendix \ref{sec:limitationdiscussion}}. \textbf{Limitation \#6} highlights the \textbf{reliance on many unverified assumptions} to achieve calibration, such as assuming Gaussian, Beta, or Dirichlet distributions for learned representations, as seen in \cite{no.4, no.68, no.84, no.44, no.28, no.106, no.19, no.49, no.50, no.85, no.151, no.71, no.87, no.52}. These assumptions can even \emph{contradict} one another across different studies.
\textbf{Limitation \#7} emphasizes that many methods involve specific \textbf{settings or hyperparameters that are non-universal or difficult to determine} directly through theory or experience. Examples include choices within implicitly regularized strategies, such as augmentation, ensemble, and other configurations \cite{no.3, no.6, no.24, no.32, no.49, no.55, no.73, no.90, no.91, no.72, no.93, no.111, no.41, no.42, no.55, no.12, no.109}; binning configurations \cite{no.1, no.12, no.17, no.70, no.58, no.75, no.86, no.87, no.88, no.89, no.99, no.100, no.101, no.125, no.126, no.133, no.21}; kernel selection \cite{no.83, no.79, no.19, no.52, no.70}; and the weighting of explicit regularizers \cite{no.12, no.16, no.19, no.22, no.41, no.52, no.54, no.58, no.62, no.66, no.83, no.99, no.100, no.101, no.102, no.103, no.107, no.111, no.128, no.145, no.150}. 

\subsubsection{Practical Limitations}
\textbf{Limitation \#8} pertains to the \textbf{trade-off between the probabilistic unit measure property and calibration}. Many methods fail to ensure the unit measure property when implementing calibration, and additional rectified normalization may lead to probabilities that are no longer calibrated, as noted in \cite{no.5, no.71, no.46, no.57, no.1}. For example, methods in \cite{no.1, no.70, no.145, no.60, no.75, no.133, no.88, no.46} are prone to this issue. Furthermore, many calibrators that were initially proposed for binary 
classifier
\cite{no.86, no.87, no.89, no.85, no.21, no.125, no.151, no.127}, when extended to multiclass calibration using strategies like one-vs-rest \cite{no.65}, also face this problem. \textbf{Limitation \#9} concerns the issue of \textbf{non-accuracy preservation} for calibration methods. First, methods employing modified training schemes \cite{no.3, no.4, no.6, no.12, no.13, no.14-0, no.16, no.18, no.19, no.22, no.24, no.28, no.33, no.39, no.41, no.42, no.43, no.44, no.48, no.49, no.52, no.53, no.54, no.55, no.58, no.61, no.62, no.63, no.64, no.66, no.72, no.73, no.81, no.83, no.84, no.90, no.91, no.93, no.95, no.99, no.100, no.101, no.102, no.103, no.104, no.106, no.107, no.111, no.120, no.129, no.130, no.139} inherently do not guarantee accuracy preservation. Secondly, many post-hoc methods \cite{no.1, no.21, no.71, no.85, no.86, no.68, no.87, no.88, no.89, no.151, no.125, no.127, no.146, no.150, no.109, no.70, no.75, no.126, no.133} fail to ensure monotonicity in the recalibration mapping of the sample-wise probabilities, frequently leading to decreased classification accuracy. \textbf{Limitation \#10} pertains to the issue of \textbf{applicability}. Some calibration methods are limited to specific models or require modifications to the original network structure or training procedures, e.g., \cite{no.12, no.15, no.18, no.19, no.23, no.24, no.28, no.33, no.44, no.54, no.55, no.63, no.84, no.106, no.129, no.130, no.139, no.145}, thereby restricting their applicability.

\begin{figure}[ht]
\centering
\includegraphics[width=\columnwidth, trim={0cm 0.4cm 0cm 0.2cm}, clip]{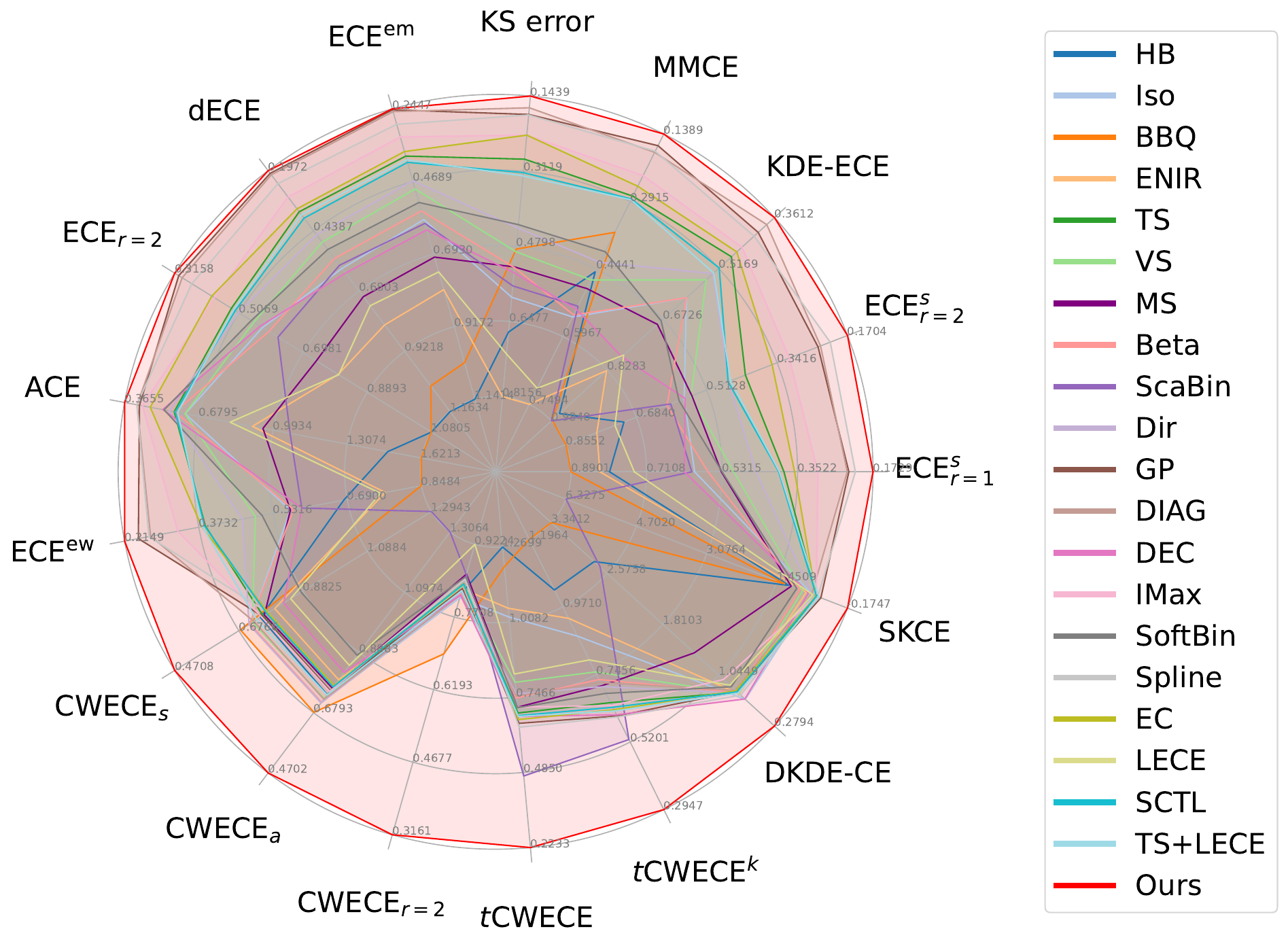}
\caption{Average relative calibration error (ARE) across all metrics for all methods, with our approach achieving the best performance on different metrics. (Scaled-down version, full-size image in \emph{Appendix} \ref{sec:apdx-visulaizationcomparison})}
\label{fig:RadarARE}
\end{figure}

Fig. \ref{fig:limitation} outlines these limitations. \emph{Appendix \ref{sec:apdx-limitationsummary}} provides a tabular summary of each limitation and how it is addressed by $h$-calibration.

\begin{figure*}[htbp]
\centering
\includegraphics[width=\textwidth]{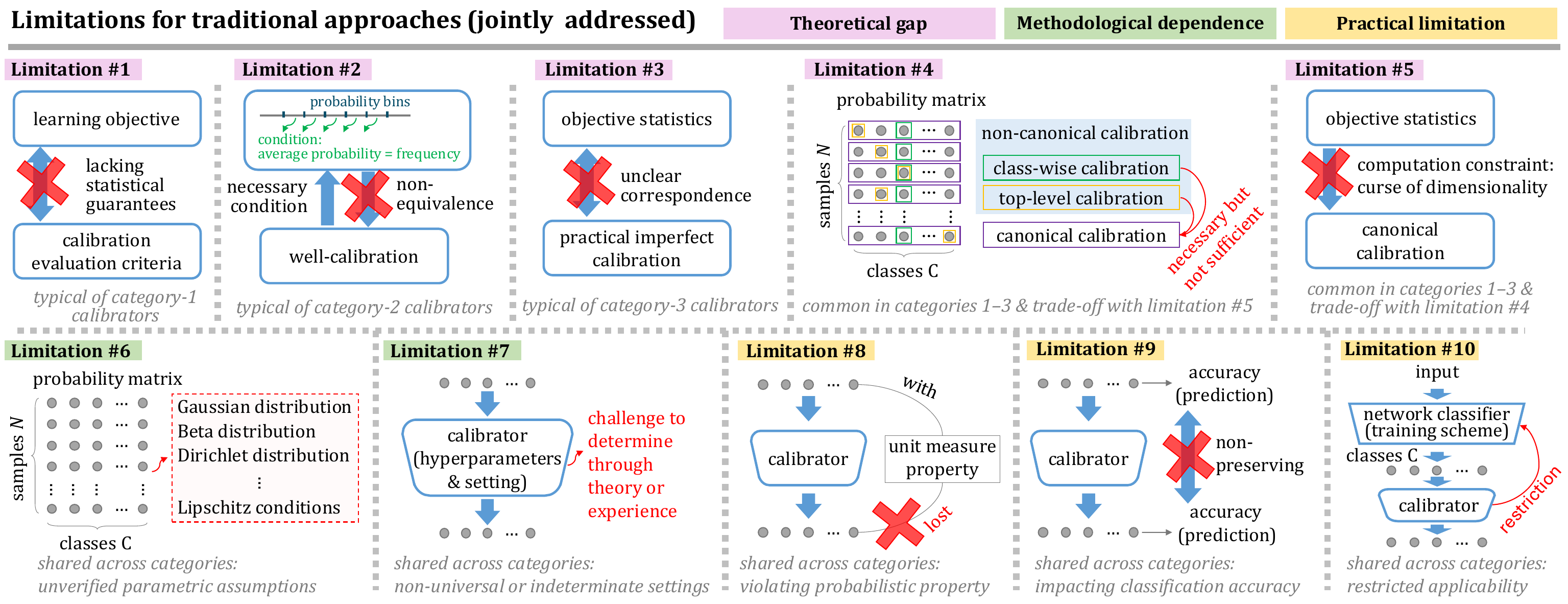}
\caption{Illustration of limitations specific to each strategy, and those shared across methods from different strategies}
\label{fig:limitation}
\end{figure*}

\subsection{Motivation and Contribution of This Study}

In this study, we aim to address these challenges by concentrating on developing learning objective for post-hoc recalibration without modifying the classification model or compromising classification accuracy (resolving \textbf{limitations \#9} and \textbf{\#10}). We: 
(a) propose a definition of uniformly error-bounded calibration compatible with inperfect calibration, offering a more realistic representation of real-world imperfect calibration (addressing \textbf{limitation \#3}); 
(b) construct a theoretical framework directly linking it to common theoretical definitions and empirical evaluators for calibration (resolving \textbf{limitations \#1} and \textbf{\#4}); 
(c) derive statistically error-controllable equivalent forms of error-bounded calibration, along with an equivalent differentiable learning criterion (solving \textbf{limitation \#2}). Notably, our differentiable learning criterion for canonical calibration imposes constrains directly on one-dimensional scalars, avoiding the estimation of high-dimensional distributions (addressing \textbf{limitation \#5}). Furthermore, the proposed approach does not rely on any parametric assumptions (resolving \textbf{limitation \#6}) and does not compromise the unit measure property (resolving \textbf{limitation \#8}). The designed objectives can be independently optimized without the need of setting complex non-interpretable hyperparameters (resolving \textbf{limitation \#7}). Fig. \ref{fig:studystructure} illustrates the study structure. Our contributions can be outlined as follows: 
\begin{itemize}[wide]
    \item A detailed overview of prior research, summarizing and analyzing the existing deficiencies and their underlying causes in learning calibration for classification, with the primary focus on learning objective design.
    \item Introducing a novel practical error-bounded form of well-calibrated probability, compatible with ideal and realistic inperfect calibration, and a probabilistic framework revealing its theoretical relationship with existing definitions and empirical evaluations of calibration.
    \item Providing a series of tools, including constructing equivalent constraining statistics for hypotheses of error-bounded calibration based on large deviation theory and deriving differentiable based on intergral transformations, yielding a differentiable equivalent optimization objective for error-bounded calibration. Unlike existing research, such as empirical methods, learning necessary condition for calibration by binning-based approaches, or investigating conditions for ideal calibration with limited effectiveness (in a sense of non-canonicality or uncontronable errors by the curse-of-dimensionality), our proposed method avoids the diverse deficiencies in prior research by providing a approach for converting error-bounded hypothesis of canonical calibration into equivalent differentiable learning objectives with controllable errors.
    \item Based on the above theoretical analysis, we provide a simple yet effective implementation algorithm of post-hoc recalibration and validate its state-of-the-art performance through extensive experiments across models, metrics, and tasks, as shown in Fig. \ref{fig:RadarARE}.
\end{itemize}

\begin{figure}[htbp]
\centering
\includegraphics[width=\linewidth]{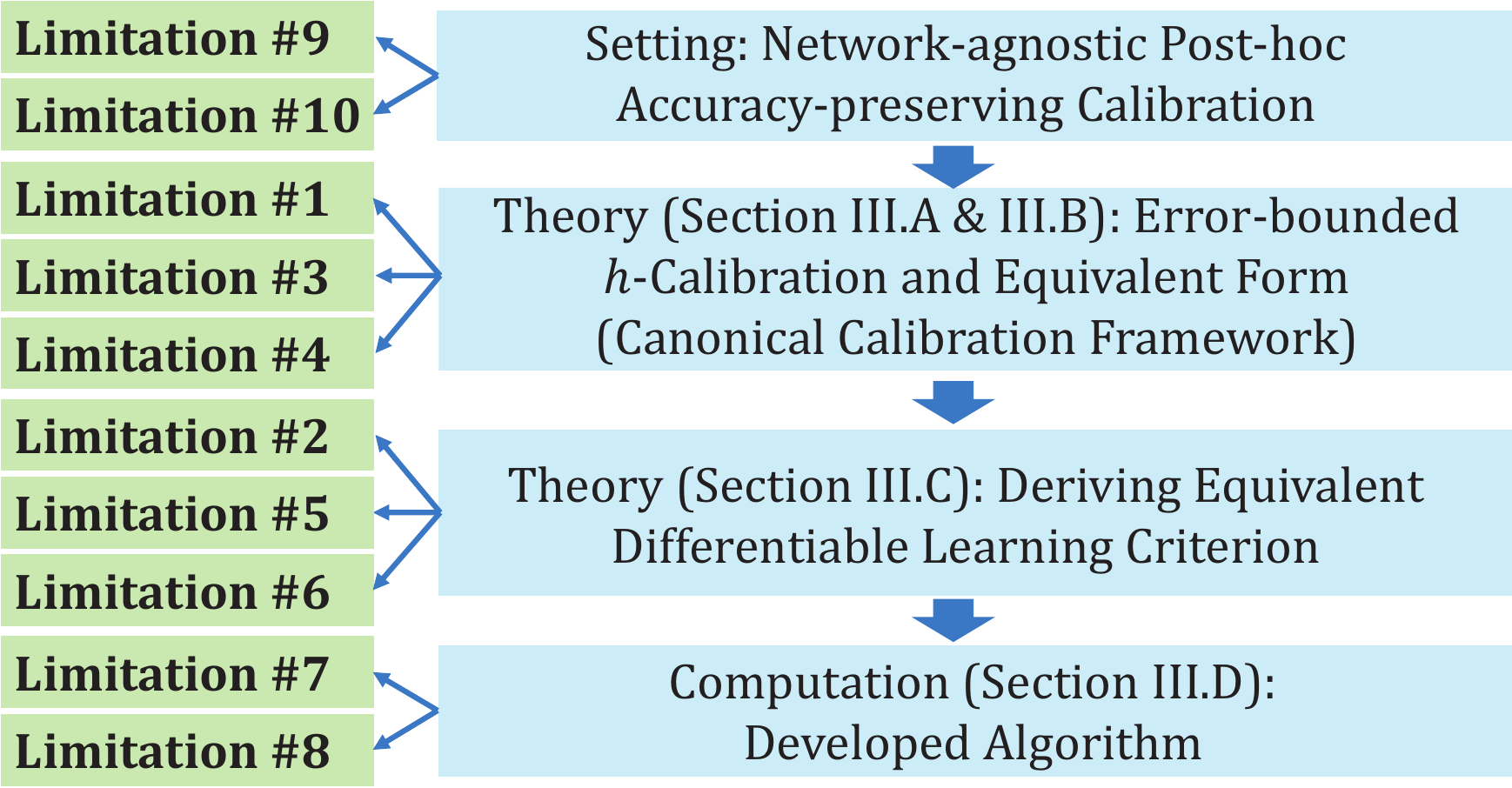}
\caption{Study structure resolving highlighted limitations}
\label{fig:studystructure}
\end{figure}

In the subsequent section of related works, we initially review different levels of calibration and evaluation metrics. Following this, we present an overview of PSR and provide an explanation for why they may fall short in ensuring effective calibration as an response to the experimental conclusion by previous research. Subsequently, we provide a comprehensive summary of both training-time and post-hoc calibration methods. Within the method section, we first introduce a probabilistic framework for error-bounded calibrated (EBA) probability, elucidating the connection between EBA and theoretical calibration definitions and common evaluators. We further formulate the equivalent statistics constraints for EBA hypotheses and the corresponding differentiable learning objectives. This section will conclude with the implementation of a simple recalibration algorithm, along with a interpretation of its relation to PSR. Finally, we demonstrate the experiments results and comparisons.

\section{Related Work and Analysis}
\subsection{Definitions and Evaluators for Calibration}
\label{II.A}
There exists variation in the definitions and evaluation metrics used for calibration in numerous previous works. This paper adopts a nomenclature consistent with many existing literature. Fundamental symbolic definition of calibration can be unified as $p_\mu(\mathscr{E}|\mathscr{H}(F))$$=$$\mathscr{H}(F)$. Here, $\mathscr{H}(F)$ represents specific predicted probability based on feature $F$, and $\mathscr{E}$ denotes the event or random variable (r.v.) corresponding to the predicted probability. $p_\mu$ represents ground-truth probability measure. Different choices for $\mathscr E$ and $\mathscr H(F)$ lead to different calibration definitions, with the most common ones being top-label, classwise, and canonical calibrations, as detailed in Table \ref{tab1}.
\begin{table*}[htbp]
\centering
\caption{Different types of definitions regarding calibration $p_\mu(\mathscr{E}|\mathscr{H}(F)) = \mathscr{H}(F)$}
\begin{threeparttable} 
\setlength{\tabcolsep}{3pt} 
\begin{tabular}{llll}
\toprule
Definition  & Condition $\mathscr H(F)$ & Event or random variable $\mathscr E$ & References\tnote{2} \\ 
\midrule
Top-label    & $\max\limits_l p_c( Y=l| F)$   & $Y = \underset{l}{\operatorname {argmax}} ~p_c( Y=l| F)$ & \begin{tabular}{@{}l} \cite{no.2, no.5, no.12, no.46, no.52, no.71, no.75} \\ \cite{no.34, no.79, no.60, no.136, no.133, no.51, no.16} \end{tabular} \\ 
Classwise (for any $l$) & $p_c(Y=l | F)$  & $Y=l$  & \cite{no.2, no.5, no.46, no.52, no.71, no.75, no.60, no.136, no.133}  \\
Canonical & $\big[ p_c( Y=1 | F),...,p_c( Y=L | F) \big]^\top$ & $\big[ \mathds 1_{\{Y=1\}},...,\mathds 1_{\{Y=L\}} \big]^\top$                                                & \begin{tabular}{@{}l}\cite{no.2, no.70, no.5, no.46, no.52, no.71, no.74, no.75} \\ \cite{no.79, no.60, no.136, no.75, no.103, no.51, no.39, no.102} \end{tabular} \\ \bottomrule
\end{tabular}
\begin{tablenotes}  
\footnotesize 
\item[2] sharing similar mathematical definitions, but possibly employing different terminology.
\end{tablenotes} 
\end{threeparttable}
\label{tab1}
\end{table*}
The $p_c$ denotes the predicted probability. For notation convenience, we do not distinguish between the event and its indicator function representation (e.g., $\mathds 1_{\{Y=l\}}$ and $Y$$=$$l$). 

Both top-label and classwise calibrations are weaker forms of the canonical calibration, with the latter being a sufficient but not necessary condition for the former. Various discretized approximations for these definitions serve as metrics for evaluation. Existing metrics related to top-label calibration include ECE, ACE, MCE and their variants, as well as KS error \cite{no.60}, KDE-ECE \cite{no.70}, MMCE \cite{no.83}. For classwise calibration, CWECE metric and its variants are commonly adopted for evaluation. Metrics associated with canonical calibration include SKCE \cite{no.79}, DKDE-CE \cite{no.52}. While PSRs such as NLL and Brier score have been used in some studies for evaluation, they can be decomposed into multiple factors beyond just calibration and thus do not serve as direct metrics for calibration \cite{no.3,no.10,no.51,no.125,no.130}. In addition, another commonly used evaluation technique is the visualization method of reliability diagram, which is often employed to show top-label calibration. We will introduce and summarize these various metrics within a unified probabilistic framework in \emph{Appendix \ref{sec:apdx-evaluationsummary}}.

\subsection{Calibration by Modified Training Scheme}

Current calibration approaches can be roughly divided into post-hoc recalibrators and training-time calibrations by modified training schemes. Modified training schemes aim to enhance calibration during the training of classifiers and can be broadly categorized into four types: (a) augmentation or implicit regularization, e.g., \cite{no.73,no.49,no.97,no.72,no.93,no.111,no.94,no.95,no.118,no.43,no.24,no.104,no.109,no.12,no.18,no.44} (b) model ensembling, e.g., \cite{no.77,no.139,no.114,no.113,no.106,no.28,no.115,no.20,no.33}, (c) regularization by explicit loss functions, e.g., \cite{no.64,no.83,no.94,no.62,no.58,no.99,no.102,no.100,no.101,no.53,no.84,no.103,no.107,no.13,no.4,no.52,no.66,no.22,no.60,no.48,no.14-0,no.54,no.63,no.11}and (d) some hybrid methods, e.g., \cite{no.24,no.16,no.19,no.55,no.39,no.42}. A detailed summary of the studies can be found in \emph{Appendix \ref{sec:trainingtime-calibration-summary}}. 

However, modified training schemes are subject to \textbf{limitation \#9} of requiring retraining, incurring substantial computational costs and non-preservation of original network's accuracy. For other limitations associated with these methods, please refer to the introduction. For example, methods within ensemble, augmentation or implicit regularization categories, often lack theoretical interpretations directly related to common evaluation metrics (\textbf{limitation \#1}).  Additionally, many augmentation methods depend on specific input modalities, and ensemble methods often involve modifying networks, limiting their applicability (\textbf{limitation \#10}). Moreover, numerous explicit loss-based regularizations serve as auxiliary objectives rather than independent optimization targets, posing challenges in empirically determining appropriate loss weights (\textbf{limitation \#7}).

\subsection{Proper Scoring Rules (PSR): A Revisited Analysis}
\label{sec:psranalysis}
As discussed in the introduction, PSRs are widely employed as loss functions for recalibrators, such as cross-entropy \cite{no.20,no.40,no.57,no.67,no.78,no.80,no.109} and MSE \cite{no.21,no.23,no.70,no.75,no.86,no.88} losses, mainly attributed to the theoretical property that optimal values of PSR are achieved when the forecaster predicts the true probabilities of events. However, recent studies suggest that, experimentally, PSR do not guarantee reliable calibration \cite{no.24,no.1,no.16,no.22,no.52,no.58,no.60,no.62,no.64,no.14-1,no.24,no.41,no.76,no.13,no.53,no.63}. Yet, some preliminary theoretical studies, conducted from the perspective of model families and training procedures \cite{no.11}, suggest that when optimizing certain networks by appropriate scoring rules reaches a state of local optimality, such that the loss cannot be significantly reduced by adding a few more layers, calibration performance can be ensured. To the best of our knowledge, there is currently no widely accepted explanation for why PSR often fail to yield satisfactory calibration. This paper provides an explanation, from the perspective of computational errors, suggesting that such miscalibration might be attributed to intrinsic estimation errors arising from single observation of the conditional distribution of $Y$ given $F$.

Let us begin by revisiting the definition of a PSR \cite{no.123}. Let $\Omega$ denote the sample space, $\mathcal D$ be an $\sigma$-algebra of subsets of $\Omega$, and $\mathcal P$ be a convex class of probability measures/forecasts on $(\Omega, \mathcal D)$. A scoring rule is any function $S$:$\mathcal P \times \Omega$$\rightarrow$$\bar R=[-\infty,\infty]$ such that $S(P,\cdot)$ is measurable with respect to $\mathcal D$ and quasi-integrable with respect to all $Q \in \mathcal P$. $\mathcal S(P,Q)=\int S(P,\omega) d Q(\omega)$ is defined for the expected score under $Q$ when the probabilistic forecast is $P$. The scoring rule $S$ is proper relative to $\mathcal P$ if $\mathcal S(Q,Q)$$\geq$$\mathcal S(P,Q)$ for all $P,Q \in \mathcal P$. Strictly proper implies the equality holds if and only if $P=Q$. According to Savage's representation of PSR \cite{no.123}, it can be shown that both the Brier score $S(\mathbf{p}, \omega)=-\sum_{\upsilon \in \Omega} (\delta_{\omega \upsilon}-p_\upsilon)^2$ and logarithmic score $S(\mathbf{p},\omega)=\log(p_\omega)$ are strictly PSRs, where $\delta_{\omega,\upsilon}=\mathds 1_{\{\omega=\upsilon\}}$.

Although PSR are theoretically effective for estimating true discriminative probabilities, there are practical challenges due to the fact that the true conditional probabilities corresponding to each sample feature (such as logits or probabilities from an uncalibrated classifier) do not align perfectly with the one-hot vector labels. In this context, the target label can be considered as label obtained from a single sampling and the computation of the expectation in proper scoring is based solely on a single observation, introducing estimation error. This error can lead to different biases when selecting different PSRs, resulting in the generation of distinct classifiers. It can explain why theoretically effective PSRs may not guarantee calibration and why different PSRs theoretically yield unique solution but result in different outcomes. Detailed explanation is provided below.

\begin{proposition}
\label{pro1}
 For a feature representation $F_i$ in a network and its corresponding observation label $Y_i$, the true conditional probability $p(Y|F_i)$ cannot be guaranteed to be the one-hot vector of the target label $Y_i$.
\end{proposition}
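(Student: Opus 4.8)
The plan is to prove the statement as a negative existence claim: since the proposition asserts only that the equality \emph{cannot be guaranteed}, it suffices to exhibit a probabilistic setting --- indeed, the generic one --- in which $p(Y\mid F_i)$ provably fails to be the one-hot vector $e_{Y_i}$. The conceptual engine is that $F_i$ is a \emph{deterministic, measurable function of the input}, $F_i=\phi(X_i)$, where $\phi$ denotes the forward map of the network up to the representation layer (logits or probabilities). Hence $\sigma(F_i)\subseteq\sigma(X_i)$, and by the tower property $p(Y\mid F_i)=\mathbb E\!\left[\,p(Y\mid X_i)\,\middle|\,F_i\right]$, i.e.\ the conditional law of $Y$ given $F_i$ is an average of the pointwise conditionals $p(Y\mid X=x)$ over the fibre $\phi^{-1}(F_i)$. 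For this average to equal $e_{Y_i}$ one needs two stringent conditions to hold simultaneously: (i) $p(Y\mid X=x)$ is itself a point mass for (almost) every $x$ in the fibre, i.e.\ no aleatoric uncertainty / zero Bayes risk there; and (ii) that point mass sits on the same class $Y_i$ for (almost) every such $x$, i.e.\ the representation does not collapse inputs of differing labels.

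First I would record this tower-property identity together with the two necessary conditions (i)--(ii), so that the failure mechanisms are explicit. Then I would give a minimal explicit counterexample witnessing the failure of each mechanism. For the failure of (i): take a single input value $x_0$ (or let $\phi$ be constant, so every input yields the same feature) with $p(Y=1\mid X=x_0)=\theta\in(0,1)$; then $p(Y\mid F_i)$ is the Bernoulli($\theta$) probability vector, never one-hot, while the \emph{observed} $Y_i$ is merely one draw from it and carries no further information about $\theta$. For the failure of (ii): take two inputs $x_0\neq x_1$ with $\phi(x_0)=\phi(x_1)$ but $p(Y\mid X=x_0)=e_1$ and $p(Y\mid X=x_1)=e_2$, each of positive mass; then $p(Y\mid F_i)$ is a nontrivial mixture of $e_1$ and $e_2$. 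In either case, for \emph{any} realizable observation $Y_i$ the identity $p(Y\mid F_i)=e_{Y_i}$ fails, which establishes that it cannot be guaranteed.

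To connect this back to the recalibration context I would remark that conditions (i)--(ii) are exactly what a real trained classifier violates: the feature $F$ is a lossy summary of $X$, so fibres are nontrivial and typically mix classes, and standard benchmarks have strictly positive Bayes error, so the pointwise conditionals are not degenerate. Consequently the single observed label $Y_i$ should be read as one sample from the non-degenerate law $p(Y\mid F_i)$, not as that law itself --- which is precisely the single-observation estimation-error phenomenon invoked in the surrounding discussion of proper scoring rules.

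I expect there to be no deep obstacle, the claim being essentially structural; the only real care is in stating the probabilistic setup cleanly --- fixing the joint law of $(X,F,Y)$ on a common space, clarifying in what sense $(F_i,Y_i)$ is ``a sample,'' and being careful that ``cannot be guaranteed'' is read as ``false for some (in fact typical) data-generating process and representation map,'' rather than ``always false.'' Getting the quantifiers right, and choosing the counterexample to be simultaneously minimal and evidently faithful to the ``logits/probabilities of a trained network'' setting, is where the write-up needs the most attention.
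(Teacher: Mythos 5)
Your proposal is correct and rests on the same underlying idea as the paper: the identity $p(Y\mid F_i)=e_{Y_i}$ would force the conditional law given the feature to be degenerate and error-free, which fails whenever there is aleatoric uncertainty or the representation collapses inputs with different labels. Note, though, that the paper does not actually prove the proposition — it only offers a one-paragraph informal remark (for $F_i$ taken as the output probabilities, the equality would imply near-perfect recognition, deemed unattainable under inductive bias) — so your tower-property decomposition $p(Y\mid F_i)=\mathbb E[\,p(Y\mid X_i)\mid F_i]$ with the two explicit counterexamples is a strictly more rigorous rendering of the same argument.
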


The discussion for Prop. \ref{pro1} is given in \emph{Appendix \ref{sec:apdx-propsition-hardlabeling}}. When performing post-hoc recalibration on $F$ obtained from a trained neural network, it is common to select $F$ as the logit representation and introduce a mapping $g$ such that $g(F_i) \approx p_{Y|F}(F_i,*)=p(Y=*|F=F_i)$. By the definition of a PSR, we have
\begin{equation}
\label{eq1}
    \mathcal S(g(F_i),p_{Y|F}(F_i,*))=\int S(g(F_i),y)p_{Y|F}(F_i,dy)
\end{equation}
and, theoretically, optimization by $\inf\limits_{g}\mathcal S(g(F_i),p_{Y|F}(F_i,*))$ can lead to $g(F_i) = p_{Y|F}(F_i,*)$. However, the integration on the right side cannot be directly computed since the true distribution $p_{Y|F}(F_i,*)$ is unknown, requiring sampling instead. If $p(Y|F=F_i)$ is guaranteed to be the one-hot vector of the target label $Y_i$, the expectation estimation in Eq.\eqref{eq1} would only require a single sampling. Yet, by Prop. \ref{pro1}, this assumption is not guaranteed. For the logit-label pair $(F_i,Y_i)$ from the calibration set, it can only be regarded as a single sampling from distribution $p_{Y|F}(F_i,*)$. The optimization objective of Eq.\eqref{eq1} reduces to $S(g(F_i),Y_i)$, showing clear approximation errors.

Therefore, while theoretically, PSR can yield well-calibrated probabilities, in practice, inherent inductive bias arising from the disparity between network families and the true posterior probability distribution, coupled with the impact of approximation errors, makes it challenging to capture the genuine conditional distribution. Moreover, there is a tendency to overfit to observed label, resulting in miscalibration, particularly leaning towards overconfidence. Here, we present a conceptual framework from the perspective of approximation error, elucidating why, in practical applications, theoretically effective PSR may fall short of ensuring robust calibration.

While prior works \cite{carrell2022calibration,berta2025rethinking} have examined the effect of using PSRs on test calibration error from a population loss perspective, they do not explore in depth why minimizing PSR reduces training calibration error but fails to generalize. In contrast, we interpret this gap through the lens of per-sample label bias. Specifically, each training sample is typically annotated with a single one-hot label that captures only the dominant foreground semantics, while ignoring relevant covariate context (e.g., labeling an image as “car” may neglecting “road”). Minimizing PSR on such labels can lead to overfitting to dominant semantics and overlooking meaningful context. We formalize this as an insufficient label sampling issue from statistical perspective, where single one-hot semantic representation can lead to overfitting to biased training labels and to a generalization gap.

\subsection{Post-hoc Calibration}

Post-hoc calibration methods aim to improve the calibration of previously trained models by transforming predictions using hold-out validation/calibration data. One advantage of post-hoc methods is that they do not involve retraining, mitigating the impact on model's classification accuracy, even preserving it unaffected. Post-hoc methods can be categorized into parametric and non-parametric methods: (a) the former use parametric models to design learning objectives, e.g., \cite{no.85,no.151,no.65,no.71,no.127,no.150,no.68}, while non-parametric methods can be further classified into five categories by learning objectives, including (b) objectives inspired by binning-based evaluation metrics, e.g., \cite{no.86,no.87,no.88,no.89,no.21,no.125,no.65,no.17,no.58,no.126,no.133}, (c) constructing equivalent forms for ideal calibration, e.g., \cite{no.60,no.70,no.145,no.52}, (d) other methods using PSRs, e.g., \cite{no.20,no.23,no.40,no.57,no.67,no.78,no.80,no.124,no.135,no.146,no.109}, (e) other empirical methods, e.g., \cite{no.29,no.34,no.46,no.128,no.147,no.138,no.145}, and (f) hybrid strategies, e.g., \cite{no.70,no.75}. 

A detailed summary of the above methods is provided in \emph{Appendix \ref{sec:posthoc-calibration-summary}}. However, these existing methods have corresponding limitations. For example, methods in (a) involve unproven parametric assumptions and lack direct connections between methodological design and calibration evaluators (\textbf{limitation \#6} and \textbf{\#1}). Methods in (b) are prone to overfitting necessary conditions for calibration (\textbf{limitation \#2}) and face challenges in determining binning schemes (\textbf{limitation \#7}). Methods in (c) fall short in describing the equivalent form for real-world imperfect calibration (\textbf{limitation \#3}) and in learning canonical calibration (\textbf{limitation \#4}). Methods in (d), based on PSRs, are susceptible to estimation errors, and empirical designs in (e) also lack direct connections to evaluation metrics (\textbf{limitation \#1}). Further dissussions on limitations specific to each method (e.g., the connection of (b) and (c) with \textbf{limitations \#4 and \#5}, one-vs-rest extensions and individual prediction in (b) leading to \textbf{limitation \#8}, as well as numerous method-specific shortcomings in \textbf{limitation \#9)} are outlined in the introduction,  \emph{Appendixes \ref{sec:addtional_comment_limit4}} and \emph{\ref{sec:limitationdiscussion}}.

\begin{figure*}[!h]
\centering
\includegraphics[width=\textwidth]{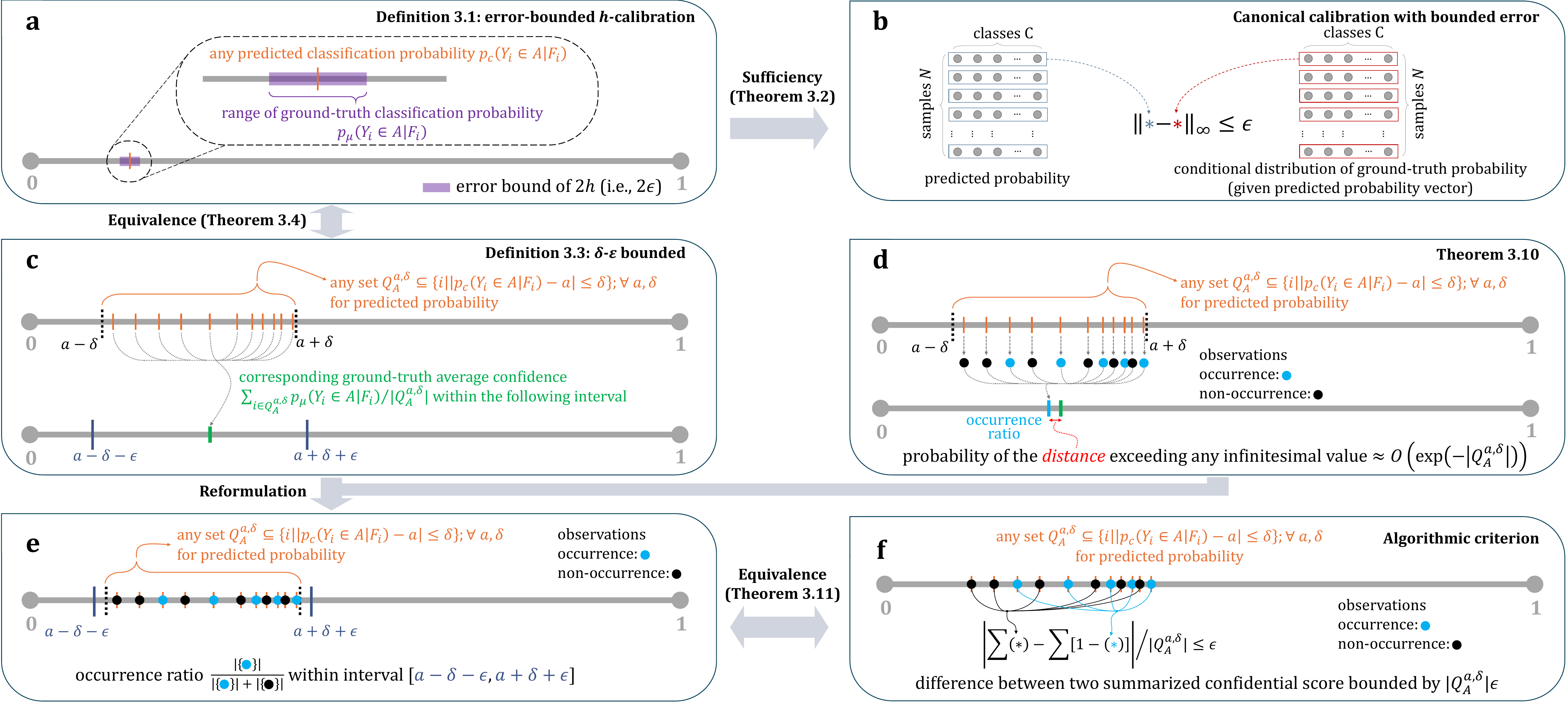}
\caption{A schematic diagram illustrating the $h$-calibration framework, showing the relationships among key definitions and theorems. It shows the sufficiency of $h$-calibration over the traditional canonical calibration definition with bounded error, and how a differentiable algorithmic criterion is designed to learn $h$-calibration.}
\label{fig:keyidea}
\end{figure*}

\section{Method}
In the following sections, we begin by introducing the concept of error-bounded $h$-calibration (EBC), followed by a discussion of its relationship with existing calibration definitions (Section \ref{method:sec1}). Given the inobservability of true sample-wise classification probabilities, which precludes direct attainment of EBC based on its definition, we investigate an equivalent formulation of EBC (Section \ref{method:sec2}) and formulate corresponding approximation statistics with controlled error margins (Section \ref{method:sec3}). Subsequently, we introduce an integral transform technique to convert the non-differentiable constraint statistics into a differentiable form (Section \ref{method:sec3}). We then design a simple algorithmic implementation based on this differentiable form (Section \ref{method:sec4}). Finally, we explore the nexus between our approach and proper scoring from multiple theoretical perspectives (Section \ref{method:sec5}), highlighting that MSE loss can be considered as degenerate forms of our method. Our approach can be regarded as an error-controlled PSR by introducing pseudo sampling. Fig. \ref{fig:keyidea} illustrates the core ideas discussed from Sections \ref{method:sec2} to \ref{method:sec3}. \emph{Appendix \ref{sec:apdx-notation}} presents a table of key notations.

\subsection{Error-bounded $h$-calibration}
\label{method:sec1}
Effective calibration of predictive probabilities inherently implies the controllability of the deviation between predicted and true probabilities. In light of this, we bring forth the concept of error-bounded calibration, called $h$-calibration, predicated on the idea that deviations are uniformly bounded, as elucidated in Fig. \ref{fig:keyidea} (a). Formal definition is provided below.
\begin{definition}[$h$-calibrated]
\label{def:h-calibration}
Let feature-label pair $(F,Y)$ be the r.v. in 
testing 
space $\Omega_F \times\Omega_Y$ and $\mathscr F_Y$ be the $\sigma$-field of $Y$. A calibrated probability $p_c$ is called $h$-calibrated if and only if there exists $h \in (0,1)$ for any event $A \in \mathscr F_Y$,
\begin{equation}
    \label{eq:h-calibration-v1}
    |p_{\mu}(Y \in A|F) - p_c(Y \in A|F)| \leq h,
\end{equation}
where $p_\mu$ denotes the true conditional probability of classification. For finite samples, it becomes for any $1 \leq i \leq N$,
\begin{equation}
    \label{eq:h-calibration-v2}
    |p_{\mu}(Y_i \in A|F_i) - p_c(Y_i \in A|F_i)| \leq h.
\end{equation}
\end{definition}

Our Def. \ref{def:h-calibration} is natural for calibrated probabilities with bounded error. We first discuss the relationship between $h$-calibration and canonical calibration.

\begin{theorem}
\label{thm:1}
$h$-calibration is a sufficient condition for generalized canonical calibration with bounded error, i.e., 
\begin{equation}
    \label{eq:cano-calibraton}
    \| p_\mu(\mathscr E|\mathscr H(F))-\mathscr H(F) \|_\infty \leq h,
\end{equation}
where $\mathscr H(F)=[ p_c( Y=1 | F),\dots,p_c( Y=L | F) ]^\top$ and $\mathscr E $$=$$[ \mathds 1_{\{Y=1\}},\dots,\mathds 1_{\{Y=L\}} ]^\top$.
\end{theorem}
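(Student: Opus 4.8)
The plan is to establish \eqref{eq:cano-calibraton} one coordinate at a time. Since the $\ell_\infty$ norm of a vector in $\mathbb R^L$ is the largest absolute value of its entries, it suffices to show that for every class label $l\in\{1,\dots,L\}$,
\[
\bigl|\, p_\mu(Y=l\mid \mathscr H(F)) - p_c(Y=l\mid F)\,\bigr| \le h \qquad \text{(a.s.)},
\]
and then take the maximum over $l$. The structural facts I will use are that $\mathscr H(F)$ is a measurable function of $F$, so that $\sigma(\mathscr H(F)) \subseteq \sigma(F)$, and that the $l$-th coordinate $p_c(Y=l\mid F)$ of $\mathscr H(F)$ is by construction $\sigma(\mathscr H(F))$-measurable. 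Throughout I read $p_\mu(\mathscr E\mid\mathscr H(F))$ as the vector whose $l$-th entry is the conditional expectation $\mathbb E[\mathds 1_{\{Y=l\}}\mid\sigma(\mathscr H(F))]$.

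First I would apply Definition \ref{def:h-calibration} with the event $A=\{Y=l\}\in\mathscr F_Y$, obtaining the pointwise-in-$F$ estimate $|p_\mu(Y=l\mid F)-p_c(Y=l\mid F)|\le h$ almost surely. Next I would push this bound down from $\sigma(F)$ to the coarser $\sigma(\mathscr H(F))$ via the tower property: since $p_\mu(Y=l\mid F)$ is the conditional probability of $\{Y=l\}$ given $F$,
\[
p_\mu(Y=l\mid\mathscr H(F)) = \mathbb E\bigl[\mathds 1_{\{Y=l\}}\bigm|\mathscr H(F)\bigr] = \mathbb E\bigl[p_\mu(Y=l\mid F)\bigm|\mathscr H(F)\bigr],
\]
whereas $p_c(Y=l\mid F) = \mathbb E\bigl[p_c(Y=l\mid F)\bigm|\mathscr H(F)\bigr]$ trivially, being already $\sigma(\mathscr H(F))$-measurable. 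Subtracting, and using linearity of conditional expectation together with the conditional Jensen inequality $\bigl|\mathbb E[\,\cdot\mid\mathscr G\,]\bigr|\le\mathbb E[\,|\cdot|\mid\mathscr G\,]$, gives
\[
\bigl|p_\mu(Y=l\mid\mathscr H(F)) - p_c(Y=l\mid F)\bigr| \le \mathbb E\Bigl[\bigl|p_\mu(Y=l\mid F)-p_c(Y=l\mid F)\bigr|\Bigm|\mathscr H(F)\Bigr] \le h,
\]
which is precisely the coordinatewise bound; taking the maximum over $l$ yields \eqref{eq:cano-calibraton}.

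The finite-sample case \eqref{eq:h-calibration-v2} is the discrete analogue: one partitions the indices $1\le i\le N$ into groups sharing a common value of $\mathscr H(F_i)$, notes that the per-index bound $|p_\mu(Y_i\in A\mid F_i)-p_c(Y_i\in A\mid F_i)|\le h$ is uniform over the group, and averages; an average of quantities all within $h$ of the group's common predicted value stays within $h$ of it. I do not anticipate a real obstacle here: once the claim is reduced to coordinates, everything collapses to a single application of the tower property, and the only care needed is the measure-theoretic bookkeeping that conditioning on the coarser $\sigma$-field cannot enlarge a uniform deviation. It is worth recording in a remark that the hypothesis of $h$-calibration — a bound for \emph{every} $A\in\mathscr F_Y$, not merely the singletons — is genuinely stronger than the conclusion, so the implication is strictly one-directional and the word ``sufficient'' in the statement cannot be upgraded to ``equivalent.''
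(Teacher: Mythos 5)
Your proposal is correct and follows essentially the same route as the paper's own proof in Appendix A: apply Definition \ref{def:h-calibration} to the events $A=\{Y=l\}$, push the pointwise bound down to the coarser $\sigma$-field via the tower property, and control the resulting conditional expectation with conditional Jensen. The only (cosmetic, and if anything cleaner) difference is that you condition directly on the vector $\mathscr H(F)$ throughout, whereas the paper conditions on the scalar $p_c(Y\in A\mid F)$ and only passes to $\mathscr H(F)$ in its final implication.
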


In this context, setting $h$ to zero in Eq. \eqref{eq:cano-calibraton} corresponds to the established definition of canonical calibration. Thus, our $h$-calibration presents a more generalized definition, accommodating both imperfect and perfect calibration within the canonical framework (addressing \textbf{limitation \#3}). Although study \cite{no.133} similarly relaxes the calibration error in an error-bounded form, their definition focuses on the weaker class-wise calibration and bounds the gap between $p_\mu(\mathscr E | \mathscr H(F))$ and $\mathscr H(F)$ (see notation in Table \ref{tab1}), which constitutes a necessary condition for our $h$-calibration.
For a detailed proof of Thm. \ref{thm:1}, please see \emph{Appendix \ref{sec:a1}}. \emph{Appendix \ref{sec:apdx-hcalibrationillustrate}} visually compares $h$-calibration and canonical calibration.

\subsection{Equivalent Form for $h$-calibration}
\label{method:sec2}
Since the true probability $p_\mu$ is not directly observable for enforcing the constraint in the $h$-calibration definition, we propose devising equivalent forms that ensure reliable statistical estimation with controllable error margins, thereby enabling the design of effective learning constraints. In this context, we present a relatively complex definition for well-calibrated classification probability. Thm. \ref{thm:2} establishes its equivalence to $h$-calibration, which will be leveraged in later sections to develop differentiable optimization objectives for learning calibrated probabilities. The proofs for Thm. \ref{thm:2} and similar Thm. \ref{thm:3} can be found in Appendices \ref{sec:a2} and \ref{sec:a3}, respectively.

\begin{definition}[$\delta$-$\epsilon$ bounded]
\label{def:del-eps-calib}
    A calibrated probability $p_c$ is said to be $\delta$-$\epsilon$ bounded if and only if there exists $\epsilon \in (0,1)$, for any interval
    $B_\delta(a)  \triangleq [a-\delta,a+\delta] \subseteq [0,1]$, any $A \in \mathscr F_Y$ and any $Q_A^{B_\delta(a)} \subseteq \left\{ i|p_c(Y_i \in A|F_i) \in B_\delta(a),1\leq i \leq N \right\}$ with $|Q_A^{B_\delta(a)}| \geq1$, we have
    \begin{equation}
        \label{eq:del-eps-calib}
        \left| a -  \frac{ \sum_{i \in Q_A^{B_\delta(a)}}\mathbb{E}_\mu[\mathds 1_A(Y_i)|F_i]}{|Q_A^{B_\delta(a)}|}  \right| \leq \epsilon+\delta,
    \end{equation}
    where $\mathds 1_A(*)$ represents the indicator function for set $A$ and operator $|*|$ computes the cardinal number of a set.
\end{definition}
\begin{theorem}
\label{thm:2}
For finite samples, a calibrated probability $p_c$ is $h$-calibrated if and only if $p_c$ is $\delta$-$\epsilon$ bounded.
\end{theorem}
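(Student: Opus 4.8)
The plan is to prove the two implications separately, exploiting the fact that $\delta$-$\epsilon$ boundedness is a statement about averages of conditional expectations $\mathbb{E}_\mu[\mathds 1_A(Y_i)|F_i] = p_\mu(Y_i \in A | F_i)$ over index sets selected by proximity of $p_c(Y_i \in A|F_i)$ to a center $a$, whereas $h$-calibration is a pointwise bound $|p_\mu(Y_i \in A|F_i) - p_c(Y_i \in A|F_i)| \leq h$. The bridge between ``pointwise'' and ``on-average'' is that an average of quantities each within $\eta$ of a common reference is itself within $\eta$ of that reference (triangle inequality), and conversely a pointwise bound can be recovered from averages by taking singleton index sets $Q_A^{B_\delta(a)} = \{i\}$.

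\emph{($h$-calibrated $\Rightarrow$ $\delta$-$\epsilon$ bounded.)} Assume Eq. \eqref{eq:h-calibration-v2} holds for some $h \in (0,1)$. Fix an interval $B_\delta(a) \subseteq [0,1]$, an event $A$, and a nonempty index subset $Q_A^{B_\delta(a)}$ of $\{i : p_c(Y_i \in A|F_i) \in B_\delta(a)\}$. For each $i$ in this set, $|p_c(Y_i\in A|F_i) - a| \leq \delta$ by membership in $B_\delta(a)$, and $|p_\mu(Y_i\in A|F_i) - p_c(Y_i\in A|F_i)| \leq h$ by hypothesis, so by the triangle inequality $|p_\mu(Y_i\in A|F_i) - a| \leq h + \delta$. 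Averaging over $i \in Q_A^{B_\delta(a)}$ and pulling the absolute value inside the (convex) average gives
\begin{equation}
\left| a - \frac{\sum_{i \in Q_A^{B_\delta(a)}} \mathbb{E}_\mu[\mathds 1_A(Y_i)|F_i]}{|Q_A^{B_\delta(a)}|} \right| \leq h + \delta,
\end{equation}
which is exactly Eq. \eqref{eq:del-eps-calib} with $\epsilon = h$. So one may take $\epsilon = h \in (0,1)$.

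\emph{($\delta$-$\epsilon$ bounded $\Rightarrow$ $h$-calibrated.)} Assume $p_c$ is $\delta$-$\epsilon$ bounded for some $\epsilon \in (0,1)$ and every admissible $\delta$ (the definition quantifies over all intervals $B_\delta(a)$, hence over all $\delta$ with $B_\delta(a) \subseteq [0,1]$). Fix an event $A$ and an index $i$, and set $a = p_c(Y_i \in A|F_i)$. For any sufficiently small $\delta > 0$ (small enough that $B_\delta(a) \subseteq [0,1]$, adjusting the center slightly if $a$ is at the boundary $0$ or $1$), the singleton $Q_A^{B_\delta(a)} = \{i\}$ is an admissible choice, so Eq. \eqref{eq:del-eps-calib} yields $|a - \mathbb{E}_\mu[\mathds 1_A(Y_i)|F_i]| \leq \epsilon + \delta$, i.e. $|p_c(Y_i\in A|F_i) - p_\mu(Y_i\in A|F_i)| \leq \epsilon + \delta$. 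Letting $\delta \to 0^+$ gives the pointwise bound with constant $\epsilon$; since this holds for every $i$ and every $A$, $p_c$ is $h$-calibrated with $h = \epsilon \in (0,1)$.

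The main obstacle is a bookkeeping subtlety rather than a deep one: making sure the quantifier structure matches on both sides — in particular that the ``for any interval $B_\delta(a) \subseteq [0,1]$'' clause in Definition \ref{def:del-eps-calib} genuinely lets $\delta$ shrink to zero around an arbitrary center, and handling the edge case where $a = p_c(Y_i\in A|F_i)$ sits at $0$ or $1$ so that symmetric intervals leave $[0,1]$ (one can either use one-sided intervals or shift the center by $O(\delta)$, which only perturbs the final bound by $O(\delta)$ and vanishes in the limit). One should also confirm that the constants stay in the open interval $(0,1)$ as required by both definitions, which is immediate since $\epsilon = h$ and $h = \epsilon$ in the two directions. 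Everything else is the triangle inequality and averaging.
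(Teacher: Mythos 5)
Your proposal is correct and follows essentially the same route as the paper's proof in Appendix \ref{sec:a2}: the forward direction is the triangle inequality splitting $|a - p_\mu|$ through $p_c$ and averaging, and the converse instantiates the $\delta$-$\epsilon$ bound on singleton sets $Q_A^{B_\delta(a)} = \{i\}$ centered at (or within $\delta$ of) $p_c(Y_i \in A|F_i)$ and lets $\delta \to 0^+$. Your explicit handling of the boundary cases $a \in \{0,1\}$ is a small point the paper glosses over, but it does not change the argument.
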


Fig. \ref{fig:keyidea} (a) and (c) illustrate the core idea of Def. \ref{def:del-eps-calib} and Thm. \ref{thm:2}. Specifically, we construct the reformulated Def. \ref{def:del-eps-calib} because Def. \ref{def:h-calibration} involves the true classification probability $p_\mu$ of a single sample, which is challenging to estimate. In comparison, the expectation term reformulated in Def. \ref{def:del-eps-calib} can be reliably estimated using the statistics of multiple observations. This potentially enables us to construct a differentiable loss function for learning $h$-calibrated probability. Furthermore, the corresponding estimation errors can be analyzed using asymptotic statistical theories, such as the law of large numbers, large deviation theory, etc. Section \ref{method:sec3} will develop Thms. \ref{thm:5}, \ref{thm:6}, and \ref{thm:7} for these issues.

Before proceeding to the next section \ref{method:sec3}, we digress to discuss a pertinent topic: the concept of $h$-calibration can be extended to establish a connection with non-canonical calibration definitions, including top-label and classwise calibrations. In this context, we introduce two concepts: $h$-$\mathcal A$ calibration and $\delta$-$\epsilon$-$\mathcal{A}$ boundedness. Thm. \ref{thm:3} establishes the equivalence between these two concepts.

\begin{definition}[$h$-$\mathcal A$ calibrated]
\label{def:h-a-calib}
    With the notations in Def. \ref{def:h-calibration}, a calibrated probability $p_c$ is called $h$-$\mathcal A$ calibrated if and only if there exists $\mathcal A_i \in \mathscr F_Y$, $1 \leq i \leq N$, we have
    \begin{equation}
        |p_{\mu}(Y_i \in \mathcal A_i|F_i) - p_c(Y_i \in \mathcal A_i|F_i)| \leq h.
    \end{equation}
\end{definition}

\begin{definition}[$\delta$-$\epsilon$-$\mathcal{A}$ bounded] 
\label{def:del-eps-a-calib}
    A calibrated probability $p_c$ is said to be $\delta$-$\epsilon$-$\mathcal{A}$ bounded if and only if there exists $\epsilon \in (0,1)$ and $\mathcal{A}_i \in \mathscr F_Y$, $1\leq i\leq N$, for any 
    $B_\delta(a)$ $\triangleq [a-\delta,a+\delta] \subseteq [0,1] $ and any $Q_{\mathcal{A}}^{B_\delta(a)} \subseteq \left\{ i|p_c(Y_i \in \mathcal{A}_i|F_i) \in B_\delta(a)
    \right\}$ with $|Q_{\mathcal{A}}^{B_\delta(a)} |\geq 1$, we have
    \begin{equation}
    \label{eq:del-eps-a-calib}
    \left| a - \frac{ \sum_{i \in Q_{\mathcal{A}}^{B_\delta(a)}} \mathbb{E}_{\mu}[1_{\mathcal{A}_i}(Y_i)|F_i] }{|Q_{\mathcal{A}}^{B_\delta(a)}|} \right| \leq \epsilon+\delta.
    \end{equation}
\end{definition}

\begin{theorem}
\label{thm:3}
A calibrated probability $p_c$ is $h$-$\mathcal A$ calibrated if and only if $p_c$ is $\delta$-$\epsilon$-$\mathcal A$ bounded. Both conditions are necessary but not sufficient for $h$-calibration.
\end{theorem}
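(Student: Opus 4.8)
The plan is to derive the equivalence ``$h$-$\mathcal A$ calibrated $\Leftrightarrow$ $\delta$-$\epsilon$-$\mathcal A$ bounded'' by essentially replaying the proof of Thm.~\ref{thm:2} (Appendix~\ref{sec:a2}) with the single event $A$ there replaced throughout by the sample-indexed family $\{\mathcal A_i\}_{i=1}^N$; the argument for Thm.~\ref{thm:2} never uses that the event is shared across samples, so once a level set $B_\delta(a)$ is fixed it localizes to each index $i$, and the same witnessing family $\{\mathcal A_i\}$ is carried between Def.~\ref{def:h-a-calib} and Def.~\ref{def:del-eps-a-calib} in both directions. I then handle necessity and non-sufficiency for $h$-calibration separately. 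As in Thm.~\ref{thm:2}, $h$ and $\delta$ are treated as fixed parameters and the same $h \leftrightarrow (\epsilon,\delta)$ correspondence is used.

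For the forward direction I would take the family from Def.~\ref{def:h-a-calib}: for $i$ in an admissible $Q_{\mathcal A}^{B_\delta(a)}$ one has $|a - p_c(Y_i\in\mathcal A_i|F_i)|\le\delta$, and since $\mathbb E_\mu[\mathds 1_{\mathcal A_i}(Y_i)|F_i]=p_\mu(Y_i\in\mathcal A_i|F_i)$ the $h$-$\mathcal A$ bound plus the triangle inequality give $|a-\mathbb E_\mu[\mathds 1_{\mathcal A_i}(Y_i)|F_i]|\le h+\delta$ for each such $i$; averaging preserves the bound, giving Eq.~\eqref{eq:del-eps-a-calib} with $\epsilon=h$. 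For the reverse direction the key step is to apply Eq.~\eqref{eq:del-eps-a-calib} to \emph{singleton} index sets: fixing $i$ and a center $a$ with $p_c(Y_i\in\mathcal A_i|F_i)\in B_\delta(a)\subseteq[0,1]$ and taking $Q=\{i\}$ recovers $|p_c(Y_i\in\mathcal A_i|F_i)-p_\mu(Y_i\in\mathcal A_i|F_i)|\le\epsilon+2\delta$ (improving to $\epsilon+\delta$ away from the endpoints $0,1$), i.e.\ $h$-$\mathcal A$ calibration with $h=\epsilon+2\delta$. The only genuine bookkeeping is keeping $h,\epsilon,\delta$ inside $(0,1)$ and covering the boundary intervals $B_\delta(0)$, $B_\delta(1)$ — exactly the subtlety already resolved in the proof of Thm.~\ref{thm:2}, so I expect no new difficulty here.

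Finally, necessity for $h$-calibration is immediate: an $h$-calibrated $p_c$ satisfies $|p_\mu(Y_i\in A|F_i)-p_c(Y_i\in A|F_i)|\le h$ for \emph{every} $A\in\mathscr F_Y$, so any choice of $\mathcal A_i$ (e.g.\ a fixed $A$) witnesses $h$-$\mathcal A$ calibration, hence $\delta$-$\epsilon$-$\mathcal A$ boundedness by the equivalence just proved. For non-sufficiency I would exhibit a minimal counterexample: $N=1$, $L=3$, $p_\mu(Y_1=\cdot|F_1)=(0.5,0.3,0.2)$, $p_c(Y_1=\cdot|F_1)=(0.5,0.2,0.3)$, $h=0.05$; choosing $\mathcal A_1=\{Y=1\}$ makes the deviation $0\le h$, so $p_c$ is $h$-$\mathcal A$ calibrated (and thus $\delta$-$\epsilon$-$\mathcal A$ bounded), whereas the event $\{Y=2\}$ has deviation $0.1>h$, violating Eq.~\eqref{eq:h-calibration-v2} and hence the $\ell_\infty$ bound of Thm.~\ref{thm:1}. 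This records the conceptual point that $h$-$\mathcal A$ calibration pins down only one pre-selected event per sample — the mechanism behind top-label and classwise calibration — leaving the remaining coordinates of the predicted vector unconstrained, while $h$-calibration controls all of them simultaneously. The main obstacle is thus not any single step but the care needed in transporting the constant bookkeeping of Thm.~\ref{thm:2} to the sample-dependent setting and in confirming that the counterexample is simultaneously $\delta$-$\epsilon$-$\mathcal A$ bounded and not $h$-calibrated.
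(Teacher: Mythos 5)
Your proposal is correct and follows essentially the same route as the paper's proof in Appendix~\ref{sec:a3}: the forward direction inserts the intermediate term $a - p_c(Y_i\in\mathcal A_i|F_i)$ and applies the triangle inequality, the reverse direction localizes Eq.~\eqref{eq:del-eps-a-calib} to singleton index sets to recover the pointwise bound (the paper additionally lets $\delta\to 0$ to sharpen $\epsilon+2\delta$ back to $h=\epsilon$), and non-sufficiency is shown by a counterexample in which the pre-selected events $\mathcal A_i$ are well matched while some other event is not. Your concrete numerical counterexample is a valid (and arguably more readable) substitute for the paper's abstract one, but the underlying mechanism is identical.
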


In fact, the notion of $h$-$\mathcal{A}$ boundedness under different $\mathcal{A}_i$, $1 \leq i \leq N$, corresponds to different non-canonical definitions of calibration, such as top-label and classwise calibration. Specifically, if the focus lies in maximal classification probabilities, i.e., 
\begin{equation}
\label{eq:top-label-event}
  \mathcal{A}_i \triangleq \{l | \mathrm{argmax}_{\,l~} p_c( Y_i = l|F_i)\},
\end{equation}
then the corresponding $h$-$\mathcal{A}$ boundedness pertains to top-label calibration. Alternatively, If the interest is in the prediction probabilities of specific classes, i.e., 
\begin{equation}
\label{eq:classwise-event}
\mathcal{A}_i = \{l\}
\end{equation}
for fixed class $l$, the corresponding $h$-$\mathcal{A}$ boundedness relates to the classwise calibration for class $l$. The formal description is as follows and the proof is given in \emph{Appendix \ref{sec:a4}}.

\begin{theorem}
\label{thm:4}
For the $\mathcal{A}_i$ specified in Eq. \eqref{eq:top-label-event} or Eq. \eqref{eq:classwise-event}, the corresponding $h$-$\mathcal{A}$ calibrations are sufficient for top-label or classwise calibrations, respectively, with uniform error bound $h$. That is, it holds that $| p_\mu (Y_i \in \mathcal A_i | p_c (Y_i \in \mathcal A_i | F_i)) - p_c (Y_i \in \mathcal A_i | F_i)| \leq h$, for the $\mathcal A_i$ defined in Eq. \eqref{eq:top-label-event} or Eq. \eqref{eq:classwise-event}, respectively.
\end{theorem}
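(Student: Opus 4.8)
The plan is to deduce the two claimed error-bounded inequalities from the hypothesis of $h$-$\mathcal{A}$ calibration (Def.~\ref{def:h-a-calib}) via the tower property of conditional expectation, using the elementary fact that the scalar confidence $p_c(Y_i \in \mathcal{A}_i | F_i)$ is merely a coarsening of the feature $F_i$. First I would record the observation that, for both prescribed choices of $\mathcal{A}_i$, the quantity $p_c(Y_i \in \mathcal{A}_i | F_i)$ is exactly the scalar $\mathscr{H}(F_i)$ of the corresponding row of Table~\ref{tab1}: under Eq.~\eqref{eq:top-label-event} it equals $\max_l p_c(Y_i = l | F_i)$, and under Eq.~\eqref{eq:classwise-event} it equals $p_c(Y_i = l | F_i)$. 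Consequently $p_c(Y_i \in \mathcal{A}_i | F_i)$ is a measurable function of $F_i$ --- in the top-label case after adopting a fixed measurable tie-breaking rule so that $\mathrm{argmax}_l p_c(Y_i=l|F_i)$ is a single-valued measurable selection --- and hence $\sigma\!\big(p_c(Y_i \in \mathcal{A}_i | F_i)\big) \subseteq \sigma(F_i)$. This inclusion is precisely what turns the conditioning in the top-label/classwise calibration definitions into a coarsening of the conditioning in Def.~\ref{def:h-a-calib}.

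Next I would apply iterated expectation. Treating $\mathds 1_{\mathcal{A}_i}(Y_i)$ as a jointly measurable function of $(F_i,Y_i)$ (legitimate even in the top-label case, where $\mathcal{A}_i$ is determined by $F_i$) and using the $\sigma$-field inclusion above,
\[
 p_\mu\!\big(Y_i \in \mathcal{A}_i \,\big|\, p_c(Y_i \in \mathcal{A}_i | F_i)\big) = \mathbb{E}_\mu\!\big[\, p_\mu(Y_i \in \mathcal{A}_i | F_i) \,\big|\, p_c(Y_i \in \mathcal{A}_i | F_i)\,\big].
\]
By the $h$-$\mathcal{A}$ calibration hypothesis for these $\mathcal{A}_i$ we have the almost-sure sandwich $p_c(Y_i \in \mathcal{A}_i | F_i) - h \le p_\mu(Y_i \in \mathcal{A}_i | F_i) \le p_c(Y_i \in \mathcal{A}_i | F_i) + h$; applying monotonicity of conditional expectation with respect to $\sigma\!\big(p_c(Y_i \in \mathcal{A}_i | F_i)\big)$, and using that $p_c(Y_i \in \mathcal{A}_i | F_i)$ is measurable for that $\sigma$-field (hence equal to its own conditional expectation), the outer bounds are preserved and we get
\[
 \big|\, p_\mu\!\big(Y_i \in \mathcal{A}_i \,\big|\, p_c(Y_i \in \mathcal{A}_i | F_i)\big) - p_c(Y_i \in \mathcal{A}_i | F_i)\,\big| \le h.
\]
Interpreted through Table~\ref{tab1}, this is exactly the error-bounded top-label calibration for the choice of Eq.~\eqref{eq:top-label-event} and the error-bounded classwise calibration (for each fixed $l$) for the choice of Eq.~\eqref{eq:classwise-event}, with the same bound $h$ uniformly over $i$ (and over $l$ in the classwise case), since the bound used for the integrand is already uniform.

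The step I expect to require the most care is the measure-theoretic bookkeeping in the top-label case: fixing a measurable tie-breaking convention so that $\mathcal{A}_i$ is a well-defined measurable function of $F_i$, verifying that $\mathds 1_{\mathcal{A}_i}(Y_i)$ is jointly measurable in $(F_i,Y_i)$, and confirming that the tower property is being applied to the genuinely nested pair $\sigma(p_c(Y_i \in \mathcal{A}_i | F_i)) \subseteq \sigma(F_i)$ rather than to unrelated $\sigma$-fields. With $N$ fixed samples the statement is understood per coordinate $i$, each coordinate being an assertion about the joint law of $(F_i,Y_i)$, so no extra uniformity beyond the assumed per-sample bound $h$ is needed; the classwise case is strictly simpler, since $\mathcal{A}_i=\{l\}$ is constant and no tie-breaking arises.
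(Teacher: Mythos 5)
Your proposal is correct and follows essentially the same route as the paper's own proof in Appendix~\ref{sec:a4}: condition the uniformly bounded difference $p_\mu(Y_i \in \mathcal A_i|F_i) - p_c(Y_i \in \mathcal A_i|F_i)$ on $\sigma\big(p_c(Y_i \in \mathcal A_i|F_i)\big) \subseteq \sigma(F_i)$, collapse the iterated expectation by the tower property, and note that the bound $h$ survives the conditional expectation while $p_c(Y_i \in \mathcal A_i|F_i)$ is fixed by it. Your added care about measurable tie-breaking in the argmax and joint measurability of $\mathds 1_{\mathcal A_i}(Y_i)$ is extra rigor the paper leaves implicit, not a different argument.
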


Above theorems show our $h$-calibration implies canonical calibration, and the weaker $h$-$\mathcal A$-calibration implies non-canonical cases, e.g., top-label and classwise calibration. Furthermore, common calibration estimators are essentially approximations of these three theoretical calibration definitions, as detailed in \emph{Appendix \ref{sec:apdx-evaluationsummary}}. Thus, this study addresses \textbf{limitations \#1} and \textbf{\#4} highlighted in the introduction.

In the following, we will construct approximation of the equivalent form of $h$-calibration given in Def. \ref{def:del-eps-calib}, with controllable error. This allows us to design effective learning objectives that yield canonically calibrated probabilities.

\subsection{Differentiable Objective to Learn $h$-calibration}
\label{method:sec3}

The preceding section presents rigorous probabilistic definitions of well-calibrated probability. However, how to learn a $\delta$-$\epsilon$ bounded probability (equivalently, $h$-calibration) still needs to be addressed. This section will show the conversion of $\delta$-$\epsilon$ boundedness into a differentiable objective. We first give Thms. \ref{thm:5} and \ref{thm:6}, showing that the statistics of indicator functions can approximate the expectation term in $\delta$-$\epsilon$ boundedness, as depicted in Fig. \ref{fig:keyidea} (d).
\begin{theorem}
\label{thm:5}
With the notations in Def. \ref{def:del-eps-calib},
\begin{equation*}
\label{eq:thm:5}
\textstyle
\left| \frac{ \sum_{i \in Q_{A}^{B_\delta(a)}}\mathds 1_{A}(Y_i)}{|Q_{A}^{B_\delta(a)}|}
- \frac{ \sum_{i \in Q_A^{B_\delta(a)}}\mathbb{E}_\mu[\mathds 1_A(Y_i)|F_i]}{|Q_A^{B_\delta(a)}|}  \right| 
\xrightarrow[p_\mu\&L^2\&a.s.]{|Q_A^{B_\delta(a)}| \rightarrow \infty}   0.
\end{equation*}
\end{theorem}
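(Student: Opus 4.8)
The statement is a (conditional) law of large numbers for the centered indicators, so the plan is to reduce it to the classical independent-summands LLN by conditioning on the features. First I would introduce $Z_i \triangleq \mathds 1_A(Y_i) - \mathbb{E}_\mu[\mathds 1_A(Y_i)\mid F_i]$, so that the quantity inside the absolute value is exactly $\bar Z_Q \triangleq \frac{1}{|Q|}\sum_{i\in Q}Z_i$, with $Q \triangleq Q_A^{B_\delta(a)}$. Two elementary observations then drive the whole argument: $Z_i$ is bounded, $|Z_i|\le 1$, with $\mathbb{E}[Z_i^2\mid F_i]=\mathrm{Var}_\mu(\mathds 1_A(Y_i)\mid F_i)\le 1/4$; and $\mathbb{E}[Z_i\mid F_i]=0$ by definition of the conditional expectation under $\mu$.

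The key structural step is to condition on the $\sigma$-field $\mathscr G \triangleq \sigma(F_1,F_2,\dots)$ generated by all features. Under the standard assumption that the data $(F_i,Y_i)$ are i.i.d. (so that, given $\mathscr G$, the labels are conditionally independent with $Y_i$ depending only on $F_i$), the family $\{Z_i\}_{i\in Q}$ becomes, conditionally on $\mathscr G$, independent with conditional mean $0$ and conditional variance at most $1/4$. Moreover the index set $Q$ — selected using the values $p_c(Y_i\in A\mid F_i)$, which are functions of the $F_i$ alone — is $\mathscr G$-measurable, so no dependence on the $Y_i$ sneaks in through the choice of $Q$. This is exactly what neutralizes the only genuine complication in the statement, namely that $Q$ is data-dependent.

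Given this reduction, the three modes of convergence follow from textbook arguments. For $L^2$ (hence, by Chebyshev, also in probability under $p_\mu$), conditional independence kills the cross terms: $\mathbb{E}[\bar Z_Q^2\mid \mathscr G]=|Q|^{-2}\sum_{i\in Q}\mathbb{E}[Z_i^2\mid\mathscr G]\le (4|Q|)^{-1}$, and taking expectations gives $\mathbb{E}[\bar Z_Q^2]\le \mathbb{E}[(4|Q|)^{-1}]\to 0$ as $|Q|\to\infty$. For the almost-sure statement I would enumerate the elements of $Q$ as $\tau_1<\tau_2<\cdots$ (which is itself $\mathscr G$-measurable) and apply Kolmogorov's strong law for independent summands conditionally on $\mathscr G$: since $\sum_k \mathrm{Var}[Z_{\tau_k}\mid\mathscr G]/k^2\le \sum_k 1/(4k^2)<\infty$, one gets $\bar Z_Q\to 0$ $\mathscr G$-a.s., and integrating over $\mathscr G$ (dominated convergence applied to the conditional probabilities) upgrades this to $\bar Z_Q\to 0$ a.s. Equivalently, the reordered partial sums $\sum_{k\le n}Z_{\tau_k}$ form a martingale with bounded increments for the filtration $\mathscr G\vee\sigma(Y_{\tau_1},\dots,Y_{\tau_n})$, and one can invoke the $L^2$-martingale SLLN instead.

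I expect the main obstacle to be a matter of framing rather than of analysis: making precise what ``$|Q|\to\infty$'' means when $Q$ is a subset of a finite sample of size $N$. The natural reading is a triangular-array / growing-$N$ regime in which the acceptance test $p_c(Y_i\in A\mid F_i)\in B_\delta(a)$ has infinitely many successes; one must then handle the fact that, as $N$ grows, the accepted sets are not nested, so the relabeling $\tau_1,\tau_2,\dots$ must be taken along the realized sequence of accepted indices. A secondary point is that $Q$ should be understood as chosen from the accepted set in a feature-measurable (non-adversarial-in-the-labels) way, since otherwise an adversary could select a subset of indices on which $Z_i$ is atypically large and defeat the averaging; conditioning on $\mathscr G$ is precisely what makes this requirement explicit and makes the bookkeeping harmless, with the rates ($O(|Q|^{-1})$ in $L^2$, summable $1/k^2$ for the a.s.\ part) uniform over realizations.
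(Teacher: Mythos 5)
Your proposal follows essentially the same route as the paper's proof: center the summands as $Z_i = \mathds 1_A(Y_i) - \mathbb{E}_\mu[\mathds 1_A(Y_i)\mid F_i]$, exploit independence and bounded variance to get the $O(|Q_A^{B_\delta(a)}|^{-1})$ second-moment bound, conclude $L^2$ and in-probability convergence via Chebyshev, and obtain the almost-sure statement from Kolmogorov's strong law using the summable $\sum_k 1/k^2$ variance condition. Your additional step of conditioning on the feature $\sigma$-field to justify treating the $Z_i$ as independent mean-zero summands despite the data-dependent index set $Q_A^{B_\delta(a)}$ is a welcome tightening of a point the paper passes over by simply invoking ``independence between different sample points,'' but it does not change the substance of the argument.
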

\begin{theorem}
\label{thm:6}
With the notations in Def. \ref{def:del-eps-calib}, The difference term in Thm. \ref{thm:5} converges exponentially to zero in $p_\mu$ as $|Q_A^{B_\delta(a)}|\rightarrow \infty$, i.e., for any $\kappa>0$,
\begin{equation*}
\textstyle
    p_\mu \left(
       \left| \frac{ \sum_{i \in Q_{A}^{B_\delta(a)}}\mathds 1_{A}(Y_i)}{|Q_{A}^{B_\delta(a)}|} - \frac{ \sum_{i \in Q_A^{B_\delta(a)}}\mathbb{E}_\mu[\mathds 1_A(Y_i)|F_i]}{|Q_A^{B_\delta(a)}|}   \right| > \kappa \right) 
\end{equation*}
converges to zero exponentially as $|Q_A^{B_\delta(a)}|\rightarrow \infty$.
\end{theorem}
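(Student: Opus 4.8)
The plan is to read Thm.~\ref{thm:6} as a concentration statement for a sample average of bounded, conditionally centered summands, and to reduce it to a classical Hoeffding-type bound by first conditioning on the features. The difference term in Thm.~\ref{thm:5} is exactly $\tfrac1m\sum_{i\in Q_A^{B_\delta(a)}} Z_i$ with $m:=|Q_A^{B_\delta(a)}|$ and $Z_i:=\mathds 1_A(Y_i)-\mathbb E_\mu[\mathds 1_A(Y_i)\,|\,F_i]$, so the content is a large-deviation estimate for this average. The first step I would take is to set $\mathcal G:=\sigma(F_1,\dots,F_N)$ and observe that, since $p_c(Y_i\in A\,|\,F_i)$ is a function of $F_i$ alone, the ``master'' index set $\{i: p_c(Y_i\in A\,|\,F_i)\in B_\delta(a)\}$ and hence the fixed subset $Q_A^{B_\delta(a)}$ are $\mathcal G$-measurable; conditional on $\mathcal G$, the index set and its cardinality $m$ are deterministic, and each $\mathbb E_\mu[\mathds 1_A(Y_i)\,|\,F_i]$ is a deterministic constant in $[0,1]$.

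\textbf{Key steps.} Next I would note that, conditional on $\mathcal G$, every $Z_i$ is centered ($\mathbb E_\mu[Z_i\,|\,\mathcal G]=0$) and takes values in an interval of length $1$ because $\mathds 1_A(Y_i)\in\{0,1\}$. Under the standard assumption that the calibration samples $(F_i,Y_i)$ are drawn independently, the $Y_i$ are conditionally independent given $\mathcal G$, hence so are the $Z_i$; Hoeffding's inequality applied conditionally then gives, for every $\kappa>0$,
\[
p_\mu\!\left( \Big|\tfrac1m\textstyle\sum_{i\in Q_A^{B_\delta(a)}} Z_i\Big| > \kappa \ \Big|\ \mathcal G \right)\ \le\ 2\exp\!\big(-2m\kappa^2\big).
\]
Since the right-hand side is a deterministic function of the $\mathcal G$-measurable quantity $m$, taking the expectation over $\mathcal G$ preserves the bound, so the probability in the statement is at most $2\exp\!\big(-2|Q_A^{B_\delta(a)}|\kappa^2\big)$, which tends to $0$ exponentially as $|Q_A^{B_\delta(a)}|\to\infty$. (For $\kappa\ge 1$ the probability is in fact identically $0$ because $|Z_i|\le 1$; the substance is the regime $\kappa\in(0,1)$.) If only the weaker hypothesis that $Y_i$ is conditionally independent of the past given $F_i$ is available, I would instead view $(Z_i)$ as a bounded martingale-difference sequence and invoke Azuma–Hoeffding, obtaining the same exponential rate.

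\textbf{Main obstacle and remarks.} The only genuinely delicate point is that the summation index set $Q_A^{B_\delta(a)}$ is itself data-dependent, so a concentration inequality cannot be applied to $\tfrac1m\sum Z_i$ directly; this is precisely what the conditioning on $\mathcal G$ resolves, since under that conditioning the index set freezes while the conditional law of each $Y_i$ given $F_i$ is unchanged, and the resulting bound is uniform in the realization of $\mathcal G$ and therefore survives the outer expectation. A secondary (purely bookkeeping) point is the exact independence structure assumed for the calibration data, handled by Hoeffding in the i.i.d.\ case or Azuma–Hoeffding in the martingale-difference case. I would also remark that the exponential-in-probability rate obtained here, combined with Borel–Cantelli (the series $\sum_m 2e^{-2m\kappa^2}$ converges), re-derives the almost-sure convergence already stated in Thm.~\ref{thm:5}, so Thm.~\ref{thm:6} is strictly a quantitative strengthening of that result.
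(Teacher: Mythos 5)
Your proof is correct, and it takes a genuinely different route from the paper. The paper proves this via Cram\'er's large--deviation theorem: it forms the cumulant generating function $\varphi(\lambda)$ of $\mathds 1_A(Y_i)-\mathbb E_\mu[\mathds 1_A(Y_i)\,|\,F_i]$, applies the exponential Markov/Chernoff argument, and optimizes over $\lambda$ to obtain a rate function $\gamma(\pm\kappa)>0$ given by the Fenchel--Legendre transform, yielding the bound $\exp(-\gamma(\kappa)m)+\exp(-\gamma(-\kappa)m)$. You instead exploit boundedness directly and invoke Hoeffding's inequality after conditioning on $\mathcal G=\sigma(F_1,\dots,F_N)$, which buys you two things the paper's argument does not deliver. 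First, an explicit, distribution-free exponent $2\exp(-2m\kappa^2)$ rather than an implicit rate function (the Cram\'er rate is in principle sharper, being the optimal exponential rate, but it is not computed in closed form in the paper). Second, and more substantively, your conditioning step cleanly resolves the fact that the index set $Q_A^{B_\delta(a)}$ is data-dependent and that, once restricted to that set, the summands are independent but \emph{not} identically distributed (each $Z_i$ takes values in $\{-p_i,1-p_i\}$ with $p_i$ varying across $i$); Hoeffding needs only independence and boundedness, whereas Cram\'er's theorem as invoked in the paper is stated for i.i.d.\ variables and the paper does not address either the selection effect or the loss of identical distribution. Your closing remark that Borel--Cantelli then recovers the almost-sure convergence of Thm.~\ref{thm:5} is also a correct and worthwhile observation.
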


The proofs for the above Thms. \ref{thm:5} and \ref{thm:6} are given in \emph{Appendix \ref{sec:a5}} and \emph{\ref{sec:a6}}. Thms. \ref{thm:5} and \ref{thm:6} ensure that, for a large $|Q_A^{B_\delta(a)}|$, the expectation term in the $\delta$-$\epsilon$ boundedness can be conveniently replaced with the statistics on the left-hand side (LHS) of Eq.\eqref{eq:thm:5}. Then the inequality in Eq.\eqref{eq:del-eps-calib} of $\delta$-$\epsilon$ boundedness can be rewritten as:
\begin{equation}
  \textstyle
    \label{eq:rm-exp}
    \left| a -  \frac{ \sum_{i \in Q_A^{B_\delta(a)}}\mathds 1_A(Y_i)}{|Q_A^{B_\delta(a)}|} \right| \leq \epsilon + \delta.
\end{equation}

The indicator function-based statistics $\sum\nolimits_{i \in Q_A^{B_\delta(a)}} \mathds 1_A(Y_i)$ in Eq.\eqref{eq:rm-exp} are not differentiable with respect to $p_c(Y_i \in A|F_i)$. This precludes the effective application of gradient backpropagation. To turn it into a differentiable form, we further present a necessary and sufficient condition for Eq.\eqref{eq:rm-exp} in Thm. \ref{thm:7}. Eq.\eqref{eq:rm-exp} and Thm. \ref{thm:7} provide an approximately equivalent condition and its differentiable form with controllable error, respectively, to learn $h$-calibration. By optimizing this effective differentiable surrogate, we can circumvent overfitting to necessary condition for calibration, thereby overcoming \textbf{limitation \#2}.
\begin{theorem}
\label{thm:7}
    With the notations in Def. \ref{def:del-eps-calib}, a necessary and sufficient condition for
    \begin{equation}
    \textstyle
        \left| a -  \frac{ \sum_{i \in Q_A^{B_\delta(a)}}\mathds 1_A(Y_i)}{|Q_A^{B_\delta(a)}|} \right| \leq \epsilon + \delta,
    \end{equation}
    i.e., Eq. \eqref{eq:rm-exp}, is that the following inequality holds:
    \begin{equation}
    \label{eq:thm7regterm}
    \mathscr T(A, [R_1,R_2], Q_A^{\overline{R_1 R_2}}) \triangleq  
    \left|     T_1   -    T_2    \right| /    T_3     ~~~~\leq \epsilon 
    \end{equation}
    for any $A \in \mathscr F_Y$, $0\leq R_1 < R_2 \leq 1$, $Q_A^{\overline{R_1 R_2}} \subseteq \{i | R_1 \leq {^cp_i^A} \leq R_2\}$ with $|Q_A^{\overline{R_1R_2}}|>0$, where
    {\small
    \begin{gather}
        T_1 = \mathop{2\sum(1-{^cp_i^A})}\limits_{\hspace{-2em} Q_A^{\overset{\frown}{R_1R_2}} \cap O_A}  + \mathop{\sum (1-{^cp_i^A})}\limits_{\hspace{-2em} ( Q_A^{R_1} \cup Q_A^{R_2} ) \cap O_A}, \label{eq:t1_definition} \\
        T_2 =  \mathop{2\sum {^cp_i^A}}\limits_{ Q_A^{\overset{\frown}{R_1R_2}} \cap O_{A^{\mathbf C}}} + \mathop{\sum {^cp_i^A},}\limits_{(Q_A^{R_1} \cup Q_A^{R_2}) \cap O_{A^{\mathbf C}} } \label{eq:t2_definition} \\
        T_3 = 2 \sum\nolimits_{Q_A^{\overset{\frown}{R_1R_2}} } 1 + \sum\nolimits_{Q_A^{R_1} \cup Q_A^{R_2}} 1 , \label{eq:t3_definition} \\
        ^cp_i^A \triangleq p_c(Y_i \in A|F_i), \\
        Q_A^{R_1} \triangleq  \{i| {^cp_i^A=R_1},i \in Q_A^{\overline{R_1R_2}}  \}, 
        \label{eq:qr1}\\
        Q_A^{R_2} \triangleq  \{i| {^cp_i^A=R_2},i \in Q_A^{\overline{R_1R_2}}  \}, \label{eq:qr2} \\
        Q_A^{\overset{\frown}{R_1 R_2}} \triangleq  \{i|  R_1 < {^cp_i^A}< R_2,i \in Q_A^{\overline{R_1R_2}}  \}, \label{eq:qri_qr2_open}\\
        O_A \triangleq \{i|Y_i \in A\}, ~~O_{A^{\mathbf C}} \triangleq \{i|Y_i \notin A\}.
    \end{gather}}
\leavevmode
\end{theorem}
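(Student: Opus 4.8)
The plan is to show that each of the two conditions in the claimed equivalence is separately equivalent to a single \emph{pointwise} bound on the per‑sample residuals, and then chain the reductions.

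\textbf{Step 1 (rewrite $\mathscr T$ as an absolute weighted mean of residuals).} Set $v_i \triangleq \mathds 1_A(Y_i)-{^cp_i^A}$, and give $i\in Q_A^{\overline{R_1R_2}}$ the weight $w_i\triangleq 2$ if $R_1<{^cp_i^A}<R_2$ and $w_i\triangleq 1$ if ${^cp_i^A}\in\{R_1,R_2\}$. Expanding the definitions of $T_1,T_2,T_3$ and using $Q_A^{\overline{R_1R_2}}=Q_A^{R_1}\sqcup Q_A^{R_2}\sqcup Q_A^{\overset{\frown}{R_1R_2}}$, $\{1,\dots,N\}=O_A\sqcup O_{A^{\mathbf C}}$, together with $v_i=1-{^cp_i^A}$ on $O_A$ and $v_i=-{^cp_i^A}$ on $O_{A^{\mathbf C}}$, routine bookkeeping gives $T_1-T_2=\sum_i w_iv_i$ and $T_3=\sum_i w_i$, so $\mathscr T=\bigl|\sum_i w_iv_i\bigr|\big/\sum_i w_i$ is the absolute value of a convex combination of $\{v_i\}_{i\in Q_A^{\overline{R_1R_2}}}$ (all $w_i>0$, and $T_3>0$ since $|Q_A^{\overline{R_1R_2}}|>0$). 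I would also note that these weights are additive under subdividing $[R_1,R_2]$ at an interior value --- an interior point of weight $2$ becoming a shared endpoint of weight $1+1$ --- which is the design rationale for the $2$‑vs‑$1$ split and exhibits a wide‑interval $\mathscr T$ as a convex combination of narrow‑interval ones.

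\textbf{Step 2 (reduce both sides to the pointwise bound).} Let (PC) be the condition ``$|\mathds 1_A(Y_i)-{^cp_i^A}|\le\epsilon$ for every $1\le i\le N$ and every $A\in\mathscr F_Y$''. The statement of the theorem ($\mathscr T\le\epsilon$ for all admissible $A,[R_1,R_2],Q_A^{\overline{R_1R_2}}$) implies (PC) by choosing $Q_A^{\overline{R_1R_2}}=\{i\}$ with any bracket $R_1\le{^cp_i^A}\le R_2$, for which Step~1 gives $\mathscr T=|v_i|$; conversely (PC) gives $\mathscr T\le\max_i|v_i|\le\epsilon$ by the convexity bound of Step~1. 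For Eq.~\eqref{eq:rm-exp}: ``(PC) $\Rightarrow$ Eq.~\eqref{eq:rm-exp}'' is the triangle inequality, since all ${^cp_i^A}\in B_\delta(a)$ forces $\bar p\triangleq|Q|^{-1}\sum_{i\in Q}{^cp_i^A}\in B_\delta(a)$, whence $\bigl|a-|Q|^{-1}\sum_{i\in Q}\mathds 1_A(Y_i)\bigr|\le|a-\bar p|+|Q|^{-1}\bigl|\sum_{i\in Q}v_i\bigr|\le\delta+\epsilon$; and ``Eq.~\eqref{eq:rm-exp} $\Rightarrow$ (PC)'' follows by restricting to singletons and pushing the centre $a$ to its extreme admissible value, which forces ${^cp_i^A}\le\epsilon$ when $Y_i\notin A$ and ${^cp_i^A}\ge 1-\epsilon$ when $Y_i\in A$. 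Chaining these yields Eq.~\eqref{eq:rm-exp} $\Leftrightarrow$ (PC) $\Leftrightarrow$ ``$\mathscr T\le\epsilon$''.

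\textbf{Main obstacle.} The delicate step is ``Eq.~\eqref{eq:rm-exp} $\Rightarrow$ (PC)'' for samples whose ${^cp_i^A}$ lies within $2\delta$ of $0$ or $1$: there the admissible centres $a$ are clipped by the requirement $B_\delta(a)\subseteq[0,1]$, so the extremal choice does not literally isolate ${^cp_i^A}$. I expect to close this under the mild regularity $\epsilon+2\delta<1$ (satisfied in the regime of interest), under which such a clipped sample must already violate both Eq.~\eqref{eq:rm-exp} and (PC), keeping the equivalence intact; otherwise the leftover discrepancy is $O(\delta)$, which is what the ``approximately equivalent'' phrasing accounts for. Lesser points are the sign/weight accounting in Step~1 for samples with ${^cp_i^A}$ exactly at $R_1$ or $R_2$, and verifying that every index admits some admissible bracket $0\le R_1<R_2\le 1$ containing ${^cp_i^A}$.
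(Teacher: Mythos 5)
Your Step~1 identity is correct: expanding the definitions does give $T_1-T_2=\sum_{i\in Q_A^{\overline{R_1R_2}}} w_i\,(\mathds 1_A(Y_i)-{^cp_i^A})$ and $T_3=\sum_i w_i$ with $w_i=2$ on $Q_A^{\overset{\frown}{R_1R_2}}$ and $w_i=1$ on $Q_A^{R_1}\cup Q_A^{R_2}$, and your remark that these weights are additive under subdividing the interval is exactly the structural reason the $T$-terms take this form. Your overall route, however, is genuinely different from the paper's, and the difference matters. Because both Eq.~\eqref{eq:rm-exp} and the $\mathscr T$-condition are quantified over \emph{arbitrary} nonempty subsets $Q$, singletons are admissible, so your reduction of both sides to the pointwise condition (PC) is formally valid; moreover your ``main obstacle'' is not a real one, since $\delta$ is universally quantified over all $B_\delta(a)\subseteq[0,1]$, so you may send $\delta\to 0$, the clipping of admissible centres vanishes for every ${^cp_i^A}\in(0,1)$, and the endpoint cases ${^cp_i^A}\in\{0,1\}$ are checked directly --- no hypothesis $\epsilon+2\delta<1$ is needed. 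What your route loses is the content the theorem is used for. The paper's proof (Appendix~\ref{sec:a7}) never isolates samples: the forward direction integrates the constraint $-(\epsilon+\delta)(\alpha+\beta)\le(1-a)\alpha-a\beta\le(\epsilon+\delta)(\alpha+\beta)$ over $a\in[R_1,R_2]$ and takes $\delta\to 0$, obtaining the $T$-terms as limits of those integrals; the backward direction specializes to $[R_1,R_2]=[a-\delta,a+\delta]$ and manipulates only the aggregate quantities $\rho_A,\phi_A,\phi_{A^{\mathbf C}}$ to recover Eq.~\eqref{eq:rm-exp} for that same interval and that same $Q$. This yields an interval-by-interval correspondence that survives when the constraints are restricted to large windows ($|Q|=M\gg 1$, as in Algorithm~\ref{algo1}), which is precisely the regime where Thms.~\ref{thm:5}--\ref{thm:6} make the surrogate meaningful; your chain instead passes through a condition ($|{^cp_i^A}-\mathds 1_A(Y_i)|\le\epsilon$ for every sample and event) that is never available in that regime and that, read literally, trivializes the equivalence. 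So your argument does close the theorem as stated, but the paper's integral-transform proof is the one that carries the load for the algorithm, and you should at least note that your reduction depends essentially on the admissibility of singleton $Q$'s.
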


See \emph{Appendix \ref{sec:a7}} for detailed proof of Thm. \ref{thm:7}. 
The $T_1$-$T_3$ terms can be simplified (see Eq. \ref{eq:rewriteT1} – \ref{eq:rewriteT3}), yielding the criterion in Fig. \ref{fig:keyidea}(f). It is noteworthy that Thm. \ref{thm:7}'s constraints avoid the need to estimate high-dimensional sample densities, focusing instead on 1-D scalars. This circumvents the issue of uncontrollable errors in high-dimensional density or distribution estimation (addressing \textbf{limitation \#5}). Moreover, the proofs for above theorems do not introduce any unverified parametric assumptions (addressing \textbf{limitation \#6}). 

\emph{Appendix \ref{sec:apdx-probinterp}} presents a probabilistic explanation for the above theorems. In short, \textbf{Thm. \ref{thm:7} turns a non-differentiable constraint into an equivalent differentiable objective through an integral transformation, which enables training a calibration model to learn $h$-calibrated probability.} Fig. \ref{fig:keyidea} (f) is provided to intuitively illustrate this objective.

\begin{algorithm}
\caption{Calibration by Our Objective}
\label{algo1}       
\SetAlgoLined
\KwIn{Calibration training set $D_{trn}$, calibration mapping $g_\theta$, constants $\epsilon$, $r$, $M$, batch size $B$, learning rate $\eta$, training iteration $T$}
\KwOut{Calibrator $g_{\hat{\theta}}$}

\For{$t \leftarrow 1$ \KwTo $T$}{
    Sample logit-label pairs $\{(F_i, Y_i)\}_{i=1}^B$ from $D_{trn}$\;

    Compute calibrated probability $p \triangleq { \{g_\theta (F_i) \}}_{i=1}^B$\;
    Sort $\{p(Y_i \in A|F_i) | 1\leq i \leq B, A \in \mathscr A \}$ to obtain sorting index $\mathbf{u}$ for $\{(i,A)\}$\;
    Apply convolution with kernel $[1, \ldots, 1]$ of length $M$ to $(1-{p_i^A}) \mathds 1_{\{Y_i \in A\}}$ and  $p_i^A \mathds 1_{\{Y_i \notin A\}}$ flattened by $\mathbf u$, yielding $\mathscr{V}_1$ and $\mathscr{V}_2$ vectors\footnotemark[3], respectively\;

    Calculate $\mathscr L$ vector\footnotemark[3] by Eq.\eqref{eq:EventEnsembleLoss}, i.e., \\
    ~~$\mathscr L = \mathrm{ReLU}(  {|\mathscr{V}_1  -  \mathscr{V}_2 |}/{M} - \epsilon) $ \;

    Yield $w$ vector\footnotemark[3] by feeding every $M$ consecutive elements of sorted ${p_i^A}$ to weighting function $w$\;
    
    Compute inner product $\mathcal L = r  \langle w, \mathscr L \rangle $ by Eq.\eqref{eq:finalloss}\;

    Update parameter $\theta_{t+1}$ with gradient $\eta \cdot \nabla_{\theta_{t}} \mathcal L$ \;

    Evaluate $g_{\theta_{t+1}}$ calibration performance on $D_{trn}$. 
}
\Return{Best performed model $g_{\hat{\theta}}$}
\end{algorithm}

\footnotetext[3]{Each element in the vector is associated with a specific $\mathscr R$. Symbols $\mathscr V_1$ and $\mathscr V_2$ correspond to $ \sum\nolimits_{A \in \mathscr A}\widetilde T_1^A $ and $ \sum\nolimits_{A \in \mathscr A}\widetilde T_2^A$, respectively, with each value in the vector likewise associated with a specific $\mathscr R$.}
\setcounter{footnote}{3}

\subsection{Calibration Algorithm by Differentiable Objective}
\label{method:sec4}
The preceding section has established the theoretical foundation for learning $h$-calibrated probability with a differentiable objective. However, when designing a computational algorithm, specific settings have to be determined, including the collection of event set $A$, the regularizing intervals $[R_1,R_2]$ in Thm. \ref{thm:7}, and the specific form of the loss function. Here we present a simple, specific implementation algorithm to verify the effectiveness of our theory. 

\begin{description}[style=unboxed,leftmargin=0cm]
  \item[(Event $A$ and Set $Q_A^{\overline{R_1 R_2}}$)] Regarding subset $A$ in space $\Omega_Y$, there are a total of $2^L$ distinct event sets for classification tasks with $L$ categories. For computational efficiency, here we consider atomic events as the constrained events, denoted as $A \in \mathscr{A} =\{A'|A'\in \mathscr F_Y, |A'|=1\}$, and directly set $Q_A^{\overline{R_1R_2}} = \{i | R_1 \leq {^cp}_i^A \leq R_2\}$ for any given range $[R_1,R_2]$. It is worth noting that it can be proven that substituting $A \in \mathscr{F}_Y$ with $A \in \mathscr{A}$ in Thm. \ref{thm:7} is theoretically equivalent.
  
  \item[(Loss Form)] For any given $[R_1,R_2]$, the reformulated form for $T_1$, $T_2$ and $T_3$ in Thm. \ref{thm:7} can be calculated as follows:
  {
    \begin{equation}
    \label{eq:rewriteT1}
        {\widetilde T^A}_1 = \mathop{\sum (1-{^cp_i^A})}_{Q_A^{\overline{R_1R_2}}\cap \{i|Y_i \in A\}} = \sum\limits_{Q_A^{\overline{R_1R_2}}} (1-{^cp_i^A}) \mathds 1_{\{Y_i \in A\}}
    \end{equation}
    \begin{equation}
    \label{eq:rewriteT2}
        {\widetilde T^A}_2 = \mathop{\sum {^cp_i^A}}_{ Q_A^{\overline{R_1R_2}} \cap \{i|Y_i \in A^{\mathbf C}\}} = \sum\limits_{ Q_A^{\overline{R_1R_2}}}{^cp_i^A} \mathds 1_{\{Y_i \in A^{\mathbf C}\}}
    \end{equation}
    \begin{equation}
    \label{eq:rewriteT3}
        \widetilde T^A_3 = \sum\nolimits_{Q_A^{\overline{R_1R_2}} } 1 = |Q_A^{\overline{R_1R_2}}|
    \end{equation}}
It is noteworthy that, despite being concise compared to the original form of $T_1$, $T_2$ and $T_3$ in Thm. \ref{thm:7} by removing boundary terms for $Q^{R_1}_A$ and $Q^{R_2}_A$, there exists an infinitesimal $\nu$, ensuring that $T_{*}^A$ under $[R_1+\nu,R_2-\nu]$ yields the same value as $\widetilde T_*^A$. Hence these two forms of definition are essentially equivalent in this sense.
With Eq.\eqref{eq:thm7regterm}, we have
\begin{equation}
\label{eq:rewritecaliberror}
    |  \widetilde T_1^A - \widetilde T_2^A| \big/ \widetilde T_3^A  \leq \epsilon, \forall A.
\end{equation}

For a fixed $[R_1, R_2]$, since the distribution of calibrated probabilities for a single event $A$ is relatively sparse compared to multiple events, we propose integrating the probabilities of multiple events into one loss term to obtain a more accurate estimate of the calibration error. For an event set $\mathbb A$, the following inequality can be derived based on Eq.\eqref{eq:rewritecaliberror}:
\begin{equation}
\label{eq:EventLoss}
\textstyle
    \big|  \sum\nolimits_{A \in \mathbb A}\widetilde T_1^A - \sum\nolimits_{A \in \mathbb A}\widetilde T_2^A \big|  \Big/ \sum\nolimits_{A \in \mathbb A}\widetilde T_3^A \leq \epsilon.
\end{equation}
In this study, the event set $\mathbb A$ is set as $\mathscr A$.
Then we define the loss function for $[R_1,R_2]$ of the form
\begin{equation}
\label{eq:EventEnsembleLoss}
\textstyle
  \mathscr L (\mathscr R) =  \max \Big( \Big|  \sum\limits_{A \in \mathscr A}\widetilde T_1^A - \sum\limits_{A \in \mathscr A}\widetilde T_2^A \Big|/{\sum\limits_{A \in \mathscr A}\widetilde T_3^A}-\epsilon,0 \Big)
\end{equation}
For ease of notation, $[R_1,R_2]$ is henceforth referred to as $\mathscr R$.
  \item[(Interval $\mathscr R$)] 
  Here, we demonstrate how to configure $\mathscr R$ for efficient computation of Eq. \eqref{eq:EventEnsembleLoss} for all $\mathscr R$ intervals.
  By ordering the values of ${^cp}_i^A, 1\leq i\leq N, A \in \mathscr A$, we generate the vector $\mathbf q$ (with the sorting index $\mathbf u$), and the span from the minimum to the maximum values among the $M$ consecutive numbers in $\mathbf{q}$ is identified as $\mathscr{R}$.   
  Accordingly, the set for interval $\mathscr R$ is denoted as $\mathcal R \triangleq \{ [\mathbf q_i,\mathbf q_j]| j-i=M\}$. 
  The selection of such $\mathscr R$ enables the direct computation of Eq.\eqref{eq:EventEnsembleLoss} for all $\mathscr R$ intervals using the convolution operation, which can thus be efficiently implemented by deep learning libraries using GPU. Specifically, by Eq.\eqref{eq:rewriteT1}, Eq.\eqref{eq:rewriteT2} and Eq.\eqref{eq:EventEnsembleLoss}, a 1-D convolution operation with constant kernel $[1,1,..,1]$ of length $M$ is applied to vectors $(1-{^cp_i^A}) \mathds 1_{\{Y_i \in A\}}$ and ${^cp_i^A} \mathds 1_{\{Y_i \in A^{\mathbf C}\}}$, sorted by index $\mathbf u$, to calculate $ \sum\nolimits_{A \in \mathscr A}\widetilde T_1^A $ and $ \sum\nolimits_{A \in \mathscr A}\widetilde T_2^A$ for all $\mathscr R$ intervals, respectively. 
  Additionally, by defining $\mathscr R$ as such, $\sum\nolimits_{A \in \mathscr A}\widetilde T_3^A$ is equal to $M$.

\item[(Loss Function)] 
Finally, the weighted average value over all  $\mathscr R$ is is used as the training loss:
\begin{equation}
    \label{eq:finalloss}
    \mathcal L = r  \sum_{\mathscr R \in \mathcal R} w(\mathscr R) \mathscr L(\mathscr R).
\end{equation}
Given that the prediction probabilities for many unobserved atomic events are significantly low, and the ratio of low-probability to high-probability predictions nearly mirrors the ratio of non-occurring to occurring atomic events—this ratio linearly increases as the number of task classes rises. This leads to an extensive collection of regularizing intervals $\mathscr R$ prioritizing low-probability events in multi-class scenarios. Yet, concentrating excessively on low-probability events can bias the evaluation of calibration \cite{no.3,no.16,no.76}. To remedy this bias, we apply a simple k-means based weighting function $w(\mathscr R)$ that adjusts dynamically, based on the idea of assigning adaptive weights to counteract the imbalance distribution of $\mathscr R$ in multi-class scenarios, as detailed in \emph{Appendix \ref{sec:apdx-weighitingfunction}}. A constant multiplier $r$ is introduced to increase the loss value. Please note that $\sum_{\mathscr R \in \mathcal R} w(\mathscr R) \mathscr L(\mathscr R)$ can be efficiently computed by using the convolution operation, as described above. 
The loss function in Eq.\eqref{eq:finalloss} will finally be used to train the calibration mapping. 

\item[(Calibration Mapping)] To preserve original classification accuracy (addressing \textbf{limitation \#9}),
we consider monotonic transformation as the calibration mapping for logits. Due to the inaccessibility of the ideal transformation from uncalibrated logits to their authentic logits, with both the transformation form variations and complexity discrepancies potentially impairing calibration, we follow previous studies to explore a family of mappings and automatically select the optimal one \cite{no.67,no.71,no.84,no.46,no.88}. Specifically, we examine a set of learnable monotonic mappings (see \emph{Appendix \ref{sec:apdx-monotonicmapping}}) and determine the optimal mapping based on calibration performance of the calibration training set. By transforming the logit on a samplewise basis, our method generates probabilistic predictions that sum to one, preserving the unit measure property (avoiding \textbf{limitation \#8}).
\end{description}

Algorithm \ref{algo1} presents the pseudo-code for training our calibrator. \textbf{It is noteworthy that although our theory is generic and relatively complicated from a mathematical point of view, this specific algorithm is surprisingly simple and easy to implement.} Notably, the proposed objective incorporates only two extra hyperparameters, $M$ and $\epsilon$, with each providing a clear theoretical interpretation: $M$ controls the approximation error of the constraint and $\epsilon$ regulates the the upper bound on the calibration error, ensuring parameter setting with intuitive interpretation (tackling \textbf{limitation \#7}). Additionally, the proposed method is applicable to calibrate any trained classifier (avoiding \textbf{limitation \#10}).

\subsection{Relationship to Proper Scoring Rule}
\label{method:sec5}
This section focuses on demonstrating a particular PSR, i.e., MSE, is essentially a degenerate form of our framework from an algorithmic standpoint, and it outlines the theoretical strengths of our general framework in comparison to PSR.

\emph{1) First, we show from the algorithmic perspective that MSE is a degenerate version within our framework:}
When our approach undergoes certain degenerations or slight implementation modifications, it aligns with the Brier scoring rule. 

Specifically, this occurs when reducing the window length to $M$ = $1$, setting the error margin $\epsilon~$=$~0$, replacing the $L_1$ norm in Eq.\eqref{eq:EventEnsembleLoss} with the squared $L_2$ norm, and omitting the adaptive weighting term $w(\mathscr R)$.
Consequently, the the proposed objective in Eq.\eqref{eq:finalloss} reduces to
\begin{equation}
\label{eq:ReduceToMSE}
    r\frac{1}{L} \sum\nolimits_{1\leq l\leq L} \frac{1}{N} \sum\nolimits_{1\leq i \leq N} (^cp_i^l - \mathds 1_{\{Y_i=l\}})^2,
\end{equation}
where $^cp_i^l = p_c(Y_i=l|F_i)$. Eq. \eqref{eq:ReduceToMSE} corresponds to the Brier scoring rule (i.e., the MSE loss function).

\emph{2) Secondly, we illustrate that our general approach holds theoretical advantages over the PSR, which we analyze in two aspects:} In one aspect, under the $\| * \|_{M,\omega}$ distance discussed below, our approach can yield probability forecasts that are closer to the true probabilities than learning through standard PSR. In another, we will interpret the edge of our approach over standard PSR calculation from the standpoint of errors due to insufficient sampling, unveiling our method as a solution to mitigate the overfitting and overconfidence issues in standard PSR through a pseudo sampling strategy.
\begin{itemize}[wide]
    \item Regarding the first aspect, according to Thms. \ref{thm:5} and \ref{thm:6}, the constraint in Thm. \ref{thm:7} proves effective for large values of $|Q_A^{B_\delta(a)}|$. Nonetheless, for small values of $|Q_A^{B_\delta(a)}|$, the constraint may not be reliably met due to approximation error. In our algorithmic implementation, we set a large $M$ to ensure the effectiveness of the constraint. In such cases, it is possible that the calibrated probability might not strictly adhere to a uniform error-bound. In fact, theoretically, in the standard setting where a single label is observed for a given feature without any prior distribution over classification probabilities, no method can strictly guarantee uniformly error-bounded probabilistic forecasts. This naturally raises the question of how effectively our algorithm can minimize the discrepancy between predicted and actual probabilities. Moreover, it is pertinent to explore how the error margin of our method compares with that derived from traditional PSR. Clarifications on these topics are provided below.

    To begin, we introduce a vector distance metric $\| * \|_{M,\omega}$: 
    \begin{equation}
        \label{eq:Mnorm}
        \| \xi - \psi \|_{M,\omega} \triangleq \sum_{\mathscr D \in \mathbb D} \omega(\mathscr D) \Big|
        \frac{\sum_{i \in \mathscr D} \xi_i}{M}
          - \frac{\sum_{i \in \mathscr D} \psi_i}{M}  \Big|
    \end{equation}
    where $\mathbb D$$=$$\big\{\mathscr D$$\subset$$\{1,2,...,\text{dim}(\xi)\} \big| \lvert \mathscr D \rvert$$=$$M\big\}$ and $\omega$ represents a weighting function. It means vector distance is constructed via the weighted averages of mean discrepancies across sub-vectors of length $M$ (reader can verify this definition satisfies the metric axioms). This distance can also be extended to matrix space as $\|  \text{vec}(\xi) - \text{vec}(\psi) \|_{M,w}$, by flattening matrices into vectors before distance calculation. 
    
    We denote the family of mappings for recalibration as $\{g_\theta | \theta$$\in$$\Theta\}$. 
    The proposed learning Algorithm \ref{algo1} can be interpreted as optimizing the Eq. \eqref{eq:equiobj} (where $M$ is the hyperparameter in the algorithm, and $\omega$ corresponds with the specifications of $\mathbb A$, $\mathcal R$ and $\{ w(\mathscr R) | \mathscr R$$\in$$\mathcal R \}$, with a detailed explanation in \emph{Appendix \ref{sec:apdx-explain-obj}}).   
    {\small
    \begin{equation}
    \label{eq:equiobj}
        \min\limits_{g} \Big\|  [ g^l(F_i) ]_{1\leq i\leq N, 1 \leq l \leq  L}\! - [\mathds 1_{\{Y_i = l\}} ]_{1\leq i\leq N, 1 \leq l \leq L} \Big\|_{M,\omega}.
    \end{equation}}
    From the optimization perspective, the prediction probabilities are expressed as $p_c(Y$$=$$l |F_i)$$=$$g^l(F_i)$, with $g$$\in$$\{ g_\theta | \theta$$\in$$\Theta\}$. We denote the optimal solution for the above objective as $p_{M}$.
    In comparison, the PSR objective can be expressed as:
    \begin{equation}
     \min_{g} \iint S(g(f),y) p_{Y|F}(dy,f) p_F(df).
    \end{equation}
    The discrete equivalent (substituting the inner integral by Eq. \eqref{eq:singlesampling} with a single sampling from $p_{Y|F}$) is:
    \begin{equation}
    \label{eq:discretepsr}
        \min_{g} \frac{1}{N} \sum_{1\leq i\leq N}  S(g(F_i),Y_i)
    \end{equation}
    The probabilistic prediction corresponding to the optimal solution $g$ is denoted as $p_{\text{psr}}$.
    For ease of notation, we abbreviate the probability matrix $ \big[ p(Y_i$$=$$l | F_i) \big]_{ 1\leq i\leq N,1 \leq l\leq  L}$ as $[$\,$p$\,$]$. As detailed in \emph{Appendix \ref{sec:apdx-proof-proposition}} and \emph{\ref{sec:apdx-proof-proposition1}}, we prove that:
    \begin{proposition}
    \label{prop:betterthanpsr}
        For any $\alpha>0$, 
        \begin{equation}
            \Big\| \big[ p_\mu \big] - \big [p_M \big] \Big\|_{M,\omega} \leq \Big\| \big[ p_\mu \big] - \big[ p_{\mathrm{psr}}\big] \Big\|_{M,\omega} + \alpha
        \end{equation}
        holds with high probability (failure probability below $\frac{2}{\alpha\sqrt{M}}$), where $p_{\mu}$ refers to the ground truth classification probability. 
    \end{proposition}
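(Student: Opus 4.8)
The plan is to use the one-hot label matrix $[\mathds{1}_Y]\triangleq\big[\mathds{1}_{\{Y_i=l\}}\big]_{1\le i\le N,\,1\le l\le L}$ as a pivot between $[p_M]$ and $[p_{\mathrm{psr}}]$, exploiting the defining optimality of $p_M$. By Eq.~\eqref{eq:equiobj} together with the interpretation of Algorithm~\ref{algo1} established above, $p_M$ minimizes $g\mapsto\big\|[g]-[\mathds{1}_Y]\big\|_{M,\omega}$ over the recalibration family $\{g_\theta:\theta\in\Theta\}$, which under the stated setup also contains $p_{\mathrm{psr}}$; hence $\big\|[p_M]-[\mathds{1}_Y]\big\|_{M,\omega}\le\big\|[p_{\mathrm{psr}}]-[\mathds{1}_Y]\big\|_{M,\omega}$ (if the infimum is not attained one takes an $\eta$-minimizer and carries an extra $\eta$, absorbed into $\alpha$ below). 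Since $\|\cdot\|_{M,\omega}$ satisfies the metric axioms, applying the triangle inequality twice — first splitting $[p_\mu]-[p_M]$ through $[\mathds{1}_Y]$, then bounding $\|[p_{\mathrm{psr}}]-[\mathds{1}_Y]\|_{M,\omega}$ by $\|[p_{\mathrm{psr}}]-[p_\mu]\|_{M,\omega}+\|[p_\mu]-[\mathds{1}_Y]\|_{M,\omega}$ — yields the deterministic estimate
\begin{equation}
\big\|[p_\mu]-[p_M]\big\|_{M,\omega}\ \le\ 2\,\big\|[p_\mu]-[\mathds{1}_Y]\big\|_{M,\omega}\ +\ \big\|[p_\mu]-[p_{\mathrm{psr}}]\big\|_{M,\omega}.
\end{equation}
It thus remains to show that $2\,\big\|[p_\mu]-[\mathds{1}_Y]\big\|_{M,\omega}>\alpha$ occurs with probability at most $\tfrac{2}{\alpha\sqrt M}$, i.e. that the label matrix is $O(M^{-1/2})$-close to $[p_\mu]$ in this metric with high probability.

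For the probabilistic part I would bound $\mathbb{E}_\mu\big\|[p_\mu]-[\mathds{1}_Y]\big\|_{M,\omega}$ and invoke Markov's inequality. By the definition in Eq.~\eqref{eq:Mnorm}, this equals $\sum_{\mathscr D\in\mathbb D}\omega(\mathscr D)\,\big|\tfrac1M\sum_{(i,l)\in\mathscr D}Z_{i,l}\big|$ with $Z_{i,l}\triangleq p_\mu(Y_i=l|F_i)-\mathds{1}_{\{Y_i=l\}}$, which satisfies $\mathbb{E}_\mu[Z_{i,l}|F_i]=0$, $|Z_{i,l}|\le1$, and $\mathbb{E}_\mu[Z_{i,l}^2|F_i]=p_\mu(Y_i=l|F_i)\big(1-p_\mu(Y_i=l|F_i)\big)\le1$. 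Conditioning on all features, summands with distinct sample indices are independent and centered, while for two index pairs sharing a sample $i$ one has $\mathbb{E}_\mu[Z_{i,l}Z_{i,l'}|F_i]=-p_\mu(Y_i=l|F_i)\,p_\mu(Y_i=l'|F_i)\le0$; every cross term is therefore nonpositive, giving $\mathbb{E}_\mu\big[\big(\sum_{(i,l)\in\mathscr D}Z_{i,l}\big)^2\big]\le\sum_{(i,l)\in\mathscr D}\mathbb{E}_\mu[Z_{i,l}^2]\le M$, and Jensen/Cauchy--Schwarz then yields $\mathbb{E}_\mu\big|\tfrac1M\sum_{(i,l)\in\mathscr D}Z_{i,l}\big|\le M^{-1/2}$ uniformly in $\mathscr D$. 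Summing against $\omega$ (normalized so $\sum_{\mathscr D}\omega(\mathscr D)=1$, as the construction from $\mathscr A$, $\mathcal R$, $\{w(\mathscr R)\}$ can be taken to satisfy, otherwise rescale $\alpha$) gives $\mathbb{E}_\mu\big\|[p_\mu]-[\mathds{1}_Y]\big\|_{M,\omega}\le M^{-1/2}$; Markov's inequality at level $\alpha/2$ then bounds the failure probability by $\tfrac{2}{\alpha}\,\mathbb{E}_\mu\big\|[p_\mu]-[\mathds{1}_Y]\big\|_{M,\omega}\le\tfrac{2}{\alpha\sqrt M}$, and combining with the deterministic estimate completes the argument.

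The step I expect to be the main obstacle is the second-moment bound $\mathbb{E}_\mu\big[\big(\sum_{(i,l)\in\mathscr D}Z_{i,l}\big)^2\big]\le M$ for a length-$M$ block $\mathscr D$: because the algorithm forms blocks by sorting all index pairs $(i,l)$ and taking consecutive windows, a single block may contain several classes $l$ of the same sample $i$, so the centered indicators inside $\mathscr D$ are not jointly independent and a naive estimate incurs uncontrolled cross terms. The remedy — that distinct classes of one sample are mutually exclusive events, so their centered indicators have nonpositive covariance and can only decrease the variance — is the crux and must be stated carefully; once it is in place the remaining pieces (the two triangle-inequality steps, existence or approximation of the minimizer, the precise normalization of $\omega$, and the bookkeeping of constants) are routine. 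It is also worth recording two small points: the controlling quantity $\|[p_\mu]-[\mathds{1}_Y]\|_{M,\omega}$ depends only on the random labels and features whereas $[p_\mu]$ is a deterministic function of the features, so $\{2\|[p_\mu]-[\mathds{1}_Y]\|_{M,\omega}\le\alpha\}$ is a genuine event on which the deterministic estimate holds verbatim; and the inequality is intentionally asymmetric in $p_M$ and $p_{\mathrm{psr}}$ — it certifies that $p_M$ is never more than $\alpha$ farther from $p_\mu$ than $p_{\mathrm{psr}}$ in the $\|\cdot\|_{M,\omega}$ distance, with $\alpha$ shrinkable at rate $M^{-1/2}$ at the cost of the failure probability.
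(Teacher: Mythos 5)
Your proposal is correct and follows essentially the same route as the paper's proof in Appendix \ref{sec:apdx-proof-proposition}: pivot through the label matrix $[Y]$ using the optimality of $p_M$ for the $\|\cdot\|_{M,\omega}$ objective, apply the triangle inequality twice to get the deterministic bound with the $2\|[p_\mu]-[Y]\|_{M,\omega}$ slack, then control that term via Markov's inequality and a second-moment bound of $M$ per block. The only (immaterial) difference is in how that second-moment bound is obtained — the paper collapses each sample's classes within a block into a single centered indicator $\mathds 1_{\{Y_i\in\mathscr D_i\}}-p_\mu(Y_i\in\mathscr D_i\mid F_i)$, whereas you keep the per-class terms and observe their covariances are nonpositive; both yield the same $\le M$ estimate and hence the same $\tfrac{2}{\alpha\sqrt M}$ failure probability.
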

    This implies that in terms of $\| * \|_{M,\omega}$ distance, the deviation of our estimated probabilities from the true probabilities is highly likely to be smaller than that derived from traditional PSR methods. Additionally, we also show
    \begin{proposition}
    \label{prop:controllableerror}
        For any $\alpha>0$, 
        \begin{equation}
            \Big\| \big[ p_\mu \big] - \big [p_M \big] \Big\|_{M,\omega}  \leq  \Xi + \alpha
        \end{equation}
        holds with high probability (failure probability below $\frac{1}{\alpha\sqrt{M}}$), where $\Xi$ reflects the learning loss.
    \end{proposition}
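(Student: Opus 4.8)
The plan is to derive the inequality from a single triangle inequality together with a concentration estimate controlling the discrepancy between the true conditional probabilities and the one–hot label matrix. Write $[\mathds 1] \triangleq \big[\mathds 1_{\{Y_i = l\}}\big]_{1\le i\le N,\,1\le l\le L}$ for the one–hot label matrix appearing in Eq.~\eqref{eq:equiobj}, and recall that $p_M$ is the minimizer of $\big\|[g^l(F_i)] - [\mathds 1]\big\|_{M,\omega}$ over the recalibration family; accordingly I would identify $\Xi \triangleq \big\|[p_M] - [\mathds 1]\big\|_{M,\omega}$, the attained value of the learning objective. Since $\|\cdot\|_{M,\omega}$ satisfies the metric axioms (as already noted after Eq.~\eqref{eq:Mnorm}), the triangle inequality gives, deterministically for every realization,
\[
 \big\|[p_\mu] - [p_M]\big\|_{M,\omega} \;\le\; \big\|[p_\mu] - [\mathds 1]\big\|_{M,\omega} + \big\|[\mathds 1] - [p_M]\big\|_{M,\omega} \;=\; \big\|[p_\mu] - [\mathds 1]\big\|_{M,\omega} + \Xi .
\]
So it suffices to show $\big\|[p_\mu] - [\mathds 1]\big\|_{M,\omega} \le \alpha$ with failure probability below $\tfrac{1}{\alpha\sqrt M}$.

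The core step is that concentration estimate, which I would establish exactly as in the analysis underlying Prop.~\ref{prop:betterthanpsr} (Appendices~\ref{sec:apdx-proof-proposition} and~\ref{sec:apdx-proof-proposition1}). Expanding the norm of Eq.~\eqref{eq:Mnorm} on the flattened matrices,
\[
 \big\|[p_\mu] - [\mathds 1]\big\|_{M,\omega} = \sum_{\mathscr D \in \mathbb D} \omega(\mathscr D)\, \Big| \tfrac{1}{M}\!\sum_{(i,l)\in\mathscr D}\! \big( p_\mu(Y_i = l \mid F_i) - \mathds 1_{\{Y_i = l\}} \big) \Big| .
\]
The key observation is that $\mathds 1_{\{Y_i=l\}}$ is a single draw from the conditional law of $Y_i$ given $F_i$, so each inner summand has conditional mean zero given $F = (F_1,\dots,F_N)$. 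Grouping the coordinates of a fixed $\mathscr D$ by their sample index $i$ (several coordinates of $\mathscr D$ may share an $i$), the grouped terms $\mathds 1_{\{Y_i \in \mathscr D_i\}}$ are independent across $i$ and bounded, so the bracketed average has conditional variance at most $\tfrac{1}{4M}$ (at most $M$ distinct indices, each contributing a Bernoulli of variance $\le\tfrac14$); the within-sample cross-class terms only contribute negatively correlated multinomial components, which does not hurt. Hence $\mathbb E\big[\,|\cdot|\ \big|\ F\big] \le \tfrac{1}{\sqrt M}$ by $\mathbb E|Z|\le\sqrt{\mathrm{Var}(Z)}$, and since $\omega\ge 0$ is a weighting of total mass one, $\mathbb E\big[\,\big\|[p_\mu]-[\mathds 1]\big\|_{M,\omega}\ \big|\ F\big]\le \tfrac{1}{\sqrt M}$. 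Markov's inequality conditionally on $F$, followed by taking expectation over $F$, then yields $\Pr\big(\big\|[p_\mu]-[\mathds 1]\big\|_{M,\omega} > \alpha\big) \le \tfrac{1}{\alpha\sqrt M}$.

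Combining the two displays, on the complement of the bad event, which has probability at least $1 - \tfrac{1}{\alpha\sqrt M}$, we obtain $\big\|[p_\mu] - [p_M]\big\|_{M,\omega} \le \Xi + \alpha$, as claimed; note the triangle-inequality step is pointwise in the sample, so the argument is unaffected by the fact that $p_M$ (hence $\Xi$) is itself a function of the observed labels. I expect the main obstacle to be bookkeeping in the concentration step: one must group coordinates that share a sample index before invoking independence, must keep the variance constant small enough to land strictly below $\tfrac{1}{\alpha\sqrt M}$ rather than merely $O\!\big(\tfrac{1}{\alpha\sqrt M}\big)$, and must address whether the weighting $\omega$ (and, in the algorithmic instantiation, which size-$M$ subsets carry positive weight) is itself data-dependent — the cleanest resolution being to regard $\omega$ as fixed in the statement, noting the per-subset estimate $\mathbb E[|\cdot|\mid F]\le \tfrac{1}{\sqrt M}$ holds with the same constant uniformly over $\mathscr D\in\mathbb D$.
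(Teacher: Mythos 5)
Your proposal is correct and follows essentially the same route as the paper's proof in Appendix~\ref{sec:apdx-proof-proposition1}: a pointwise triangle inequality through the one-hot label matrix, identification of $\Xi$ with the attained learning loss $\big\|[p_M]-[\mathds 1]\big\|_{M,\omega}$, and then the same Markov--Jensen--independence chain (Eqs.~\eqref{eq:ineqstart1}--\eqref{eq:ineqend1}) in which coordinates of each $\mathscr D$ are grouped by sample index so that $\sum_{l\in\mathscr D_i}([Y]_{i,l}-[p_\mu]_{i,l})=\mathds 1_{\{Y_i\in\mathscr D_i\}}-p_\mu(Y_i\in\mathscr D_i\mid F_i)$ is a centered bounded variable, yielding the $\tfrac{1}{\alpha\sqrt M}$ failure probability. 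Your added remarks --- that the triangle step is pointwise so the data-dependence of $p_M$ is harmless, and that $\omega$ should be regarded as fixed with $\sum_{\mathscr D}\omega(\mathscr D)=1$ --- are correct clarifications of points the paper leaves implicit.
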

    The above propositions indicate that our approach maintains a controllable error margin relative to the ground truth and is highly likely to outperform traditional PSR under the $\| * \|_{M,\omega}$ distance metric.

    \item Regarding the second aspect, as analyzed in Section \ref{sec:psranalysis}, traditional implementations of PSRs, such as NLL and MSE losses, are seen as approximations under single sampling conditions. Such a single sampling approximation, illustrated by Eq. \eqref{eq:singlesampling}, can lead to uncontrollable errors, contributing to overfitting and thus overconfidence. 
    \begin{equation}
    \label{eq:singlesampling}
    \fontsize{9.5pt}{4pt}\selectfont
    \int S(g^A(F_i),y)p_{\mathds 1_A(Y)|F}(dy,F_i) \approx S(g^A(F_i),\mathds 1_A(Y_i)),
    \end{equation}
    where $\mathds 1_A(Y_i) = \mathds 1_{\{ Y_i \in A\}}$ and $g^A(F_i)$ denotes the predicted distribution for $\mathds 1_A(Y_i)$, for any sample $i$ and event $A$. 
    
    In contrast, from an algorithmic perspective, our method can be interpreted as leveraging a pseudo sampling-based estimate of PSR to tackle the aforementioned issue. Specifically, our algorithm sorts predicted probabilities, then aligns the average probability within a sliding window with its event occurrence rate. This procedure approximately aligns the predicted probability of the event $\mathds 1_A(Y_i)$ in central element with the occurrence rate of other events in the window. 
    Comparatively, the approximation of PSR in the LHS of Eq. \eqref{eq:singlesampling} with multiple samplings $\mathcal J$ is
    \begin{equation}
        \sum\nolimits_{u \in \mathcal J } S(g^A(F_i),\mathds 1_A(\widetilde Y_{i_u})),
    \label{eq:psr_multiple_sampling}
    \end{equation}
    where $\mathds 1_A(\widetilde Y_{i_u})$ represents multiple latent samplings from the distribution $\mathds 1_A(Y) | F_i$. 
    Without loss of generality, using the Brier score as an example, Eq. \eqref{eq:psr_multiple_sampling} produces,
    \begin{equation}
    \label{eq:brierexample}
        -\sum\nolimits_{u \in \mathcal J} \big(p_c(Y \in A | F_i) - \mathds 1_A({\widetilde Y}_{i_u}) \big)^2,
    \end{equation}
    optimizing the PSR yields the solution $p_c(Y$$\in$$A | F_i)=\frac{1}{|\mathcal J|}\sum_{ u \in \mathcal J} \mathds 1_A({\widetilde Y}_{i_u})$.
    This solution closely aligns with our method by considering observation labels for events with similar predicted probabilities within the window as multiple pseudo-samplings for the distribution of event $\mathds 1_A(Y_i)$. 
    \textbf{Thus, our algorithm can be understood as a pseudo sampling-based extension of the PSR.}
\end{itemize}

\section{Experiments}
The proposed method is validated using established post-hoc calibration tasks. For comprehensive evaluation, we employ diverse metrics at multiple levels, including top-label, classwise, and canonical calibration levels. Extensive comparisons have been conducted against numerous prior methods. Details of the experiments are presented below.

\subsection{Datasets}
We assess the effectiveness of our method using established post-hoc calibration benchmark tasks from \cite{no.71} and \cite{no.67}, which are publicly available and widely adopted for calibrator evaluation. These benchmarks provide uncalibrated probabilities of test samples generated by various networks for datasets including CIFAR10 \cite{no.192}, CIFAR100\cite{no.192}, SVHN\cite{no.193}, CARS\cite{no.194}, BIRDS \cite{no.195}, and ImageNet\cite{no.196}. The uncalibrated probabilities produced by the network classifiers on the original test image samples were further divided by \cite{no.71,no.67} into a training and a test set, which were used to train and evaluate the calibrator. Details of the class numbers (ranging from 10 to 1000) and the set sizes are summarized in Table \ref{tab:dataset} (\emph{Appendix \ref{subsec:apdx-dataset}}). 
14 calibration tasks were included for the datasets, each for calibrating a pretrained network classifier in the benchmark \cite{no.71,no.67}. Specific networks are summarized in \emph{Appendix \ref{subsec:apdx-dataset}}.
Since the established benchmark primarily focus on convolutional networks, we further added a calibration task for Transformer network to broaden the scope of our evaluation. Specifically, the SwinTransformer \cite{no.203} classifier trained by \cite{no.204}\footnote{\url{https:huggingface.co/timm/swin_tiny_patch4_window7_224.ms_in22k}} for ImageNet was incorporated, following the calibration protocols by \cite{no.71,no.67}, to enrich our overall assessment.

\subsection{Comparison Methods}
We compared our approach with a comprehensive set of post-hoc calibration methods, including Histogram Binning (HB) \cite{no.86}, Isotonic Regression (Iso) \cite{no.88}, Bayesian Binning into Quantiles (BBQ) \cite{no.87}, Ensemble of Near Isotonic Regression (ENIR) \cite{no.89}, Temperature Scaling (TS) \cite{no.109}, Vector Scaling (VS) \cite{no.109}, Matrix Scaling (MS) \cite{no.109,no.71}, Beta Calibration (Beta) \cite{no.85}, Scaling Binning (ScaBin) \cite{no.75}, Dirichlet Calibration (Dir) \cite{no.71}, Gaussian Process Calibration (GP) \cite{no.68}, Diagonal Intra Order-preserving Calibration (DIAG) \cite{no.67}, Decision Calibration (DEC) \cite{no.136}, Mutual Information Maximization-based Binning (IMax) \cite{no.1}, Soft Binning Calibration (SoftBin) \cite{no.58}, Spline Calibration \cite{no.60}, Expectation Consistency (EC) \cite{no.17}, Locally Equal Calibration Error (LECE) Calibration \cite{no.46}, LECE combined with TS (TS+LECE) \cite{no.46}, and Scaling of Classwise Training Losses (SCTL) \cite{no.29}. A detailed review of these methods is provided in the Introduction and Related Works sections. 

We reproduced these calibrators based on their open-source code. Specifically, the implementations of BBQ, Beta, ENIR, HB, Iso, TS, and VS were sourced from \cite{no.65}. IMax, Spline, EC, ScaBin, SCTL, and DIAG\footnote{Original DIAG code contained a ligical error omitting fold models in multi-fold ensembling, which was corrected in our reproduction.} were reproduced using the codes from the respective publications. Implementations of GP, LECE, DEC, and TS+LECE were obtained from \cite{no.46}. For MS and Dir, we used the code provided in \cite{no.71}, where MS was enhanced with Off-diagonal and Intercept Regularisation to mitigate overfitting issues identified in the original MS model in \cite{no.109}. SoftBin was implemented within our codebase using the original open-source loss function from \cite{no.58}.

\begin{table*}[h!]
\centering
\caption{ECE$_{r=1}^s$ Metric for Top-label Calibration Comparison (Best in \textcolor{red}{Red}, Second-best in \textcolor{blue}{Blue})}
\label{tab:sweepece1}
\setlength{\tabcolsep}{2pt}
\resizebox{\linewidth}{!}{%
\begin{tabular}{l l  c c c c c c c c c c c c c c c c c c c c c c}
\hline
Dataset & Model & Uncal & HB & Iso & BBQ & ENIR & TS & VS & MS & Beta & ScaBin & Dir & GP & DIAG & DEC & IMax & SoftBin & Spline & EC & LECE & SCTL & TS+LECE & Ours \\
\hline
CIFAR10 & ResNet110 & 0.0475 & 0.0111 & 0.0093 & 0.0095 & 0.0183 & 0.0091 & 0.0107 & 0.0170 & 0.0096 & 0.0087 & 0.0106 & 0.0083 & 0.0081 & 0.0091 & 0.0078 & \textcolor{red}{0.0030} & 0.0131 & 0.0063 & 0.0331 & 0.0061 & 0.0054 & \textcolor{blue}{0.0043} \\
CIFAR10 & WideResNet32 & 0.0448 & 0.0072 & 0.0077 & 0.0062 & 0.0139 & 0.0021 & 0.0034 & 0.0114 & 0.0061 & 0.0090 & 0.0025 & \textcolor{red}{0.0013} & 0.0030 & 0.0045 & 0.0108 & 0.0036 & 0.0089 & 0.0020 & 0.0205 & 0.0032 & 0.0027 & \textcolor{blue}{0.0015} \\
CIFAR10 & DenseNet40 & 0.0549 & 0.0160 & 0.0161 & 0.0167 & 0.0261 & 0.0090 & 0.0112 & 0.0211 & 0.0105 & \textcolor{red}{0.0020} & 0.0098 & 0.0082 & 0.0075 & 0.0108 & 0.0147 & \textcolor{blue}{0.0029} & 0.0161 & 0.0104 & 0.0366 & 0.0067 & 0.0067 & 0.0073 \\
SVHN & ResNet152(SD) & 0.0084 & 0.0061 & \textcolor{blue}{0.0029} & 0.0043 & \textcolor{red}{0.0027} & 0.0057 & 0.0057 & 0.0045 & 0.0060 & 0.0054 & 0.0058 & 0.0050 & 0.0042 & 0.0069 & 0.0045 & 0.0224 & 0.0050 & 0.0078 & 0.0056 & 0.0071 & 0.0078 & 0.0059 \\
CIFAR100 & ResNet110 & 0.1848 & 0.0950 & 0.0605 & 0.0863 & 0.0889 & 0.0187 & 0.0250 & 0.0413 & 0.0334 & 0.0308 & 0.0346 & \textcolor{blue}{0.0140} & 0.0285 & 0.0314 & 0.0446 & 0.0152 & 0.0160 & 0.0154 & 0.0637 & 0.0154 & 0.0182 & \textcolor{red}{0.0118} \\
CIFAR100 & WideResNet32 & 0.1878 & 0.0808 & 0.0564 & 0.0831 & 0.0755 & 0.0134 & 0.0177 & 0.0360 & 0.0257 & 0.0385 & 0.0177 & 0.0107 & 0.0103 & 0.0326 & 0.0517 & 0.0136 & 0.0259 & 0.0107 & 0.1523 & \textcolor{blue}{0.0101} & \textcolor{red}{0.0088} & 0.0149 \\
CIFAR100 & DenseNet40 & 0.2116 & 0.0768 & 0.0515 & 0.0751 & 0.0821 & 0.0074 & 0.0150 & 0.0371 & 0.0201 & 0.0397 & 0.0207 & 0.0093 & \textcolor{red}{0.0067} & 0.0373 & 0.0494 & \textcolor{blue}{0.0072} & 0.0172 & 0.0078 & 0.1030 & 0.0087 & 0.0111 & 0.0084 \\
CARS & ResNet50pre & 0.0213 & 0.0304 & 0.0285 & 0.0428 & 0.0380 & 0.0105 & 0.0269 & 0.0213 & 0.0251 & 0.0367 & 0.0168 & 0.0076 & 0.0084 & 0.0286 & \textcolor{red}{0.0060} & 0.0154 & \textcolor{blue}{0.0062} & 0.0085 & 0.0095 & 0.0132 & 0.0144 & 0.0070 \\
CARS & ResNet101pre & 0.0168 & 0.0586 & 0.0359 & 0.0572 & 0.0455 & 0.0297 & 0.0235 & 0.0238 & 0.0269 & 0.0367 & 0.0217 & 0.0159 & 0.0160 & 0.0451 & 0.0216 & 0.0338 & \textcolor{blue}{0.0109} & 0.0301 & 0.0228 & 0.0333 & 0.0380 & \textcolor{red}{0.0052} \\
CARS & ResNet101 & 0.0362 & 0.0274 & 0.0266 & 0.0389 & 0.0351 & 0.0148 & 0.0251 & 0.0166 & 0.0218 & 0.0476 & 0.0173 & 0.0091 & 0.0109 & 0.0272 & 0.0095 & 0.0228 & \textcolor{red}{0.0050} & 0.0132 & 0.0150 & 0.0157 & 0.0125 & \textcolor{blue}{0.0082} \\
BIRDS & ResNet50(NTS) & 0.0696 & 0.0466 & 0.0447 & 0.0581 & 0.0561 & 0.0312 & 0.0284 & 0.0265 & 0.0302 & 0.0238 & 0.0435 & 0.0142 & 0.0206 & 0.0304 & 0.0181 & 0.0278 & \textcolor{blue}{0.0103} & 0.0307 & 0.0618 & 0.0284 & 0.0245 & \textcolor{red}{0.0096} \\
ImageNet & ResNet152 & 0.0654 & 0.0721 & 0.0511 & 0.0771 & 0.0694 & 0.0213 & 0.0320 & 0.0610 & 0.0719 & 0.0312 & 0.0391 & 0.0119 & \textcolor{red}{0.0085} & 0.0328 & 0.0214 & 0.0213 & \textcolor{blue}{0.0096} & 0.0205 & 0.0610 & 0.0205 & 0.0205 & \textcolor{red}{0.0085} \\
ImageNet & DenseNet161 & 0.0572 & 0.0725 & 0.0464 & 0.0712 & 0.0652 & 0.0188 & 0.0259 & 0.0580 & 0.0661 & 0.0302 & 0.0373 & 0.0185 & 0.0110 & 0.0367 & 0.0164 & 0.0187 & \textcolor{blue}{0.0095} & 0.0187 & 0.0520 & 0.0186 & 0.0187 & \textcolor{red}{0.0076} \\
ImageNet & PNASNet5large & 0.0584 & 0.0456 & 0.0329 & 0.0548 & 0.0484 & 0.0452 & 0.0466 & 0.0266 & 0.0266 & 0.0146 & 0.0404 & 0.0111 & 0.0120 & 0.0610 & 0.0109 & 0.0369 & \textcolor{blue}{0.0078} & 0.0365 & 0.0675 & 0.0434 & 0.0414 & \textcolor{red}{0.0068} \\
ImageNet & SwinTransformer & 0.0730 & 0.0563 & 0.0261 & 0.0612 & 0.0442 & 0.0298 & 0.0389 & 0.0332 & 0.0335 & 0.0198 & 0.0186 & 0.0068 & 0.0067 & 0.0475 & 0.0068 & 0.0172 & \textcolor{blue}{0.0052} & 0.0149 & 0.0821 & 0.0255 & 0.0248 & \textcolor{red}{0.0050} \\
\hline
\multicolumn{2}{c}{\bf{Average Error}} & \bf{0.0759} & \bf{0.0468} & \bf{0.0331} & \bf{0.0495} & \bf{0.0473} & \bf{0.0178} & \bf{0.0224} & \bf{0.0290} & \bf{0.0276} & \bf{0.0250} & \bf{0.0224} & \bf{\textcolor{blue}{0.0101}} & \bf{0.0108} & \bf{0.0295} & \bf{0.0196} & \bf{0.0174} & \bf{0.0111} & \bf{0.0156} & \bf{0.0524} & \bf{0.0171} & \bf{0.0170} & \bf{\textcolor{red}{0.0075}} \\
\multicolumn{2}{c}{\bf{Average Relative Error}} & \bf{1.0000} & \bf{0.7986} & \bf{0.5966} & \bf{0.8901} & \bf{0.8216} & \bf{0.3844} & \bf{0.5038} & \bf{0.5330} & \bf{0.5648} & \bf{0.6033} & \bf{0.4509} & \bf{0.2302} & \bf{0.2339} & \bf{0.6208} & \bf{0.3039} & \bf{0.5313} & \bf{\textcolor{blue}{0.2146}} & \bf{0.3591} & \bf{0.7398} & \bf{0.3968} & \bf{0.3951} & \bf{\textcolor{red}{0.1729}} \\
\hline
\end{tabular}%
}
\end{table*}
\begin{figure*}[h!]
    \centering
    \includegraphics[width=0.331\linewidth, trim=8 0 8 0, clip]{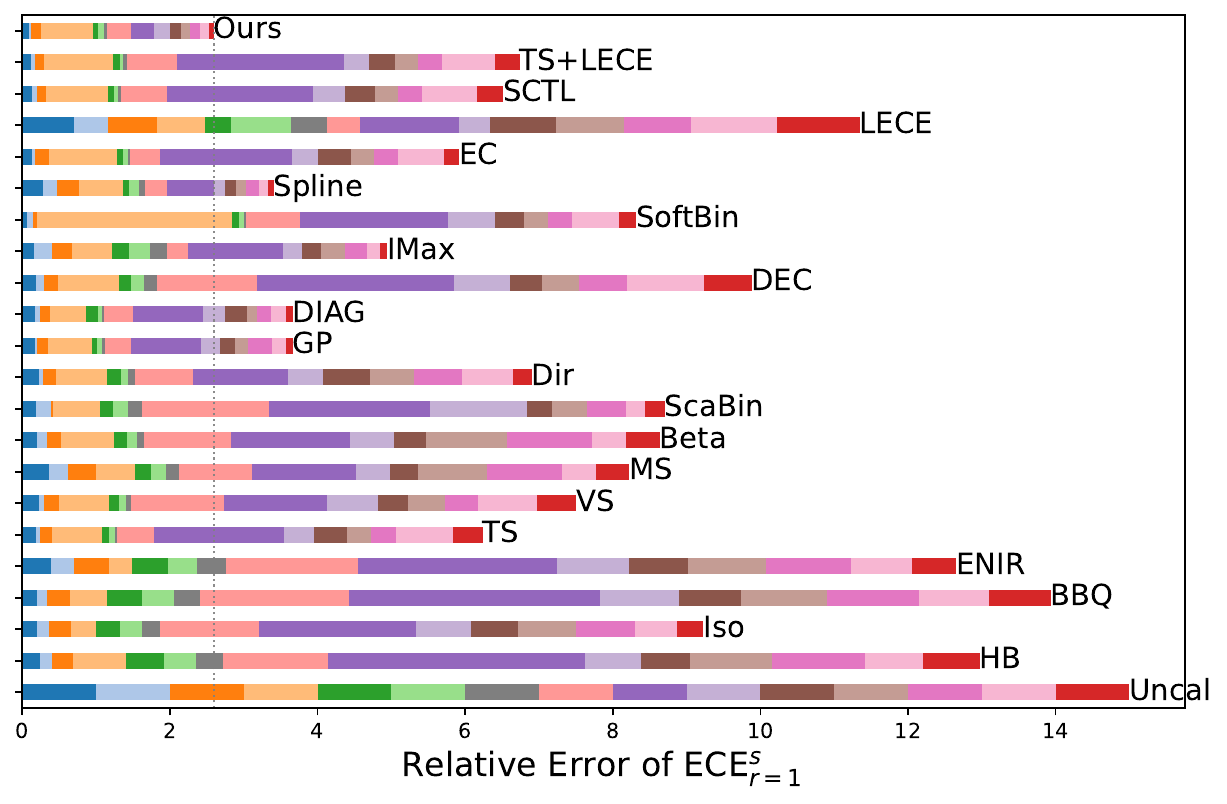}%
    \includegraphics[width=0.329\linewidth, trim=8 0 15 0, clip]{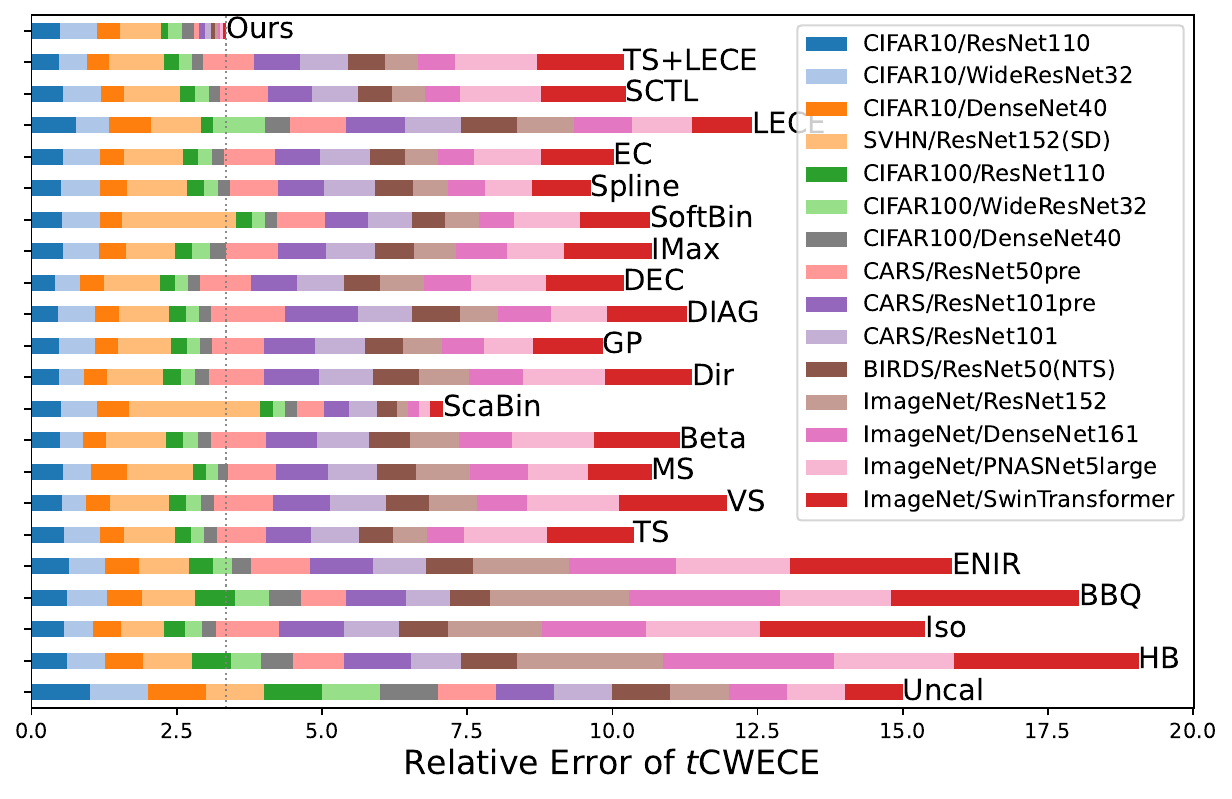}%
    \includegraphics[width=0.33\linewidth, trim=0 0 9 0, clip]{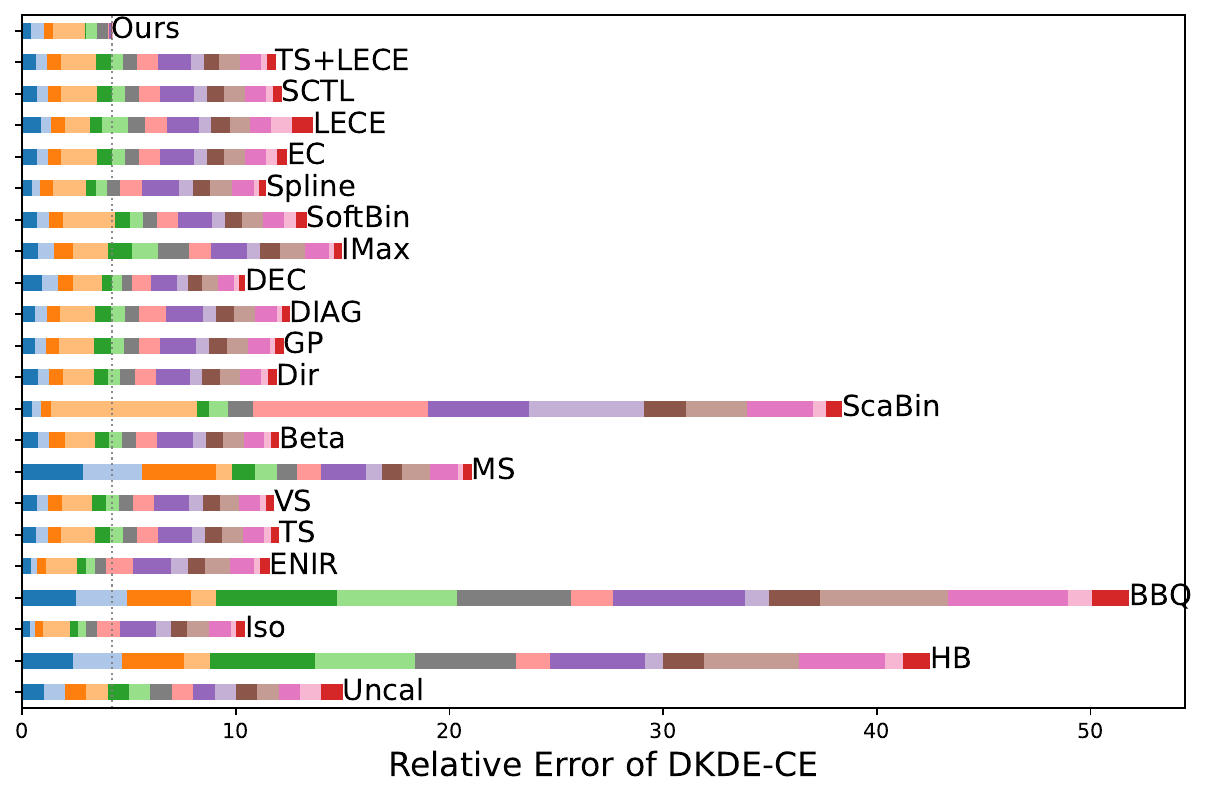}
    \caption{Relative calibration error of different methods across 15 tasks on three exemplar metrics}
    \label{fig:stackbarcombine}
\end{figure*}
\begin{table*}[h!]
\centering
\caption{$t$CWECE Metric for Classwise Calibration Comparison (Best in \textcolor{red}{Red}, Second-best in \textcolor{blue}{Blue})}
\label{tab:cwecethr}
\setlength{\tabcolsep}{2pt}
\resizebox{\linewidth}{!}{%
\begin{tabular}{l l  c c c c c c c c c c c c c c c c c c c c c c}
\hline
Dataset & Model & Uncal & HB & Iso & BBQ & ENIR & TS & VS & MS & Beta & ScaBin & Dir & GP & DIAG & DEC & IMax & SoftBin & Spline & EC & LECE & SCTL & TS+LECE & Ours \\
\hline
CIFAR10 & ResNet110 & 0.0523 & 0.0319 & 0.0289 & 0.0315 & 0.0338 & 0.0289 & 0.0277 & 0.0281 & 0.0255 & 0.0269 & 0.0250 & 0.0249 & \textcolor{blue}{0.0240} & \textcolor{red}{0.0215} & 0.0283 & 0.0276 & 0.0268 & 0.0287 & 0.0399 & 0.0287 & 0.0246 & 0.0258 \\
CIFAR10 & WideResNet32 & 0.0516 & 0.0341 & 0.0262 & 0.0360 & 0.0322 & 0.0325 & \textcolor{blue}{0.0210} & 0.0256 & \textcolor{red}{0.0207} & 0.0321 & 0.0219 & 0.0317 & 0.0330 & 0.0219 & 0.0320 & 0.0336 & 0.0343 & 0.0323 & 0.0295 & 0.0335 & 0.0249 & 0.0325 \\
CIFAR10 & DenseNet40 & 0.0610 & 0.0397 & 0.0291 & 0.0370 & 0.0356 & 0.0247 & 0.0250 & 0.0377 & 0.0239 & 0.0332 & 0.0245 & 0.0240 & 0.0243 & 0.0247 & 0.0283 & \textcolor{blue}{0.0231} & 0.0281 & 0.0256 & 0.0438 & 0.0236 & \textcolor{red}{0.0229} & 0.0247 \\
SVHN & ResNet152(SD) & 0.0133 & 0.0113 & \textcolor{blue}{0.0099} & 0.0120 & 0.0114 & 0.0118 & 0.0136 & 0.0151 & 0.0137 & 0.0300 & 0.0127 & 0.0121 & 0.0115 & 0.0130 & 0.0113 & 0.0262 & 0.0139 & 0.0136 & 0.0116 & 0.0129 & 0.0127 & \textcolor{red}{0.0094} \\
CIFAR100 & ResNet110 & 0.1299 & 0.0876 & 0.0461 & 0.0908 & 0.0542 & 0.0350 & 0.0387 & 0.0297 & 0.0382 & 0.0288 & 0.0399 & 0.0362 & 0.0375 & 0.0324 & 0.0369 & 0.0338 & 0.0359 & 0.0335 & \textcolor{blue}{0.0273} & 0.0340 & 0.0342 & \textcolor{red}{0.0150} \\
CIFAR100 & WideResNet32 & 0.1579 & 0.0801 & 0.0476 & 0.0923 & 0.0526 & 0.0375 & 0.0400 & \textcolor{red}{0.0314} & 0.0400 & \textcolor{blue}{0.0332} & 0.0402 & 0.0368 & 0.0361 & 0.0358 & 0.0505 & 0.0375 & 0.0394 & 0.0369 & 0.1388 & 0.0368 & 0.0342 & 0.0376 \\
CIFAR100 & DenseNet40 & 0.1666 & 0.0922 & 0.0407 & 0.0921 & 0.0526 & 0.0346 & 0.0369 & \textcolor{red}{0.0296} & 0.0373 & 0.0349 & 0.0387 & 0.0345 & 0.0341 & 0.0344 & 0.0464 & 0.0346 & 0.0354 & 0.0351 & 0.0744 & \textcolor{blue}{0.0335} & \textcolor{blue}{0.0335} & 0.0343 \\
CARS & ResNet50pre & 0.0757 & 0.0666 & 0.0815 & 0.0577 & 0.0770 & 0.0641 & 0.0770 & 0.0620 & 0.0717 & \textcolor{blue}{0.0344} & 0.0724 & 0.0679 & 0.0963 & 0.0665 & 0.0680 & 0.0618 & 0.0621 & 0.0669 & 0.0721 & 0.0626 & 0.0653 & \textcolor{red}{0.0074} \\
CARS & ResNet101pre & 0.0738 & 0.0858 & 0.0822 & 0.0773 & 0.0802 & 0.0573 & 0.0732 & 0.0661 & 0.0658 & \textcolor{blue}{0.0322} & 0.0694 & 0.0638 & 0.0930 & 0.0585 & 0.0610 & 0.0545 & 0.0591 & 0.0570 & 0.0755 & 0.0551 & 0.0586 & \textcolor{red}{0.0075} \\
CARS & ResNet101 & 0.0783 & 0.0671 & 0.0751 & 0.0583 & 0.0722 & 0.0645 & 0.0745 & 0.0658 & 0.0696 & \textcolor{blue}{0.0377} & 0.0734 & 0.0679 & 0.0732 & 0.0637 & 0.0660 & 0.0602 & 0.0680 & 0.0669 & 0.0746 & 0.0629 & 0.0655 & \textcolor{red}{0.0070} \\
BIRDS & ResNet50(NTS) & 0.0976 & 0.0942 & 0.0811 & 0.0685 & 0.0776 & 0.0585 & 0.0734 & 0.0667 & 0.0684 & \textcolor{blue}{0.0338} & 0.0766 & 0.0638 & 0.0805 & 0.0611 & 0.0650 & 0.0544 & 0.0634 & 0.0588 & 0.0941 & 0.0565 & 0.0613 & \textcolor{red}{0.0072} \\
ImageNet & ResNet152 & 0.0361 & 0.0907 & 0.0589 & 0.0863 & 0.0602 & 0.0211 & 0.0298 & 0.0331 & 0.0305 & \textcolor{blue}{0.0067} & 0.0310 & 0.0241 & 0.0238 & 0.0269 & 0.0263 & 0.0211 & 0.0221 & 0.0206 & 0.0357 & 0.0205 & 0.0209 & \textcolor{red}{0.0018} \\
ImageNet & DenseNet161 & 0.0344 & 0.1016 & 0.0618 & 0.0895 & 0.0629 & 0.0217 & 0.0295 & 0.0345 & 0.0313 & \textcolor{blue}{0.0064} & 0.0318 & 0.0251 & 0.0315 & 0.0283 & 0.0303 & 0.0209 & 0.0220 & 0.0210 & 0.0343 & 0.0211 & 0.0218 & \textcolor{red}{0.0016} \\
ImageNet & PNASNet5large & 0.0340 & 0.0697 & 0.0665 & 0.0651 & 0.0668 & 0.0483 & 0.0536 & 0.0350 & 0.0481 & \textcolor{blue}{0.0062} & 0.0484 & 0.0285 & 0.0327 & 0.0436 & 0.0331 & 0.0386 & 0.0273 & 0.0396 & 0.0355 & 0.0472 & 0.0481 & \textcolor{red}{0.0015} \\
ImageNet & SwinTransformer & 0.0246 & 0.0783 & 0.0700 & 0.0795 & 0.0688 & 0.0370 & 0.0460 & 0.0272 & 0.0364 & \textcolor{blue}{0.0057} & 0.0367 & 0.0297 & 0.0335 & 0.0329 & 0.0371 & 0.0297 & 0.0249 & 0.0307 & 0.0254 & 0.0358 & 0.0366 & \textcolor{red}{0.0013} \\
\hline
\multicolumn{2}{c}{\bf{Average Error}} & \bf{0.0725} & \bf{0.0687} & \bf{0.0537} & \bf{0.0649} & \bf{0.0559} & \bf{0.0385} & \bf{0.0440} & \bf{0.0392} & \bf{0.0414} & \bf{\textcolor{blue}{0.0255}} & \bf{0.0428} & \bf{0.0381} & \bf{0.0443} & \bf{0.0377} & \bf{0.0414} & \bf{0.0372} & \bf{0.0375} & \bf{0.0378} & \bf{0.0542} & \bf{0.0377} & \bf{0.0377} & \bf{\textcolor{red}{0.0143}} \\
\multicolumn{2}{c}{\bf{Average Relative Error}} & \bf{1.0000} & \bf{1.2699} & \bf{1.0259} & \bf{1.2016} & \bf{1.0566} & \bf{0.6919} & \bf{0.7988} & \bf{0.7119} & \bf{0.7444} & \bf{\textcolor{blue}{0.4726}} & \bf{0.7584} & \bf{0.6559} & \bf{0.7508} & \bf{0.6797} & \bf{0.7122} & \bf{0.7105} & \bf{0.6423} & \bf{0.6689} & \bf{0.8275} & \bf{0.6825} & \bf{0.6803} & \bf{\textcolor{red}{0.2233}} \\
\hline
\end{tabular}%
}
\end{table*}

\subsection{Implementation Details}
In our study, the hyperparameters were set as follows: $\epsilon$= $10^{-20}$, $M = 200$, and loss multiplier $r = 10^5$ for all datasets and tasks. Training was conducted with a maximum of 2000 epochs using the Adam optimizer with an initial learning rate of 0.005. A learning rate scheduler and early stopping were applied, monitoring training set $\mathrm{ECE}^{\mathrm{ew}}$ with patience of 20 and 160 epochs, respectively. The scheduler reduced the learning rate by a factor of 0.5 when the metric showed no improvement. For efficient training, a large batch size was used. All datasets had batch sizes equal to the training set size, except for the ImageNet experiments where GPU memory constraints led to batch sizes of 6000 and 3000 for hidden neurons of 20 and 50, respectively, in training MonotonicNet. Training was performed on Nvidia GPUs with PyTorch libraries. Due to different focuses of different types of calibration, such as top-label calibration only considering top-1 probability reliability disregarding other probabilities, this focus disparity results in low correlation between calibration metrics of different types. Considering this, we adopted the strategy from \cite{no.67}, using different model selectors for different calibration types to highlight the focus. As previous calibration studies focus primarily on top-label calibration, we selected dECE as the model selector for better top-label calibration, while $\mathrm{CWECE}_a$ was used for non-top-label calibration and NLL for its own evaluation. It is noteworthy that our learning strategy utilized a unified training setting across all datasets, without dataset- or task-specific tuning. \emph{Appendix \ref{sec:apdx-losscurves}} presents the training loss curves of the proposed method.

\subsection{Evaluations}
Regarding calibration evaluation, a variety of metrics have been employed across different studies. We systematically summarize the metrics at the top-label, classwise, and canonical levels to evaluate the performance of various models accordingly. Specifically, the top-label metrics include ECE metrics based on equal mass and equal width binnings, referred to as $\mathrm{ECE}^{\mathrm{em}}$ and $\mathrm{ECE}^{\mathrm{ew}}$, respectively, as well as the higher-order variant $\mathrm{ECE}_{r=2}$ and the debiased variant $\mathrm{dECE}$ \cite{no.71}. Other top-label metrics include $\mathrm{ACE}$, sweep binning-based calibration errors $\mathrm{ECE}_{r=1}^s$ and $\mathrm{ECE}_{r=2}^s$ \cite{no.50}, $\mathrm{MMCE}$ \cite{no.83}, $\mathrm{KDE}$-$\mathrm{ECE}$ \cite{no.70}, and $\mathrm{KS}$ error \cite{no.60}. At the classwise level, the metrics include the average CWECE ($\mathrm{CWECE}_{a}$) and total CWECE ($\mathrm{CWECE}_{s}$), along with the higher-order variant $\mathrm{CWECE}_{r=2}$, thresholded variant $t\mathrm{CWECE}$ \cite{no.1}, and k-means binning-based variant $t\mathrm{CWECE}^k$ \cite{no.1}. For the canonical level, the metrics used include DKDE-CE \cite{no.52} and SKCE \cite{no.79}. A summary and detailed discussion of these metrics are provided, respectively, in Section \ref{II.A} and in \emph{Appendix \ref{sec:apdx-evaluationsummary}} within a unified probabilistic framework.
MCE is not included in evaluation, as it estimates the error using a single bin with limited and variable sample sizes, disregarding most calibration information and making it prone to noise and binning settings \cite{no.101,no.205,no.206}. NLL and the Brier score, as PSRs, do not directly reflect calibration because they are influenced by discrimination performance \cite{no.3,no.90,no.130,no.125,no.51} and are decomposable into multiple factors beyond just calibration \cite{no.130,no.129,no.51}. 
Nonetheless, since NLL is commonly used as auxiliary indicators in literature, we report NLL as a reference.
The respective source code repositories for the above metrics are listed in \emph{Appendix \ref{subsec:apdx-evacodesource}}. These comprehensive metrics aim to provide a thorough comparison of different calibrators.

\begin{figure}[H]
\centering
\includegraphics[width=\columnwidth]{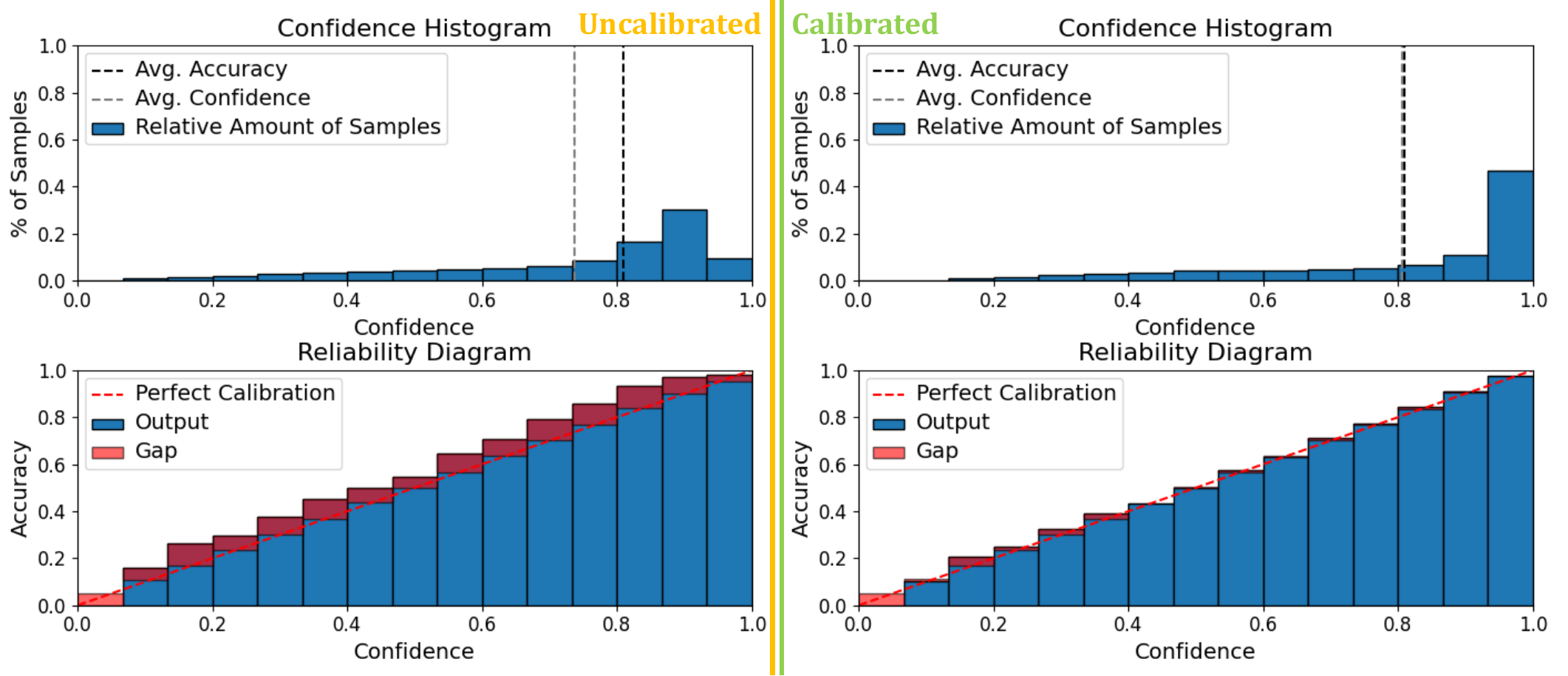}
\caption{Typical reliability diagram before and after calibration using our method (ImageNet-SwinTransformer experiment)}
\label{ourreliability}
\end{figure}

\subsection{Comparison Results}
For top-label calibration, Table \ref{tab:sweepece1} provides a detailed comparison using the metric $\mathrm{ECE}_{r=1}^s$, where `Uncal' denotes original uncalibrated predictions. The results show that, compared to 20 existing methods, our approach achieves the best calibration performance on 7 out of 15 calibration tasks (calibrating diverse networks on 6 datasets), and ranks second on 3 tasks. To summarize the performance across the 15 tasks, we calculated both the average calibration error (AE) and the average relative error (ARE), where relative error represents the ratio of calibrated error to the uncalibrated error for normalizing purpose. Our method demonstrates best average values with a significant margin. A visual depiction of relative error across various tasks is shown in Fig. \ref{fig:stackbarcombine}. Similar visualization result for corresponding absolute error is provided in \emph{Appendix \ref{sec:apdx-visulaizationcomparison}}. Quantitative results for other top-label calibration metrics, including $\mathrm{ECE}_{r=2}^s$, KDE-ECE, MMCE, KS error, $\mathrm{ECE}^{\mathrm{em}}$, $\mathrm{ECE}^{\mathrm{ew}}$, $\mathrm{ECE}_{r=2}$, dECE, and ACE, are provided in \emph{Appendix \ref{sec:apdx-topcalibrationnumber}} (Table \ref{tab:apdx-tab-cmp-sweepECE2} - \ref{tab:apdx-tab-cmp-eCE_2}). Similarly, our approach achieves the best performance across all average metrics, with the second-best varying among different existing methods, highlighting the advantage of our approach. \textbf{To summarize, compared to 20 prior methods, our approach consistently achieves state-of-the-art performance on all 10 top-label evaluators averaged over 15 calibration tasks}. We also provide a reliability diagram to visually assess calibration. Given the large number of tasks and methods, we take the ImageNet-SwinTransformer task as an illustrative example in Fig. \ref{ourreliability}. It clearly shows that after applying calibration with our method, the binwise gaps between predicted probabilities and accuracies are significantly reduced, especially for bins with abundant samples. Moreover, the overall difference between the mean probability and the overall accuracy for all test data is eliminated, as depicted by the alignment of dashed lines in the top two subfigures. The reliability diagram confirms the effectiveness of our method. Due to page limit, reliability diagrams for the other 20 methods are provided in \emph{Appendix \ref{sec:apdx-reliabilitydiagrams}} for comparison.

\begin{table*}[htbp]
\centering
\caption{DKDE-CE Metric for Canonical Calibration Comparison (Best in \textcolor{red}{Red}, Second-best in \textcolor{blue}{Blue})}
\label{tab:dkde-ce}
\setlength{\tabcolsep}{2pt}
\resizebox{\linewidth}{!}{%
\begin{tabular}{l l  c c c c c c c c c c c c c c c c c c c c c c c}
\hline
Dataset & Model & Uncal & HB & Iso & BBQ & ENIR & TS & VS & MS & Beta & ScaBin & Dir & GP & DIAG & DEC & IMax & SoftBin & Spline & EC & LECE & SCTL & TS+LECE & Ours &   \\
\hline
CIFAR10 & ResNet110 & 0.0216 & 0.0517 & \textcolor{red}{0.0075} & 0.0542 & \textcolor{blue}{0.0088} & 0.0143 & 0.0149 & 0.0614 & 0.0161 & 0.0102 & 0.0157 & 0.0133 & 0.0133 & 0.0200 & 0.0159 & 0.0152 & 0.0094 & 0.0146 & 0.0186 & 0.0146 & 0.0134 & 0.0089 & ($\times 10^{-1}$) \\
CIFAR10 & WideResNet32 & 0.0211 & 0.0482 & \textcolor{red}{0.0052} & 0.0504 & \textcolor{blue}{0.0059} & 0.0111 & 0.0106 & 0.0581 & 0.0111 & 0.0082 & 0.0114 & 0.0106 & 0.0111 & 0.0157 & 0.0161 & 0.0113 & 0.0085 & 0.0110 & 0.0099 & 0.0112 & 0.0111 & 0.0129 & ($\times 10^{-1}$) \\
CIFAR10 & DenseNet40 & 0.0208 & 0.0606 & \textcolor{red}{0.0081} & 0.0621 & 0.0092 & 0.0129 & 0.0140 & 0.0724 & 0.0149 & 0.0104 & 0.0137 & 0.0123 & 0.0128 & 0.0149 & 0.0181 & 0.0135 & 0.0120 & 0.0125 & 0.0142 & 0.0129 & 0.0137 & \textcolor{blue}{0.0091} & ($\times 10^{-1}$) \\
SVHN & ResNet152(SD) & \textcolor{blue}{0.0071} & 0.0085 & 0.0089 & 0.0084 & 0.0102 & 0.0115 & 0.0101 & \textcolor{red}{0.0054} & 0.0101 & 0.0484 & 0.0101 & 0.0118 & 0.0117 & 0.0094 & 0.0117 & 0.0176 & 0.0110 & 0.0122 & 0.0083 & 0.0120 & 0.0119 & 0.0105 & ($\times 10^{-1}$) \\
CIFAR100 & ResNet110 & 0.3131 & 1.5355 & \textcolor{blue}{0.1116} & 1.7831 & 0.1287 & 0.2143 & 0.2045 & 0.3300 & 0.2098 & 0.1839 & 0.2082 & 0.2382 & 0.2369 & 0.1479 & 0.3476 & 0.2177 & 0.1512 & 0.2186 & 0.1787 & 0.2171 & 0.2076 & \textcolor{red}{0.0167} & ($\times 10^{-1}$) \\
CIFAR100 & WideResNet32 & 0.2878 & 1.3526 & \textcolor{red}{0.1133} & 1.6085 & \textcolor{blue}{0.1217} & 0.1717 & 0.1648 & 0.3067 & 0.1715 & 0.2564 & 0.1692 & 0.1795 & 0.1811 & 0.1339 & 0.3609 & 0.1693 & 0.1502 & 0.1707 & 0.3460 & 0.1716 & 0.1670 & 0.1405 & ($\times 10^{-1}$) \\
CIFAR100 & DenseNet40 & 0.3669 & 1.7382 & \textcolor{red}{0.1796} & 1.9657 & 0.1953 & 0.2505 & 0.2484 & 0.3375 & 0.2460 & 0.4165 & 0.2463 & 0.2530 & 0.2462 & \textcolor{blue}{0.1822} & 0.5262 & 0.2430 & 0.2252 & 0.2416 & 0.2928 & 0.2476 & 0.2452 & 0.1958 & ($\times 10^{-1}$) \\
CARS & ResNet50pre & 0.1140 & 0.1834 & 0.1259 & 0.2232 & 0.1411 & 0.1115 & 0.1113 & 0.1295 & 0.1109 & 0.9363 & 0.1108 & 0.1149 & 0.1431 & \textcolor{blue}{0.0982} & 0.1187 & 0.1116 & 0.1171 & 0.1117 & 0.1159 & 0.1115 & 0.1129 & \textcolor{red}{0.0060} & ($\times 10^{-1}$) \\
CARS & ResNet101pre & \textcolor{blue}{0.1049} & 0.4636 & 0.1757 & 0.6462 & 0.1861 & 0.1675 & 0.1736 & 0.2185 & 0.1778 & 0.4937 & 0.1679 & 0.1745 & 0.1823 & 0.1277 & 0.1740 & 0.1679 & 0.1806 & 0.1675 & 0.1574 & 0.1677 & 0.1609 & \textcolor{red}{0.0066} & ($\times 10^{-1}$) \\
CARS & ResNet101 & 0.1738 & 0.1505 & 0.1195 & 0.1942 & 0.1417 & 0.1043 & 0.1115 & 0.1293 & 0.1043 & 0.9377 & 0.1029 & 0.1081 & 0.1049 & \textcolor{blue}{0.0914} & 0.1084 & 0.1054 & 0.1120 & 0.1042 & 0.1019 & 0.1045 & 0.1025 & \textcolor{red}{0.0052} & ($\times 10^{-1}$) \\
BIRDS & ResNet50(NTS) & 0.2291 & 0.4316 & 0.1774 & 0.5523 & 0.1855 & 0.1845 & 0.1781 & 0.2155 & 0.1825 & 0.4512 & 0.1926 & 0.1883 & 0.1987 & \textcolor{blue}{0.1521} & 0.2176 & 0.1838 & 0.1865 & 0.1847 & 0.1963 & 0.1840 & 0.1679 & \textcolor{red}{0.0050} & ($\times 10^{-1}$) \\
ImageNet & ResNet152 & 0.2716 & 1.2133 & 0.2754 & 1.6302 & 0.3108 & 0.2626 & 0.2529 & 0.3517 & 0.2568 & 0.7793 & 0.2556 & 0.2751 & 0.2675 & \textcolor{blue}{0.2058} & 0.3133 & 0.2626 & 0.2794 & 0.2629 & 0.2667 & 0.2630 & 0.2596 & \textcolor{red}{0.0005} & ($\times 10^{-1}$) \\
ImageNet & DenseNet161 & 0.2591 & 1.0466 & 0.2637 & 1.4499 & 0.2969 & 0.2531 & 0.2447 & 0.3399 & 0.2478 & 0.7938 & 0.2478 & 0.2658 & 0.2563 & \textcolor{blue}{0.1906} & 0.2903 & 0.2538 & 0.2640 & 0.2537 & 0.2544 & 0.2536 & 0.2499 & \textcolor{red}{0.0005} & ($\times 10^{-1}$) \\
ImageNet & PNASNet5large & 0.8512 & 0.7045 & 0.2187 & 0.9702 & 0.2409 & 0.2802 & 0.2554 & 0.2203 & 0.2861 & 0.5417 & 0.2881 & 0.2130 & 0.2186 & 0.2028 & 0.2000 & 0.4872 & \textcolor{blue}{0.1913} & 0.4470 & 0.8242 & 0.2902 & 0.2763 & \textcolor{red}{0.0030} & ($\times 10^{-1}$) \\
ImageNet & SwinTransformer & 0.6039 & 0.7675 & 0.2505 & 1.0535 & 0.2749 & 0.2407 & 0.2254 & 0.2520 & 0.2291 & 0.4452 & 0.2442 & 0.2335 & 0.2219 & \textcolor{blue}{0.1753} & 0.2147 & 0.3046 & 0.2193 & 0.2882 & 0.5937 & 0.2445 & 0.2367 & \textcolor{red}{0.0002} & ($\times 10^{-1}$) \\
\hline
\multicolumn{2}{c}{\bf{Average Error}} & \bf{0.2431} & \bf{0.6504} & \bf{0.1361} & \bf{0.8168} & \bf{0.1505} & \bf{0.1527} & \bf{0.1480} & \bf{0.2019} & \bf{0.1517} & \bf{0.4209} & \bf{0.1523} & \bf{0.1528} & \bf{0.1538} & \bf{\textcolor{blue}{0.1179}} & \bf{0.1956} & \bf{0.1710} & \bf{0.1412} & \bf{0.1667} & \bf{0.2253} & \bf{0.1537} & \bf{0.1491} & \bf{\textcolor{red}{0.0281}} & \bf{($\times 10^{-1}$)} \\
\multicolumn{2}{c}{\bf{Average Relative Error}} & \bf{1.0000} & \bf{2.7544} & \bf{\textcolor{blue}{0.6811}} & \bf{3.3412} & \bf{0.7617} & \bf{0.7866} & \bf{0.7700} & \bf{1.3810} & \bf{0.7833} & \bf{2.6701} & \bf{0.7777} & \bf{0.7992} & \bf{0.8156} & \bf{0.6865} & \bf{0.9800} & \bf{0.8719} & \bf{0.7440} & \bf{0.8104} & \bf{0.8938} & \bf{0.7941} & \bf{0.7766} & \bf{\textcolor{red}{0.2794}} & \bf{($\times 1$)} \\
\hline
\end{tabular}
}
\end{table*}

\begin{table*}[htbp]
\centering
\caption{Average Relative Error of Methods by Different Metrics (Best in \textcolor{red}{Red}, Second-best in \textcolor{blue}{Blue})}
\label{tab:overallare}
\setlength{\tabcolsep}{2pt}
\resizebox{0.9\linewidth}{!}{%
\begin{tabular}{l l  c c c c c c c c c c c c c c c c c c c c c c}
\hline
Metric & Uncal & HB & Iso & BBQ & ENIR & TS & VS & MS & Beta & ScaBin & Dir & GP & DIAG & DEC & IMax & SoftBin & Spline & EC & LECE & SCTL & TS+LECE & Ours \\
\hline
ECE$_{r=1}^s$ & 1.0000 & 0.7986 & 0.5966 & 0.8901 & 0.8216 & 0.3844 & 0.5038 & 0.5330 & 0.5648 & 0.6033 & 0.4509 & 0.2302 & 0.2339 & 0.6208 & 0.3039 & 0.5313 & \textcolor{blue}{0.2146} & 0.3591 & 0.7398 & 0.3968 & 0.3951 & \textcolor{red}{0.1729} \\
ECE$_{r=2}^s$ & 1.0000 & 0.7142 & 0.5678 & 0.8552 & 0.7801 & 0.4188 & 0.5513 & 0.5489 & 0.6096 & 0.6003 & 0.4601 & 0.2421 & 0.2360 & 0.5655 & 0.3121 & 0.5800 & \textcolor{blue}{0.2125} & 0.3539 & 0.7360 & 0.4608 & 0.4547 & \textcolor{red}{0.1704} \\
KDE-ECE & 1.0000 & 0.9613 & 0.6070 & 0.9817 & 0.8299 & 0.4813 & 0.5518 & 0.6881 & 0.6067 & 0.9840 & 0.5329 & 0.4067 & \textcolor{blue}{0.3828} & 0.7873 & 0.4520 & 0.6775 & 0.4182 & 0.4665 & 0.7821 & 0.5155 & 0.5308 & \textcolor{red}{0.3612} \\
MMCE & 1.0000 & 0.4501 & 0.5522 & 0.3609 & 0.7494 & 0.2793 & 0.4686 & 0.4881 & 0.5476 & 0.5280 & 0.4288 & \textcolor{blue}{0.1653} & 0.1804 & 0.5349 & 0.2329 & 0.4048 & 0.1778 & 0.2569 & 0.7133 & 0.2857 & 0.2875 & \textcolor{red}{0.1389} \\
KS error & 1.0000 & 0.6713 & 0.5935 & 0.4859 & 0.8156 & 0.2850 & 0.4919 & 0.5257 & 0.5274 & 0.5681 & 0.4392 & 0.1851 & \textcolor{blue}{0.1702} & 0.5387 & 0.2313 & 0.4313 & 0.1862 & 0.2313 & 0.7203 & 0.3150 & 0.3223 & \textcolor{red}{0.1439} \\
ECE$^{\mathrm{em}}$ & 1.0000 & 1.1414 & 0.5867 & 1.0284 & 0.8031 & 0.3907 & 0.4923 & 0.7026 & 0.5601 & 0.5984 & 0.4670 & \textcolor{blue}{0.2477} & 0.2534 & 0.6186 & 0.3316 & 0.5339 & 0.2925 & 0.3767 & 0.7483 & 0.4100 & 0.4022 & \textcolor{red}{0.2447} \\
ACE & 1.0000 & 1.4790 & 0.6222 & 1.6213 & 0.9064 & 0.5753 & 0.6213 & 0.9510 & 0.5786 & 1.0657 & 0.5547 & 0.4285 & 0.4184 & 0.5271 & 0.4408 & 0.5328 & \textcolor{blue}{0.4174} & 0.4741 & 0.8120 & 0.5805 & 0.5906 & \textcolor{red}{0.3655} \\
dECE & 1.0000 & 1.1634 & 0.5894 & 1.0607 & 0.8158 & 0.3621 & 0.4840 & 0.7026 & 0.5493 & 0.5764 & 0.4545 & \textcolor{blue}{0.2080} & 0.2139 & 0.6080 & 0.3030 & 0.5134 & 0.2491 & 0.3503 & 0.7389 & 0.3877 & 0.3743 & \textcolor{red}{0.1972} \\
ECE$^{\mathrm{ew}}$ & 1.0000 & 0.6826 & 0.5688 & 0.8484 & 0.7713 & 0.3874 & 0.4941 & 0.5688 & 0.5758 & 0.5930 & 0.4665 & \textcolor{blue}{0.2442} & 0.2704 & 0.5931 & 0.3283 & 0.5084 & 0.2667 & 0.3795 & 0.7613 & 0.3859 & 0.3809 & \textcolor{red}{0.2149} \\
ECE$_{r=2}$ & 1.0000 & 1.0805 & 0.5775 & 1.0800 & 0.8030 & 0.4847 & 0.5517 & 0.7377 & 0.5964 & 0.6235 & 0.5061 & \textcolor{blue}{0.3278} & 0.3380 & 0.5705 & 0.4340 & 0.5438 & 0.3666 & 0.4263 & 0.8044 & 0.4960 & 0.4914 & \textcolor{red}{0.3158} \\
CWECE$_s$ & 1.0000 & 0.7457 & 0.7117 & \textcolor{blue}{0.6814} & 0.7915 & 0.7541 & 0.7204 & 0.7583 & 0.7119 & 1.2943 & 0.7092 & 0.7520 & 0.7248 & 0.8204 & 0.7491 & 0.8763 & 0.7705 & 0.7702 & 0.8426 & 0.7660 & 0.7337 & \textcolor{red}{0.4708} \\
CWECE$_a$ & 1.0000 & 0.7418 & 0.7160 & \textcolor{blue}{0.6807} & 0.7939 & 0.7551 & 0.7285 & 0.7650 & 0.7152 & 1.3064 & 0.7132 & 0.7556 & 0.7262 & 0.8196 & 0.7532 & 0.8788 & 0.7726 & 0.7736 & 0.8436 & 0.7688 & 0.7406 & \textcolor{red}{0.4702} \\
CWECE$_{r=2}$ & 1.0000 & 0.8190 & 0.8111 & \textcolor{blue}{0.6937} & 0.8297 & 0.8391 & 0.8378 & 0.8602 & 0.8197 & 0.8592 & 0.8235 & 0.8299 & 0.8170 & 0.8167 & 0.8495 & 0.8550 & 0.8492 & 0.8400 & 0.9224 & 0.8397 & 0.8261 & \textcolor{red}{0.3161} \\
$t$CWECE & 1.0000 & 1.2699 & 1.0259 & 1.2016 & 1.0566 & 0.6919 & 0.7988 & 0.7119 & 0.7444 & \textcolor{blue}{0.4726} & 0.7584 & 0.6559 & 0.7508 & 0.6797 & 0.7122 & 0.7105 & 0.6423 & 0.6689 & 0.8275 & 0.6825 & 0.6803 & \textcolor{red}{0.2233} \\
$t$CWECE$^k$ & 1.0000 & 1.0275 & 0.8737 & 1.1964 & 0.9330 & 0.6517 & 0.7541 & 0.7083 & 0.7288 & \textcolor{blue}{0.5285} & 0.7048 & 0.6070 & 0.7187 & 0.6111 & 0.6418 & 0.6825 & 0.6039 & 0.6280 & 0.7942 & 0.6361 & 0.6238 & \textcolor{red}{0.2947} \\
DKDE-CE & 1.0000 & 2.7544 & \textcolor{blue}{0.6811} & 3.3412 & 0.7617 & 0.7866 & 0.7700 & 1.3810 & 0.7833 & 2.6701 & 0.7777 & 0.7992 & 0.8156 & 0.6865 & 0.9800 & 0.8719 & 0.7440 & 0.8104 & 0.8938 & 0.7941 & 0.7766 & \textcolor{red}{0.2794} \\
SKCE & 1.0000 & 1.1797 & 0.6272 & 1.2753 & 0.7782 & 0.5293 & 0.8633 & 1.1266 & 0.9277 & 6.3275 & 0.6406 & \textcolor{blue}{0.4403} & 0.6528 & 0.7272 & 0.5435 & 0.9979 & 0.5116 & 0.5530 & 0.6277 & 0.5416 & 0.4839 & \textcolor{red}{-0.1747} \\
\hline
\bf{Average} & \bf{1.0000} & \bf{1.0400} & \bf{0.6652} & \bf{1.0755} & \bf{0.8259} & \bf{0.5328} & \bf{0.6285} & \bf{0.7505} & \bf{0.6557} & \bf{1.1882} & \bf{0.5817} & \bf{\textcolor{blue}{0.4427}} & \bf{0.4649} & \bf{0.6545} & \bf{0.5058} & \bf{0.6547} & \bf{0.4527} & \bf{0.5129} & \bf{0.7828} & \bf{0.5449} & \bf{0.5350} & \bf{\textcolor{red}{0.2474}} \\
\hline
\end{tabular}
}
\end{table*}

Regarding classwise and canonical calibration, we evaluated $\mathrm{CWECE}{s}$, $\mathrm{CWECE}{a}$, $\mathrm{CWECE}_{r=2}$, $t\mathrm{CWECE}$, $t\mathrm{CWECE}^k$, DKDE-CE, and SKCE metrics. Tables \ref{tab:cwecethr} and \ref{tab:dkde-ce} exemplify the results of $t\mathrm{CWECE}$ and DKDE-CE, where our method achieves the best performance in 10 and 9 out of 15 tasks, respectively. Fig. \ref{fig:stackbarcombine} presents task-specific relative errors, with corresponding absolute errors provided in \emph{Appendix \ref{sec:apdx-visulaizationcomparison}}. Our method demonstrates a significant advantage in the average metrics across tasks. Detailed results for other metrics can be found in \emph{Appendix \ref{sec:apdx-classwisecalibrationnumber}} and \emph{\ref{sec:apdx-canonicalcalibrationnumber}} (Table \ref{tab:apdx-tab-cmp-cwecesum} - \ref{tab:apdx-tab-cmp-SKCE}). \textbf{Our approach consistently achieves the best ARE and AE calibration values across all 7 evaluation metrics.} Notably, in non-top-label calibration assessments, our method shows more clear improvements compared to previous methods (in contrast to top-label metrics), possibly because prior works have not adequately focused on non-top-label calibration. Table \ref{tab:overallare} reports ARE statistics across different metrics, while Fig. \ref{fig:RadarARE} provides a radar chart for a visual summary. Similar AE statistics and corresponding radar plots are presented in \emph{Appendix \ref{sec:apdx-overall_are_ae_number}} and \emph{\ref{sec:apdx-visulaizationcomparison}}, respectively. These results provide an overall comparison of prior methods and highlight the advantages of our approach. The NLL results are provided in \emph{Appendix \ref{sec:tab-cmp-NLL}}.

It is noteworthy that while our method is accuracy-preserving, this property is not guaranteed by existing approaches. Table \ref{tab:apdx-tab-accuracy-variation} in \emph{Appendix \ref{sec:apdx-tab-accuracy-variation}} reports the impact of various post-hoc calibrators on classification accuracy. Overall, except for TS, DIAG, SoftBin, EC and SCTL, which are accuracy-preserving, and DCE and LECE, which slightly improve average accuracy ($\leq$+0.09\%), most methods, including HB, Iso, BBQ, ENIR, VS, MS, Beta, ScaBin, GP, Imax, and Spline, negatively impact classification performance.

\begin{figure}[!h]
\centering
\includegraphics[width=\columnwidth]{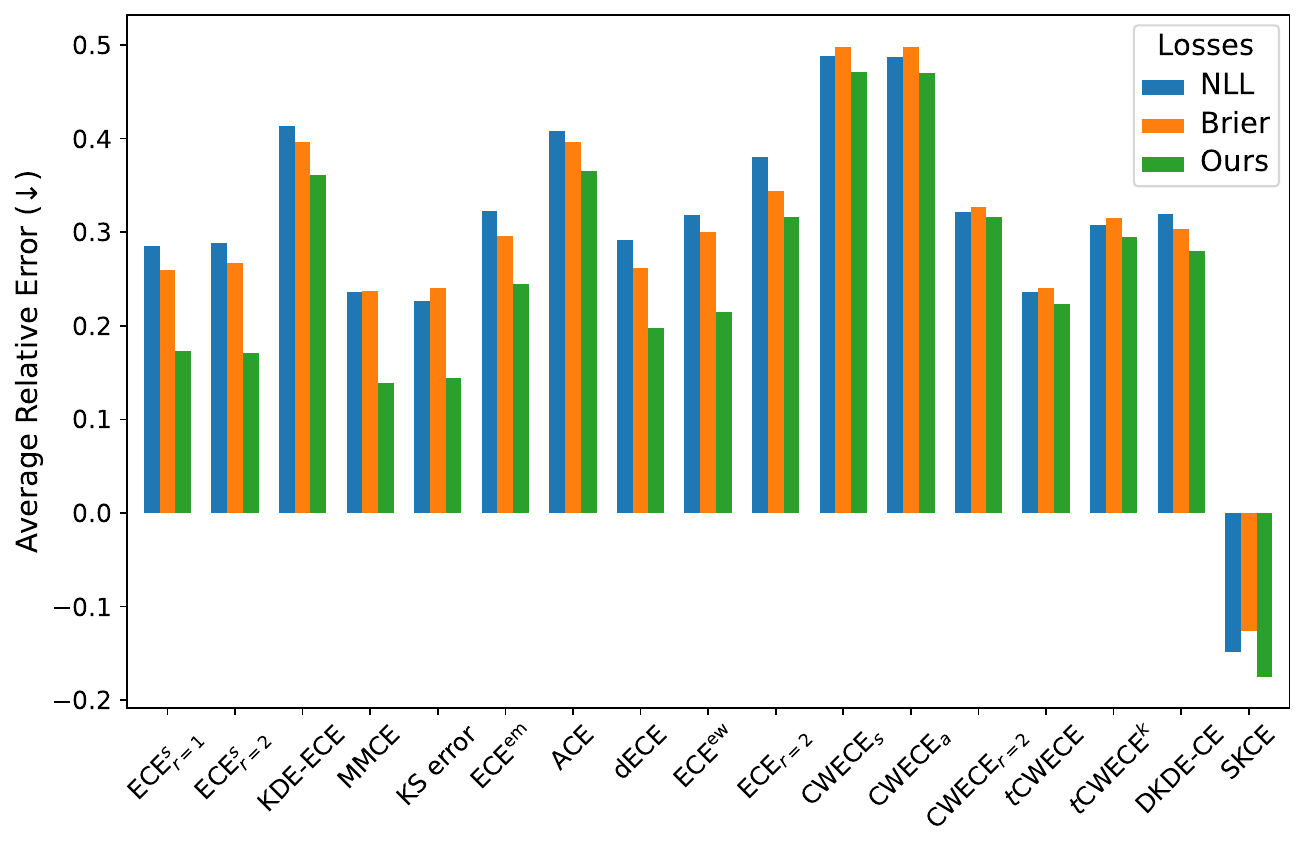}
\caption{Group bar chart comparing the ARE of our objective and PSRs across all metrics}
\label{fig:ComparingPSR}
\end{figure}

\subsection{Ablation Studies}

\subsubsection{Proper Scoring Rule Comparison} 
\label{sec:ablation-psr-comparison}
To eliminate the influence of the training scheme and fully validate the effectiveness of our proposed $h$-calibration training loss, we repeated the entire training process with all learning strategies unchanged, replacing only our loss function with the traditional PSRs. We compared the calibration results using PSRs to ours. Detailed quantitative comparisons of all calibration metrics can be found in \emph{Appendix \ref{sec:apdx-psrcomparison}} (Table \ref{tab:apdx-psrcomparison1} - \ref{tab:apdx-psrcomparison3}), and a visual summary of average relative error is provided in Fig. \ref{fig:ComparingPSR}. It shows that our method outperforms traditional PSRs, i.e., NLL loss (cross-entropy) and the Brier score (MSE), in (normalized) ARE across all 17 metrics. Regarding the (unnormalized) AE in \emph{Appendix \ref{sec:apdx-psrcomparison}}, our method also outperforms both NLL and Brier scores on 16 out of 17 metrics, ranking second only on DKDE-CE. In contrast, neither of the two PSRs showed consistent superiority over the other. These results provide further evidence for the effectiveness of the proposed $h$-calibration loss in learning calibration, supporting the theoretical analysis in Section \ref{method:sec5}, which establishes the advantage of our method over traditional PSRs from a theoretical standpoint.

\subsubsection{Hyperparameter Robustness}

Our method introduces three hyperparameters: constants $M$, $\epsilon$, and a trivial loss multiplier $r$. Unlike many previous methods that introduce variables lacking intuitive explanations (\textbf{limitation \#7}), we have clarified in the method section that our \emph{parameters have clear interpretations}: $M$ controls the approximation effectiveness of our loss function (approximation error decreasing exponentially as $M$ increases); $\epsilon$ reflects the constraining bound on calibration error; and $r$ ensures a reasonable loss range. In our implementation, we used consistent constants for these parameters across all tasks without task-specific tuning. The satisfactory calibration results obtained \emph{indicate the robustness} of our method. 

Here, we further investigate the impact of adjusting these three parameters on calibration. We doubled and halved the parameters from their default values, and for $\epsilon$, we further examined the impact of its exponential increase. We conducted experiments on four ImageNet tasks. Results on 17 metrics are presented in \emph{Appendix \ref{sec:apdx-robustnessanalysis}}, and Fig. \ref{robustnessfigure} shows the overall changes in ARE. The results show that variations in $M$ and the loss multiplier $r$ produce stable outcomes. This aligns with our expectations since the approximation error decreases exponentially with increasing $M$, and for $M$$\geq$$100$, the approximation error can become negligible, so further increasing $M$ should not significantly affect the results. Similarly, variations in $r$ are theoretically equivalent to adjusting the learning rate by a factor of two, which is expected to be inconsequential. When the constant $\epsilon$ increases exponentially, the results remain stable and robust for $\epsilon$~$\leq$$10^{-2}$. However, when increased to $10^{-1}$, the calibration errors show an upward trend. This is consistent with our expectations because an overly large error bound in the constraint prevents strict control of the calibration error.

Overall, these results align with our expectations and \emph{confirm the robustness} of the parameters within intuitive empirical ranges. Notably, the average performance (ARE \& AE) across the above tasks obtained with proper parameters 
fluctuations (excluding $\epsilon$=$10^{-1}$) remains top-ranking compared to previous methods, further verifying the robustness.

\begin{figure}[htbp]
\centering
\includegraphics[width=\columnwidth]{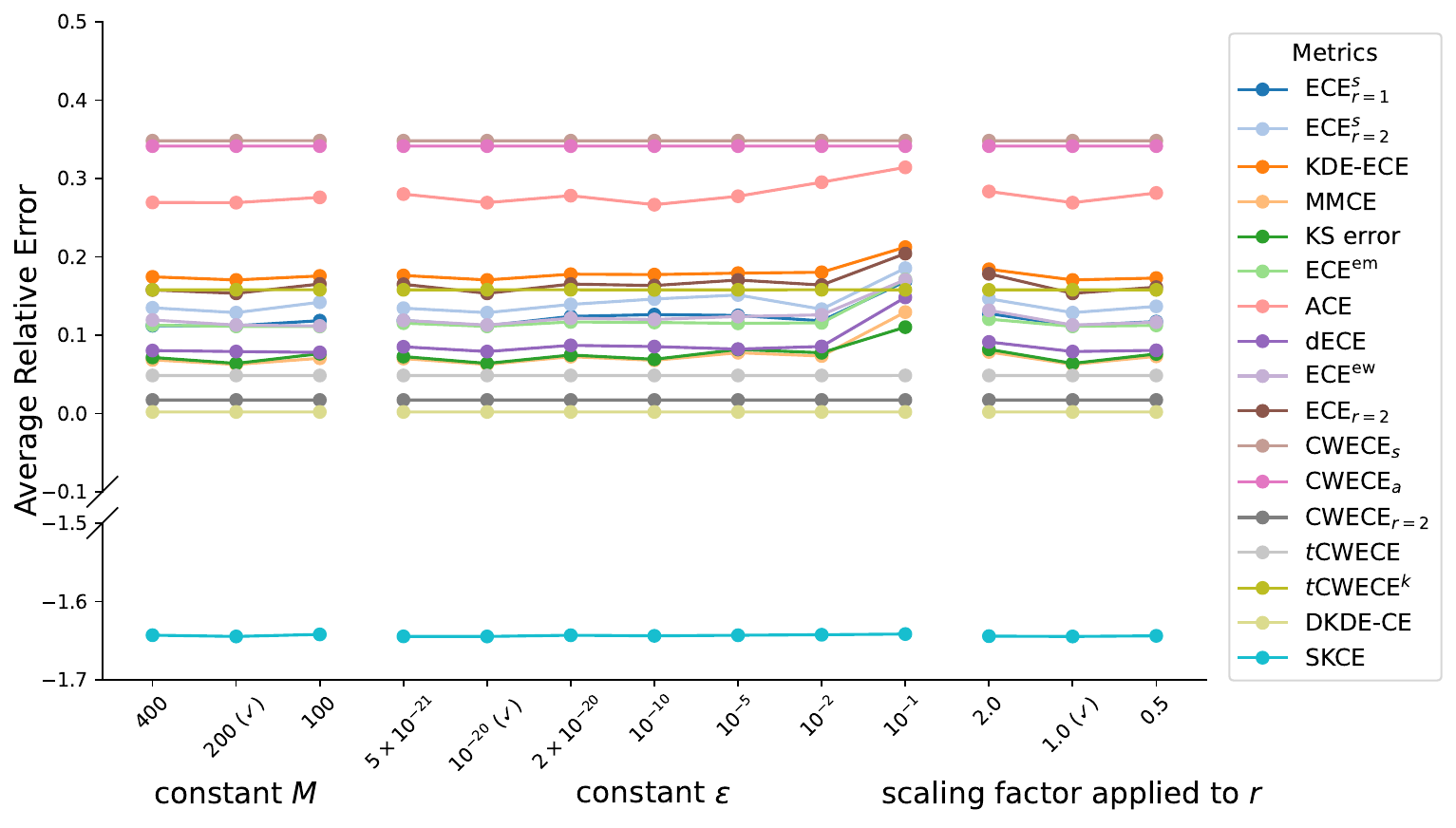}
\caption{ARE of various metrics under hyperparameter variations. Symbol ($\checkmark$) denotes the default values.}
\label{robustnessfigure}
\end{figure}

\section{Conclusion and Future Work}
In this study, we overcome ten common limitations in previous post-hoc calibration research by theoretically constructing a probabilistic learning framework called $h$-calibration. We show its sufficiency to classical calibration definitions, including canonical, top-label, and classwise calibration. Furthermore, we derive an approximately equivalent learning objective, which forms the basis for a simple and effective algorithm to learn canonical calibration with controllable error. Extensive experiments across 15 post-hoc recalibration tasks, involving 20 compared methods and 17 evaluation metrics, show the substantial advantages of our approach over traditional methods. Additionally, we discuss theoretically and experimentally the connections and advantages of our objective compared to traditional PSRs in learning class likelihoods. Our framework offers valuable tools for learning reliable likelihoods in related fields.

This study has limitations and potential areas for future exploration. Firstly, we focus primarily on the post-hoc calibration scenario and do not delve into calibration by modified training schemes. The reasons are threefold: (1) The post-hoc setting offers a more unified and standardized benchmark, allowing for more precise validation of our theoretical framework. (2) With a pretrained model, calibration and discriminative performance can be effectively decoupled, enabling a clearer evaluation of the net gains from different calibration strategies. (3) Post-hoc calibration is widely applicable, as many models prioritize classification performance during training and seek improved likelihood estimates during deployment without compromising accuracy. We acknowledge that adapting our method for training-time calibration may require adjustments, such as tuning hyperparameters to approximate PSRs, to maintain discriminative power. We leave this for future exploration.

Secondly, while our learning objective is computationally efficient (as shown in the pseudocode), following the design in \cite{no.67}, our method includes an automatic selection of the recalibration mapping. This process requires multiple training runs and increases computational cost. Nonetheless, we observed that this adaptive step is important for calibration performance compared to manually specified mapping. This issue is closely related to another open question in calibration research, separate from our learning objective design, which concerns the structure of optimal calibration mappings. We plan to explore this direction in future work. For further discussion on other future works, see \emph{Appendix \ref{sec:apdx-futurework}}.



\newpage
\onecolumn
{
\appendices
\section{Supporting Materials for Some Introduction Arguments} 
\label{sec:introductionevidence}
{
\mdseries
\subsection{Controversies on Calibration Effectiveness of Some Empirical Training Regularizers}
\label{sec:controversies_training_regularizer}
Although various training regularization techniques are employed to mitigate overfitting, increase entropy, or discourage overconfidence for improved calibration, some studies report that the effects of these techniques on calibration remain controversial. For instance, prior investigations suggest that the mixup does not necessarily enhance calibration and can, in some cases, deteriorate calibration \cite{no.90,no.3}. Empirical studies show that ensembles of deep neural networks, which are not directly related to probability calibration \cite{no.57}, are often not well calibrated \cite{no.2}, with overfitted ensemble models showcasing inadequate calibration performance \cite{no.24}. In experiments conducted by Fernando et al., focal loss was found not to yield a significant improvement in calibration \cite{no.48}. Furthermore, there are reports indicating that focal loss can induce under-confidence by generating predictions with higher entropy \cite{no.120,no.12}. The combination of different techniques, such as augmentations and ensemble methods, has been shown to yield models with diminished calibration \cite{no.119,no.61,no.3}. Additionally, it has been noted that regularization methods, including label smoothing, $L_p$ norm, focal loss, and mixup, render models less calibratable, i.e., they can hurt the final calibration performance when the following post-hoc recalibration is allowed \cite{no.53,no.3,no.47}.

\subsection{Unsuitability/Overfitting of Binning-Based Metrics for Calibration Training}
\label{sec:comparing_nll_ece_training}
\begin{wrapfigure}{l}{0.5\textwidth}
    \centering
    \begin{minipage}{0.49\linewidth}
        \centering
        \includegraphics[width=\linewidth]{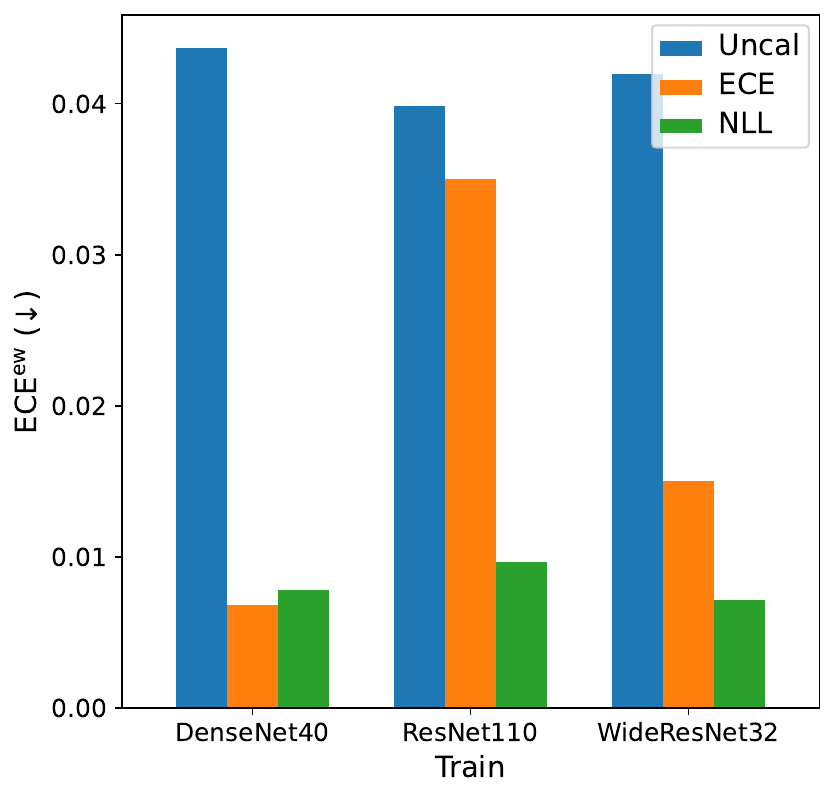}
    \end{minipage}%
    \hfill
    \begin{minipage}{0.49\linewidth}
        \centering
        \includegraphics[width=\linewidth]{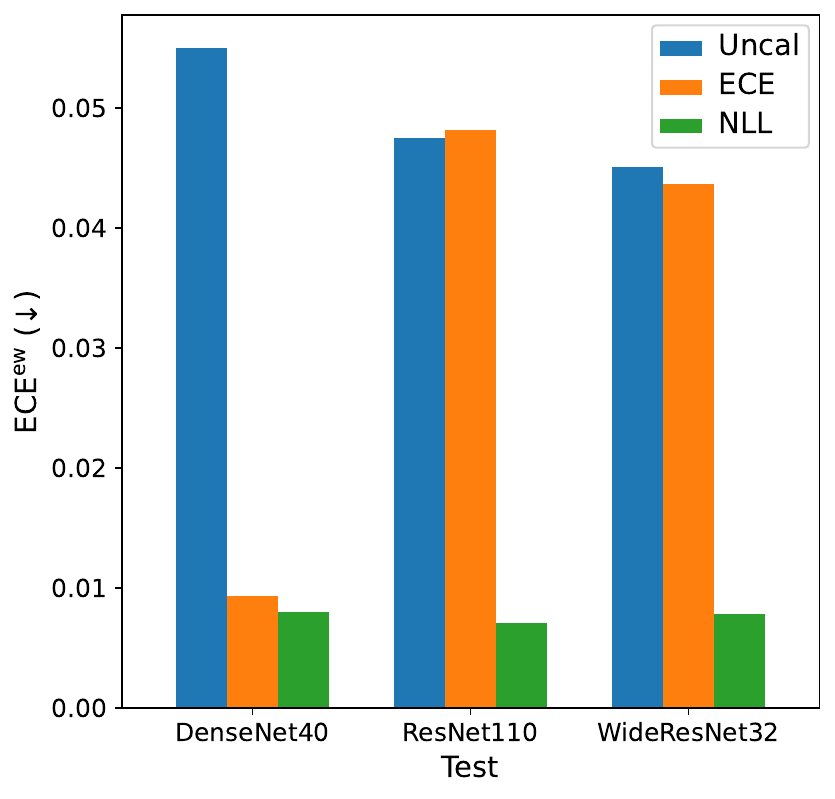}
    \end{minipage}
    \caption{Comparing training with NLL and binning-based ECE$^{\mathrm{ew}}$}
    \label{fig:comparing_nll_ece_training}
\end{wrapfigure}

We demonstrate the unsuitability and overfitting issues of binning-based metrics for calibration training, using the CIFAR10 calibration tasks from \cite{no.71,no.67} as an example. Specifically, we replace the default NLL loss in DIAG calibrator \cite{no.67} with the popular binning-based metric ECE$^{\mathrm{ew}}$ for training the post-hoc calibrator. For the ResNet110 and WideResNet32 tasks, we observe that, despite training with ECE$^{\mathrm{ew}}$, there is no significant improvement in the test set ECE$^{\mathrm{ew}}$ compared to the original uncalibrated (Uncal) performance. In the WideResNet32 task, clear overfitting is observed: the training set ECE$^{\mathrm{ew}}$ decreases noticeably, but there is no corresponding reduction in the test set ECE$^{\mathrm{ew}}$. In the ResNet110 task, both the training and test set ECE$^{\mathrm{ew}}$ remain high. This may be due to the non-differentiability of the binning operations, which results in an ineffective alignment between the initially recalibrated probabilities assigned to each bin and their observed frequencies within those bins, thereby hindering effective training convergence. These experimental results verify the unsuitability of using binning-based metrics as training objectives for calibration.

\subsection{Additional Summary Regarding Limitation \#4}
\label{sec:addtional_comment_limit4}
As outlined in the introduction section, limitation \#4 is not confined solely to methods focusing on equivalent formulation of perfect calibration. It also extends to intuitively designed or binning-based approaches, where numerous methods emphasize top-label or class-wise calibration instead of canonical calibration in their modeling and evaluation processes. For instance, previous studies align top-label probabilities with corresponding accuracies, as seen in the learning objectives of \cite{no.12,no.100,no.37,no.99,no.16,no.17,no.83,no.126}, and the SB-ECE method discussed in \cite{no.58}. Some approaches solely utilize top-label probability or accuracy to construct or interpret calibration methods, such as those proposed in \cite{no.66,no.62,no.80,no.13}, and the pTDE method in \cite{no.145} and S-AvUC in \cite{no.58}. Certain prior methods align class-wise accuracy with probability or design calibration methods based on class-wise probabilities, including methods in \cite{no.86,no.87,no.21,no.89,no.101,no.125,no.88,no.127,no.43,no.29,no.14-0}, or propose techniques that can selectively model top-label or class-wise calibration, as demonstrated in \cite{no.22} and \cite{no.60}. Furthermore, beyond modeling, model evaluations are often conducted only on non-canonical calibration metrics, such as top-label or class-wise calibration metrics, as evidenced by assessments in \cite{no.4,no.12,no.13,no.14-0,no.16,no.19,no.22,no.29,no.37,no.40,no.41,no.42,no.43,no.49,no.50,no.53,no.55,no.57,no.58,no.60,no.68,no.70,no.73,no.76,no.78,no.80,no.81,no.93,no.99,no.100,no.101,no.104,no.107,no.111,no.124,no.126,no.128,no.133,no.135,no.146,no.150}.

}

\section{More Detailed Discussion from Limitation \#6 to Limitation \#10} 
\label{sec:limitationdiscussion}
{
\mdseries
\textbf{Limitation \#6} underscores the reliance of some approaches/studies on numerous unverified assumptions. For instance, assumptions such as the adherence of learned representations to Gaussian distributions \cite{no.4} or Gaussian processes \cite{no.68,no.84,no.44}, the assignment of Gaussian priors to model parameters \cite{no.28,no.106,no.19}, or the presumption that data can be generated by Gaussian generative models \cite{no.49}. Additionally, assumptions extend to the conformity of class confidence or probability in model output (such as $\hat{p}_l(X)$, $\hat p_l(X)|Y=l$, $\hat{p}(X)|Y=l$ for class $l$ ) to Beta distributions \cite{no.50,no.85,no.151} or Dirichlet distributions \cite{no.71}, and the incorporation of a Beta distribution prior for binwise confidence \cite{no.87}. These empirical assumptions lack strict theoretical or empirical evidence, and certain assumptions may even be inherently contradictory. For example, it can be shown that the assumption that produced confidence follows a Beta distribution \cite{no.50,no.85,no.151} or Dirichlet distribution \cite{no.71} can conflict with the hypothesis in \cite{no.52} that the probability density of predicted probabilities within the simplex is Lipschitz continuous.

Many methods involve settings or hyperparameters that are non-universal or challenging to determine directly through theory or experience (\textbf{limitation \#7}). Examples include the configurations in implicit regularized data augmentation and ensemble techniques mentioned above; the binning settings in binning-based methods as indicated by \cite{no.5,no.21,no.50,no.51,no.52,no.70,no.31,no.34,no.1,no.22,no.60,no.76}; and the specifications of kernel functions in RKHS (reproducing kernel Hilbert space) from studies \cite{no.83,no.79,no.19} and in kernel smoothing from studies \cite{no.52,no.70}, as highlighted in \cite{no.79,no.22,no.50,no.1}. Moreover, the determination of weights for calibration regularizers, utilized as secondary optimization objectives in studies \cite{no.12,no.16,no.19,no.22,no.41,no.52,no.54,no.58,no.62,no.66,no.83,no.99,no.100,no.101,no.102,no.103,no.107,no.111,no.128,no.145,no.150}, serves as another example. It is worth noting that, despite some recent studies exploring optimization related to binning settings \cite{no.1,no.21,no.50,no.76}, there is currently no standard criterion for deciding which type of binning \cite{no.5}, for example, equal-mass or equal-width, or what number of bins, or what kind of membership functions in soft binning to use.

\textbf{Limitation \#8} pertains to the tradeoff between the probabilistic unit measure property and calibration. Many calibration approaches proposed for binary classification tasks, such as \cite{no.86,no.87,no.89,no.85,no.21,no.125,no.151,no.127}, necessitate the implementation of extension strategies to generalize calibration to multi-class scenarios.  For example, the techniques proposed in \cite{no.86,no.87,no.89,no.85} are extended by \cite{no.65} through the adoption of a one-vs-rest strategy. Some methods inherently employ the one-vs-rest strategy to calibrate probabilities for all classes, as seen in \cite{no.60,no.75,no.133,no.88,no.46}. Furthermore, certain methodologies individually calibrate all probability scalars (e.g., \cite{no.1} and \cite{no.70}) or their subsets (e.g., top-label probability in \cite{no.145}). The aforementioned binary-to-multiclass extension strategies, along with techniques for individual probability scalar calibration, compromise the unit measure property of probability (calibrated probability vector no longer sums to 1). Although this disruption can be addressed through additional normalization, as demonstrated in works such as \cite{no.65,no.133,no.46,no.88}, this extra normalization introduces a conundrum between probabilistic validity and calibration, leading to a situation where the probabilities are no longer guaranteed to be calibrated \cite{no.5,no.71,no.46,no.57,no.1}.

\textbf{Limitation \#9} concerns to the issue of non-accuracy preservation. Firstly, as mentioned above, methods employing a modified training scheme lack an accuracy-preserving property \cite{no.24,no.29,no.33,no.53} and incur significant additional computational overhead \cite{no.40,no.7,no.16,no.24,no.29}. Relevant methods include those proposed in \cite{no.3,no.4,no.6,no.12,no.13,no.14-0,no.16,no.18,no.19,no.22,no.24,no.28,no.33,no.39,no.41,no.42,no.43,no.44,no.48,no.49,no.52,no.53,no.54,no.55,no.58,no.61,no.62,no.63,no.64,no.66,no.72,no.73,no.83,no.84,no.90,no.91,no.93,no.95,no.99,no.100,no.101,no.102,no.103,no.104,no.106,no.107,no.111,no.120,no.129,no.130,no.139,no.81}. In contrast, post-hoc methods can mitigate perturbations to accuracy by enforcing monotonicity in the recalibration mapping of the sample-wise probability vector. However, many post-hoc approaches fail to ensure this monotonicity, compromising the original classification accuracy, often leading to a decrease. Examples include methods proposed in studies such as \cite{no.1,no.21,no.71,no.85,no.86,no.68,no.87,no.88,no.89,no.151,no.125,no.127,no.146,no.150}, and the vector and matrix scaling methods in \cite{no.109}. It is noteworthy that monotonicity in some binary classification calibration methods, e.g., \cite{no.85,no.151,no.127} and \cite{no.88}, were imposed on the transformations for class-wise probabilities rather than for sample-wise probabilities, failing to ensure accuracy preservation. Furthermore, methods adopting the one-vs-rest strategy (see above for relevant studies) that utilize multiple calibration mappings for different classes may disrupt the order of sample-wise classification probabilities, thus lacking accuracy preservation. Moreover, even when sample-wise monotonicity is ensured but not strictly, as in \cite{no.70}, there remains a risk of losing discriminability of logits or probabilities within certain intervals. In fact, methods based on hard binning for direct calibration inherently suffer from this issue (e.g., \cite{no.1,no.21,no.75,no.86,no.87,no.88,no.89,no.125,no.126,no.133}), where the calibrated values within the same bin lack discriminability \cite{no.5}.

\textbf{Limitation \#10} pertains to the issue of applicability. Some calibration methods are limited to specific models or require modifications to the original network structure or training procedures, thereby restricting their utility. For instance, \cite{no.12} and \cite{no.23} are applicable to models with block-wise network structures. \cite{no.15} is tailored for dynamic neural networks. \cite{no.18} and \cite{no.44} are designed for transformer-based network structures. Method \cite{no.19} is crafted for Bayesian neural networks (BNN). \cite{no.55} is specifically applicable to networks with batch normalization. Methods \cite{no.24}, \cite{no.139}, and \cite{no.145} are designed for models with ensembling structures such as Deep-Ensemble, Monte Carlo Dropout, and MIMO (Multi-Input and Multi-Output). Regarding modifications to the network structure or training strategies, Tao \emph{et al.} \cite{no.12} searches for the best-fitting combination of block predecessors to determine the network structure. Ye \emph{et al.} \cite{no.18} and Chen \emph{et al.} \cite{no.44} modify the standard attention structure in transformers. Zhong \emph{et al.} \cite{no.55} adjusts the parameter update strategy of batch normalization. Liu \emph{et al.} \cite{no.129} adds spectral normalization to hidden network weights and replaces the output layer with a Gaussian Process. Milios \emph{et al.} \cite{no.84} modifies the network by applying a Gaussian approximation in the logit space. Tian \emph{et al.} \cite{no.130} proposes geometric sensitivity decomposition to modify the standard last linear decision layer (softmax layer with preceding linear transformations). Tomani \emph{et al.} \cite{no.23} requires the selection of specific hidden layers for different backbone networks to learn calibration mappings. Galdran \emph{et al.} \cite{no.33} replaces the last layer of a network with a set of heads supervised with different loss functions. Xing \emph{et al.} \cite{no.63} introduces an additional confidence model to estimate the distance to prototypical class centers. Malmstr{\"o}m \emph{et al.} \cite{no.28} modifies the training process by proposing local linear approaches to estimate the posterior distribution of parameters and use it to generate the probability mass for calibration. Maddox \emph{et al.} \cite{no.106} modifies the training process by adopting stochastic weight averaging for the posterior distribution approximation of network weights to perform Bayesian model averaging. Wang \emph{et al.} \cite{no.54} adjusts the classification head by adding a category for modeling uncertainty and selecting specific hidden layers implementing SGLD (Stochastic Gradient Langevin Dynamics) sampling to generate features for confidence regularization.
}

\section{Summary of Calibration Methods}

\subsection{Summary of Calibration Methods by Modified Training Scheme}
\label{sec:trainingtime-calibration-summary}
{
\mdseries
Modified training schemes aim to enhance calibration during the training of classifiers and can be broadly categorized into four types: (a) augmentation or implicit regularization, e.g., \cite{no.73,no.49,no.97,no.72,no.93,no.111,no.94,no.95,no.118,no.43,no.24,no.104,no.109,no.12,no.18,no.44}, (b) model ensembling, e.g., \cite{no.77,no.139,no.114,no.113,no.106,no.28,no.115,no.20,no.33}, (c) regularization by explicit loss functions, e.g., \cite{no.64,no.83,no.94,no.62,no.58,no.99,no.102,no.100,no.101,no.53,no.84,no.103,no.107,no.13,no.4,no.52,no.66,no.22,no.60,no.48,no.14-0,no.54,no.63,no.11}, and (d) some hybrid methods, e.g., \cite{no.24,no.16,no.19,no.55,no.39,no.42}.

Regarding (a) augmentation or implicit regularization, studies in \cite{no.73,no.49,no.97} found that deep networks trained with mixup \cite{no.110} are better calibrated. Research by \cite{no.72,no.93,no.111,no.94} demonstrates that label smoothing \cite{no.112} serves as another regularization technique, reducing overconfidence. Other augmentation techniques, such as AugMix, CutMix, and AutoLabel, have been demonstrated to yield calibration benefits by Hendrycks \emph{et al.} \cite{no.95}, Yun \emph{et al.} \cite{no.118}, and Qin \emph{et al.} \cite{no.43}, respectively. Kim \emph{et al.} \cite{no.24} leverage various text augmentation techniques for calibrating transformer-based language models. Patel \emph{et al.} \cite{no.104} propose an adversarial data generation technique, i.e., OMADA, which yields calibration gains. Furthermore, Guo \emph{et al.} \cite{no.109} and Tao \emph{et al.} \cite{no.12} verify that weight decay and early stopping can alleviate miscalibration, respectively. Ye \emph{et al.} \cite{no.18} and Chen \emph{et al.} \cite{no.44} propose modified attention modules, i.e., LRSA and SGPA, respectively, for transformers, and such implicit regularizations show effectiveness in suppressing miscalibration. 

For (b) ensemble methods, various strategies, such as training multiple independent models with different weights \cite{no.77,no.139,no.114,no.113}, ensembles based on Bayesian networks (random weights) predictions \cite{no.106,no.28}, MIMO ensemble models utilizing subnetworks constructed through varying nerual connections \cite{no.115}, and ensembles with multiple heads generated through modified network structures \cite{no.20,no.33}, have been shown to enhance calibration.

Concerning (c) regularization by explicit loss functions, numerous methods have been proposed. For instance, replacing cross entropy (CE) loss with focal loss \cite{no.116}, as suggested by Mukhoti \emph{et al.} \cite{no.64}, has been shown to improve calibration, which can be interpreted as introducing a maximum-entropy regularizer. Kumar \emph{et al.} \cite{no.83} proposed MMCE, a kernel embedding-based measure, for constructing the equivalent form of perfect calibration for top-label probabilities, acting as a differentiable regularizer trained alongside standard CE loss. Pereyra \emph{et al.} \cite{no.94} relate label smoothing to maximum-entropy regularizer and propose a confidence penalty regularization added to CE loss. Krishnan and Tickoo \cite{no.62} introduce the AvUC regularization loss to learn models confident in accurate predictions and with higher uncertainty when likely to be inaccurate. Karandikar \emph{et al.} \cite{no.58} further propose a soft version for AvUC term and ECE measure as a differentiable auxiliary surrogate loss. Similarly, Bohdal \emph{et al.} propose a differentiable surrogate loss for ECE \cite{no.99}, utilizing an additional binning membership network to learn soft binning weights, combined with CE loss for model training. To discourage overconfident predictions, Seo \emph{et al.} \cite{no.102} design a VWCI loss function consisting of two cross-entropy loss terms with respect to the target and uniform distribution. Liang \emph{et al.} \cite{no.100} incorporate a DCA regularization term, which can be considered equivalent to ECE with a single bin, into the CE loss. Another similar auxiliary loss term named MDCA, equivalent to classwise ECE with a single bin, was proposed by Hebbalaguppe \emph{et al.} \cite{no.101}. Wang \emph{et al.} \cite{no.53} introduces an inverse focal loss that encourages overconfidence during the main training stage but benefits post-hoc recalibration by preserving sample hardness information. Milios \emph{et al.} \cite{no.84} apply a Gaussian prior approximation in the logit space, turning the original classification into a regression problem and discovering benefits for calibration. Joo \emph{et al.} \cite{no.103} explore the effect of applying $L_p$ norm regularization in function space (e.g., logits space) for calibration. Yun \emph{et al.} \cite{no.107} proposed a regularization term enforcing consistent predictions between different samples of the same label, reducing intra-class variations and mitigating overconfidence.  Tao \emph{et al.} \cite{no.13} propose a dual focal loss (DFL) aiming at reducing the size of the under-confidence region while preserving the advantages of focal loss in mitigating over-confidence. Chen \emph{et al.} \cite{no.4} propose a knowledge-transferring-based calibration method by estimating the importance weights in CE loss for samples of tail classes to implement long-tailed calibration. Popordanoska \emph{et al.} \cite{no.52} directly utilizes high-dimensional kernel smoothing to estimate class distribution given predicted probability, whose difference to predicted probability difines the calibration error and combined with CE to form KDE-XE loss. Moon \emph{et al.} \cite{no.66} propose a regularizer CRL regularization term, aligning accuracy and confidence by enforcing conﬁdence estimates whose ranking among samples are effective to distinguish correct from incorrect predictions. Yoon \cite{no.22} \emph{et al.} propose an ESD regularizer inspired by Kolmogorov-Smirnov error \cite{no.60}, assessing the alignment of top-label accuracy and confidence through the expected squared difference. Fernando \emph{et al.} \cite{no.48} propose DWB loss, rebalancing class weights based on class frequency and predicted probability of the ground truth class, interpretable as an entropy maximization term to penalize over-confident predictions. Benz and Rodriguez \cite{no.14-0} propose to align model confidence with the decision maker's confidence for calibration. Wang \emph{et al.} \cite{no.54} propose a $K+1$-way softmax formulation and an energy-based objective function, allowing the modeling of marginal data distribution using the extra dimension, which is beneficial for model calibration. Xing \emph{et al.} \cite{no.63} propose DBLE model, which bases its confidence estimation on distances in the representation space learned by an additional confidence model equipped with episodic training and prototypical loss for classification. Błasiok \emph{et al.} \cite{no.11} introduce structural risk minimization theory, advocating the integration of proper scoring rules with regularizations that assess the complexity of the network within specific constrained families, to discover well-calibrated networks.

Finally, some studies combine methods from the above three strategies to achieve calibration, as seen in \cite{no.24,no.16,no.19,no.55,no.39,no.42}.

}

\subsection{Summary of Post-hoc Calibration Methods}
\label{sec:posthoc-calibration-summary}
{
\mdseries
Post-hoc methods can be categorized into parametric and non-parametric methods: (a) the former use parametric models to design learning objectives, e.g., \cite{no.85,no.151,no.65,no.71,no.127,no.150,no.68}, while non-parametric methods can be further classified into five categories by learning objectives, including (b) objectives inspired by binning-based evaluation metrics, e.g., \cite{no.86,no.87,no.88,no.89,no.21,no.125,no.65,no.17,no.58,no.126,no.133}, (c) constructing equivalent forms for perfect calibration, e.g., \cite{no.60,no.70,no.145,no.52}, (d) other methods using proper scoring rules, e.g., \cite{no.20,no.23,no.40,no.57,no.67,no.78,no.80,no.124,no.135,no.146,no.109}, (e) other empirical methods, e.g., \cite{no.29,no.34,no.46,no.128,no.147,no.138,no.145}, and (f) hybrid strategies, e.g., \cite{no.70,no.75}. The specific methods are summarized as follows: 

For (a), Beta calibration \cite{no.85,no.151} was initially proposed for binary classification, assuming the confidence distribution under a target class follows a Beta distribution. It maximizes log-likelihood to learn model parameters and has been extended to multiclass scenarios using the one-vs-rest strategy \cite{no.65}. Dirichlet calibration \cite{no.71} generalizes Beta calibration to multi-classification with Dirichlet distributions. Bayesian isotonic calibration (Bayes-Iso) \cite{no.127} by Allikivi \emph{et al.} employs a prior over piecewise linear monotonic calibration maps and utilizes Monte Carlo sampling to approximate the posterior mean calibration map through likelihood maximization. Maronas \emph{et al.} \cite{no.150} proposed decoupled Bayesian neural networks (BNN) implemented with an MLP-based BNN trained with validation data to transform original probabilities, demonstrating improved calibration. Wenger \emph{et al.} \cite{no.68} proposed a recalibration approach based on a latent Gaussian process applied to classwise logits, inferred using variational inference.

For (b), various studies, including \cite{no.86,no.87,no.88,no.89,no.21,no.125}, minimizes binning-wise calibration error to align the accuracy and predicted probability of the positive class in binary classification, employing different construction of binning mappings. Specifically, Zadrozny \emph{et al.} \cite{no.86} uses histogram binning; Zadrozny \emph{et al.} \cite{no.88} constrains the monotonicity of binning mappings; Naeini \emph{et al.} \cite{no.87} applies Bayesian averaging to ensemble multiple calibration maps by histogram binning; Naeini \emph{et al.} \cite{no.89} extends \cite{no.88}  by relaxing the monotonic mapping with near-isotonic regression; Sun \emph{et al.} \cite{no.21} optimizes the number of bins for equal-mass binning by the MSE decomposition framework, balancing calibration and sharpness. ROC Binning \cite{no.125} enhances histogram binning by considering variations in the prevalence of the positive class within the dataset. These binary calibration methods can theoretically be extended to multi-class tasks using a one-vs-rest strategy, as shown in \cite{no.88} and \cite{no.65} for \cite{no.86}, \cite{no.87}, \cite{no.89}. Other methods \cite{no.17,no.58,no.126} minimize binning-wise calibration error to align top-label accuracy and confidence, differing in their binning construction techniques. Specifically, Clart{\'e} \emph{et al.} \cite{no.17} minimizes an expectation consistency term, similar to ECE estimate with a single bin; Frenkel \emph{et al.} \cite{no.126} performs a grid search for class-wise temperatures to minimize ECE; Karandikar \emph{et al.} \cite{no.58} uses soft binning ECE as the optimization objective by employing a soft membership function; Patel \emph{et al.} \cite{no.1} introduces I-Max binning, preserving label information under binning quantization to mitigate accuracy losses, and addresses sample-inefficiency by employing a shared class-wise binning strategy. Gupta \emph{et al.} \cite{no.133} proposes a multiclass-to-binary reduction framework, aligning top-label or classwise predicted probability with event frenquency through the utilization of top-label ECE and classwise ECE. It also suggests using high-dimensional multiclass binning, such as Sierpinski binning, Grid-style binning, and projection-based histogram binning, for canonical calibration in tasks with a small number of classes ($\leq 5$).

For (c), Gupta \emph{et al.} \cite{no.60} propose a calibration metric for classwise or top-r calibration based on the Kolmogorov-Smirnov test, comparing two empirical cumulative distributions. The recalibration mapping is obtained by optimizing the metric using a spline-fitting approach. Additionally, Zhang \emph{et al.} \cite{no.70} propose a kernel smoothing-based density estimator to obtain an equivalent form for perfect calibration. It was initially used only as an evaluation metric and later adopted as an optimization objective by subsequent research \cite{no.145}. While theoretically applicable to estimating canonical calibration error, this term is significantly impacted by the curse of dimensionality and therefore is practically employed exclusively for top-label calibration, as also noted in \cite{no.52}.

Methods in (d) include \cite{no.20,no.23,no.40,no.57,no.67,no.78,no.80,no.124,no.135,no.146}, and three scaling techniques introduced in \cite{no.109}. These methods differ primarily in their empirical design of the recalibration mapping. Specifically, Guo \emph{et al.} \cite{no.109} introduce temperature scaling (TS), vector scaling (VS), and matrix scaling (MS), applying different linear transformations to logits. Rahimi \emph{et al.} \cite{no.67} designs nonlinear accuracy-preserving mappings for logits, while \cite{no.146} directly applys MLP transformations. Ji \emph{et al.} \cite{no.80} suggests a piecewise scaling approach, utilizing different temperatures for confidence bins. Tomani \emph{et al.} \cite{no.135} employs a parameterized TS using an MLP to learn sample-specific temperature for logit scaling. Laves \emph{et al.} \cite{no.78} combines TS with dropout, where the linear layer for calculating logits in the pre-trained network was replaced by a linear layer with dropout. Some methods are designed for particular scenarios, such as those in \cite{no.20,no.23} and \cite{no.40} for out-of-distribution and distribution shift scenarios, \cite{no.57} for segmentation settings with mapping using spatial information, \cite{no.124} for binary classification with mapping ensuring flexibility, monotonicity, continuousness, and computational tractability (It can be proven that the monotonicity in binary classification does not extend to multi-class scenarios, i.e., the transform monotonicity of class-wise probability conflicts with the unit measure property for non-identity mappings). It is worth noting that transformations such as VS, MS and the MLP transformation in \cite{no.146} for logits, and the transformation in \cite{no.78} for features are not accuracy-preserving. 

For (e), Jung \emph{et al.} \cite{no.29} found the correlation between calibration error and the variance of class-wise training losses, proposing a adaptive weight for class-wise training loss to control the variance for calibration. Conde \emph{et al.} \cite{no.34} propose test-time augmentation techniques for analyzing predictions on various augmented images to enhance calibration in image classification. Valk and Kull \cite{no.46} introduce the Locally Equal Calibration Errors (LECE) assumption, proposing an intuitive approach by minimizing miscalibration defined by the average difference between predictions and labels in the neighborhood of the high-dimensional probabilistic predictions. Joy \emph{et al.} \cite{no.128} leverage a VAE to encapsulate latent features of the classifier, using the low-dimensional representation of VAE to learn sample-specific temperature for logit scaling. Additionally, there are learning objectives designed for specialized scenarios, such as calibration with noisy labels \cite{no.147}, unlabelled calibration datasets \cite{no.138}, and ensemble calibration with multiple models \cite{no.145}.

Finally, hybrid methods that combine different strategies include combination of ensemble temperature scaling and isotonic regression in \cite{no.70}, and the scaling-binning calibrator in \cite{no.75}.
}

\section{Discussion for Proposition \ref{pro1}}
\label{sec:apdx-propsition-hardlabeling}
\newtheorem*{proposition*}{Proposition}
\begin{proposition*}
For a feature representation $F_i$ in a network and its corresponding observation label $Y_i$, the true conditional probability $p(Y|F_i)$ cannot be guaranteed to be the one-hot vector of the target label $Y_i$.
\end{proposition*}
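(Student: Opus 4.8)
The plan is to argue within the standard generative picture of supervised classification and show that ``$p(Y\mid F=F_i)$ is the one-hot vector of $Y_i$'' would force a degeneracy that the model does not impose in general. First I would fix notation: let $X$ be the raw input with law $P_X$, let $Y$ be the label with true conditional distribution $p(Y\mid X)$, and let $F=\phi(X)$ be the (deterministic) feature/logit representation emitted by the trained network, so that $(F,Y)$ is exactly the pair used in post-hoc recalibration. Using disintegration / the tower property, the relevant conditional law can be written as $p(Y=l\mid F)=\mathbb{E}[\,p(Y=l\mid X)\mid F\,]$; that is, $p(\cdot\mid F=f)$ is the $P_X$-average of the Bayes-optimal conditional $p(\cdot\mid X=x)$ over the fiber $\phi^{-1}(f)$. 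This representation is the workhorse of the whole argument.

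From it the statement follows by isolating two independent obstructions to degeneracy. I would observe that $p(Y=Y_i\mid F=F_i)=1$ would require $p(Y=Y_i\mid X=x)=1$ for $P_X$-almost every $x$ in the fiber $\phi^{-1}(F_i)$, which can fail in either of two ways: \textbf{(a)} \emph{aleatoric label noise} --- the Bayes conditional $p(Y\mid X=X_i)$ is itself non-one-hot (ambiguous or human-mislabelable samples, annotator disagreement), so even an injective $\phi$ does not save the claim; or \textbf{(b)} \emph{lossy representation} --- $\phi$ is non-injective and merges inputs carrying distinct (or distinctly distributed) labels into the same feature/logit value, which is the typical situation since the logit dimension is far smaller than $\dim(X)$ and logits discard most of the signal. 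To make this non-vacuous I would exhibit a minimal counterexample --- two inputs $x_1\neq x_2$ with $\phi(x_1)=\phi(x_2)=F_i$ but deterministic true labels $1$ and $2$, so that $p(Y=1\mid F=F_i)=P_X(x_1)/(P_X(x_1)+P_X(x_2))\in(0,1)$ --- and then argue that this is generic rather than pathological: exact fiber-wise label purity is a measure-zero coincidence in the joint space of data distributions and feature maps, and it is additionally destroyed by any genuine annotation uncertainty.

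I would then close the loop with the intended use in Section~\ref{sec:psranalysis}: since $p(Y\mid F=F_i)$ is in general non-degenerate, the observed $(F_i,Y_i)$ is only one draw from it, so replacing the integral in Eq.~\eqref{eq1} by the single summand $S(g(F_i),\mathds{1}_{\{Y=Y_i\}})$ --- as every one-hot-label PSR objective implicitly does --- is a single-sample Monte Carlo estimate, which is precisely the approximation-error phenomenon the subsequent analysis builds on.

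The main obstacle I anticipate is not mathematical depth but precision of the probabilistic setup: conditioning on the measure-zero event $\{F=f\}$ must be handled through regular conditional probabilities / disintegration rather than naive Bayes' rule, and the counterexample must be phrased so that it visibly reflects realistic networks (finite-precision or low-dimensional logits, non-injective architectures, noisy labels) rather than an artificial construction --- otherwise the ``cannot be guaranteed'' claim risks reading as a technicality instead of the pervasive situation it actually is.
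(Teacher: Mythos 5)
Your proposal is correct, and it is considerably more rigorous than what the paper actually supplies: the paper does not give a formal proof but only a short remark (Appendix~D). The paper's argument specializes $F$ to the network's output probabilities and reasons by contradiction with empirical reality --- if $p(Y\mid F_i)$ were the one-hot vector of $Y_i$, then the classifier would be essentially error-free whenever it emits a near-one-hot output, an ``ideal recognition performance'' that is unattainable under any inductive bias. Your route is different in structure: you work from the disintegration identity $p(Y=l\mid F)=\mathbb{E}[\,p(Y=l\mid X)\mid F\,]$, isolate the two independent mechanisms that break degeneracy (aleatoric label noise in the Bayes conditional, and non-injectivity of the feature map merging fibers with distinct labels), and exhibit an explicit two-point counterexample. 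What your approach buys is precision and generality --- it identifies \emph{why} the conditional is non-degenerate and does not lean on an informal appeal to imperfect accuracy --- at the cost of needing the regular-conditional-probability machinery you correctly flag. What the paper's approach buys is brevity and a direct link to the empirically observed miscalibration/overconfidence phenomenon that motivates the surrounding PSR analysis. Your closing paragraph connecting the non-degeneracy to the single-sample Monte Carlo reading of Eq.~\eqref{eq1} matches the paper's intended use of the proposition exactly. No gap.
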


\begin{remarkqed}
This can be attributed to the influence of the network indicative bias. We illustrate this with an example. If we consider the output probabilities as $F_i$, the true conditional probability corresponding to these probabilities being equal to the one-hot vector of $Y_i$ is evidently incorrect. This is because such equivalence implies that network's output probabilities are almost error-free when approaching the one-hot vector, whereas such an ideal recognition performance is basically unattainable with the existence of any inductive bias.
\end{remarkqed}

\section{Explanation of Key Notations in the Manuscript}
\label{sec:apdx-notation}
{
\mdseries
\begin{longtable}{lp{12cm}}
\caption{Notation Table} \label{tab:long} \\
\hline
\endfirsthead

\multicolumn{2}{c}%
{\tablename\ \thetable{} -- continued from previous page} \\
\hline
\endhead

\hline \multicolumn{2}{r}{{continued on next page}} \\
\endfoot

\hline
\endlastfoot

\multicolumn{2}{l}{ \textbf{\underline{Random Variable (r.v.), Variable Space and Variable Mapping:}}} \\
$(F,Y)$ & feature-label pair.  \\
\rowcolor{gray!10} $\Omega_F$, $\Omega_Y$, $\Omega_F \times \Omega_Y$ & the space of $F$,  $Y$ and the product space of $F$ and $Y$, respectively.  \\
$\mathscr H(F)$ & a scalar or vector mapping of $F$, with different forms under different calibration types, detailed in Table \ref{tab1}.   \\
\rowcolor{gray!10} $\mathscr E$ & a scalar or vector valued indicator functions, with different forms under different calibration types, detailed in Table \ref{tab1}. \\
$S$ and $\mathcal S$ & $S$ denotes a scoring rule function, and $\mathcal{S}$ is the expected score of the scoring rule $S$. refer to Section \ref{sec:psranalysis} for a detailed definition. \\
\rowcolor{gray!10} $T_1$, $T_2$, $T_3$ and $\mathscr T$ & abbreviations for the functions $T_1(A, [R_1,R_2], Q_A^{\overline{R_1 R_2}})$, $T_2(A, [R_1,R_2], Q_A^{\overline{R_1 R_2}})$, $T_3(A, [R_1,R_2], Q_A^{\overline{R_1 R_2}})$, and $\mathscr T(A, [R_1,R_2], Q_A^{\overline{R_1 R_2}})$, respectively. \\
\rowcolor{gray!10} ~ & $T_1$, defined in Eq. \eqref{eq:t1_definition}, is the weighted sum of the $p_c(Y_i \notin A) \mathds 1_{\{Y_i \in A\}}$ for each sample $i$ in the set $Q_A^{\overline{R_1 R_2}}$, where samples within the non-boundary subset $Q_A^{\overset{\frown}{R_1 R_2}}$ are assigned a weight of 2, and samples within the boundary subsets $Q_A^{R_1}$ and $Q_A^{R_2}$ are given a weight of 1.\\
\rowcolor{gray!10} ~ & $T_2$, defined in Eq. \eqref{eq:t2_definition}, is the weighted sum of the $p_c(Y_i \in A) \mathds 1_{\{Y_i \notin A\}}$ for the same sample set and weight as $T_1$. \\
\rowcolor{gray!10} ~ & $T_3$, defined in Eq. \eqref{eq:t3_definition}, conveys the sum of weights, equaling twice the number of instances in $Q_A^{\overset{\frown}{R_1 R_2}}$ in addition to the numbers of instance in $Q_A^{R_1}$ and $Q_A^{R_2}$. \\
\rowcolor{gray!10} ~ & $\mathscr T$ indicates the error associated with $A$, $[R_1,R_2]$, and $Q_A^{\overline{R_1 R_2}}$, calculated as $\mathscr T = |T_1-T_2|/T_3$ in Eq. \eqref{eq:thm7regterm}. \\
$\widetilde T^A_1$, $\widetilde T^A_2$, $\widetilde T^A_3$ and $\widetilde {\mathscr T}$ & abbreviations for the functions $\widetilde T^A_1(A, [R_1,R_2], Q_A^{\overline{R_1 R_2}})$, $\widetilde T^A_2(A, [R_1,R_2], Q_A^{\overline{R_1 R_2}})$, $\widetilde T^A_3(A, [R_1,R_2], Q_A^{\overline{R_1 R_2}})$ and $\widetilde{\mathscr T}(A, [R_1,R_2], Q_A^{\overline{R_1 R_2}})$, which are reformulated forms of $T_1$, $T_2$, $T_3$ and $\mathscr T$, respectively. \\
~ & $\widetilde T^A_1$, defined in Eq. \eqref{eq:rewriteT1}, is the sum of the $p_c(Y_i \notin A) \mathds 1_{{Y_i \in A}}$ for each sample $i$ in the set $Q_A^{\overline{R_1 R_2}}$. \\
~ & $\widetilde T^A_2$, defined in Eq. \eqref{eq:rewriteT2}, is the sum of the $p_c(Y_i \in A) \mathds 1_{{Y_i \notin A}}$ for each sample $i$ in the set $Q_A^{\overline{R_1 R_2}}$. \\
~ & $\widetilde T^A_3$, defined in Eq. \eqref{eq:rewriteT3}, represents the number of samples in $Q_A^{\overline{R_1 R_2}}$. \\
~ & $\widetilde{\mathscr T}$ indicates the error associated with $A$, $[R_1,R_2]$, and $Q_A^{\overline{R_1 R_2}}$, calculated as $\widetilde {\mathscr T} = | \widetilde  T^A_1- \widetilde  T^A_2|/ \widetilde T^A_3$. \\
\rowcolor{gray!10} $\mathscr L (\mathscr R)$ and $\mathcal L$ & $\mathscr L (\mathscr R)$ represents the loss value corresponding to interval $\mathscr R$ (i.e., $[R_1,R_2]$), and $\mathcal L$ denotes the overall loss. \\
$w(\mathscr R)$ & weighting function for the interval $\mathscr R=[R_1,R_2]$. \\[6pt]
\multicolumn{2}{l}{\textbf{\underline{Indices:}}} \\
\rowcolor{gray!10} $i$ and $N$ & sample size of $N$ with sample index $i$ within the range $1 \leq i \leq N$. \\
$l$ and $L$ & class number of $L$ for r.v. $Y$ with class index $l$ within range $1 \leq l \leq L$. \\[6pt]
\multicolumn{2}{l}{\textbf{\underline{Probabilities and Measures:}}} \\
\rowcolor{gray!10} $p_\mu$ & ground truth probability, which in some contexts is succinctly referred to as $p$. \\
$p_c$, $^cp_i^l$ and ${^c}p_i^A$ & $p_c$ signifies the predicted calibrated probability, with $p_c(Y_i \in A | F_i)$ illustrating the predicted calibrated classification probability for sample $i$. the notations $^cp_i^l$ and ${^c}p_i^A$ are abbreviations for $p_c(Y_i=l|F_i)$ and $p_c(Y_i \in A|X_i)$, respectively. \\
\rowcolor{gray!10} $g$, $g_\theta$, $g^l$, and $g^A$ & $g$, defined from an optimization viewpoint, signifies the calibration mapping of r.v. $F$, and is sometimes denoted as $g_\theta$ to highlight its inclusion in a parameterized restricted function family with $\theta \in \Theta$. essentially, the prediction made by $g(F)$ equates to the calibrated probability, i.e., $g(F)=p_c(Y|F)$. the terms $g^l$ and $g^A$ refer $p_c(Y=l|F)$ and $p_c(Y \in A|F)$, respectively. \\
$p_F$ & the distribution of feature $F$.\\
\rowcolor{gray!10} $\lambda_{\mathrm{Lebesgue}}$ & Lebesgue measure. \\
$p_{\mathrm{psr}}$ and $p_{M}$ & the probabilistic predictors derived from the traditional proper scoring rule (psr) and the proposed approach, respectively. \\[6pt]
\multicolumn{2}{l}{\textbf{\underline{Sets and Events, Indicator Functions, Set Classes:}}} \\
\rowcolor{gray!10} $A$, $Y \in A$, $\mathds 1_{\{ Y \in A\}}$, $\mathds 1_A(Y)$ & $A$ refers to an event associated with r.v. $Y$, i.e., a subset of the space $\Omega_Y=\{1,2,...,L\}$. The occurrence of event $A$ is denoted by $Y \in A$, typically represented via the indicator functions $\mathds{1}_{\{ Y \in A\}}$ or $\mathds{1}_A(Y)$, which are one for occurrence and zero otherwise. \\
$\mathscr{F}_Y$ and $\mathscr{F}_F$ & $\mathscr{F}_Y$ refers to the $\sigma$-field of $Y$, encompassing all events, i.e., $\{\{1\}, \{2\}, ..., \{1, 2\}, ..., \{L-1, L\}, \{1, 2, 3\}, ..., \{1, 2, ..., L\}\}$.  $\mathscr{F}_F$ denotes the $\sigma$-field of the feature $F$. \\
\rowcolor{gray!10} $\mathscr{B}_{[0,1]}$ & the Borel $\sigma$-algebra on [0,1]. \\
$B_\delta(a)$ & the closed ball centered at $a$ with radius $\delta$, i.e., $[a - \delta, a + \delta]$. \\
\rowcolor{gray!10} $Q_A^{B_\delta(a)}$ & $Q_A^{B_\delta(a)}$ signifies a subset of sample whose predicted probability $p_c(Y_i \in A|F_i)$ falls within a closed interval $B_\delta(a)$, i.e., a subset of $\left\{ i|p_c(Y_i \in A|F_i) \in B_\delta(a),1\leq i \leq N \right\}$.\\
\rowcolor{gray!10} $Q_A^{\overline{R_1 R_2}}$, $Q_A^{R_1}$, $Q_A^{R_2}$, $Q_A^{\overset{\frown}{R_1 R_2}}$ & Similarly, $Q_A^{R_1}$, $Q_A^{R_2}$ and $Q_A^{\overset{\frown}{R_1 R_2}}$ are defined by substituting $B_\delta(a)$ with $[R_1,R_2]$, point set $\{R_1\}$, point set $\{R_2\}$ and open interval $(R_1,R_2)$, respectively, detailed in Eq. \eqref{eq:qr1} - Eq. \eqref{eq:qri_qr2_open}. \\
$\mathscr A$, $\mathbb A$ & $\mathscr A$ and $\mathbb A$ refer to event sets (i.e., set-based classes). Specifically, in the manuscript, $\mathscr A$ is specified as the set containing all atom events of r.v. $Y$, defined as $\mathscr A =\{A|A\subset \Omega_Y,|A|=1\}=\{\{1\},\{2\},...,\{L\}\}$. $\mathbb{A}$ is set the same as $\mathscr{A}$.\\
\rowcolor{gray!10} $\mathscr R$ and $\mathcal R_{\mathbb A}$ & $\mathscr R$ is the abbreviation for interval $[R_1,R_2]$. $\mathcal R_{\mathbb A}$ represents the set of $\mathscr R$ that are of interest for a given event set $\mathbb A$. \\
$Z^+$ & the set of positive integers. \\[6pt]
\multicolumn{2}{l}{\textbf{\underline{Constants:}}} \\
\rowcolor{gray!10} $h$ and $\epsilon$ & constant symbols denoting the error bound. \\
$M$ & a hyperparameter in our algorithm, representing the event count for constraint formation in the loss function, which controls the approximation error. \\
\rowcolor{gray!10} $r$ & a constant multiplier to increase the scale of the loss. \\[6pt]
\multicolumn{2}{l}{\textbf{\underline{Norms and Operations:}}} \\
$\mathbb E$ & the symbol for expectation, occasionally written as $\mathbb E_\mu$ to emphasize that the expectation or integral is with respect to the ground-truth probability measure $p_\mu$. \\
\rowcolor{gray!10} $|*|$ & indicates the cardinality of a set if $*$ stands for a set, or the absolute value when $*$ is a variable or number.\\
$\| * \|_\infty$ & infinity norm. \\
\rowcolor{gray!10} $L^2$ & $L^2$-norm. \\
$\| * \|_{M,\omega}$ &   norm for measuring the distance among vectors or matrices, defined in Eq. \eqref{eq:Mnorm}. \\[6pt]
\multicolumn{2}{l}{\textbf{\underline{Conditions:}}} \\
\rowcolor{gray!10} $h$-calibrated, bounded. & refer to Def. \ref{def:h-calibration}. \\
$\delta$-$\epsilon$ calibrated, bounded. & refer to Def. \ref{def:del-eps-calib}. \\
\rowcolor{gray!10} $h$-$\mathcal A$ calibrated, bounded. & refer to Def. \ref{def:h-a-calib}. \\
$\delta$-$\epsilon$-$\mathcal A$ calibrated, bounded. & refer to Def. \ref{def:del-eps-a-calib} \\[6pt]
\multicolumn{2}{l}{\textbf{\underline{Abbreviations:}}} \\
\rowcolor{gray!10} $a.s.$ & almost surely\\
\end{longtable}
}

\begin{landscape}
\section{Summary of Limitations and Our Solutions}
\label{sec:apdx-limitationsummary}

\begin{table}[H]
  \caption{Summary of limitations and how \textit{h}-calibration addresses each.}
  \small
  \centering
  \begin{tabular}{@{}p{4.5cm} p{8cm} p{11cm}@{}}
    \toprule
    \textbf{Limitation Types} &
    \textbf{Specific Limitations} &
    \textbf{How \textit{h}-calibration Addresses It} \\
    \midrule
    \multirow{5}{*}{Theoretical Gaps} &
    \textbf{Limitation \#1}: deficiency in statistical guarantees in relating learning objectives and common evaluators \newline (typical of category-1 calibrators) &
    We formulate a differentiable learning objective statistically equivalent to a calibration definition with controllable error (Def. \ref{def:h-calibration} of $h$-calibration), thereby ensuring a direct correspondence between training objectives and calibration goals (Secs. \ref{method:sec1}–\ref{method:sec4}). \\
    \cmidrule(l){2-3}
    & \textbf{Limitation \#2}: vulnerability to overfitting to necessary conditions \newline (typical of category-2 calibrators) &
    Our differentiable objective is derived as an \emph{equivalent} condition for well-calibration, rather than a merely necessary one. It aligns predicted probabilities with empirical frequencies without relying on manually defined sparse binning, thus reducing overfitting.
    \\
    \cmidrule(l){2-3}
    & \textbf{Limitation \#3}: inadequacy in presenting forms of real-world imperfect calibration \newline (typical of category-3 calibrators) &
    Our $h$-calibration (Def. \ref{def:h-calibration}) is compatible with real-world imperfect calibration with bounded error, with an equivalent learning objective derived in Secs. \ref{method:sec1}–\ref{method:sec4} that generalizes beyond perfect calibration. \\
    \cmidrule(l){2-3}
    & \textbf{Limitation \#4}: focusing on weak non-canonical calibration scenarios \newline (common in categories 1–3 and entails a trade-off with Limitation \#5) & We show in Thm. \ref{thm:1} that our learned $h$-calibration is canonical, rather than a weak, non-canonical form such as top-label or classwise-level calibration. \\
    \cmidrule(l){2-3}
    & \textbf{Limitation \#5}: computational challenges due to curse of dimensionality \newline (common in categories 1–3 and entails a trade-off with Limitation \#4) &
    The proposed probability alignment for calibration avoids estimating distributions over high-dimensional random variables, circumventing the prominent curse of dimensionality issue. \\
    \midrule
    \multirow{2}{*}{Methodological Dependencies} &
    \textbf{Limitation \#6}: reliance on unproven assumptions \newline (shared across categories)  & The learning objective in our method is derived from asymptotic statistical principles such as the law of large numbers, large deviation theory, etc., without relying on any parametric distributional assumptions. \\
    \cmidrule(l){2-3}
    & \textbf{Limitation \#7}: dependence on adjusting non-universal or non-intuitive parameters and settings \newline (shared across categories) &
    Our method involves only three hyperparameters, each with a clear and intuitive interpretation: $M$ controls the approximation effectiveness of our loss function (approximation error decreasing exponentially as $M$ increases); $\epsilon$ reflects the constraining bound on calibration error; and $r$ determines the loss range. \\
    \midrule
    \multirow{3}{*}{Practical Limitations} &
    \textbf{Limitation \#8}: conflicts with the unit measure property \newline (shared across categories) & Following prior work \cite{no.109}, we adopt a logit-mapping-based approach to learn the calibrator, explicitly enforcing the unit measure property during the learning process. \\
    \cmidrule(l){2-3}    
    & \textbf{Limitation \#9}: failure to preserve the original classification accuracy \newline (shared across categories) &
    Following prior work \cite{no.67,no.70}, we employ monotonic mapping as the logit transformation to preserve classification accuracy. \\
    \cmidrule(l){2-3}
    & \textbf{Limitation \#10}: limited applicability \newline (shared across categories) &
    Our method is applicable to any pretrained classifier following the typical post-hoc setting for calibrating the output probabilities of pretrained models. \\
    \bottomrule
  \end{tabular}
  \label{tab:hcalib-limitations}
\end{table}

\end{landscape}

\section{Proof of the Thm. \ref{thm:1}}
\label{sec:a1}
\newtheorem*{theorem*}{Theorem}
\begin{theorem*}
\label{thm:apdx1}
$h$-calibration is a sufficient condition for generalized canonical calibration with bounded error, i.e., $\| p_\mu(\mathscr E|\mathscr H(F))-\mathscr H(F) \|_\infty \leq h$, where $\mathscr H(F) = \big[ p_c( Y=1 | F),\dots,p_c( Y=L | F) \big]^\top$ and $\mathscr E = \big[ \mathds 1_{\{Y=1\}},\dots,\mathds 1_{\{Y=L\}} \big]^\top$.
\end{theorem*}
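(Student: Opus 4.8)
The plan is to reduce the vector infinity-norm bound to $L$ scalar bounds, one for each class, and then obtain each scalar bound from the $h$-calibration hypothesis via a smoothing (tower-property) argument combined with the conditional Jensen inequality. Throughout, the inequality \eqref{eq:cano-calibraton} is read as an almost-sure statement about the $\sigma(\mathscr H(F))$-measurable random variable on its left-hand side.

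First I would write out the coordinates: $p_\mu(\mathscr E\mid\mathscr H(F))$ is the random vector whose $l$-th entry is $\mathbb E_\mu[\mathds 1_{\{Y=l\}}\mid\mathscr H(F)]=p_\mu(Y=l\mid\mathscr H(F))$, while $\mathscr H(F)$ has $l$-th entry $p_c(Y=l\mid F)$. Hence \eqref{eq:cano-calibraton} is equivalent to the family of assertions
\begin{equation*}
\bigl|\,p_\mu(Y=l\mid\mathscr H(F))-p_c(Y=l\mid F)\,\bigr|\le h\quad\text{a.s.},\qquad l=1,\dots,L,
\end{equation*}
and it suffices to establish each of these.

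Fixing a class $l$, I would apply Definition~\ref{def:h-calibration} with the atomic event $A=\{l\}\in\mathscr F_Y$, which yields $|p_\mu(Y=l\mid F)-p_c(Y=l\mid F)|\le h$ almost surely. Since $\mathscr H(F)$ is a measurable function of $F$, we have $\sigma(\mathscr H(F))\subseteq\sigma(F)$, so the tower property gives $p_\mu(Y=l\mid\mathscr H(F))=\mathbb E_\mu\bigl[\,p_\mu(Y=l\mid F)\mid\mathscr H(F)\,\bigr]$; moreover $p_c(Y=l\mid F)$ is literally the $l$-th coordinate of $\mathscr H(F)$ and is therefore $\sigma(\mathscr H(F))$-measurable, so $p_c(Y=l\mid F)=\mathbb E_\mu\bigl[\,p_c(Y=l\mid F)\mid\mathscr H(F)\,\bigr]$. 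Subtracting these two identities and applying the conditional Jensen inequality to the convex function $|\cdot|$,
\begin{align*}
\bigl|\,p_\mu(Y=l\mid\mathscr H(F))-p_c(Y=l\mid F)\,\bigr|
&=\Bigl|\,\mathbb E_\mu\bigl[\,p_\mu(Y=l\mid F)-p_c(Y=l\mid F)\mid\mathscr H(F)\,\bigr]\,\Bigr|\\
&\le \mathbb E_\mu\bigl[\,\bigl|\,p_\mu(Y=l\mid F)-p_c(Y=l\mid F)\,\bigr|\ \bigm|\ \mathscr H(F)\,\bigr]\ \le\ h
\end{align*}
almost surely, the final step using the pointwise bound obtained from $h$-calibration. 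Taking the maximum over $l$ gives \eqref{eq:cano-calibraton}; the finite-sample analogue, namely that \eqref{eq:h-calibration-v2} forces each coordinate discrepancy at every sample $i$ to be at most $h$, follows by running the identical argument with $F$ and $\mathscr H(F)$ replaced by $F_i$ and $\mathscr H(F_i)$.

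I do not anticipate a genuine obstacle: the argument is short, and the only care needed is measure-theoretic bookkeeping — verifying $\sigma(\mathscr H(F))\subseteq\sigma(F)$ so that smoothing applies, tracking the ``almost surely'' qualifiers inherent to conditional expectations, and noting at the end that specializing $h=0$ in \eqref{eq:cano-calibraton} recovers exactly the classical definition of canonical calibration (Table~\ref{tab1}), which is what justifies calling the statement a bounded-error generalization.
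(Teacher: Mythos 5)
Your proof is correct and follows essentially the same route as the paper's: both reduce to the per-event bound supplied by $h$-calibration and then pass to the coarser conditioning via the tower property together with conditional Jensen. The only cosmetic difference is that you condition directly on the full vector $\sigma(\mathscr H(F))$ (which matches the theorem statement most directly), whereas the paper first conditions on the scalar $p_c(Y\in A\mid F)$ for a general event $A$ and then specializes to atomic events; the underlying argument is identical.
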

\begin{proof}
According to Def. \ref{def:h-calibration}, Eq. \eqref{eq:apdx1-1} represents $h$-calibration, while Eq. \eqref{eq:apdx1-2} denotes generalized calibration with a bounded error $h$.

\begin{align}
    & \bigg| p_\mu( Y \in A \big| F ) - p_c( Y \in A \big| F ) \bigg| \leq h; ~~\forall A \label{eq:apdx1-1} \\
    \Leftrightarrow & \bigg| \mathbb E_\mu\big[ \mathds 1_{\{Y \in A\}} \big| F\big] - \mathbb E_c\big[ \mathds 1_{\{Y \in A\}} \big| F\big] \bigg| \leq h; ~~\forall A \\
    \Rightarrow & \bigg| E_\mu\Big[ \mathbb E_\mu\big[ \mathds 1_{\{Y \in A\}} \big| F\big] - \mathbb E_c\big[ \mathds 1_{\{Y \in A\}} \big| F\big] \Big| \mathbb E_c\big[ \mathds 1_{\{Y \in A\}} \big| F\big]\Big] \bigg| \leq h; ~~\forall A \\
    \Leftrightarrow & \bigg| \mathbb E_\mu\Big[ \mathbb E_\mu \big[ \mathds 1_{\{Y \in A\}} \big| F\big] \Big| \mathbb E_c\big[ \mathds 1_{\{Y \in A\}} \big| F\big]\Big] - \mathbb E_c\big[ \mathds 1_{\{Y \in A\}} \big| F\big] \bigg| \leq h; ~~\forall A \\
    \Leftrightarrow & \bigg| \mathbb E_\mu\Big[ \mathds 1_{\{Y \in A\}} \Big| \mathbb E_c\big[ \mathds 1_{\{Y \in A\}} \big| F\big]\Big] - \mathbb E_c\big[ \mathds 1_{\{Y \in A\}} \big| F\big] \bigg| \leq h; ~~\forall A \\
    \Leftrightarrow & p_\mu( Y \in A | p_c(Y \in A | F) = s) \in [s-h,s+h]; ~~\forall A  \\
    \Rightarrow & \| p_\mu(\mathscr E|\mathscr H(F))-\mathscr H(F) \|_\infty \leq h \label{eq:apdx1-2}
\end{align}
\end{proof}

\section{Illustration of $h$-calibration and canonical calibration}
\label{sec:apdx-hcalibrationillustrate}
\begin{figure*}[htbp]
\centering
\includegraphics[width=\textwidth]{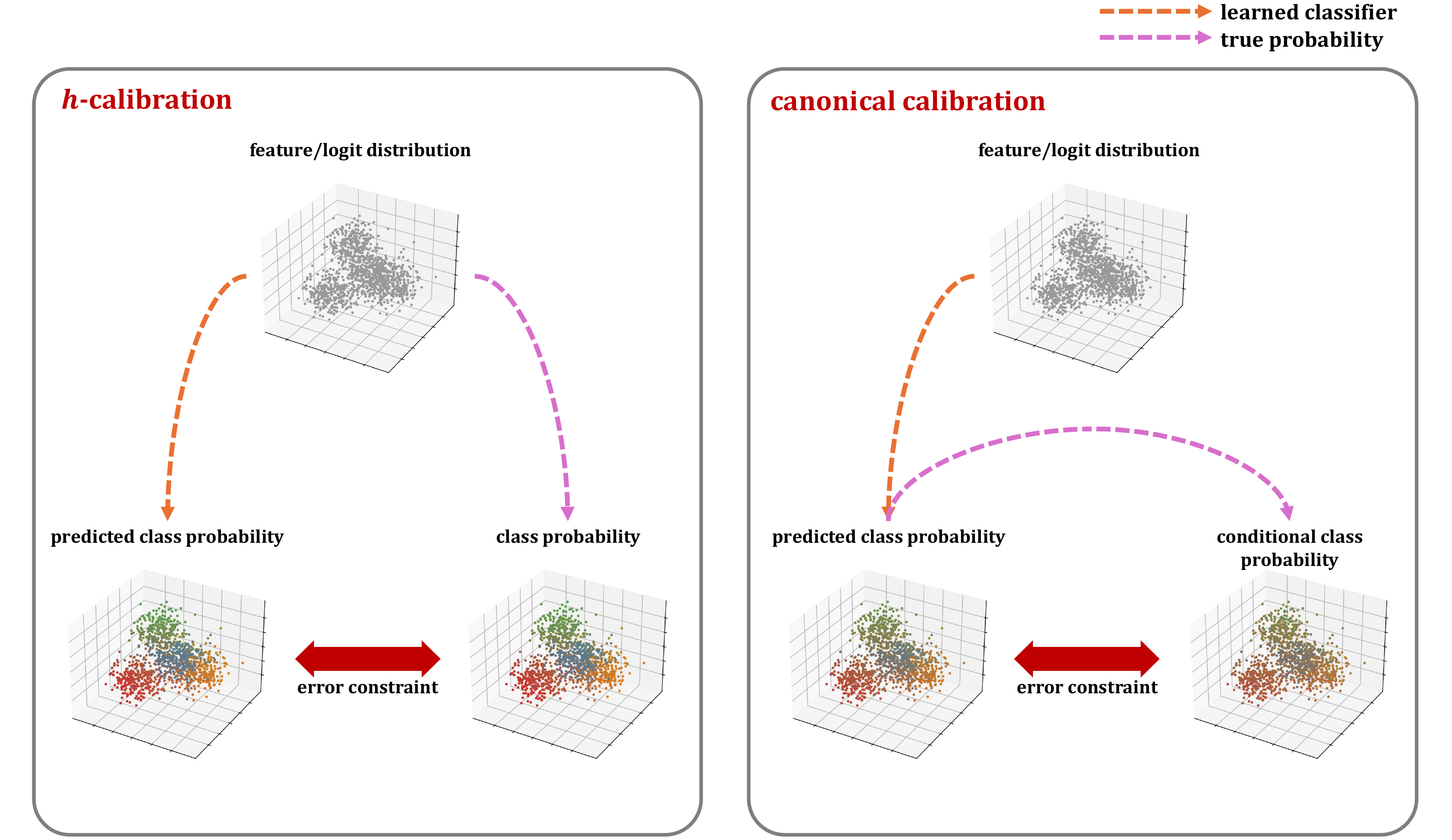}
\caption{Illustration comparing $h$-calibration and canonical calibration. $h$-calibration implies canonical calibration; the converse fails to hold. Example: On CIFAR-10, assigning each sample a uniform predictive distribution $[1/10,…,1/10]$ yields canonical calibration, despite having no discriminative power, and yet fails to satisfy $h$-calibration.}
\label{fig:h-calibration}
\end{figure*}

\section{Proof of the Thm. \ref{thm:2}}
\label{sec:a2}
\begin{theorem*}
\label{thm:apdx2}
For finite samples, a calibrated probability $p_c$ is $h$-calibrated if and only if $p_c$ is $\delta$-$\epsilon$ bounded.
\end{theorem*}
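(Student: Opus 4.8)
The plan is to prove the two directions separately, with the $(\Leftarrow)$ direction (that $\delta$-$\epsilon$ bounded implies $h$-calibrated) being the substantive one. For the $(\Rightarrow)$ direction, suppose $p_c$ is $h$-calibrated, so by Definition \ref{def:h-calibration}, for every $A \in \mathscr F_Y$ and every $1 \le i \le N$ we have $|\mathbb E_\mu[\mathds 1_A(Y_i)|F_i] - p_c(Y_i \in A|F_i)| \le h$. Fix an interval $B_\delta(a) \subseteq [0,1]$, an event $A$, and a subset $Q_A^{B_\delta(a)}$ of indices $i$ with $p_c(Y_i \in A|F_i) \in B_\delta(a)$. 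For each such $i$, write $c_i \triangleq p_c(Y_i \in A|F_i)$ and $m_i \triangleq \mathbb E_\mu[\mathds 1_A(Y_i)|F_i]$. Since $|c_i - a| \le \delta$ and $|m_i - c_i| \le h$, the triangle inequality gives $|m_i - a| \le \delta + h$ for every $i$ in the index set. Averaging over $i \in Q_A^{B_\delta(a)}$ and applying the triangle inequality to the average, $\bigl| a - \tfrac{1}{|Q_A^{B_\delta(a)}|}\sum_{i} m_i \bigr| \le \delta + h$. Setting $\epsilon \triangleq h$ yields exactly Eq. \eqref{eq:del-eps-calib}, so $p_c$ is $\delta$-$\epsilon$ bounded with $\epsilon = h$.

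For the $(\Leftarrow)$ direction, suppose $p_c$ is $\delta$-$\epsilon$ bounded; I want to recover a uniform per-sample bound. The key idea is to make the interval $B_\delta(a)$ and the subset $Q_A^{B_\delta(a)}$ as small as the definition allows: for a fixed sample $i$ and event $A$, take $a \triangleq p_c(Y_i \in A|F_i)$ so that trivially $p_c(Y_i \in A|F_i) \in B_\delta(a)$, and take the singleton $Q_A^{B_\delta(a)} = \{i\}$ (permitted since $|Q_A^{B_\delta(a)}| \ge 1$). Then Eq. \eqref{eq:del-eps-calib} reduces to $\bigl| p_c(Y_i \in A|F_i) - \mathbb E_\mu[\mathds 1_A(Y_i)|F_i] \bigr| \le \epsilon + \delta$. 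Since $\delta$-$\epsilon$ boundedness quantifies over \emph{all} admissible $\delta$ (the interval $B_\delta(a)$ ranging over all sub-intervals of $[0,1]$), this inequality holds for every $\delta > 0$ for which $[a-\delta, a+\delta] \subseteq [0,1]$, and in particular we may let $\delta \to 0$ along such a sequence, obtaining $\bigl| p_c(Y_i \in A|F_i) - \mathbb E_\mu[\mathds 1_A(Y_i)|F_i] \bigr| \le \epsilon$. Since $\mathbb E_\mu[\mathds 1_A(Y_i)|F_i] = p_\mu(Y_i \in A|F_i)$, this is precisely Eq. \eqref{eq:h-calibration-v2} with $h \triangleq \epsilon$, establishing $h$-calibration.

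The main subtlety to handle carefully is the quantifier structure over $\delta$ in Definition \ref{def:del-eps-calib}: the statement is phrased "there exists $\epsilon \in (0,1)$, for any interval $B_\delta(a)$\,\ldots", so $\delta$ is implicitly universally quantified (each choice of interval fixes its own half-width $\delta$), and one must check that the limiting argument $\delta \to 0$ is legitimate and that the value of $\epsilon$ is the same across all these intervals — which it is, since $\epsilon$ is fixed first. A secondary point worth a sentence is that near the endpoints $a \in \{0, 1\}$ one cannot center a symmetric interval, but this is harmless: one either restricts to $\delta \to 0$ with one-sided intervals absorbed into the definition, or observes that calibrated probabilities equal to $0$ or $1$ force the indicator's conditional expectation to match within $\epsilon$ by the same singleton argument. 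I expect the endpoint bookkeeping and the precise reading of the nested quantifiers to be the only genuine obstacles; the inequalities themselves are routine triangle-inequality manipulations.
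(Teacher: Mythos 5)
Your proof is correct and follows essentially the same route as the paper's: a triangle-inequality argument for the forward direction (with $\epsilon=h$), and for the converse, shrinking the regularizing interval and the index set $Q_A^{B_\delta(a)}$ down to a single sample and letting $\delta \to 0$. The only cosmetic difference is that the paper chooses $a$ to be any point of $(0,1)$ with $p_c(Y_i \in A|F_i) \in B_\delta(a)$ rather than the center itself, which costs an extra $\delta$ in the bound but silently absorbs the endpoint bookkeeping you flag for $p_c(Y_i\in A|F_i)\in\{0,1\}$.
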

\begin{proof}~
\begin{enumerate}
\item[($\Rightarrow$):]
Given $\tau$-calibrated probability $p_c$, with the notations of Def. \ref{def:del-eps-calib} ($\delta$-$\epsilon$ boundedness), 
\begin{align}
    & \left| a - \frac{ \sum_{i \in Q_A^{B_\delta(a)}}\mathbb{E}_\mu[\mathds 1_A(Y_i)|F_i]}{|Q_A^{B_\delta(a)}|}  \right| \\
    = & \left| a - \frac{ \sum_{i \in Q_A^{B_\delta(a)}} p_c(Y_i\in A|F_i)}{|Q_A^{B_\delta(a)}|} + \frac{ \sum_{i \in Q_A^{B_\delta(a)}} p_c(Y_i\in A|F_i)}{|Q_A^{B_\delta(a)}|} - \frac{ \sum_{i \in Q_A^{B_\delta(a)}}\mathbb{E}_\mu[\mathds 1_A(Y_i)|F_i]}{|Q_A^{B_\delta(a)}|}  \right| \\
    = &\left| \frac{ \sum_{i \in Q_A^{B_\delta(a)}} \left[ a-p_c(Y_i\in A|F_i)\right]}{|Q_A^{B_\delta(a)}|} + \frac{ \sum_{i \in Q_A^{B_\delta(a)}} p_c(Y_i\in A|F_i)}{|Q_A^{B_\delta(a)}|} - \frac{ \sum_{i \in Q_A^{B_\delta(a)}} p_\mu(Y_i\in A|F_i)}{|Q_A^{B_\delta(a)}|} \right| \\
    \leq &\left| \frac{ \sum_{i \in Q_A^{B_\delta(a)}} \left[ a-p_c(Y_i\in A|F_i)\right]}{|Q_A^{B_\delta(a)}|} \right| + \left| \frac{ \sum_{i \in Q_A^{B_\delta(a)}} \left[ p_c(Y_i\in A|F_i) - p_\mu(Y_i\in A|F_i)  \right]  }{|Q_A^{B_\delta(a)}|} \right| \\
    \leq & \delta + h
\end{align}
Hence, $\delta$-$\epsilon$ boundedness is obtained by setting $h$ to $\epsilon$.
\item[($\Leftarrow$):]
Given $\delta$-$\epsilon$ bounded probability $p_c$, for any $\delta>0$, there exists an $a \in (0,1)$ such that $p_c(Y_i \in A|F_i) \in B_\delta(a)$. Then,
\begin{align}
    |p_{\mu}(Y_i \in A|F_i) - p_c(Y_i \in A|F_i)| 
    & = \left|\mathbb{E}_\mu\left[ \mathds 1_A(Y_i) | F_i \right] - a + a-p_c(Y_i \in A|F_i) \right| \\
    & \leq \left|\mathbb{E}_\mu\left[ \mathds 1_A(Y_i) | F_i \right] - a \right| + \left| a-p_c(Y_i \in A|F_i) \right| \\
    & \leq (\epsilon+\delta) + \delta
\end{align}
Since $\delta$ can be any small positive number, $p_c$ is shown to be $h$-calibrated by setting $h$ to $\epsilon$.
\end{enumerate}
\end{proof}

\section{Proof of the Thm. \ref{thm:3}}
\label{sec:a3}
\begin{theorem*}
\label{thm:apdx3}
A calibrated probability $p_c$ is $h$-$\mathcal A$ calibrated if and only if $p_c$ is $\delta$-$\epsilon$-$\mathcal A$ bounded. Both conditions are necessary but not sufficient for $h$-calibration. 
\end{theorem*}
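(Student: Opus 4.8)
The plan is to obtain the equivalence by transcribing the proof of Thm.~\ref{thm:2} almost verbatim, replacing the single global event $A$ with the per-sample family $\{\mathcal A_i\}$, and to dispatch the ``necessary but not sufficient'' clause by one trivial implication plus an explicit counterexample. Since the first sentence of the theorem already makes the two conditions equivalent, it suffices to establish the necessity/non-sufficiency statement for one of them and push it through the equivalence.

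For the direction ``$h$-$\mathcal A$ calibrated $\Rightarrow$ $\delta$-$\epsilon$-$\mathcal A$ bounded'' I would fix a witnessing family $\{\mathcal A_i\}$, take an arbitrary interval $B_\delta(a)\subseteq[0,1]$ and a nonempty $Q_{\mathcal A}^{B_\delta(a)}\subseteq\{i\,|\,p_c(Y_i\in\mathcal A_i|F_i)\in B_\delta(a)\}$, insert $\pm\frac{1}{|Q_{\mathcal A}^{B_\delta(a)}|}\sum_{i}p_c(Y_i\in\mathcal A_i|F_i)$ inside $\big|a-\frac{1}{|Q_{\mathcal A}^{B_\delta(a)}|}\sum_i\mathbb E_\mu[\mathds 1_{\mathcal A_i}(Y_i)|F_i]\big|$, and apply the triangle inequality. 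The first piece is $\le\delta$ because every $p_c(Y_i\in\mathcal A_i|F_i)\in[a-\delta,a+\delta]$; the second piece is $\le h$ because $\mathbb E_\mu[\mathds 1_{\mathcal A_i}(Y_i)|F_i]=p_\mu(Y_i\in\mathcal A_i|F_i)$ and the $h$-$\mathcal A$ bound is applied index by index; setting $\epsilon:=h$ yields Eq.~\eqref{eq:del-eps-a-calib}. Conversely, given $\delta$-$\epsilon$-$\mathcal A$ boundedness, for any fixed $i$ and any small $\delta>0$ I would pick $a$ with $p_c(Y_i\in\mathcal A_i|F_i)\in B_\delta(a)\subseteq[0,1]$, take the singleton $Q_{\mathcal A}^{B_\delta(a)}=\{i\}$, and get $|p_\mu(Y_i\in\mathcal A_i|F_i)-p_c(Y_i\in\mathcal A_i|F_i)|\le|\mathbb E_\mu[\mathds 1_{\mathcal A_i}(Y_i)|F_i]-a|+|a-p_c(Y_i\in\mathcal A_i|F_i)|\le(\epsilon+\delta)+\delta$; letting $\delta\downarrow 0$ and setting $h:=\epsilon$ gives $h$-$\mathcal A$ calibration. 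These are exactly the two inequality chains in the proof of Thm.~\ref{thm:2}, now run over the index sets $Q_{\mathcal A}^{B_\delta(a)}$ with $\{\mathcal A_i\}$ fixed at the outset, so no new estimate is needed.

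For \emph{necessity}: if $p_c$ is $h$-calibrated then $|p_\mu(Y_i\in A|F_i)-p_c(Y_i\in A|F_i)|\le h$ for \emph{every} $A\in\mathscr F_Y$ and every $i$, so any choice of $\mathcal A_i$ (e.g.\ $\mathcal A_i\equiv\{1\}$) witnesses $h$-$\mathcal A$ calibration; composing with the equivalence already proved gives ``$h$-calibrated $\Rightarrow$ $\delta$-$\epsilon$-$\mathcal A$ bounded''. For \emph{non-sufficiency} I would exhibit one $p_c$ that is $h$-$\mathcal A$ calibrated but not $h$-calibrated for the same $h$. Take $L=3$ (more generally $L\ge 3$) and, for all $i$, let $p_c(Y_i=\cdot|F_i)=(0.5,\,0.3,\,0.2)$ while the true conditional is $p_\mu(Y_i=\cdot|F_i)=(0.5,\,0.2,\,0.3)$. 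With the witnessing choice $\mathcal A_i\equiv\{1\}$ one has $p_\mu(Y_i\in\mathcal A_i|F_i)=p_c(Y_i\in\mathcal A_i|F_i)=0.5$, so $p_c$ is $h$-$\mathcal A$ calibrated (equivalently $\delta$-$\epsilon$-$\mathcal A$ bounded) for every admissible $h\in(0,1)$; yet on the event $A=\{2\}$ the gap in Eq.~\eqref{eq:h-calibration-v2} is $|0.3-0.2|=0.1$, so $p_c$ is not $h$-calibrated for $h<0.1$. Fixing $h=0.05$ gives the required separation, and replacing the distributions by e.g.\ $(0,1,0)$ versus $(0,0,1)$ pushes the canonical gap to $1$, showing the failure is as strong as possible. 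In the paper's terms, this is precisely the top-label/classwise-vs-canonical gap (limitation~\#4): by Thm.~\ref{thm:4} constraining only the (possibly sample-dependent) coordinate $\mathcal A_i$ is at best top-label/classwise calibration, whereas by Thm.~\ref{thm:1} $h$-calibration would force canonical calibration.

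I do not expect a genuine obstacle here: the content is routine once Thm.~\ref{thm:2} is in hand. The only thing to be careful about is bookkeeping — in Def.~\ref{def:del-eps-a-calib} the events $\mathcal A_i$ vary with $i$, so the witnessing family must be fixed \emph{before} quantifying over $B_\delta(a)$ and $Q_{\mathcal A}^{B_\delta(a)}$, and one must check that the triangle-inequality split still goes through (it does, because the per-index bound $|p_\mu(Y_i\in\mathcal A_i|F_i)-p_c(Y_i\in\mathcal A_i|F_i)|\le h$ is summed term by term). The substantive, non-mechanical part is the counterexample, i.e.\ recognizing and exhibiting that a one-coordinate constraint is strictly weaker than a constraint over all of $\mathscr F_Y$.
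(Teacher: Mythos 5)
Your proposal is correct and follows essentially the same route as the paper's proof in Appendix~\ref{sec:a3}: both directions of the equivalence are the identical triangle-inequality splits (with the $\pm$ insertion of the averaged $p_c$ terms for $\Rightarrow$, and the shrinking-$\delta$ argument for $\Leftarrow$), and the non-sufficiency is handled by the same kind of counterexample — a $p_c$ agreeing with $p_\mu$ on the witnessing events $\mathcal A_i$ but deviating on some disjoint event (the paper's abstract $\mathcal A_j,\mathcal B_j$ construction with gap $1$ is exactly your $(0,1,0)$ versus $(0,0,1)$ variant). Your added care about fixing the family $\{\mathcal A_i\}$ before quantifying over $B_\delta(a)$ is a correct reading of Def.~\ref{def:del-eps-a-calib} and matches what the paper implicitly does.
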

\begin{proof} Similar to the proof in Appendix \ref{sec:a2}.
\begin{enumerate}
\item[($\Rightarrow$):]
With the notations of Def. \ref{def:del-eps-a-calib} ($\delta$-$\epsilon$-$\mathcal{A}$ boundedness), for $h$-calibrated probability $p_c$ and any $\mathcal{A}_i$, $1\leq i\leq N$,
\begin{align}
    & \left| a - \frac{ \sum_{i \in Q_{\mathcal{A}}^{B_\delta(a)}}\mathbb{E}_\mu[\mathds 1_{\mathcal{A}_i}(Y_i)|F_i]}{|Q_{\mathcal{A}}^{B_\delta(a)}|}  \right| \\
    & = \left| a - \frac{ \sum_{i \in Q_{\mathcal{A}}^{B_\delta(a)}} p_c(Y_i\in \mathcal{A}_i|F_i)}{|Q_{\mathcal{A}}^{B_\delta(a)}|} + \frac{ \sum_{i \in Q_\mathcal{A}^{B_\delta(a)}} p_c(Y_i\in \mathcal{A}_i|F_i)}{|Q_\mathcal{A}^{B_\delta(a)}|} - \frac{ \sum_{i \in Q_\mathcal{A}^{B_\delta(a)}}\mathbb{E}_\mu[\mathds 1_{\mathcal{A}_i}(Y_i)|F_i]}{|Q_{\mathcal{A}}^{B_\delta(a)}|}  \right| \\
    &= \left| \frac{ \sum_{i \in Q_\mathcal{A}^{B_\delta(a)}} \left[ a-p_c(Y_i\in \mathcal{A}_i|F_i)\right]}{|Q_\mathcal{A}^{B_\delta(a)}|} + \frac{ \sum_{i \in Q_\mathcal{A}^{B_\delta(a)}} p_c(Y_i\in \mathcal{A}_i|F_i)}{|Q_\mathcal{A}^{B_\delta(a)}|} - \frac{ \sum_{i \in Q_\mathcal{A}^{B_\delta(a)}} p_\mu(Y_i\in \mathcal{A}_i|F_i)}{|Q_\mathcal{A}^{B_\delta(a)}|} \right| \\
    &\leq \left| \frac{ \sum_{i \in Q_\mathcal{A}^{B_\delta(a)}} \left[ a-p_c(Y_i\in \mathcal{A}_i|F_i)\right]}{|Q_\mathcal{A}^{B_\delta(a)}|} \right| + \left| \frac{ \sum_{i \in Q_\mathcal{A}^{B_\delta(a)}} \left[ p_c(Y_i\in \mathcal{A}_i|F_i) - p_\mu(Y_i\in \mathcal{A}_i|F_i)  \right]  }{|Q_\mathcal{A}^{B_\delta(a)}|} \right| \\
    &\leq \delta + h.
\end{align}
It shows that $p_c$ is $\delta$-$\epsilon$-$\mathcal{A}$ bounded when we set $h$ to $\epsilon$. 
\item[($\Leftarrow$):]
Given $\delta$-$\epsilon$-$\mathcal{A}$ bounded probability $p_c$, for any $\delta>0$, there exists an $a \in (0,1)$ such that $p_c(Y_i \in A|F_i) \in B_\delta(a)$. Then,
\begin{align}
    |p_{\mu}(Y_i \in \mathcal A_i|F_i) - p_c(Y_i \in \mathcal A_i|F_i)| 
    & = \left|\mathbb{E}_\mu\left[ \mathds 1_{\mathcal A_i}(Y_i) | F_i \right] - a + a-p_c(Y_i \in {\mathcal A_i}|F_i) \right| \\
    & \leq \left|\mathbb{E}_\mu\left[ \mathds 1_{\mathcal A_i}(Y_i) | F_i \right] - a \right| + \left| a-p_c(Y_i \in {\mathcal A_i}|F_i) \right| \\
    & \leq (\epsilon+\delta) + \delta
\end{align}
Since $\delta$ can be any small positive number, $p_c$ is shown to be $h$-$\mathcal A$ calibrated by setting $h$ to $\epsilon$.
\end{enumerate}
It is easy to see that $h$-$\mathcal A$ calibration in Def. \ref{def:h-a-calib} is a necessary condition for $h$-calibration in Def. \ref{def:h-calibration}. Here, we can present a simple counterexample to illustrate that $h$-$\mathcal A$ calibration is not a sufficient condition for $h$-calibration.
Assuming there exists a $j$ and corresponding sets $\mathcal{A}_j$, $\mathcal{B}_j$ with $\mathcal{B}_j \cap \mathcal{A}_j = \varnothing$ such that
\begin{equation}
    p_{\mu}(Y_j \in \mathcal{A}_j|X_j) = p_c(Y_j \in \mathcal{A}_j|X_j)=0,~~
    p_{\mu}(Y_j \in \mathcal{B}_j|X_j) =0, ~~ 
     p_c(Y_j \in \mathcal{B}_j|X_j)=1,
\end{equation}
and for other $i\neq j$, $1 \leq i \leq N $ such that
\begin{equation}
    p_{\mu}(Y_i \in \mathcal{A}_i|F_i) = p_c(Y_i \in \mathcal{A}_i|F_i),
\end{equation}
it is clear that for any $0<h<1$, $p_c$ is $h$-$\mathcal{A}$ calibrated but not $h$-calibrated.
\end{proof}

\section{Proof of the Thm. \ref{thm:4}}
\label{sec:a4}
\begin{theorem*}
\label{thm:apdx4}
For the $\mathcal{A}_i$ specified in Eq. \eqref{eq:top-label-event} and Eq. \eqref{eq:classwise-event}, the corresponding $h$-$\mathcal{A}$ calibrations are sufficient for top-label and classwise calibrations, respectively, with uniform error bound $h$. That is, it holds that $| p_\mu (Y_i \in \mathcal A_i | p_c (Y_i \in \mathcal A_i | F_i)) - p_c (Y_i \in \mathcal A_i | F_i)| \leq h$, for the $\mathcal A_i$ defined in Eq. \eqref{eq:top-label-event} and Eq. \eqref{eq:classwise-event}, respectively.
\end{theorem*}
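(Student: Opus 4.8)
The plan is to mirror, essentially verbatim, the argument used for Thm.~\ref{thm:1} in Appendix~\ref{sec:a1}, but with the arbitrary event $A\in\mathscr F_Y$ replaced by the $F$-dependent choice $\mathcal A_i$. First I would recast the $h$-$\mathcal A$ calibration hypothesis (Def.~\ref{def:h-a-calib}) in expectation form: for the specified $\mathcal A_i$ it says $\big|\,\mathbb E_\mu[\mathds 1_{\mathcal A_i}(Y_i)\mid F_i] - \mathbb E_c[\mathds 1_{\mathcal A_i}(Y_i)\mid F_i]\,\big|\le h$, where the second term is $p_c(Y_i\in\mathcal A_i\mid F_i)$. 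The key observation is that in both cases this quantity is a measurable function of $F_i$ alone: for the classwise choice \eqref{eq:classwise-event} it equals $p_c(Y_i=l\mid F_i)$, and for the top-label choice \eqref{eq:top-label-event} it equals the confidence $\max_l p_c(Y_i=l\mid F_i)$ (fixing an arbitrary tie-breaking rule so that $\operatorname{argmax}$ is well defined). I would abbreviate this scalar as $Z_i\triangleq p_c(Y_i\in\mathcal A_i\mid F_i)$.

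Next I would condition the difference $\mathbb E_\mu[\mathds 1_{\mathcal A_i}(Y_i)\mid F_i]-Z_i$ on $Z_i$ and apply the conditional Jensen inequality for the convex function $|\cdot|$, exactly as in the chain \eqref{eq:apdx1-1}--\eqref{eq:apdx1-2}; this step is where equivalence degrades to implication, and it preserves the bound $h$. Since $Z_i$ is $\sigma(F_i)$-measurable, the tower property collapses the nested conditional expectation, $\mathbb E_\mu\big[\mathbb E_\mu[\mathds 1_{\mathcal A_i}(Y_i)\mid F_i]\mid Z_i\big]=\mathbb E_\mu[\mathds 1_{\mathcal A_i}(Y_i)\mid Z_i]=p_\mu(Y_i\in\mathcal A_i\mid Z_i)$, while $\mathbb E_\mu[Z_i\mid Z_i]=Z_i$. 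Combining the two yields $\big|\,p_\mu(Y_i\in\mathcal A_i\mid p_c(Y_i\in\mathcal A_i\mid F_i)) - p_c(Y_i\in\mathcal A_i\mid F_i)\,\big|\le h$, which is the asserted inequality. Finally I would read this bound off against Table~\ref{tab1}: for \eqref{eq:classwise-event} it is $|p_\mu(Y_i=l\mid p_c(Y_i=l\mid F_i)=v)-v|\le h$, i.e.\ error-bounded classwise calibration for class $l$; for \eqref{eq:top-label-event} it is $|p_\mu(Y_i=\operatorname{argmax}_l p_c(Y_i=l\mid F_i)\mid \max_l p_c(Y_i=l\mid F_i)=v)-v|\le h$, i.e.\ error-bounded top-label calibration.

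The main obstacle I expect is the careful handling of the top-label case, where $\mathcal A_i$ is itself a function of $F_i$ rather than a fixed event: one must verify that $\mathds 1_{\mathcal A_i}(Y_i)=\mathds 1_{\{Y_i\in\mathcal A_i\}}$ is still a well-defined $\sigma(F_i,Y_i)$-measurable random variable (resolved by fixing the tie-breaking rule in the $\operatorname{argmax}$) and that the conditioning variable $Z_i=\max_l p_c(Y_i=l\mid F_i)$ is $\sigma(F_i)$-measurable so the tower property applies — the subtlety being that several samples with the same confidence may have different predicted classes, so the conditioning is on the confidence value, not on the class label. A secondary bookkeeping point is the passage between the per-sample, finite-sample statement of Def.~\ref{def:h-a-calib} and the population-level conditioning in the conclusion, which I would treat exactly as it is treated in the proof of Thm.~\ref{thm:1}.
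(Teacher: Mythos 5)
Your proposal is correct and follows essentially the same route as the paper's own proof in Appendix \ref{sec:a4}: rewrite the $h$-$\mathcal A$ hypothesis in expectation form, condition on the scalar $p_c(Y_i\in\mathcal A_i\mid F_i)$, apply Jensen/tower-property to collapse the nested conditional expectation, and read the resulting bound off against the top-label and classwise definitions. Your extra remarks on measurability and tie-breaking for the $\operatorname{argmax}$ are sensible housekeeping that the paper leaves implicit, but they do not change the argument.
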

\begin{proof}
According to Def. \ref{def:h-a-calib} ($h$-$\mathcal A$ calibration), we have
\begin{align}
& ~ \big| p_\mu (Y_i \in \mathcal A_i \mid F_i) - p_c (Y_i \in \mathcal A_i \mid F_i) \big| \leq h  \\
\Rightarrow & \biggm| \mathbb E_\mu \Big[ 
\mathbb E_\mu \big[  \mathds 1_{\{Y_i \in \mathcal A_i\}} \mid F_i \big] \Big|\mathbb E_c \big[ \mathds 1_{\{Y_i \in \mathcal A_i\}} \mid F_i\big] \Big] - \mathbb E_c \big[ \mathds 1_{\{Y_i \in \mathcal A_i\}} \mid F_i\big] \biggm| \leq h \\
\Leftrightarrow & \biggm| \mathbb E_\mu \Big[ 
  \mathds 1_{\{Y_i \in \mathcal A_i\}} \Big|\mathbb E_c \big[ \mathds 1_{\{Y_i \in \mathcal A_i\}} \mid F_i\big] \Big] - \mathbb E_c \big[ \mathds 1_{\{Y_i \in \mathcal A_i\}} \mid F_i\big] \biggm| \leq h \\
\Leftrightarrow & ~ \big| p_\mu (Y_i \in \mathcal A_i \mid p_c (Y_i \in \mathcal A_i \mid F_i)) -  p_c (Y_i \in \mathcal A_i \mid F_i)\big| \leq h \label{eq:apdx4-1}
\end{align}
When $\mathcal{A}_i$ is defined as $\{l | \underset{l}{\operatorname {~argmax~}} p_c( Y_i = l|F_i)\}$ or as $\{l\}$ for fixed class $l$, as specified in Eq. \eqref{eq:top-label-event} and Eq. \eqref{eq:classwise-event}, inequality Eq. \eqref{eq:apdx4-1} corresponds to top-label and classwise calibration (for class $l$), respectively, with uniform error bound $h$.
\end{proof}

\section{Proof of the Thm. \ref{thm:5}}
\label{sec:a5}
\begin{theorem*}
\label{thm:apdx5}
With the notations in Def. \ref{def:del-eps-calib},
\begin{equation}
\label{eq:apdx5}
       \left| \frac{ \sum_{i \in Q_{A}^{B_\delta(a)}}\mathds 1_{A}(Y_i)}{|Q_{A}^{B_\delta(a)}|}
    - \frac{ \sum_{i \in Q_A^{B_\delta(a)}}\mathbb{E}_\mu[\mathds 1_A(Y_i)|F_i]}{|Q_A^{B_\delta(a)}|}  \right| 
       \xrightarrow[p_\mu~\&~L^2~\&~a.s.]{|Q_A^{B_\delta(a)}| \rightarrow \infty}   0.
\end{equation}
\end{theorem*}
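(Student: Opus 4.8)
\textbf{Proof proposal for Thm.~\ref{thm:5}.}
The plan is to reduce the claim to a law-of-large-numbers statement for a triangular array of conditionally independent, uniformly bounded, mean-zero random variables, obtained by conditioning on the features. Set $Z_i \triangleq \mathds 1_A(Y_i) - \mathbb{E}_\mu[\mathds 1_A(Y_i)\mid F_i]$, so that the quantity inside the absolute value in Eq.~\eqref{eq:apdx5} is exactly $\frac{1}{|Q_A^{B_\delta(a)}|}\sum_{i\in Q_A^{B_\delta(a)}} Z_i$. Each $Z_i$ is bounded, $|Z_i|\le 1$, and by the tower property $\mathbb{E}_\mu[Z_i\mid F_i]=0$; since $\mathds 1_A(Y_i)\mid F_i$ is Bernoulli, $\operatorname{Var}_\mu(Z_i\mid F_i)\le 1/4$.

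The first step is to condition on $\mathscr F_F \triangleq \sigma(F_1,F_2,\dots)$. Because $p_c(Y_i\in A\mid F_i)$ is a deterministic function of $F_i$ alone, the index set $Q_A^{B_\delta(a)}\subseteq\{i : {^cp}_i^A \in B_\delta(a)\}$ is $\mathscr F_F$-measurable, hence fixed given $\mathscr F_F$. Under the standard assumption that the calibration data $(F_i,Y_i)$ are drawn independently, the variables $\{Z_i\}_{i\in Q_A^{B_\delta(a)}}$ are, conditionally on $\mathscr F_F$, independent with conditional mean $0$ and conditional variance at most $1/4$ (indeed $\operatorname{Var}_\mu(Z_i\mid\mathscr F_F)=\operatorname{Var}_\mu(Z_i\mid F_i)$).

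I would then establish the three convergence modes in turn, writing $Q=Q_A^{B_\delta(a)}$. For $L^2$: conditional independence kills the cross terms, so $\mathbb{E}_\mu\big[(\tfrac{1}{|Q|}\sum_{i\in Q}Z_i)^2 \,\big|\, \mathscr F_F\big]=\tfrac{1}{|Q|^2}\sum_{i\in Q}\operatorname{Var}_\mu(Z_i\mid\mathscr F_F)\le \tfrac{1}{4|Q|}$, and taking expectations gives $\mathbb{E}_\mu[(\tfrac{1}{|Q|}\sum_{i\in Q}Z_i)^2]\le \tfrac14\,\mathbb{E}_\mu[1/|Q|]\to 0$ as $|Q|\to\infty$. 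Convergence in $p_\mu$ follows from Chebyshev's inequality, $p_\mu(|\tfrac{1}{|Q|}\sum_{i\in Q}Z_i|>\kappa)\le \tfrac{1}{4\kappa^2}\mathbb{E}_\mu[1/|Q|]\to 0$. For the almost-sure statement, enumerate the elements of $Q$ in increasing order as $i_1<i_2<\cdots$; conditionally on $\mathscr F_F$ the sequence $(Z_{i_k})_k$ is independent and bounded with $\sum_k \operatorname{Var}_\mu(Z_{i_k}\mid\mathscr F_F)/k^2\le \sum_k 1/(4k^2)<\infty$, so Kolmogorov's strong law for independent summands yields $\tfrac1n\sum_{k=1}^n Z_{i_k}\to 0$ almost surely conditional on $\mathscr F_F$; since this holds for $p_\mu$-a.e.\ realization of the features, integrating out $\mathscr F_F$ gives the unconditional a.s.\ convergence.

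The main obstacle I anticipate is bookkeeping rather than depth: pinning down the sampling model (independent feature--label pairs) so the conditional-independence step is legitimate, and clarifying the interplay between the quantifier ``any subset $Q_A^{B_\delta(a)}$'' and the limit ``$|Q_A^{B_\delta(a)}|\to\infty$'' — namely that $Q$ must be a feature-measurable selection (as in Def.~\ref{def:del-eps-calib}, where it is taken to be the full set of indices whose calibrated probability lands in the interval), since an adversarial $Y$-dependent choice of $Q$ would in general destroy the conclusion, and that $|Q|\to\infty$ presupposes $N\to\infty$ with $B_\delta(a)$ receiving positive mass. Once the conditioning on $\mathscr F_F$ is set up, all three convergences are routine applications of Chebyshev's inequality and Kolmogorov's SLLN.
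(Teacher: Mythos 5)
Your proposal is correct and follows essentially the same route as the paper's proof: both reduce the claim to the centered, uniformly bounded differences $\mathds 1_A(Y_i)-\mathbb{E}_\mu[\mathds 1_A(Y_i)\mid F_i]$, obtain convergence in $L^2$ and in probability from the second-moment bound plus Chebyshev's inequality, and get almost-sure convergence from Kolmogorov's strong law via the summability of $\sum_i \operatorname{Var}/i^2$. The one refinement you add — explicitly conditioning on $\sigma(F_1,F_2,\dots)$ so that the feature-dependent index set $Q_A^{B_\delta(a)}$ is fixed and the summands are conditionally independent — is a legitimate tightening of a step the paper leaves implicit, not a different argument.
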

\begin{proof}
There exists $C>0$, for any $i \in Q_A^{B_\delta(a)}$ such that
\begin{equation}
    \label{eq:apdx5-1}
    \mathbb{E}_\mu \left[\mathds 1_A(Y_i) - \mathbb{E}_\mu \left[ \mathds 1_A(Y_i) |F_i \right]\right] = 0,
\end{equation}
\begin{equation}
    \label{eq:apdx5-2}
    var(\mathds 1_A(Y_i) - \mathbb{E}_\mu \left[ \mathds 1_A(Y_i) |F_i \right]) \leq C < \infty.
\end{equation}
Given the independence between different sample points, we have
\begin{equation}
    \mathbb{E}_\mu \Big( \frac{1}{|Q_{A}^{B_\delta(a)}|} \sum_{i \in Q_{A}^{B_\delta(a)}}  \left( \mathds 1_{A}(Y_i) - \mathbb{E}_{\mu}\left[\mathds 1_{A}(Y_i)|F_i\right] \right) \Big)^2 = \frac{1}{|Q_{A}^{B_\delta(a)}|^2}  \sum_{i \in Q_{A}^{B_\delta(a)}}  \mathbb{E}_\mu \Big( \mathds 1_{A}(Y_i) - \mathbb{E}_{\mu}\left[\mathds 1_{A}(Y_i)|F_i\right] \Big)^2 \leq \frac{C}{|Q_A^{B_\delta(a)}|}.
\end{equation}
Then Chebyshev's inequality \cite{durrett2019probability} implies for any $\rho > 0$,
\begin{equation}
    p_\mu\Big( \Big| \frac{1}{|Q_{A}^{B_\delta(a)}|} \sum_{i \in Q_{A}^{B_\delta(a)}}  \left( \mathds 1_{A}(Y_i) - \mathbb{E}_{\mu}\left[\mathds 1_{A} Y_i)|F_i\right] \right) \Big| \geq \rho \Big) \leq \frac{C}{|Q_A^{B_\delta(a)}| \rho^2}.
\end{equation}
Hence Eq.\eqref{eq:apdx5} converges to zero in $p_\mu$ and $L^2$ as $|Q_A^{B_\delta(a)}| \rightarrow \infty$. Addtionally, Eq.\eqref{eq:apdx5-1} and Eq.\eqref{eq:apdx5-2} give us
\begin{equation}
    \sum_{i=1}^\infty \frac{\mathbb{E}_\mu \Big(  \mathds 1_{A}(Y_i) - \mathbb{E}_{\mu}\left[\mathds 1_{A}(Y_i)|F_i\right]  \Big)^2}{i^2} \leq \sum_{i=1}^\infty  \frac{C}{i^2}  < \infty.
\end{equation}
Kolmogorov's strong law of large numbers \cite{shao2003statisbook} implies
\begin{equation}
    \frac{ \sum_{i \in Q_{A}^{B_\delta(a)}}\Big(  \mathds 1_{A}(Y_i) - \mathbb{E}_{\mu}\left[\mathds 1_{A}(Y_i)|F_i\right]  \Big)}{|Q_{A}^{B_\delta(a)}|} \xrightarrow[a.s.]{|Q_A^{B_\delta(a)}| \rightarrow \infty} 0.
\end{equation}
\end{proof}

\section{Proof of the Thm. \ref{thm:6}}
\label{sec:a6}
\begin{theorem*}
With the notations in Def. \ref{def:del-eps-calib}, The difference term in Thm. \ref{thm:5} converges exponentially to zero in $p_\mu$ as $|Q_A^{B_\delta(a)}|\rightarrow \infty$, i.e., for any $\kappa>0$,
\begin{equation}
    p_\mu \left(
       \left| \frac{ \sum_{i \in Q_{A}^{B_\delta(a)}}\mathds 1_{A}(Y_i)}{|Q_{A}^{B_\delta(a)}|} - \frac{ \sum_{i \in Q_A^{B_\delta(a)}}\mathbb{E}_\mu[\mathds 1_A(Y_i)|F_i]}{|Q_A^{B_\delta(a)}|}   \right| > \kappa \right) 
\end{equation}
converges to zero exponentially as $|Q_A^{B_\delta(a)}|\rightarrow \infty$.
\end{theorem*}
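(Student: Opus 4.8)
The plan is to recognize the quantity inside the probability as a centered average of bounded, independent random variables and then invoke a Hoeffding-type exponential tail bound. First I would fix $A \in \mathscr F_Y$, the ball $B_\delta(a)$, and the index set $Q_A^{B_\delta(a)}$, and write $n \triangleq |Q_A^{B_\delta(a)}|$. Define, for each $i \in Q_A^{B_\delta(a)}$, the random variable $Z_i \triangleq \mathds 1_A(Y_i) - \mathbb E_\mu[\mathds 1_A(Y_i)\mid F_i]$. As already established in the proof of Thm. \ref{thm:5} (Eqs. \eqref{eq:apdx5-1}–\eqref{eq:apdx5-2}), these satisfy $\mathbb E_\mu[Z_i] = 0$, and — this is the key additional structural fact — each $Z_i$ is \emph{bounded}: since $\mathds 1_A(Y_i) \in \{0,1\}$ and its conditional expectation lies in $[0,1]$, we have $Z_i \in [-1,1]$, hence $|Z_i| \le 1$ almost surely. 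The samples being independent across $i$ (the same hypothesis used in Thm. \ref{thm:5}), the collection $\{Z_i\}_{i \in Q_A^{B_\delta(a)}}$ consists of independent, mean-zero, uniformly bounded random variables.

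Next I would apply Hoeffding's inequality to $\frac{1}{n}\sum_{i \in Q_A^{B_\delta(a)}} Z_i$. Since each $Z_i$ takes values in an interval of length $2$, Hoeffding gives, for any $\kappa > 0$,
\begin{equation}
p_\mu\!\left( \left| \frac{1}{n} \sum_{i \in Q_A^{B_\delta(a)}} Z_i \right| > \kappa \right) \le 2 \exp\!\left( - \frac{n \kappa^2}{2} \right).
\end{equation}
The left-hand side is exactly the probability appearing in the statement of Thm. \ref{thm:6}, because $\frac{1}{n}\sum Z_i = \frac{\sum_{i \in Q_A^{B_\delta(a)}} \mathds 1_A(Y_i)}{|Q_A^{B_\delta(a)}|} - \frac{\sum_{i \in Q_A^{B_\delta(a)}} \mathbb E_\mu[\mathds 1_A(Y_i)\mid F_i]}{|Q_A^{B_\delta(a)}|}$. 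Since $2\exp(-n\kappa^2/2) \to 0$ as $n = |Q_A^{B_\delta(a)}| \to \infty$, and does so at an exponential rate in $n$, the claim follows. (If one prefers to avoid citing Hoeffding directly, the same bound can be obtained from a Chernoff argument: bound the moment generating function of each bounded centered $Z_i$ via $\mathbb E[e^{t Z_i}] \le e^{t^2/2}$ for the sub-Gaussian proxy variance $1$, multiply over the independent $Z_i$, and optimize over $t$ — this is essentially the proof of Hoeffding's lemma, and it keeps the argument self-contained given the probability tools — \cite{durrett2019probability}, \cite{dembo2009ldt} — already used elsewhere in the paper.)

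I do not anticipate a serious obstacle here; the only point needing care is making explicit the almost-sure boundedness $|Z_i| \le 1$, which upgrades the $L^2$/Chebyshev argument of Thm. \ref{thm:5} (which only gave a polynomial $1/n$ rate) to an exponential one — this is precisely where the finite-sample, indicator-function structure is used. A secondary cosmetic point is that the conditional expectations $\mathbb E_\mu[\mathds 1_A(Y_i)\mid F_i]$ are themselves random (functions of $F_i$); but since Hoeffding's bound only requires each $Z_i$ to be bounded in a fixed-length interval \emph{almost surely} and to be independent and centered, this causes no difficulty. One should also note that the result is stated conditionally on the membership event defining $Q_A^{B_\delta(a)}$; since conditioning on $\{p_c(Y_i \in A\mid F_i) \in B_\delta(a)\}$ is conditioning on a $\sigma(F_i)$-measurable event, the relations $\mathbb E_\mu[Z_i \mid F_i] = 0$ and $|Z_i|\le 1$ are preserved, and independence across the selected indices is inherited from independence of the samples, so the Hoeffding bound goes through verbatim in the conditional setting as well.
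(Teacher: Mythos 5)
Your proof is correct, but it takes a genuinely different route from the paper's. The paper proves Thm.~\ref{thm:6} via large deviation theory: it forms the cumulant generating function $\varphi(\lambda)=\ln\mathbb E_\mu[\exp(\lambda Z_i)]$ for $Z_i=\mathds 1_A(Y_i)-\mathbb E_\mu[\mathds 1_A(Y_i)\mid F_i]$, invokes Cram\'er's theorem, and then reproduces the Chernoff-bound argument to get an explicit upper bound $\exp(-\gamma(\kappa)n)+\exp(-\gamma(-\kappa)n)$, where the rate function $\gamma$ is the Fenchel--Legendre transform of $\varphi$; positivity of $\gamma(\pm\kappa)$ has to be argued separately from the derivative of $\kappa\lambda-\varphi(\lambda)$ at $\lambda=0$ and convexity of $\varphi$. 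You instead exploit the almost-sure boundedness $|Z_i|\le 1$ and apply Hoeffding directly, obtaining the clean, fully explicit, distribution-free bound $2\exp(-n\kappa^2/2)$ with no need to verify positivity of a rate function. Your route is shorter and, in one respect, cleaner: the paper's invocation of Cram\'er's theorem is phrased for i.i.d.\ variables, whereas the $Z_i$ here are independent but generally \emph{not} identically distributed (their conditional means depend on $F_i$); the Chernoff upper-bound half of the paper's argument survives on independence alone, but your Hoeffding argument sidesteps the issue entirely. What the paper's approach buys in exchange is a potentially sharper, distribution-dependent exponent $\gamma(\kappa)$ (which can exceed $\kappa^2/2$ when the $Z_i$ have small variance) and a template that generalizes beyond bounded variables. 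Your closing remarks on the randomness of the conditional expectations and on conditioning on the $\sigma(F_i)$-measurable membership event are correct and handle the only points where the argument could have gone wrong.
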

\begin{proof}
    For independent and identically distributed (i.i.d.) random variables, $\mathds 1_A(Y_i) - \mathbb{E}_\mu [\mathds 1_A(Y_i)|F_i]$, by Eq.\eqref{eq:apdx5-1} and Eq.\eqref{eq:apdx5-2}, the following comulant generating function is finite, i.e.,
    \begin{equation}
        \varphi (\lambda) = \ln \mathbb E_\mu \left[ \exp\left(\lambda \left(\mathds 1_A(Y_i) - \mathbb{E}_\mu [\mathds 1_A(Y_i)|F_i] \right) \right) \right] < \infty.
    \end{equation}
    Thus the Cramer's theorem for large deviation \cite{dembo2009ldt} implies, for any $\kappa>0$
    \begin{equation}
        \lim_{|Q_A^{B_\delta(a)}| \rightarrow \infty} \frac{1}{|Q_A^{B_\delta(a)}|} \ln p_\mu\left( \frac{ \sum_{i \in _{A}^{B_\delta(a)}}\mathds 1_{A}(Y_i)}{|Q_{A}^{B_\delta(a)}|} - \frac{ \sum_{i \in Q_A^{B_\delta(a)}}\mathbb{E}_\mu[\mathds 1_A(Y_i)|F_i]}{|Q_A^{B_\delta(a)}|}    \geq \kappa \right) = - \gamma(\kappa),
        \label{eq:ldt_cramer}
    \end{equation}
    where the rate function $\gamma(\kappa)$ is the Fenchel-Legendre transform of $\varphi(\lambda)$, i.e.,
    \begin{equation}
    \label{eq:apdx6-1-1}
        \begin{aligned}
        \gamma(\kappa) &=  \varphi^*(\lambda)=  \sup_{\lambda \in R}\{\kappa \lambda - \varphi(\lambda)\} \\
        &= \sup_{\lambda \in R} \left\{ \kappa \lambda - \ln \mathbb E_\mu \left[  \exp  \left( \lambda \left(\mathds 1_A(Y_i)-\mathbb E_\mu[\mathds 1_A(Y_i)|F_i] \right) \right) \right]  \right\}.
        \end{aligned}
    \end{equation}
    The function
    \begin{equation}
        \kappa \lambda  - \ln \mathbb E_\mu \left[ \exp\left( \lambda \left( \mathds 1_A(Y_i) - \mathbb E_\mu[\mathds 1_A(Y_i)|F_i] \right) \right) \right]
    \end{equation}
    equals to zero and has positive derivative with respect to $\lambda$ when $\lambda=0$ (It is because the derivative of RHS is $\mathbb E_\mu \left[\mathds 1_A(Y_i) - \mathbb E_\mu[\mathds 1_A(Y_i)|F_i]\right] =0$ when $\lambda=0$), so $\gamma(\kappa)>0$ by the definition in Eq. \eqref{eq:apdx6-1-1}.
    In fact, Eq. \eqref{eq:ldt_cramer} also specifies the upper bound for the probability involved. Following large deviation theory, we explain below.
    Here we abbreviate $\sum\nolimits_{i \in Q_A^{B_\delta(a)}} \left( \mathds 1_A(Y_i)-\mathbb E_\mu\left[ \mathds 1_A(Y_i) |F_i \right] \right) $ as $S_{|Q_A^{B_\delta(a)}|}$ for readability reasons. Then we construct convex function $\exp(\lambda x)$ for any $\lambda >0$ and by using Chebyshev's inequality \cite{durrett2019probability}, we obtain 
    \begin{equation}
        \begin{aligned}
        p_\mu\left( S_{|Q_A^{B_\delta(a)}|} \geq \kappa |Q_A^{B_\delta(a)}| \right) 
        & \leq \frac{\mathbb E_\mu \exp(\lambda S_{|Q_A^{B_\delta(a)}|} )}{\exp(\lambda \kappa |Q_A^{B_\delta(a)}|)} = \frac{\prod_{i \in Q_A^{B_\delta(a)}} \mathbb E_\mu \exp(\lambda ( \mathds 1_A(Y_i)-\mathbb E_\mu [\mathds 1_A(Y_i)|F_i]))  }{\exp(\lambda \kappa |Q_A^{B_\delta(a)}|)} \\
        & = \exp \left( \left( \ln \mathbb E_\mu \left[ \exp \left( \lambda \left(\mathds 1_A(Y_i)- \mathbb E_\mu[\mathds 1_A(Y_i)|F_i]\right) \right) \right] - \lambda \kappa \right) {|Q_A^{B_\delta(a)}|} \right).
        \end{aligned}
    \end{equation}
    Since $\lambda$ is arbitrary, we derive
    \begin{gather}
        p_\mu\left( S_{|Q_A^{B_\delta(a)}|} \geq \kappa |Q_A^{B_\delta(a)}| \right) \leq \exp \left( -{|Q_A^{B_\delta(a)}|}  \cdot \sup_{\lambda >0} \left( \lambda \kappa - \ln \mathbb E_\mu \left[ \exp \left( \lambda \left(\mathds 1_A(Y_i)- \mathbb E_\mu[\mathds 1_A(Y_i)|F_i]\right) \right) \right] \right) \right) \\
        =  \exp \left( -{|Q_A^{B_\delta(a)}|}  \cdot \gamma(\kappa) \right).
    \end{gather}
    The above equality holds (turning $\lambda>0$ into $\lambda \in R$) because $\kappa \lambda  - \ln \mathbb E_\mu \left[ \exp\left( \lambda \left( \mathds 1_A(Y_i) - \mathbb E_\mu[\mathds 1_A(Y_i)n|F_i]\right) \right) \right]$ equals to zero when $\lambda=0$ and increases monotonically with respect to $\lambda$ when $\lambda \leq 0$. Specifically, monotonicity is ensured because $\kappa \lambda  - \ln \mathbb E_\mu \left[ \exp\left( \lambda \left( \mathds 1_A(Y_i) - \mathbb E_\mu[\mathds 1_A(Y_i)|F_i]\right) \right) \right]$ has positive derivative when $\lambda=0$ and $\ln \mathbb E_\mu \left[ \exp\left( \lambda \left( \mathds 1_A(Y_i) - \mathbb E_\mu[\mathds 1_A(Y_i)|F_i]\right) \right) \right]$ is convex with respect to $\lambda$ (The reader can check the convexity by using Hölder's inequality). Now we can see
    \begin{equation}
        p_\mu\left( \frac{ \sum_{i \in Q_{A}^{B_\delta(a)}}\mathds 1_{A}(Y_i)}{|Q_{A}^{B_\delta(a)}|} - \frac{ \sum_{i \in _A^{B_\delta(a)}}\mathbb{E}_\mu[\mathds 1_A(Y_i)|F_i]}{|Q_A^{B_\delta(a)}|}    \geq \kappa \right) \leq \exp\left( - \gamma(\kappa)|Q_A^{B_\delta(a)}| \right).
    \end{equation}
    Repeating the above procedures for i.i.d. random variables $\mathbb{E}_\mu [\mathds 1_A(Y_i)|F_i] - \mathds 1_A(Y_i) ]$ gives similar results
    \begin{equation}
        \lim_{|Q_A^{B_\delta(a)}| \rightarrow \infty} \frac{1}{|Q_A^{B_\delta(a)}|} \ln p_\mu\left( \frac{ \sum_{i \in Q_{A}^{B_\delta(a)}}\mathds 1_{A}(Y_i)}{|Q_{A}^{B_\delta(a)}|} - \frac{ \sum_{i \in Q_A^{B_\delta(a)}}\mathbb{E}_\mu[\mathds 1_A(Y_i)|F_i]}{|Q_A^{B_\delta(a)}|}    \leq -\kappa \right)= - \gamma(-\kappa) < 0,
    \end{equation}
    \begin{equation}
        p_\mu\left( \frac{ \sum_{i \in Q_{A}^{B_\delta(a)}}\mathds 1_{A}(Y_i)}{|Q_{A}^{B_\delta(a)}|} - \frac{ \sum_{i \in Q_A^{B_\delta(a)}}\mathbb{E}_\mu[\mathds 1_A(Y_i)|F_i]}{|Q_A^{B_\delta(a)}|}    \leq -\kappa \right) \leq \exp\left( - \gamma(-\kappa)|Q_A^{B_\delta(a)}| \right).
    \end{equation}
    Finally, we obtain
    \begin{equation}
        p_\mu\left( \left| \frac{ \sum_{i \in Q_{A}^{B_\delta(a)}}\mathds 1_{A}(Y_i)}{|Q_{A}^{B_\delta(a)}|} - \frac{ \sum_{i \in Q_A^{B_\delta(a)}}\mathbb{E}_\mu[\mathds 1_A(Y_i)|F_i]}{|Q_A^{B_\delta(a)}|} \right|  \geq \kappa \right) \leq \exp\left( - \gamma(-\kappa)|Q_A^{B_\delta(a)}| \right) + \exp\left( - \gamma(\kappa)|Q_A^{B_\delta(a)}| \right)
    \end{equation}
    with $\gamma(\kappa)$, $\gamma(-\kappa)>0$.
\end{proof}

\section{Proof of the Thm. \ref{thm:7} and the Probabilistic Interpretation}
\subsection{Proof of the Thm. \ref{thm:7}}
\label{sec:a7}
\begin{theorem*}
    With the notations in Def. \ref{def:del-eps-calib}, a necessary and sufficient condition for
    \begin{equation}
    \label{eq:apdx7-1}
        \left| a -  \frac{ \sum_{i \in Q_A^{B_\delta(a)}} \mathds 1_A(Y_i)}{|Q_A^{B_\delta(a)}|} \right| \leq \epsilon + \delta
    \end{equation}
    is that the following inequality holds:
    \begin{equation}
    \label{eq:apdx7-2}
    T(Q,R,A) \triangleq     
    \left|  \frac{    T_1   -    T_2        }{    T_3   } \right| \leq \epsilon
    \end{equation}
    for any $1 \leq R_1 < R_2 \leq 1$, $Q_A^{\overline{R_1R_2}}  \subseteq \{i| R_1 \leq {^cp}_i^A \leq R_2\} $ with $|Q_A^{\overline{R_1R_2}}|>0$, $A \in \mathscr F_Y$, where
    \begin{gather}
        T_1 = 2\sum\limits_{Q_A^{\overset{\frown}{R_1R_2}} \cap O_A} (1-{^cp_i^A}) + \sum\limits_{( Q_A^{R_1} \cup Q_A^{R_2} ) \cap O_A} (1-{^cp_i^A}), \\
        T_2 = 2\sum\limits_{ Q_A^{\overset{\frown}{R_1R_2}} \cap O_{A^{\mathbf C}}}{^cp_i^A} + \sum\limits_{(Q_A^{R_1} \cup Q_A^{R_2}) \cap O_{A^{\mathbf C}} } {^cp_i^A} , \\
        T_3 = 2 \sum\limits_{Q_A^{\overset{\frown}{R_1R_2}} } 1 + \sum\limits_{Q_A^{R_1} \cup Q_A^{R_2}} 1,
    \end{gather}    
    and 
    \begin{gather}
        {^cp_i^A  \triangleq p_c(Y_i \in A|F_i),  \label{eq:apdx7-3} }\\
        Q_A^{R_1} \triangleq  \{i| {^cp_i^A=R_1},i \in Q_A^{\overline{R_1R_2}}  \},  \\
        {Q_A^{R_2} \triangleq  \{i| {^cp_i^A=R_2},i \in Q_A^{\overline{R_1R_2}}  \}, }\\
        Q_A^{\overset{\frown}{R_1 R_2}} \triangleq  \{i|  R_1 < {^cp_i^A}< R_2,i \in Q_A^{\overline{R_1R_2}}  \},  \\
        {O_A \triangleq \{i|Y_i \in A\}, ~~O_{A^{\mathbf C}} \triangleq \{i|Y_i \in A^{\mathbf C}\}.      \label{eq:apdx7-4}}
    \end{gather}
\end{theorem*}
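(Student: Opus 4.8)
The plan is to prove the stated equivalence by routing both conditions through a single intermediate condition. Fix an event $A \in \mathscr F_Y$ and write $p_i \triangleq {}^c p_i^A = p_c(Y_i\in A|F_i)$, $y_i \triangleq \mathds 1_A(Y_i)\in\{0,1\}$, and for nonempty $Q\subseteq\{1,\dots,N\}$ put $\bar y_Q \triangleq |Q|^{-1}\sum_{i\in Q}y_i$ and $\bar p_Q \triangleq |Q|^{-1}\sum_{i\in Q}p_i$. Let condition (C) be: for every $A\in\mathscr F_Y$ and every nonempty $Q\subseteq\{1,\dots,N\}$, $|\bar y_Q - \bar p_Q|\le\epsilon$. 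I would establish Eq.\eqref{eq:apdx7-1} $\Leftrightarrow$ (C) $\Leftrightarrow$ Eq.\eqref{eq:apdx7-2}.

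The bookkeeping rests on one algebraic identity: since $\mathds 1_{\{Y_i\notin A\}} = 1-\mathds 1_{\{Y_i\in A\}}$, for any index set $S$ one has $\sum_{S\cap O_A}(1-p_i) - \sum_{S\cap O_{A^{\mathbf C}}}p_i = \sum_{i\in S}(y_i-p_i)$. Applying this to $S = Q_A^{\overset{\frown}{R_1 R_2}}$ with weight $2$ and to $S = Q_A^{R_1}\cup Q_A^{R_2}$ with weight $1$, and recalling that these two sets partition $Q_A^{\overline{R_1 R_2}}$, gives $T_1 - T_2 = \sum_{i\in Q_A^{\overline{R_1 R_2}}}(y_i-p_i) + \sum_{i\in Q_A^{\overset{\frown}{R_1 R_2}}}(y_i-p_i)$ and $T_3 = |Q_A^{\overline{R_1 R_2}}| + |Q_A^{\overset{\frown}{R_1 R_2}}|$, so $\mathscr T = |T_1-T_2|/T_3$ is a convex combination of $|\bar y_Q - \bar p_Q|$ taken over $Q = Q_A^{\overline{R_1 R_2}}$ and over its open-interior subset $Q_A^{\overset{\frown}{R_1 R_2}}$. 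Hence (C) $\Rightarrow$ Eq.\eqref{eq:apdx7-2} is immediate, because $|T_1-T_2|\le |Q_A^{\overline{R_1 R_2}}|\,\epsilon + |Q_A^{\overset{\frown}{R_1 R_2}}|\,\epsilon = T_3\,\epsilon$. For the converse, Eq.\eqref{eq:apdx7-2} $\Rightarrow$ (C), I would take an arbitrary nonempty $Q$ and split it as $Q = Q_0\sqcup Q_1\sqcup Q_{\mathrm{mid}}$ according to $p_i=0$, $p_i=1$, $0<p_i<1$: realize $Q_{\mathrm{mid}}$ as a $Q_A^{\overline{R_1 R_2}}$ for an interval with $0<R_1<\min p_i$ and $\max p_i<R_2<1$ so every member is interior, giving $\mathscr T = |\bar y_{Q_{\mathrm{mid}}}-\bar p_{Q_{\mathrm{mid}}}|\le\epsilon$; realize $Q_0$ as $Q_A^{\overline{R_1 R_2}}$ with $R_1=0$ so its members lie in $Q_A^{R_1}$, giving $\mathscr T = |\bar y_{Q_0}| = |\bar y_{Q_0}-\bar p_{Q_0}|\le\epsilon$; symmetrically for $Q_1$ with $R_2=1$. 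Averaging the three bounds with weights $|Q_0|,|Q_1|,|Q_{\mathrm{mid}}|$ over $|Q|$ yields (C).

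The remaining equivalence Eq.\eqref{eq:apdx7-1} $\Leftrightarrow$ (C) is the cheaper half. For (C) $\Rightarrow$ Eq.\eqref{eq:apdx7-1}: the $p_i$ for $i\in Q_A^{B_\delta(a)}$ lie in $[a-\delta,a+\delta]$, so $|\bar p_Q - a|\le\delta$ and therefore $|a-\bar y_Q| \le |a-\bar p_Q| + |\bar p_Q - \bar y_Q| \le \delta + \epsilon$. For Eq.\eqref{eq:apdx7-1} $\Rightarrow$ (C): refine an arbitrary nonempty $Q$ into the fibers $Q_v \triangleq \{i\in Q : p_i = v\}$ over the distinct values $v$ attained, apply Eq.\eqref{eq:apdx7-1} to each $Q_v$ with the degenerate interval $B_0(v)$ (or, if only $\delta>0$ is admitted, with $B_\delta(\cdot)$ shrunk around $v$, pushing the center to $\delta$ when $v=0$ and to $1-\delta$ when $v=1$, then letting $\delta\downarrow0$) to get $|\bar y_{Q_v} - v| \le \epsilon$, i.e. $|\bar y_{Q_v} - \bar p_{Q_v}|\le\epsilon$, and average over the fibers.

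The main obstacle, and the reason $T_1,T_2,T_3$ carry the weight-$2$/weight-$1$ split rather than the plain reformulated terms $\widetilde T_\ast^A$, is the handling of predictions at the endpoints $p_i\in\{0,1\}$: such indices can never sit in the open interior of any subinterval of $[0,1]$, so they must be admitted as boundary points, and the weight-$1$ convention is precisely what makes $\mathscr T$ collapse to $|\bar y_Q - \bar p_Q|$ on a pure-boundary fiber while still being dominated by $\epsilon$ on mixed subsets. I would therefore check carefully, in both directions involving Eq.\eqref{eq:apdx7-2}, that every index subset is realizable as some $Q_A^{\overline{R_1 R_2}}$ with the prescribed interior/boundary split and that no quantity is divided by zero, which holds since $T_3 = |Q_A^{\overline{R_1 R_2}}| + |Q_A^{\overset{\frown}{R_1 R_2}}| \ge |Q_A^{\overline{R_1 R_2}}| \ge 1$.
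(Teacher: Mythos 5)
Your proof is correct, but it takes a genuinely different route from the paper's. The paper proves the forward direction by an integral transform: it writes Eq.~\eqref{eq:apdx7-1} as $-(\epsilon+\delta)(\alpha+\beta)\le(1-a)\alpha-a\beta\le(\epsilon+\delta)(\alpha+\beta)$, integrates over $a\in[R_1,R_2]$, evaluates the three integrals sample by sample (each index contributes over the window of centers $a$ with $p_i\in B_\delta(a)$), and lets $\delta\to 0$; the weight-$2$/weight-$1$ split then emerges because interior points contribute over a full $2\delta$-length window of centers while boundary points contribute over only half. The backward direction in the paper proceeds by a case analysis on the emptiness of $Q_A^{R_1}$ and $Q_A^{R_2}$ to extract the ``pure'' inequality, and then manipulates the quantities $\rho_A,\phi_A,\phi_{A^{\mathbf C}}$ --- which is exactly your condition (C) specialized to sets of the form $Q_A^{\overline{R_1R_2}}$. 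You instead bypass the integration entirely via the algebraic identity $T_1-T_2=\sum_{Q_A^{\overline{R_1R_2}}}(y_i-p_i)+\sum_{Q_A^{\overset{\frown}{R_1R_2}}}(y_i-p_i)$ with $T_3=|Q_A^{\overline{R_1R_2}}|+|Q_A^{\overset{\frown}{R_1R_2}}|$, so that $\mathscr T$ is a convex combination of mean residuals, and you route both sides through the single condition $|\bar y_Q-\bar p_Q|\le\epsilon$ for all nonempty $Q$, recovered from Eq.~\eqref{eq:apdx7-1} by fibering over the finitely many distinct values of $p_i$. What your approach buys is brevity and transparency: both implications reduce to triangle-inequality bookkeeping, and the content of Eq.~\eqref{eq:apdx7-2} is exposed as ``the mean residual is at most $\epsilon$ on every subset.'' What the paper's approach buys is the \emph{derivation} of the weighted form rather than its verification --- the integration explains why the coefficients $2$ and $1$ appear and underlies the probabilistic interpretation the paper gives afterwards; your reading of the weight-$1$ boundary terms as an endpoint-handling device is a consistent but different rationale. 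Your hedging on the degenerate interval $B_0(v)$ and on endpoint fibers $p_i\in\{0,1\}$ is appropriate and matches how the paper itself treats $\delta$ as a quantified, arbitrarily small parameter.
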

\begin{proof}~
\begin{enumerate}
    \item[($\Rightarrow$):]
    There exists a sample index set $\mathcal{I}$ such that
    \begin{equation}
        \alpha \triangleq \sum_{i \in Q_A^{B_\delta(a)}} \mathds 1_A(Y_i) =\sum_{i \in \mathcal{I}} \mathds 1_{ \big\{ p_c(Y_i \in A|F_i) \in B_\delta(a) \big\}\cap \big\{ Y_i \in A \big\}},
    \end{equation}
    \begin{equation}
        \beta \triangleq \sum_{i \in Q_A^{B_\delta(a)}} \mathds 1_{A^{\mathbf C}}(Y_i) =\sum_{i \in \mathcal{I}} \mathds 1_{ \big\{ p_c(Y_i \in A|F_i) \in B_\delta(a) \big\}\cap \big\{ Y_i \in A^{\mathbf C} \big\}}.
    \end{equation}
    Then, $|Q_A^{B_\delta(a)}| = \alpha + \beta$ and Eq.\eqref{eq:apdx7-1} becomes
    \begin{equation}
        a - (\epsilon + \delta) \leq \frac{\alpha}{\alpha+\beta} \leq a + (\epsilon +\delta),
    \end{equation}
    which can be simplified to
    \begin{equation}
        -(\epsilon + \delta)(\alpha + \beta) \leq (1-a) \alpha - a \beta  \leq (\epsilon +\delta)(\alpha+\beta).
    \end{equation}
    To ensure the differentiability of the objective function with respect to the calibrated probability $p_c$, we integrate the above formula and convert it into a differentiable form by removing the indicator function. Thus, for any $ 0 \leq R_1 < R_2 \leq 1$, 
    \begin{equation}
    \label{eq:apdx7-5}
        -\int_{R_1}^{R_2} \big[(\epsilon + \delta)(\alpha + \beta) \big] da \leq \int_{R_1}^{R_2} (1-a) \alpha da - \int_{R_1}^{R_2} a \beta da \leq \int_{R_1}^{R_2} \big[(\epsilon + \delta)(\alpha + \beta) \big] da,
    \end{equation}
    which can be simplified to
    \begin{equation}
    \label{eq:apdx7-6}
        \left| \frac{\int_{R_1}^{R_2}(1-a)\alpha da - \int_{R_1}^{R_2}a\beta da} {\int_{R_1}^{R_2}(\alpha+\beta)da} \right| \leq \epsilon+ \delta
    \end{equation}
    Calculating the integrals above derives (The reader can check it by selecting $\delta$ such that $2\delta < R_2 - R_1$)
    \begin{equation}
    \begin{aligned}
        \int_{R_1}^{R_2} \beta a da & = \sum_{  ^{R,\delta}\Psi_i^2 \cap \{ Y_i \in A^{\mathbf{C}} \} } 2 \delta p_c(Y_i \in A|F_i) \\
        & + \sum_{^{R,\delta}\Psi_i^1 \cap \{ Y_i \in A^{\mathbf{C}}\}} \left[ \big(p_c(Y_i\in A|F_i)+\delta\big)^2 - R_1^2 \right]/2 + \sum_{^{R,\delta}\Psi_i^3 \cap \{ Y_i \in A^{\mathbf{C}}\}} \left[ R_2^2 - \big(p_c(Y_i\in A|F_i)-\delta\big)^2  \right]/2,
    \end{aligned}
    \end{equation}
    where 
    \begin{gather}
        ^{R,\delta}\Psi^1_i \triangleq \{R_1 \leq p_c(Y_i \in A | F_i) \leq R_1+\delta \}, \\
        ^{R,\delta}\Psi^2_i \triangleq \{R_1 + \delta < p_c(Y_i \in A | F_i) < R_2 - \delta \}, \\
        ^{R,\delta}\Psi^3_i \triangleq \{R_2 - \delta \leq p_c(Y_i \in A | F_i) \leq R_2\}.
    \end{gather}
    Similarly, we can obtain
    \begin{equation}
        \begin{aligned}
            \int_{R_1}^{R_2} (1-a) \alpha da & = \sum_{^{R,\delta}\Psi_i^2 \cap {\{Y_i \in A\}}} 2\delta(1-p_c(Y_i \in A|F_i)) \\
            & + \sum_{^{R,\delta}\Psi_i^1 \cap \{ Y_i \in A \}} \big(2-p_c(Y_i \in A|F_i)-R_1 - \delta\big) \big( p_c(Y_i \in A|F_i) - R_1 +\delta \big)/2 \\
            & + \sum_{^{R,\delta}\Psi_i^3 \cap \{ Y_i \in A \}} \big(2-p_c(Y_i \in A|F_i)-R_2 + \delta\big) \big( R_2 - p_c(Y_i \in A|F_i) +\delta \big)/2,
        \end{aligned}
    \end{equation}
    and
    \begin{equation}
    \int_{R_1}^{R_2}(\alpha +\beta)da = \sum_{^{R,\delta}\Psi_i^1} \left(p_c(Y_i \in A|F_i)-R_1+\delta \right)  + \sum_{^{R,\delta}\Psi_i^2} 2\delta + \sum_{^{R,\delta}\Psi_i^3}\left( R_2 - p_c(Y_i \in A|F_i) +\delta  \right).
    \end{equation}
    Computing the following limits gives
    \begin{align}
            \lim_{\delta \rightarrow0} \frac{1}{\delta}\int_{R_1}^{R_2} \beta a da & = \sum_{\{p_c(Y_i \in A|F_i)=R_1\} \cap \{Y_i \in A^{\mathbf C}\}}p_c(Y_i \in A|F_i) {\label{eq:apdx7-7} }\\
            & +\sum_{\{R_1<p_c(Y_i \in A|F_i)<R_2\} \cap \{Y_i \in A^{\mathbf C}\}}2p_c(Y_i \in A|F_i) + \sum_{\{p_c(Y_i \in A|F_i)=R_2\} \cap \{Y_i \in A^{\mathbf C}\}}p_c(Y_i \in A|F_i), \nonumber \\
            \lim_{\delta \rightarrow0} \frac{1}{\delta}\int_{R_1}^{R_2} (1-\alpha) a da & = \sum_{\{p_c(Y_i \in A|F_i)=R_1\} \cap \{Y_i \in A\}}\big(1-p_c(Y_i \in A|F_i)\big) {\label{eq:apdx7-8} }\\
            & +\sum_{\{R_1<p_c(Y_i \in A|F_i)<R_2\} \cap \{Y_i \in A\}}2\big( 1-p_c(Y_i \in A|F_i) \big) \nonumber \\
            & + \sum_{\{p_c(Y_i \in A|F_i)=R_2\} \cap \{Y_i \in A\}}\big(1-p_c(Y_i \in A|F_i) \big), \nonumber \\
            \lim_{\delta \rightarrow 0}\frac{1}{\delta}\int_{R_1}^{R_2} (\alpha+\beta) da & =\sum_{\{p_c(Y_i \in A|F_i)=R_1\}} 1 +\sum_{\{R_1<p_c(Y_i \in A|F_i)<R_2\} }2 + \sum_{\{p_c(Y_i \in A|F_i)=R_2\}}1 {\label{eq:apdx7-9} }
    \end{align}
    With the notations from Eq.\eqref{eq:apdx7-3} to Eq.\eqref{eq:apdx7-4}, combining Eq.\eqref{eq:apdx7-5}, Eq.\eqref{eq:apdx7-6}, Eq.\eqref{eq:apdx7-7}, Eq.\eqref{eq:apdx7-8} and Eq.\eqref{eq:apdx7-9} obtains
    \begin{equation}
    \resizebox{0.95\hsize}{!}{$
        \begin{aligned}
        T(Q,R,A) & \triangleq \left|  \frac{ \left[ \sum\limits_{  i \in (Q_A^{R_1}\cup Q_A^{R_2}) \cap O_A} (1-{^cp_i^A}) + 2\sum\limits_{i \in Q_A^{\overset{\frown}{R_1R_2}} \cap O_A}(1-{^cp_i^A})\right] -       \left[ \sum\limits_{i \in (Q_A^{R_1}\cup Q_A^{R_2}) \cap O_{A^{\mathbf C}} } {^cp_i^A} + 2\sum\limits_{i \in Q_A^{\overset{\frown}{R_1R_2}} \cap O_{A^{\mathbf C}}} {^cp_i^A}  \right]  }{ \sum\limits_{i \in Q_A^{R_1} } 1 + \sum\limits_{ i \in Q_A^{\overset{\frown}{R_1R_2}} } 2 + \sum\limits_{i \in Q_A^{R_2} } 1 } \right|   \\
        & =  \lim_{\delta \rightarrow0} \left| \frac{\int_{R_1}^{R_2}(1-a)\alpha da - \int_{R_1}^{R_2}a\beta da} {\int_{R_1}^{R_2}(\alpha+\beta)da} \right| \leq \lim_{\delta \rightarrow 0} (\epsilon+ \delta) = \epsilon
        \end{aligned}$
        }
    \end{equation}
    \item[($\Leftarrow$):]
    For any interval $\left[a-\delta, a+\delta\right] \subseteq [0,1]$, let $R_1 = a - \delta$ and $R_2 = a+ \delta$.
    \begin{enumerate}
        \item[(1)] If $Q_A^{R_1} = \varnothing$ and $Q_A^{R_2} = \varnothing$, Eq.\eqref{eq:apdx7-2} becomes
            \begin{equation}
            \label{eq:apdx7-10}
                T(Q,R,A) = \left|  \frac{  2\sum\limits_{i \in Q_A^{\overline{R_1R_2}} \cap O_A} (1-{^cp_i^A}) - 2\sum\limits_{i \in Q_A^{\overline{R_1R_2}} \cap O_{A^{\mathbf C}}} {^cp_i^A} }{ 2\sum\limits_{i \in Q_A^{\overline{R_1R_2}} } 1 } \right|  \leq \epsilon.
            \end{equation}
        \item[(2)] If $ Q_A^{R_1} \neq \varnothing$ and $Q_A^{R_2} = \varnothing$, since $Q_A$ is a finite set, there exists $\varsigma>0$ such that $\{^cp_i^A| {^cp_i^A \in [R_1,R_1+\varsigma]},i \in Q_A^{\overline{R_1R_2}}  \} = \{R_1\}$ and $\{^cp_i^A| {^cp_i^A \in [R_2 - \varsigma ,R_2]},i \in Q_A^{\overline{R_1R_2}}  \} = \varnothing$. Applying Eq.\eqref{eq:apdx7-2} obtains
    \begin{equation}
    \label{eq:apdx7-11}
        \left|  \frac{ 2\sum\limits_{i \in Q_A^{R_1} \cap O_A} (1-{^cp_i^A}) -  2\sum\limits_{i \in Q_A^{R_1} \cap O_{A^{\mathbf C}} } {^cp_i^A} }{2\sum\limits_{i \in Q_A^{R_1} } 1 } \right| \leq \epsilon,
    \end{equation}
    and 
    \begin{equation}
    \label{eq:apdx7-12}
        \left|  \frac{ \left[ 2\sum\limits_{i \in Q_A^{\overset{\frown}{R_1R_2}} \cap O_A} (1-{^cp_i^A}) + 2\sum\limits_{i \in Q_A^{R_2} \cap O_A} (1-{^cp}_i^A)  \right] -  \left[ 2\sum\limits_{i \in Q_A^{\overset{\frown}{R_1R_2}} \cap O_{A^{\mathbf C}}} {^cp_i^A}  + 2\sum\limits_{i \in Q_A^{R_2} \cap O_{A^{\mathbf C}} } {^cp_i^A} \right] }{ 2\sum\limits_{i \in Q_A^{\overset{\frown}{R_1R_2}} } 1 + 2\sum\limits_{i \in Q_A^{R_2}} 1 } \right| \leq \epsilon.
    \end{equation}
    Eq.\eqref{eq:apdx7-10} can also be derived by combining Eq.\eqref{eq:apdx7-11} and Eq.\eqref{eq:apdx7-12}
    \end{enumerate}
    Following similar procedures as above, we can also get Eq.\eqref{eq:apdx7-10} when $Q_A^{R_1} = \varnothing$, $Q_A^{R_2} \neq \varnothing$ or $Q_A^{R_1} \neq \varnothing$, $Q_A^{R_2} \neq \varnothing$. Below we will show Eq.\eqref{eq:apdx7-1} can be derived given Eq.\eqref{eq:apdx7-10}. Firstly, some terms are abbreviated for clarity as follows: 
    \begin{equation}
        \rho_A \triangleq \frac{\sum_{i \in Q_A^{B_\delta(a)}} \mathds 1_A(Y_i) }{|Q_A^{B_\delta(a)}|} = \frac{\left|Q_A^{\overline{R_1R_2}} \cap O_A\right|}{\left|Q_A^{\overline{R_1R_2}}\right|},
    \end{equation}
    \begin{equation}
        \phi_A \triangleq \frac{1}{\left|Q_A^{\overline{R_1R_2}} \cap O_A\right|} \sum\limits_{i \in Q_A^{\overline{R_1R_2}} \cap O_A} {^cp_i^A},
    \end{equation}
    \begin{equation}
        \phi_{A^\mathbf C} \triangleq \frac{1}{\left|Q_A^{\overline{R_1R_2}} \cap O_{A^\mathbf C}\right|} \sum\limits_{i \in Q_A^{\overline{R_1R_2}} \cap O_{A^\mathbf C}} {^cp_i^A}.
    \end{equation}
    It is obvious that
    \begin{equation}
    \label{eq:apdx7-13}
        a-\delta = R_1 \leq \phi_A,\phi_{A^\mathbf C} \leq R_2 = a + \delta.
    \end{equation}
    Eq.\eqref{eq:apdx7-10} can thus be rewritten as
    \begin{equation}
    \label{eq:apdx7-14}
        \left| \rho_A \cdot (1-\phi_A) - (1-\rho_A) \cdot \phi_{A^\mathbf C} \right| \leq \epsilon
    \end{equation}
    By simple transformation, we have
    \begin{equation}
        a- \delta -\epsilon=R_1 - \epsilon \leq \frac{\phi_{A^\mathbf C} -\epsilon }{1 - \phi_A + \phi_{A^\mathbf C}} \leq \rho_A \leq \frac{\phi_{A^\mathbf C}+\epsilon}{1-\phi_A + \phi_{A^\mathbf C}} \leq R_2+\epsilon = a + \delta + \epsilon,
    \end{equation}
    where the middle two inequalities are given by Eq.\eqref{eq:apdx7-14}, and the other two inequalities can be obtained by the range of $\phi_A$ and $\phi_{A^\mathbf C}$ specified in Eq.\eqref{eq:apdx7-13}. Therefore, we derive
    \begin{equation}
        \left| a -  \frac{ \sum_{i \in Q_A^{B_\delta(a)}} \mathds 1_A(Y_i)}{|Q_A^{B_\delta(a)}|} \right|  = \left| a -  \rho_A \right| \leq \epsilon + \delta.
    \end{equation}
\end{enumerate}

\end{proof}

\subsection{The Probabilistic Interpretation}
\label{sec:apdx-probinterp}

Here we present a probabilistic explanation for the aforementioned theorems, ignoring the bounds involved. Thms. \ref{thm:5} and \ref{thm:6} convert the constraint in Def. \ref{def:del-eps-calib} into Eq. \eqref{eq:rm-exp}, where the LHS is expected to approach zero. Equivalently for any $\mathcal I \subseteq Z^+$,
\noindent

\begin{equation}
    a \approx  \frac{ \frac{1}{|\mathcal I|}  \sum_{i \in \mathcal I}\mathds 1_{\{ {^cp}_i^A \in B_\delta(a), Y_i \in A \}} }{\frac{1}{|\mathcal I|} \sum_{i \in \mathcal I} [ \mathds 1_{\{ {^cp}_i^A \in B_\delta(a), Y_i \in A  \}} +  \mathds 1_{\{ {^cp}_i^A \in B_\delta(a), Y_i \notin A  \}} ] } , 
\label{eq:approxingstats}
\end{equation}

where $p_c(Y \in A|F) \mathds 1_{\{Y \in A\}}$ and $p_c(Y \in A|F) \mathds 1_{\{Y \notin A\}}$ can be considered as the r.v. mapping $\left(\Omega_F,\mathscr{F}_F, p_F\right)$ to $([0,1],{\mathscr B_{[0,1]}},\lambda_{\text{Leb}})$. $\mathscr{F}_F$ and $\mathscr B_{[0,1]}$ are the $\sigma$-field of $F$ and  Borel $\sigma$-algebra on $[0,1]$, respectively. $p_F$ and $\lambda_{\text{Leb}}$ represent the feature distribution and Lebesgue measure. Let $\tau_A^A$ and $\tau_{A^{\mathbf C}}^A$ denote the densities of $p_c(Y \in A|F) \mathds 1_{\{Y \in A\}}$ and $p_c(Y \in A|F) \mathds 1_{\{Y \notin A\}}$ in space $([0,1],{\mathscr B_{[0,1]}},\lambda_{\text{Leb}})$, respectively. Then, for Eq. \eqref{eq:approxingstats}, the numerator
$\frac{1}{|\mathcal I|}{\sum_{i \in \mathcal I}\mathds 1_{\{ p_c(Y_i \in A|F_i) \in B_\delta(a), Y_i \in A \}}}$
and the second term in the denominator 
$    \frac{1}{|\mathcal I|}{ \sum_{i \in \mathcal I} \mathds 1_{\{ p_c(Y_i \in A | F_i) \in B_\delta(a), Y_i \notin A  \}}}$
can be seen respectively as empirical estimations of $\tau_A^A(a)$ and $\tau_{A^{\mathbf C}}^A(a)$, accordingly. Therefore, Eq. \eqref{eq:approxingstats} essentially states
\begin{equation}
    a  \approx \tau_A^A(a) / [\tau_A^A(a)+\tau_{A^{\mathbf C}}^A(a) ],
\end{equation}
i.e., $a \cdot \tau_{A^{\mathbf C}}^A(a) \approx (1-a) \cdot \tau_A^A(a)$. Correspondingly, Thm. \ref{thm:7} essentially imposes the following constraint (with error bound estimated)

\begin{equation*}
    T_2 / T_3 \approx \int \mathds 1_{ \{R_1 \leq a \leq R_2 \}} \cdot a \cdot \tau_{A^{\mathbf C}}^A(a) d\lambda_{\mathrm{Leb}}(a)
\end{equation*}
\begin{equation}
    \approx \int \mathds 1_{ \{R_1 \leq a \leq R_2 \}} (1-a) \cdot \tau_A^A(a) d\lambda_{\mathrm{Leb}}(a) \approx T_1 / T_3,
\end{equation}
i.e., the integration of r.v. $p_c(Y \in A|F) \mathds 1_{\{Y \notin A\}}$ on set $\{R_1 \leq p_c(Y \in A|F) \mathds 1_{\{Y \notin A\}} \leq R_2\}$ approximately equals to the integration of $1-p_c(Y \in A|F) \mathds 1_{\{Y \in A\}}$ on set $ \{R_1 \leq  p_c(Y \in A|F) \mathds 1_{\{Y \in A\}} \leq R_2\}$.

\section{Summarizing Existing Evaluation Metrics under a Unified Probabilistic Framework}
\label{sec:apdx-evaluationsummary}
{\mdseries
In this section, we summarize and compare diverse computational evaluation metrics within a unified probabilistic framework regarding measuring the deviation between $p_\mu (\mathscr E|\mathscr H(F))$ and $\mathscr H(F)$  in Table \ref{tab1}.
\subsection{Top-label Calibration Evaluator}
\subsubsection{\textbf{ECE}}
The Expected Calibration Error (ECE) is the most popular metric for calibration evaluation. The fundamental idea of ECE is to approximate 
\begin{equation}
\label{ece_theoretical_definition}
    \sqrt[r]{\mathbb E \Big[ \Big| p_\mu (\mathscr E|\mathscr H(F))-\mathscr H(F) \Big|^r \Big]},
\end{equation}
where $\mathscr H(F) = \max\limits_l p_c( Y=l| F)$ and $\mathscr E = \mathds 1_{\{ Y = \underset{l}{\operatorname {argmax~}} p_c( Y=l| F) \}}$. This approximation necessitates the selection of an appropriate discrete binning strategy to estimate $p_\mu (\mathscr E|\mathscr H(F))$. Different selections of $r$ for the normed space and binning strategy lead to different approximations. Common choices for $r$ include 1 or 2. Prevalent binning methods are set to equal-width bins (ew) within the interval $[0,1]$, or to equal-mass binning, where each bin contains an equal number of samples according to the distribution of $\mathscr H(F)$. More sophisticated binning methods, such as I-Max binning \cite{no.1} and sweep binning \cite{no.50}, aim to preserve label information during the binning process \cite{no.1} and select the maximum number of bins while preserving monotonicity in the binwise accuracy \cite{no.50}, respectively. The most common configuration for ECE in literature involves setting $r=1$ and using equal width binning, with the calculation formula specified as:
\begin{gather}
    \text{ECE}_{r=1}^{\text{ew}} = \sum_{m=1}^M \frac{|B_m|}{N} |A_m-C_m|; \\
    \text{where}~
    A_m =\frac{1}{|B_m|} \sum_{i \in B_m} \mathds 1_{ \{ Y_i = \underset{l}{\operatorname {argmax~}} p_c( Y=l| F_i)\}};~ 
    C_m = \frac{1}{|B_m|} \sum_{i \in B_m} \max_{l} p_c(Y=l|F_i)
\end{gather}
and $B_m$ contains all the samples with their confidence score, $\max_l p_c(Y=l | F_i)$, falls within the interval $\Big[\frac{m}{M},\frac{m+1}{M} \Big)$.

Other variants of ECE include a debiased variant \cite{no.75} and thresholding variants \cite{no.76,no.3}. The debiased variant aims to mitigate the variation in estimation errors as the sample size varies, while thresholding variants calculate error solely for predictions above a certain confidence threshold to wash out the influence of vast tiny predictons. 

In this study, we evaluate calibration using equal width binning $\mathrm{ECE}^{\mathrm{ew}}$ and its higher-order variant $\mathrm{ECE}_{r=2}$. We also compute the equal mass binning variant $\mathrm{ECE}^{\mathrm{em}}$, as well as the debiased equal mass binning variant ($\mathrm{dECE}$). Notably, we do not apply a debiased ECE estimator for equal-width binning, as it does not guarantee sufficient samples per bin, which lead to computational instability or even errors. Additionally, we compute the sweep binning versions, $\mathrm{ECE}_{r=1}^s$ and $\mathrm{ECE}_{r=2}^s$. Since IMax binning was originally designed for training a calibrator rather than for calibration evaluation, we exclude it from our evaluation metrics.

\subsubsection{\textbf{ACE \& MCE}}
In comparison to the calculation of ECE, the Average Calibration Error (ACE) is derived by directly averaging the binning-wise errors, rather than using a weighted average. On the other hand, the Maximum Calibration Error (MCE) quantifies the largest error across all bins. 
\begin{equation}
    \text{ACE}^{\text{ew}} = \sum_{m=1}^M \frac{1}{M} |A_m-C_m|;~~
    \text{MCE}^{\text{ew}} = \max_{1\leq m\leq M} |A_m-C_m|;
\end{equation}
Similarly, that there are variations of ACE and MCE depending on the binning strategy employed. Particularly, when opting for equal mass binning, the ACE is equivalent to the ECE. Thus, ACE by default refers to equal-width binning. 

In this study, ACE is included in evaluation metrics. However, since MCE estimates the error based on a single bin with limited and variable sample sizes, a majority of the predictive information is overlooked, making MCE highly sensitive to noise and binning configuration \cite{no.205,no.206,no.101}. Consequently, MCE is not included in our evaluation metrics.

\subsubsection{\textbf{KS error}}
Gupta \emph{et al.} \cite{no.60} propose the KS error (Kolmogorov-Smirnov calibration error) whose definition can be written as 
\begin{equation}
    \text{KS} = \max_{0\leq\sigma\leq1} \int_0^\sigma \big| p_\mu(\mathscr E|\mathscr H(F)) - \mathscr H(F) \big|p_{\mathscr H(F)}(d\sigma).
\end{equation}
The corresponding discrete approximation is given by 
\begin{gather}
    \text{KS} = \max_{1\leq i \leq N} 
    \big|  H_i - G_i \big|;\\
    \text{where}~H_i = \frac{1}{N} \sum_{j \in D_i} \mathds 1_{ \{Y_j = \underset{l}{\operatorname {argmax}}~ p_c( Y=l| F_j)\}};~
    G_i = \frac{1}{N} \sum_{j \in D_i} \max_{l} p_c(Y=l|F_j)
\end{gather}
and $G_i = \frac{1}{N} \sum_{j \in D_i} \max_{l} p_c(Y=l|F_j)$, with $D_i$ containing all the samples whose confidence score is less than $\max_{l} p_c(Y=l|F_i)$.

\subsubsection{\textbf{KDE-ECE}}
Zhang \emph{et al.} \cite{no.70} introduce the KDE-ECE (Kernel Density Estimation-based Expected Calibration Error estimator), which employs kernel smoothing techniques to estimate the distribution density, thereby estimating the expectation for the $r$-th power of Eq. \eqref{ece_theoretical_definition}. This method serves as an alternative to the traditional binning approach in approximating Eq. \eqref{ece_theoretical_definition} and it is defined as 
\begin{gather}
    \text{KDE-ECE} = \int \| z - \tilde{\pi}(z)  \|_r^r \tilde{p}_{\mathscr H(F)}(dz); \\
    \text{where}~~ \tilde{p}_{\mathscr H(F)}(z) = \frac{1}{Nh}\sum_{i=1}^N K_h(z - \mathscr H(F_i)) = \frac{1}{Nh}\sum_{i=1}^N K_h\big(z - \max_l p_c(Y=l | F_i) \big),\\
    \tilde{\pi}(z) = \frac{\sum_{i=1}^N \mathscr E_i K_h(z-\mathscr H(F_i)) }{\sum_{i=1}^N K_h(z-\mathscr H(F_i))} = 
\frac{\sum\limits_{i=1}^N \mathds 1_{\{ 
Y_i = \underset{l}{\operatorname {argmax}}~ p_c( Y=l| F_i)
\} }  K_h (z - \underset{l}{\max}~ p_c(Y=l | F_i)) }{\sum\limits_{i=1}^N K_h (z - \underset{l}{\max}~ p_c(Y=l | F_i)) },
\end{gather}
and $K_h$ denotes the selected smoothing kernel function $K$ along with its bandwidth hyperparameter $h$. Although the KDE-ECE approximation can, conceptually, be extended for calibration evaluation in a high-dimensional canonical context, its practical application is restricted by the curse of dimensionality, which introduces substantial estimation errors in high-dimensional density estimation and integration. Consequently, the code implementation by \cite{no.70} was limited to top-label calibration evaluation \cite{no.52}.

\subsubsection{\textbf{MMCE}}
Kumar \emph{et al.} \cite{no.83} propose the MMCE (Maximum Mean Calibration Error) metric. Let $\mathcal H$ denote the Reproducible Kernel Hilbert Space (RKHS) induced by a universal kernel $k:[0,1]\times[0,1] \rightarrow R$, and let the corresponding reproducing kernel feature map be denoted by $\phi: [0,1]\rightarrow \mathcal H$. The definition of MMCE can be expressed as $\text{MMCE} = \Big\| \mathbb E \big[ (\mathscr E -\mathscr H(F))\phi (\mathscr H(F))  \big] \Big\|_{\mathcal H}$, where the definitions of $\mathscr E$ and $\mathscr H(F)$ are provided in Table \ref{tab1} of the main text under the top-label calibration setting. That is, $\mathscr H(F) = \max_y p_c( Y=l| F)$ and $\mathds 1_ {\{ Y = \underset{l}{\operatorname {argmax}} ~p_c( Y=l| F) \}}$. Given the properties of Hilbert space, by defining $\mathcal F$ as the unit ball in space $\mathcal H$, the meaning of MMCE can be interpreted as $\sup_{f \in \mathcal F} \mathbb E \Big[ \big( p_\mu(\mathscr E | \mathscr H(F))-\mathscr H(F)\big)  f(\mathscr H(F)) \Big]$. The equivalence proof can be summarized as follows.
\begin{align}
    \text{MMCE} & = \Big\| \mathbb E \big[ (\mathscr E -\mathscr H(F))\phi (\mathscr H(F))  \big] \Big\|_{\mathcal H} = \sup_{f \in \mathcal F} \mathbb E \big[ (\mathscr E -\mathscr H(F)) \langle \phi(\mathscr H(F)),f  \rangle  \big] \\
    & = \sup_{f \in \mathcal F} \mathbb E \big[ (\mathscr E -\mathscr H(F)) f(\mathscr H(F))  \big] 
    = \sup_{f \in \mathcal F} \mathbb E \Big[ \mathbb E \big[ (\mathscr E -\mathscr H(F)) f(\mathscr H(F))  \big| \mathscr H(F)  \big] \Big] \\
    & = \sup_{f \in \mathcal F} \mathbb E \Big[ \mathbb E \big[ (\mathscr E -\mathscr H(F))  \big| \mathscr H(F) \big] f(\mathscr H(F)) \Big] 
    = \sup_{f \in \mathcal F} \mathbb E \Big[ \big ( p_\mu(\mathscr E | \mathscr H(F))-\mathscr H(F)\big)  f(\mathscr H(F)) \Big].
\end{align}
Therefore, this metric can also be considered as an indicator assessing the difference between $\mathscr E$ and $\mathscr H(F)$. In terms of computational approximation, the squared MMCE can be written as
\begin{align}
    \text{MMCE}^2 & = \Big\langle \mathbb E \big[ (\mathscr E -\mathscr H(F))\phi (\mathscr H(F))  \big], \mathbb E \big[ (\mathscr E -\mathscr H(F))\phi (\mathscr H(F))  \big] \Big\rangle_{\mathcal H} \\
    & = \mathbb E \Big[ \Big \langle (\mathscr E -\mathscr H(F))\phi (\mathscr H(F)), (\overline {\mathscr E} -\overline{\mathscr H(F)})\phi (\overline{\mathscr H(F)}) \Big \rangle_{\mathcal H} \Big] \\
    & = \mathbb E \Big[ (\mathscr E -\mathscr H(F)) \Big \langle \phi (\mathscr H(F)), \phi (\overline{\mathscr H(F)}) \Big \rangle_{\mathcal H} (\overline {\mathscr E} -\overline{\mathscr H(F)}) \Big] \\
    & = \mathbb E \Big[ (\mathscr E -\mathscr H(F)) k(\mathscr H(F),\overline{\mathscr H(F)}) (\overline {\mathscr E} -\overline{\mathscr H(F)}) \Big],
\end{align}
where $\overline{\mathscr E}$ and $\overline{\mathscr H (F)}$ symbolize independent copies of $\mathscr E$ and $\mathscr H(F)$, respectively. Therefore, the formula can be approximated as 
\begin{equation}
    \text{MMCE}^2 = \frac{1} {N^2}\sum_{1\leq i,j\leq N}  \big( \mathscr E_i - \mathscr H(F_i)\big) k\big(\mathscr H(F_i), \mathscr H(F_j) \big) \big( \mathscr E_j - \mathscr H(F_j)\big),
\end{equation}
where $\mathscr E_i = \mathds 1_{\{Y_i = \underset{l}{\operatorname {argmax}} ~p_c( Y=l| F_i) \}}$ and $\mathscr H(F_i) = \max_l p_c( Y=l| F_i)$. The evaluation depend on the kernel functions $k$ selected.

\subsection{Classwise Calibration Evaluator}
The evaluation of classwise calibration is primarily performed through the CWECE (Classwise Expected Calibration Error ), whose essential idea is to estimate 
\begin{equation}
 \sqrt[r]{
\sum_{l=1}^L\mathbb E \Big[ \Big| p_\mu (\mathscr E|\mathscr H(F))-\mathscr H(F) \Big|^r \Big]},
\end{equation}
where $\mathscr H(F) = p_c( Y=l| F)$ and $\mathscr E = \mathds 1_{\{Y = l\}}$. As with ECE, employing different configurations yields a different variants of CWECE. Options includes the selecting of discrete binning to approximate $p_\mu (\mathscr E|\mathscr H(F))$ and setting value of $r$ (typically 1 or 2). Different ECE approximation configurations can be directly applied to CWECE, with the most common practice being the selection of equal width binning (dividing $[0,1]$ into $M$ equal intervals) and $r=1$, resulting in
\begin{gather}
\label{eq:cwece}
    \text{CWECE}^{\text{ew}}_{r=1} = \frac{1}{L} \sum_{l=1}^L \sum_{m=1}^M \frac{|B_{l,m}|}{N} |A_{l,m} - C_{l,m}|; \\
    \text{where}~
    A_{l,m} = \frac{1}{|B_{l,m}|} \sum_{i \in B_{l,m}} \mathds 1_{\{ Y_i=l \}} \text{~and~}
    C_{l,m} = \frac{1}{|B_{l,m}|} \sum_{i \in B_{l,m}} p_c(Y=l|F_i),
\end{gather}
and $B_{l,m}$ contains all the samples whose confidence score $p_c(Y=l | F_i) \in \big[\frac{m}{M},\frac{m+1}{M} \big)$.

In the calculation of CWECE, some existing studies omit the denominator $1/L$ from Eq. \eqref{eq:cwece}. We refer to this form as the total CWECE (abbreviated as $\mathrm{CWECE}_s$), whereas the form that retains the denominator is called the average CWECE (abbreviated as $\mathrm{CWECE}_{a}$ or $\mathrm{CWECE}$ by default) in this study. There are other variants of CWECE. Study \cite{no.1} used a thresholded variant ($t\mathrm{CWECE}$), which computes CWECE for instances where the classwise predicted probability exceeds a given threshold, aiming to mitigate the influence of numerous small class probabilities in classwise setting (particularly significant when the number of classes is large). Study \cite{no.1} also introduced a k-means clustering strategy to derive bin edges with balanced bin widths and sample sizes for estimating CWECE, resulting in the k-means binning-based variant $t\mathrm{CWECE}^{k}$. In this study, we incorporate these variants into our evaluation metrics.

\subsection{Canonical calibration evaluator}
\subsubsection{\textbf{SKCE}}
Widmann \emph{et al.} \cite{no.79} extended MMCE \cite{no.83} to the canonical context by introducing SKCE (Squared Kernel Calibration Error). Let $\mathcal H$ denote the Reproducible Kernel Hilbert Space (RKHS) induced by a matrix-valued kernel $k: \Delta^L \times \Delta^L \rightarrow {\mathbb R}^{L \times L}$, and let $\mathcal K_{\zeta}v = k(\cdot,\zeta)v \in \mathcal H$ for any $\zeta \in \Delta^L$ and $v \in \mathbb R^L$. Following the notation in Table \ref{tab1}, for a canonical setting, $\mathscr H(F)$ and $\mathscr E$ correspond to $\mathscr H(F) = \big[ p_c( Y=1 | F),...,p_c( Y=L | F) \big]^\top$ and $\mathscr E = \big[ \mathds 1_{\{Y=1\}},...,\mathds 1_{\{Y=L\}} \big]^\top$, respectively. Defining $\mathcal F$ as the unit ball in space $\mathcal H$, employing an approach akin to MMCE's derivation, it follows that
\begin{align}
    \text{SKCE} & \triangleq \sup_{f \in \mathcal F} \mathbb E \big[ \big\langle
(p_c(\mathscr E | \mathscr H(F)) -\mathscr H(F)), f(\mathscr H(F)) \big\rangle_{\mathbb R^L}  \big] = \sup_{f \in \mathcal F} \mathbb E \big[ \big\langle (\mathscr E -\mathscr H(F)), f(\mathscr H(F)) \big\rangle_{\mathbb R^L}  \big] \\
&  = \sup_{f \in \mathcal F} \mathbb E \big[ \big\langle \mathcal K_{\mathscr H(F)}(\mathscr E - \mathscr H(F)), f \big\rangle_{\mathcal H} \big] = \big\langle 
\mathbb E \big[ \mathcal K_{\mathscr H(F)}(\mathscr E - \mathscr H(F))\big], \mathbb E \big[ \mathcal K_{\mathscr H(F)}(\mathscr E - \mathscr H(F))\big] \big\rangle_{\mathcal H} \\
& = \mathbb E \big[ \big\langle \mathcal K_{\mathscr H(F)}(\mathscr E - \mathscr H(F)), \mathcal K_{\overline{\mathscr H(F)}}(\overline{\mathscr E} - \overline{\mathscr H (F)}) \big\rangle_{\mathcal H} \big] \\
& = \mathbb E \Big[ \Big\langle\mathscr E - \mathscr H(F), k\big(\mathscr H(F),\overline{\mathscr H(F)}\big)\big( \overline{\mathscr E} - \overline{\mathscr H(F)}\big) \Big\rangle_{\mathbb R^L} \Big],
\end{align}
where $\overline{\mathscr E}$ and $\overline{\mathscr H (F)}$ indicates independent copies of $\mathscr E$ and $\mathscr H(F)$, respectively. Hence, an approximation formula can be expressed as
\begin{equation}
    \widehat{\text{SKCE}} = \frac{1}{N^2} \sum_{1\leq i,j \leq N} h_{i,j} = \frac{1} {N^2}\sum_{1\leq i,j\leq N} \big( \mathscr E_i - \mathscr H(F_i)\big)^\top k\big(\mathscr H(F_i), \mathscr H(F_j) \big) \big( \mathscr E_j - \mathscr H(F_j)\big),
\end{equation}
where $\mathscr E_i = \big[ \mathds 1_{\{Y_i=1\}},...,\mathds 1_{\{Y_i=L\}} \big]^\top$ and $\mathscr H(F_i) = \big[ p_c( Y_i=1 | F),...,p_c( Y_i=L | F) \big]^\top$. The evaluation also depend on the choice of kernel function $k$. Additionally, Widmann \emph{et al.} \cite{no.79} also provide two other estimators as approximations of $\widehat{\text{SKCE}}$: an unbiased quadratic estimator and an unbiased linear estimator, expressed as ${\binom{N}{2}}^{-1}\sum_{1\leq i<j\leq N} h_{i,j}$ and $\lfloor N/2 \rfloor^{-1} \sum_{i=1}^{\lfloor N/2 \rfloor} h_{2i-1,2i}$, respectively.

It's noteworthy that the definition of the kernel function, as outlined above, deviates in its form from that presented in the open-source code released by Widmann \emph{et al.} This discrepancy arises because Widmann \emph{et al.} have adapted the kernel function's definition to support both classification and regression tasks \cite{no.191}. In the adapted framework, the kernel function is conceptualized as a mapping, $\tilde{k}: (\mathcal{P} \times \mathcal{Y}) \times (\mathcal{P} \times \mathcal{Y}) \rightarrow \mathbb{R}$. Although the definitions vary in their form, in the context of classification tasks, the kernel $\tilde{k}$ maintains a one-to-one correspondence with the kernel $k$, with the relation $[k(p,p')]_{y,y'} = \tilde{k}((p,y),(p',y'))$. Therefore, despite the differences in descriptions, the kernels are functionally equivalent.

\subsubsection{\textbf{DKDE-CE}}
Popordanoska \emph{et al.} \cite{no.52} introduced the DKDE-CE (Dirichlet Kernel Based Calibration Error Estimator), extending the kernel smoothing technique used in the top-label KDE-ECE \cite{no.70} for canonical calibration. This extension employs the Dirichlet kernel for the kernel smoothing process. Specifically, for the canonical setting, $\mathscr H(F)$ and $\mathscr E$ are defined as $\mathscr H(F) = \big[ p_c( Y=1 | F),...,p_c( Y=L | F) \big]^\top$ and $\mathscr E = \big[ \mathds 1_{\{Y=1\}},...,\mathds 1_{\{Y=L\}} \big]^\top$, respectively, in Table \ref{tab1}. The definition of $\text{DKDE-CE}$ is given as 
\begin{gather}
\label{eq:DKDE-CE}
    \text{DKDE-CE} = \int \| z - \tilde{\pi}(z)  \|_r^r \tilde{p}_{\mathscr H(F)}(dz);\\
    \tilde{\pi}(z) = \frac{\sum_{i=1}^N \mathscr E_i K_{\text Dir}(z;\mathscr H(F_i)) }{\sum_{i=1}^N K_{\text Dir}(z;\mathscr H(F_i))};~~\text{where}~~K_{\text{Dir}}(a;b) = \frac{\Gamma(L+\sum_{l=1}^L b_l/h)}{\prod_{l=1}^L \Gamma(1+b_l/h)}
    \prod_{l=1}^L a_l^{b_l/h}.
\end{gather}
where $h$ is the bandwidth hyperparameter in the Dirichlet kernel. The integral in Eq. \eqref{eq:DKDE-CE} can be further approximated, yielding 
\begin{equation}
    \widehat{\text{DKDE-CE}} = \frac{1}{N} \sum_{j=1}^N \left\| \mathscr H(F_j) - \frac{\sum_{i\neq j}^N \mathscr E_i K_{\text Dir}(\mathscr H(F_j);\mathscr H(F_i)) }{\sum_{i \neq j}^N K_{\text Dir}(\mathscr H(F_j);\mathscr H(F_i))} \right\|_r^r,
\end{equation}

In this study, both SKCE and DKDE-CE are used as evaluation metrics. For SKCE, we use the default unbiased implementation provided in the source code \cite{no.79}, and it is worth noting that the unbiased version can yield negative SKCE values. For DKDE-CE, we also utilize the source code implementation \cite{no.52}, with the default parameters of order $r=2$ and bandwidth $h=1$.

\subsection{Comparison between $h$-calibration and kernel-based methods}

We now further clarify the theoretical and computational connections and differences between our approach and these kernel-based methods \cite{no.70,no.83,no.79,no.52}.

\emph{Theoretically}, both our method and kernel-based approaches define differentiable calibration error objectives via integral transforms. However, our method offers several advantages:
(a) The optimization target of $h$-calibration (Eq. \eqref{eq:h-calibration-v1}) is a sufficient condition (see Thm. \ref{thm:1}) for canonical calibration, which is the focus of most kernel-based methods. However, the reverse does not hold. For example, on the CIFAR-10 dataset, assigning every sample a uniform predictive distribution $[1/10,...,1/10]$ , which clearly lacks discriminativity, satisfies canonical calibration but not $h$-calibration.
(b) Our method constructs asymptotically equivalent formulations for learning with uniformly bounded error (Thms. \ref{thm:2}, \ref{thm:6} and \ref{thm:7}), while kernel-based metrics depend on empirically chosen kernels with unclear interpretation and lack bounded guarantees.

\emph{Computationally}, both our approach and kernel methods reduce average discrepancies between predicted probabilities and frequencies, but differ in alignment structure:
(c) Kernel methods align high-dimensional representations, which are prone to overfitting and affected by the curse of dimensionality, making them unsuitable for canonical calibration. In contrast, our method derives constraints that align the one-dimensional predicted probability and frequency of randomized events, with provable approximation bounds (Thm. \ref{thm:6}), thereby mitigating overfitting and avoiding high-dimensional instability. Indeed, this overfitting risk explains why, among the related works, studies \cite{no.70} and \cite{no.79} introduce kernel-based calibration error only as an evaluation metric, rather than using it for post-hoc recalibration. Studies \cite{no.83} and \cite{no.52} incorporate kernel-based calibration error as a regularization term added to cross-entropy during training, again rather than directly applying it for post-hoc calibration. As such, direct experimental comparisons with these approaches are not applicable.

It is worth noting that one of our compared approaches, Spline \cite{no.60}, which can be shown to be a kernel-based measure \cite{gretton2012kernel}, and is used in a post-hoc recalibration setup. We included it as a representative baseline and observe that our method consistently outperforms it across 15 tasks and 17 evaluation metrics in our experiments, clearly highlighting the advantage of our method.

}

\section{Specification of Weighting Function and Calibration Mapping}
{\mdseries
\subsection{Weighting Function}
\label{sec:apdx-weighitingfunction}
As explained in Eq. \eqref{eq:finalloss} in the main text, we note the need to weight $\mathscr L(\mathscr R)$ with $w(\mathscr R)$ to address the bias that regularizing interval $\mathscr R$ tends to focus on low-probabilities as the number of classes increases in a multi-class setting. Here, we detail how we utilize k-means clustering to mitigate this bias. Specifically, we extract the centroids of the elements $\mathbf{q}$ within each $\mathscr{R}$ and cluster them into $C$ clusters using k-means. We then set $w(\mathscr R) = \frac{1}{CN_{\mathscr R}}$, where $N_{\mathscr R}$ represents the number of samples in the cluster containing the centroid of $\mathscr R$. This weighting strategy ensures that $\mathscr L(\mathscr R)$ is averaged separately for highly densely and sparsely distributed $\mathscr R$, thereby adaptively mitigating the regularizing interval bias caused by class number variation. In practice, the number of clusters $C$ is set to 15.

\subsection{Monotonic Calibration Mapping}
\label{sec:apdx-monotonicmapping}
As discussed in Section \ref{method:sec4}, the ground-truth transformation from uncalibrated to calibrated logits is inherently unknown. Following prior studies \cite{no.70,no.67}, we employ specific types of learnable monotonic functions as calibration mappings to preserve classification accuracy. We employ three mapping families: ensemble linear mapping, piecewise linear mapping, and nonlinear mapping.

\begin{description}[style=unboxed,leftmargin=0cm]
  \item[Ensemble Linear Mapping:]
  Inspired by ensemble temperature scaling \cite{no.70}, which extends temperature scaling by applying triple linear scalings to logits and producing three sets of softmax probabilities for weighted averaging, we extend this idea in our study. In the original work \cite{no.70}, only one temperature parameter was learnable, while the other two were fixed at 1 and infinity. Here, we extend the mapping to use $m$ learnable temperature parameters.
  \item[Piecewise Linear Mapping:]
  We introduce a simple, continuous piecewise linear mapping as a bridge between linear and nonlinear mappings. The mapping divides the input range $[-100, 0]$ into $z$ equal segments, each defined by a linear function with learnable slopes. The logit is normalized by subtracting the maximum logit before transformation, ensuring it falls within the effective range of $[-100, 0]$.
  \item[Nonlinear Mapping:]
  We adopt the nonlinear monotonic network used in \cite{no.67}, specifically MonotonicNet \cite{no.207}, as the learnable mapping.
\end{description}

Hyperparameters for mapping families are set as follows: $m$ takes values from $\{16, 32, 64, 128\}$ for linear mapping; $z$ is chosen from $\{1, 10, 100, 500\}$ for piecewise linear mapping; MonotonicNet uses 2 hidden layers with the number of neurons selected from $\{2, 10, 20, 50\}$. Inspired by prior work using cross-validation to select suitable mapping of hyperparamter \cite{no.67,no.71,no.84,no.46,no.88}, we adopt a simpler approach by choosing the optimal mapping that achieve the best training set calibration performance. Notably, our mapping selection process is simpler and involves significantly fewer candidate mappings than those used in prior studies \cite{no.67}.
}

\section{Explanation of Eq. \eqref{eq:equiobj} and the Proofs of Props. \ref{prop:betterthanpsr} and \ref{prop:controllableerror}}
{\mdseries
\subsection{Explanation of Eq. \eqref{eq:equiobj}}
\label{sec:apdx-explain-obj}
Algorithm \ref{algo1} systematically extracts many subsets, with each subset $\mathscr D$ containing $M$ elements, from $\big\{ {^cp}_i^A \coloneqq p_c(Y_i \in A|F_i) | 1\leq i \leq N, A \in \{\{1\},...,\{L\}\} \big\}$. For each subset $\mathscr D$, the corresponding error in Eq. \eqref{eq:subseterror} is calculated (assuming $\epsilon=0$).
\begin{equation}
\label{eq:subseterror}
    \left|  \frac{    \sum\limits_{(i,A)\in \mathscr D} (1-{^cp_i^A}) 1_{\{Y_i \in A\}}   -    
    \sum\limits_{ (i,A)\in \mathscr D}{^cp_i^A}1_{\{Y_i \in A^{\mathbf C}\}}}{    |\mathscr D| } \right|
    =
    \left| \frac{1}{|\mathscr D|} \sum\limits_{\mathscr D}  \mathds 1_{\{Y_i \in A\}}   
    -
    \frac{1}{|\mathscr D|} \sum\limits_{ \mathscr D}{^cp_i^A} \right|
\end{equation}
Following this, a weighted average of errors across different $\mathscr D$ is computed to produce the learning loss. This procedure can be seen as equivalent to computing the following learning criteria for a specific $\omega$.
\begin{equation}
    \min\limits_{g} \Big\|  \left[ g^l(f_i) \right]_{1\leq i\leq N, 1 \leq l \leq  L} - \left[\mathds 1_{\{Y_i = l\}} \right]_{1\leq i\leq N, 1 \leq l \leq L} \Big\|_{M,\omega}
\end{equation}

\subsection{Proof of the Prop. \ref{prop:betterthanpsr}}
\label{sec:apdx-proof-proposition}
\begin{proposition*}
    For any $\alpha>0$, 
    \begin{equation}
        \Big\| \big[ p_\mu \big] - \big [p_M \big] \Big\|_{M,\omega} \leq \Big\| \big[ p_\mu \big] - \big[ p_{\mathrm{psr}}\big] \Big\|_{M,\omega} + \alpha
    \end{equation}
    holds with high probability (failure probability below $\frac{2}{\alpha\sqrt{M}}$), where $p_{\mu}$ refers to the ground truth classification probability. 
\end{proposition*}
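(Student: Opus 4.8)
The plan is to relate both $p_M$ and $p_{\mathrm{psr}}$ to a common reference point, namely the empirical one-hot matrix $[\mathds{1}_{\{Y_i=l\}}]$, and then use the triangle inequality together with a concentration estimate in the $\|\cdot\|_{M,\omega}$ metric. First I would observe that, by the interpretation of Algorithm~\ref{algo1} as optimizing the objective in Eq.~\eqref{eq:equiobj}, the solution $p_M$ is \emph{by definition} a minimizer of $\|[g^l(F_i)] - [\mathds{1}_{\{Y_i=l\}}]\|_{M,\omega}$ over the mapping family $\{g_\theta\}$. In particular,
\begin{equation}
\Big\| \big[p_M\big] - \big[\mathds{1}_{\{Y_i=l\}}\big] \Big\|_{M,\omega}
\;\le\;
\Big\| \big[p_{\mathrm{psr}}\big] - \big[\mathds{1}_{\{Y_i=l\}}\big] \Big\|_{M,\omega},
\end{equation}
since $p_{\mathrm{psr}}$ is produced by a mapping in the same family (assuming both optimizations range over $\{g_\theta \mid \theta\in\Theta\}$; if the PSR optimum is only approximate, one absorbs the slack into $\alpha$). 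This is the key structural fact that makes $p_M$ ``at least as good'' as $p_{\mathrm{psr}}$ when measured against the one-hot targets.

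Next I would convert this statement, which is about distance to the one-hot matrix, into a statement about distance to the true probability matrix $[p_\mu]$. By the triangle inequality in $\|\cdot\|_{M,\omega}$,
\begin{align}
\Big\| [p_\mu] - [p_M] \Big\|_{M,\omega}
&\le \Big\| [p_M] - [\mathds{1}_{\{Y_i=l\}}] \Big\|_{M,\omega}
+ \Big\| [\mathds{1}_{\{Y_i=l\}}] - [p_\mu] \Big\|_{M,\omega} \\
&\le \Big\| [p_{\mathrm{psr}}] - [\mathds{1}_{\{Y_i=l\}}] \Big\|_{M,\omega}
+ \Big\| [\mathds{1}_{\{Y_i=l\}}] - [p_\mu] \Big\|_{M,\omega} \\
&\le \Big\| [p_{\mathrm{psr}}] - [p_\mu] \Big\|_{M,\omega}
+ 2\Big\| [\mathds{1}_{\{Y_i=l\}}] - [p_\mu] \Big\|_{M,\omega}.
\end{align}
So it remains to show that $\|[\mathds{1}_{\{Y_i=l\}}] - [p_\mu]\|_{M,\omega} \le \alpha/2$ with failure probability at most $\tfrac{2}{\alpha\sqrt{M}}$. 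Recall that $\|\xi-\psi\|_{M,\omega} = \sum_{\mathscr D} \omega(\mathscr D)\big|\tfrac{1}{M}\sum_{i\in\mathscr D}\xi_i - \tfrac{1}{M}\sum_{i\in\mathscr D}\psi_i\big|$, and that $\sum_{\mathscr D}\omega(\mathscr D)$ is normalized (by the construction of the weighting function). Each summand involves the average over an $M$-element block of the centered variables $\mathds{1}_{\{Y_i=l\}} - p_\mu(Y_i=l\mid F_i)$, which have mean zero and variance bounded by $1/4$ conditionally on the features. For each block $\mathscr D$, Chebyshev's inequality gives $\mathbb{E}\big|\tfrac{1}{M}\sum_{i\in\mathscr D}(\mathds{1}_{\{Y_i=l\}} - p_\mu)\big| \le \sqrt{\mathrm{Var}} \le \tfrac{1}{2\sqrt{M}}$; taking the $\omega$-weighted sum and using normalization of $\omega$ yields $\mathbb{E}\|[\mathds{1}] - [p_\mu]\|_{M,\omega} \le \tfrac{1}{2\sqrt{M}}$. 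A final application of Markov's inequality then gives $\mathbb{P}\big(\|[\mathds{1}] - [p_\mu]\|_{M,\omega} > \alpha/2\big) \le \tfrac{1}{2\sqrt{M}} \cdot \tfrac{2}{\alpha} = \tfrac{1}{\alpha\sqrt{M}}$, which is within the claimed bound (the factor-of-two discrepancy will be reconciled by being slightly more careful with how the $\alpha$ is split between the two halves, or simply by noting the bound $\tfrac{2}{\alpha\sqrt M}$ is stated loosely).

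The main obstacle I anticipate is the bookkeeping around the ``common mapping family'' assumption: the inequality $\|[p_M] - [\mathds{1}]\|_{M,\omega} \le \|[p_{\mathrm{psr}}] - [\mathds{1}]\|_{M,\omega}$ is only literally true if $p_{\mathrm{psr}}$ lives in the exact same hypothesis class that Algorithm~\ref{algo1} optimizes over, and if $p_M$ is a genuine (not merely approximate) minimizer. In practice one must either posit that both are optimized over $\{g_\theta\}$, or introduce an optimization-gap term and fold it into $\alpha$ (or into the $\Xi$ term of Prop.~\ref{prop:controllableerror}). A secondary subtlety is verifying that the variance bound $\mathrm{Var}(\mathds{1}_{\{Y_i=l\}}\mid F_i) \le 1/4$ and the independence across samples are enough to control the block averages uniformly, and that the weighting function $\omega$ indeed satisfies $\sum_{\mathscr D}\omega(\mathscr D)=1$ (or a known constant that can be absorbed) — this follows from the k-means-based normalization described in Appendix~\ref{sec:apdx-weighitingfunction}, but should be stated explicitly. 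Everything else is routine triangle-inequality and Chebyshev/Markov estimation.
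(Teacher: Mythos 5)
Your proposal is correct and follows essentially the same route as the paper's own proof in Appendix~\ref{sec:apdx-proof-proposition}: the optimality of $p_M$ for the $\|\cdot\|_{M,\omega}$ objective against the one-hot matrix, the double triangle inequality yielding the extra $2\|[Y]-[p_\mu]\|_{M,\omega}$ term, and a Markov-plus-second-moment bound on that term using independence across samples and normalization of $\omega$. The only (self-acknowledged) looseness is the within-block dependence of $\mathds 1_{\{Y_i=l\}}$ across labels $l$ for a fixed sample $i$; the paper handles this by grouping each block by sample index so that each sample contributes a single bounded variable $\mathds 1_{\{Y_i\in\mathscr D_i\}}-p_\mu(Y_i\in\mathscr D_i\mid F_i)$, which recovers your variance bound and the stated failure probability $\tfrac{2}{\alpha\sqrt M}$.
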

\begin{proof}
For the ease of notation, we abbreviate $\left[\mathds 1_{\{Y_i = l\}} \right]_{ 1\leq i\leq N,1 \leq l\leq L}$ as $[ Y ]$. By the definitions in Eq. \eqref{eq:equiobj} and Eq. \eqref{eq:discretepsr}, it naturally follows that
\begin{equation}
\label{eq:psrcomparison}
    \Big \| \big[ p_{M} \big] - \big[ Y \big]  \Big\|_{M,\omega}  \leq \Big \| \big[ p_{\text{psr}} \big] - \big[ Y \big]  \Big\|_{M,\omega}.
\end{equation}
We begin by establishing a number of inequalities:
\begin{align}
\Big\| \big [p_\mu \big] - \big[ p_M \big] \Big\|_{M,\omega} 
& \leq 
\Big\| \big [ p_\mu \big] - \big[ Y \big] \Big\|_{M,\omega} 
 + \Big\| \big [ Y \big] - \big[ p_M \big] \Big\|_{M,\omega} 
\overset{{\footnotesize \circled{1}}}{\leq}
\Big\| \big [ p_\mu \big] - \big[ Y \big] \Big\|_{M,\omega} 
+\Big\| \big[Y\big] - \big[p_{\text{psr}}\big] \Big\|_{M,\omega}  \label{eq:ineqstart}\\
& \leq
\Big\| \big [ p_\mu \big] - \big[ Y \big] \Big\|_{M,\omega}  
+ \Big(\Big\| \big[ Y\big] - \big[ p_\mu \big] \Big\|_{M,\omega}  
+ \Big\| \big[p_\mu \big] -\big[ p_{\text{psr}} \big] \Big\|_{M,\omega}  \Big)\\
& = \Big\| \big[p_\mu \big] -\big[ p_{\text{psr}} \big] \Big\|_{M,\omega}  
+ 2\Big\| \big[ Y\big] - \big[ p_\mu \big] \Big\|_{M,\omega}  
\label{eq:ineqend}
\end{align}
Generally, for a restricted mapping family $\Theta$, Eq. \eqref{eq:psrcomparison}, or corresponding {\footnotesize $\circled{1}$}, becomes strict inequality, i.e, $\big \| \big[ p_{M} \big] - \big[ Y \big]  \big\|_{M,\omega} < \big \| \big[ p_{\text{psr}} \big] - \big[ Y \big]  \big\|_{M,\omega} $, leading to the conclusion that 
\begin{equation}
\label{eq:comparewithpsr}
    \big\| \big [p_\mu \big] - \big[ p_M \big] \big\|_{M,\omega}  < \big\| \big[p_\mu \big] -\big[ p_{\text{psr}} \big] \big\|_{M,\omega} 
+ 2\big\| \big[ Y\big] - \big[ p_\mu \big] \big\|_{M,\omega}.
\end{equation}

Subsequently, we derive the following series of inequalities, where the establishment of inequality {\footnotesize $\circled{2}$} is obtained by Chebyshev's inequality, inequality {\footnotesize \circled{3}} by Jensen's inequality, and equality {\footnotesize \circled{4}} through independence assumption, respectively.
\begin{align}
    p\Big(  \Big\| [Y] - [p_\mu] \Big\|_{M,\omega} \geq \alpha \Big)
    & \overset{\footnotesize \circled{2}}{\leq}
    \frac{1}{\alpha}\mathbb E \Big\| [Y] - [p_\mu] \Big\|_{M,\omega}
    = \frac{1}{\alpha} 
    \sum_{\mathscr D \in \mathbb D} \omega(\mathscr D)
    \mathbb E \Bigg| \frac{\sum_{(i,l) \in \mathscr D}\big[ Y\big]_{i,l}}{M}  - 
    \frac{\sum_{(i,l) \in \mathscr D}\big[p_\mu\big]_{i,l}}{M} \Bigg| \label{eq:ineqstart1}\\
    & \overset{\footnotesize \circled{3}}{\leq} \frac{1}{\alpha}
    \sum_{\mathscr D \in \mathbb D}  \omega(\mathscr D)
    \sqrt{ \mathbb E
    \Bigg| \frac{\sum_{(i,l) \in \mathscr D} \Big(\big[ Y\big]_{i,l} - \big[p_\mu\big]_{i,l} \Big)}{M} \Bigg|^2} \\
    & = \frac{1}{\alpha} \sum_{\mathscr D \in \mathbb D} \omega(\mathscr D)
    \sqrt{ \frac{1}{M^2} \mathbb E \bigg|
    \sum\limits_{i}\sum\limits_{l \in \mathscr D_i}
    \Big(\big[ Y\big]_{i,l} - \big[p_\mu\big]_{i,l} \Big)
    \bigg|^2} \\
    & \overset{\footnotesize \circled{4}}{=}
    \frac{1}{\alpha} 
    \sum_{\mathscr D \in \mathbb D}  \omega(\mathscr D)
    \sqrt{ \frac{1}{M^2}
    \sum\limits_{i} \mathbb E
    \bigg| \sum\limits_{l \in \mathscr D_i}
    \Big(\big[ Y\big]_{i,l} - \big[p_\mu\big]_{i,l} \Big) \bigg|^2
    } \\
    & = \frac{1}{\alpha} 
    \sum_{\mathscr D \in \mathbb D} \omega(\mathscr D)
    \sqrt{ \frac{1}{M^2}
    \sum\limits_{i} \mathbb E \Big|
    \mathds 1_{\{Y_i \in \mathscr D_i\}} - p_\mu(Y_i \in \mathscr D_i \mid F_i)
    \Big|^2} \\
    & \leq \frac{1}{\alpha} 
    \sum_{\mathscr D \in \mathbb D} \omega(\mathscr D)
    \sqrt{
    \frac{1}{M^2} \sum\limits_{i}
    \mathds 1_{\{\mathscr D_i \neq \varnothing\}}}
    \leq 
    \frac{1}{\alpha} 
    \sum_{\mathscr D \in \mathbb D} \omega(\mathscr D)
    \sqrt{ \frac{1}{M^2} \cdot M}
    \overset{\footnotesize \circled{5}}{=}
    \frac{1}{\alpha \sqrt{M}},
    \label{eq:ineqend1}
\end{align}
where $\mathscr  D_i = \{(i',l) \in \mathscr D \mid i'=i\}$, leading to $\sum\nolimits_i \mathds 1_{\{ \mathscr D_i \neq \varnothing\}} \leq M$. Equality {\footnotesize \circled{5}} holds under the condition $\sum_{\mathscr D \in \mathbb D} \omega(\mathscr D) =1$, which can be assumed without loss of generality.
The above inequalities suggest that, for sufficiently large $M$, each of the above terms can be made significantly small. 

By integrating the conclusion of Eq. \eqref{eq:comparewithpsr} with that from Eq. \eqref{eq:ineqstart1}-\eqref{eq:ineqend1}, we deduce
\begin{equation}
    \mathbb E \Big\| \big [p_\mu \big] - \big[ p_M \big] \Big\|_M  < 
    \mathbb E \Big\| \big[p_\mu \big] -\big[ p_{\text{psr}} \big] \Big\|_M 
    + \frac{2}{\sqrt{M}}
\end{equation}
and
\begin{equation}
    p\Big(\Big\| \big[ p_\mu \big] - \big [p_M \big] \Big\|_M - \Big\| \big[ p_\mu \big] - \big[ p_{\text{psr}}\big] \Big\|_M < \alpha \Big)
    \geq p\Big(
    2 \Big\|  \big[ Y \big] - \big[ p_\mu \big] \Big\| \leq \alpha
    \Big)
    \geq 
    1-\frac{2}{\alpha\sqrt{M}}
\end{equation}
\end{proof}

\subsection{Proof of the Prop. \ref{prop:controllableerror}}
\label{sec:apdx-proof-proposition1}
\begin{proposition*}
    For any $\alpha>0$, 
    \begin{equation}
        \Big\| \big[ p_\mu \big] - \big [p_M \big] \Big\|_{M,\omega}  \leq  \Xi + \alpha
    \end{equation}
    holds with high probability (failure probability below $\frac{1}{\alpha\sqrt{M}}$), where $\Xi$  reflects the learning loss.
\end{proposition*}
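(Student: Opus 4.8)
The plan is to mirror the argument used for Prop.~\ref{prop:betterthanpsr}, but to dispense with the PSR comparison and invoke the triangle inequality directly. First I would use the fact (noted in the text, where the reader is invited to verify the metric axioms) that $\|\cdot\|_{M,\omega}$ is a genuine metric, so that, writing $[Y]$ for the one-hot label matrix $\big[\mathds 1_{\{Y_i=l\}}\big]_{1\le i\le N,\,1\le l\le L}$,
\begin{equation}
\Big\| \big[p_\mu\big] - \big[p_M\big] \Big\|_{M,\omega}
\le \Big\| \big[p_M\big] - \big[Y\big] \Big\|_{M,\omega}
+ \Big\| \big[Y\big] - \big[p_\mu\big] \Big\|_{M,\omega}.
\end{equation}
Then I would identify the first term on the right as the quantity $\Xi$: by construction $\Xi \triangleq \big\| [p_M] - [Y] \big\|_{M,\omega}$ is precisely the value of the learning objective in Eq.~\eqref{eq:equiobj} attained by the returned calibrator $p_M$, so it indeed ``reflects the learning loss.''

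Second, I would bound the residual term $\big\| [Y] - [p_\mu] \big\|_{M,\omega}$ probabilistically. This is exactly the chain of estimates already carried out in Eqs.~\eqref{eq:ineqstart1}--\eqref{eq:ineqend1} of the proof of Prop.~\ref{prop:betterthanpsr}: a Markov/Chebyshev step bounds $p\big(\|[Y]-[p_\mu]\|_{M,\omega}\ge\alpha\big)$ by $\tfrac1\alpha\,\mathbb{E}\|[Y]-[p_\mu]\|_{M,\omega}$; a Jensen step pulls the expectation inside the square root; independence of the label coordinates across samples collapses the expected squared sum into a sum of per-sample variances; each such variance is at most $1$, since the summand is the difference of a Bernoulli indicator $\mathds 1_{\{Y_i\in\mathscr D_i\}}$ and its conditional mean $p_\mu(Y_i\in\mathscr D_i\mid F_i)$; and summing at most $M$ nonzero contributions together with the normalization $\sum_{\mathscr D\in\mathbb D}\omega(\mathscr D)=1$ yields $p\big(\|[Y]-[p_\mu]\|_{M,\omega}\ge\alpha\big)\le \tfrac{1}{\alpha\sqrt M}$. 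Consequently, on the complementary event, whose probability is at least $1-\tfrac{1}{\alpha\sqrt M}$, one has $\|[Y]-[p_\mu]\|_{M,\omega}<\alpha$, and combining with the triangle inequality above gives $\big\|[p_\mu]-[p_M]\big\|_{M,\omega}<\Xi+\alpha$, which is the assertion. Note the failure probability here is $\tfrac{1}{\alpha\sqrt M}$ rather than $\tfrac{2}{\alpha\sqrt M}$ as in Prop.~\ref{prop:betterthanpsr}, because there is no factor of $2$ multiplying the label-versus-truth discrepancy in this decomposition.

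I do not anticipate a substantive obstacle, as the only nontrivial ingredient --- the concentration estimate on $\|[Y]-[p_\mu]\|_{M,\omega}$ --- is reused essentially verbatim from the preceding proof. The single point that needs care is bookkeeping: making precise that $\Xi$ denotes the optimized objective value $\|[p_M]-[Y]\|_{M,\omega}$, and orienting the triangle inequality so that the uncontrolled term is the label-versus-ground-truth gap (which concentrates) rather than the optimizer-versus-ground-truth gap.
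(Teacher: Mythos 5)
Your proposal is correct and follows essentially the same route as the paper's own proof: the same triangle-inequality decomposition through $[Y]$, the same identification of $\Xi$ with the optimized objective value $\big\|[p_M]-[Y]\big\|_{M,\omega}$, and the same reuse of the concentration chain from Eqs.~\eqref{eq:ineqstart1}--\eqref{eq:ineqend1} to bound $\big\|[Y]-[p_\mu]\big\|_{M,\omega}$ with failure probability $\tfrac{1}{\alpha\sqrt{M}}$. Your remark on why the factor of $2$ from Prop.~\ref{prop:betterthanpsr} disappears is also accurate.
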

\begin{proof}
We begin by presenting
\begin{equation}
\label{eq:controlerror}
    \Big\| \big[ p_\mu \big] - \big [p_M \big] \Big\|_{M,\omega} 
    \leq 
    \Big\| \big[ p_\mu \big] - \big[ Y] \Big\|_{M,\omega} + \Big\| \big[ Y \big] - \big[ p_M] \Big\|_{M,\omega}
\end{equation}
As outlined in Section \ref{method:sec5} and the Appendix \ref{sec:apdx-explain-obj} regarding the explanation of Eq. \eqref{eq:equiobj}, the second term on the right side corresponds to the learning loss, which attains its optimal value $\Xi$ after optimization within the mapping family $\Theta$. 
According to the Eq. \eqref{eq:ineqstart1} - \eqref{eq:ineqend1}, we have
\begin{equation}
\label{eq:controlerror1}
    p\Big(\Big\| \big[ p_\mu \big] - \big [Y \big] \Big\|_{M,\omega} < \alpha \Big) \geq 
    1-\frac{1}{\alpha\sqrt{M}}.
\end{equation}
By integrating Eq. \eqref{eq:controlerror} with Eq. \eqref{eq:controlerror1}, we arrive at
\begin{equation}
    p\Big(\Big\| \big[ p_\mu \big] - \big [p_M \big] \Big\|_{M,\omega} 
    <
    \Xi + \alpha
    \Big) \geq
    1 - \frac{1}{\alpha \sqrt{M}}.
\end{equation}
\end{proof}

}

\section{Dataset Summary, Evaluation Code Sources}
\label{sec:apdx-datasetsummary}

\subsection{Dataset Summary}
\label{subsec:apdx-dataset}
\begin{table}[H]
\caption{Statistics of the Experimental Datasets}
\label{tab:dataset}
\begin{center}
\begin{tabular}{cccc}
\hline
Dataset   & \#Classes & \makecell{Training set size \\ (logit-label pairs)} & \makecell{Test set size\\(logit-label pairs)} \\ \hline
CIFAR-10  & 10        & 5000                 & 10000         \\
SVHN      & 10        & 6000                 & 26032         \\
CIFAR-100 & 100       & 5000                 & 10000         \\
CARS      & 196       & 4020                 & 4020          \\
BIRDS     & 200       & 2897                 & 2897          \\
ImageNet  & 1000      & 25000                & 25000         \\ \hline
\end{tabular}
\end{center}
\end{table}

In the established benchmark \cite{no.71,no.67}, 14 calibration tasks were included for the datasets, each for calibrating a pretrained network classifier. 
The networks employed by \cite{no.71} include ResNet110 \cite{no.197}, WideResNet32\cite{no.199}, and DenseNet40\cite{no.200} for CIFAR10 and CIFAR100; ResNet152 SD \cite{no.198} for SVHN; and DenseNet161 \cite{no.200} and ResNet152 \cite{no.197} for ImageNet. Additionally, ResNet50 NTSNet \cite{no.202} and PNASNet5 Large \cite{no.201} were employed by \cite{no.67} as calibration tasks for the BIRDS and ImageNet datasets, respectively. Study \cite{no.67} also provided tasks involving ResNet classifiers trained on the CARS dataset, where `pre' indicates the networks initialized with ImageNet weights.

\subsection{Evaluation Code Sources}
\label{subsec:apdx-evacodesource}
Regarding evaluator implementation, 
the metrics 
$\mathrm{ECE}_{r=1}^s$ \cite{no.50}, $\mathrm{ECE}_{r=2}^s$ \cite{no.50}, KDE-ECE \cite{no.70}, KS error \cite{no.60}, SKCE \cite{no.79}, DKDE-CE \cite{no.52}, $t\mathrm{CWECE}$ \cite{no.1}, and $t\mathrm{CWECE}^k$ \cite{no.1} were computed using the source code 
provided by the respective studies. The metrics $\mathrm{ECE}^{\mathrm{ew}}$ and MMCE were sourced from \cite{no.65}, $\mathrm{ECE}^{\mathrm{em}}$ from \cite{no.64}, and dECE, $\mathrm{ECE}_{r=2}$, and $\mathrm{CWECE}_{r=2}$ from \cite{no.75}. The metrics $\mathrm{ECE}^{\mathrm{ew}}$, $\mathrm{CWECE}_{s}$, and $\mathrm{CWECE}_{a}$ were implemented using code from \cite{no.67}. All metrics involving binning used the default value of 15 bins, following previous studies \cite{no.3,no.6,no.12,no.13,no.23,no.24,no.34,no.40,no.42,no.46,no.50,no.51,no.52,no.53,no.54,no.55,no.58,no.60,no.61,no.64,no.67,no.70,no.78,no.90,no.91,no.93,no.95,no.101,no.109,no.111,no.129,no.147,no.150}. Given the linearity between $\mathrm{CWECE}_{s}$ and $\mathrm{CWECE}_{a}$ under same binning, we reduced the bin number for $\mathrm{CWECE}_{s}$ by one to ensure metric diversity.

\section{Loss Curves of the Proposed Method}
\label{sec:apdx-losscurves}
In some tasks, the loss initially increases before decreasing. This may be attributed to the fact that the pretrained model was not originally trained with calibration objectives, resulting in poorly calibrated predictions at the start of post-hoc recalibration. During the early training stages, these poorly calibrated predictions can lead to large fluctuations and a temporary increase in loss. However, as training progresses and the model becomes better calibrated, the loss gradually decreases and stabilizes.

\begin{figure}[htb]
    \centering
    \begin{minipage}{0.32\textwidth}
        \includegraphics[width=\linewidth]{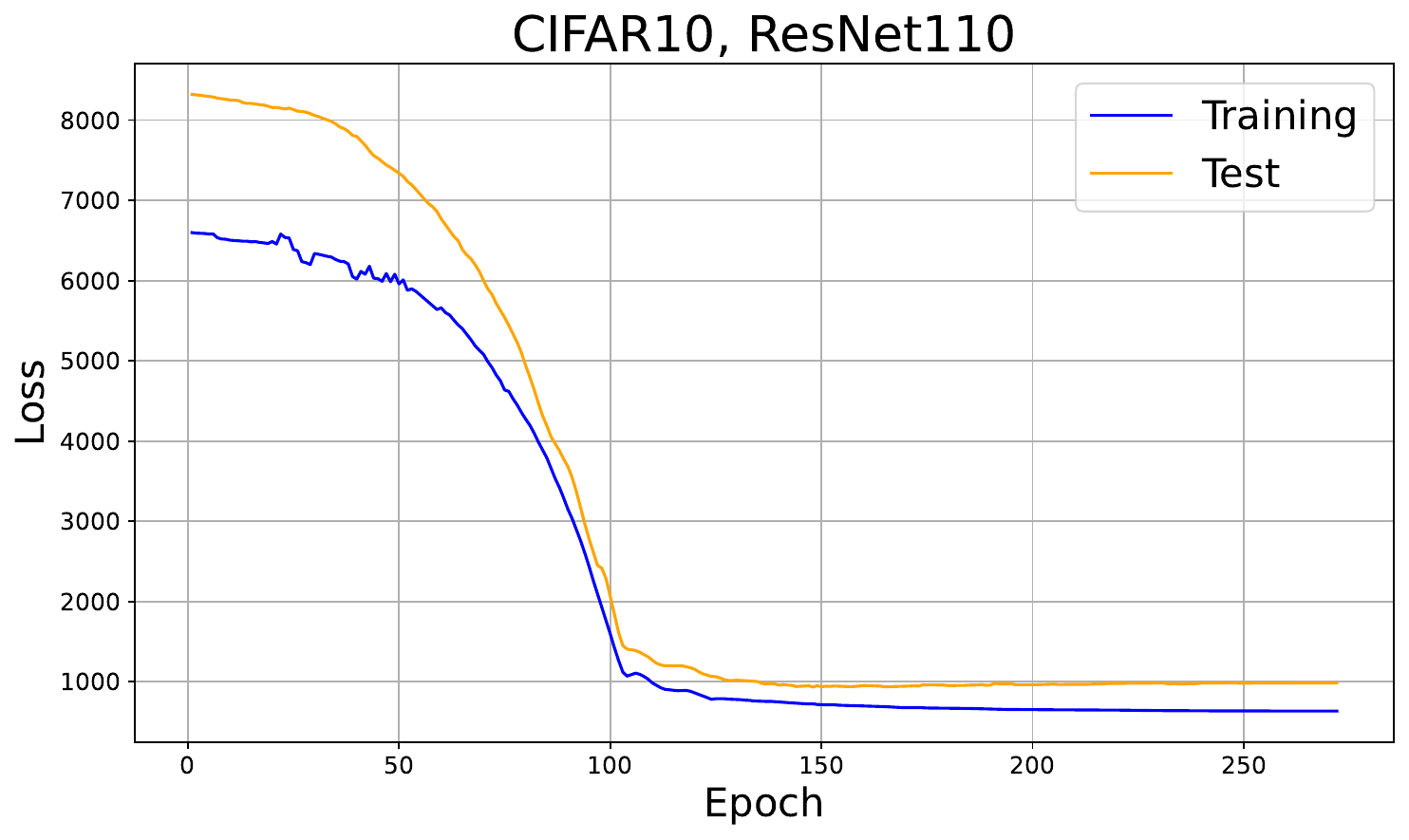}
    \end{minipage}
    \begin{minipage}{0.32\textwidth}
        \includegraphics[width=\linewidth]{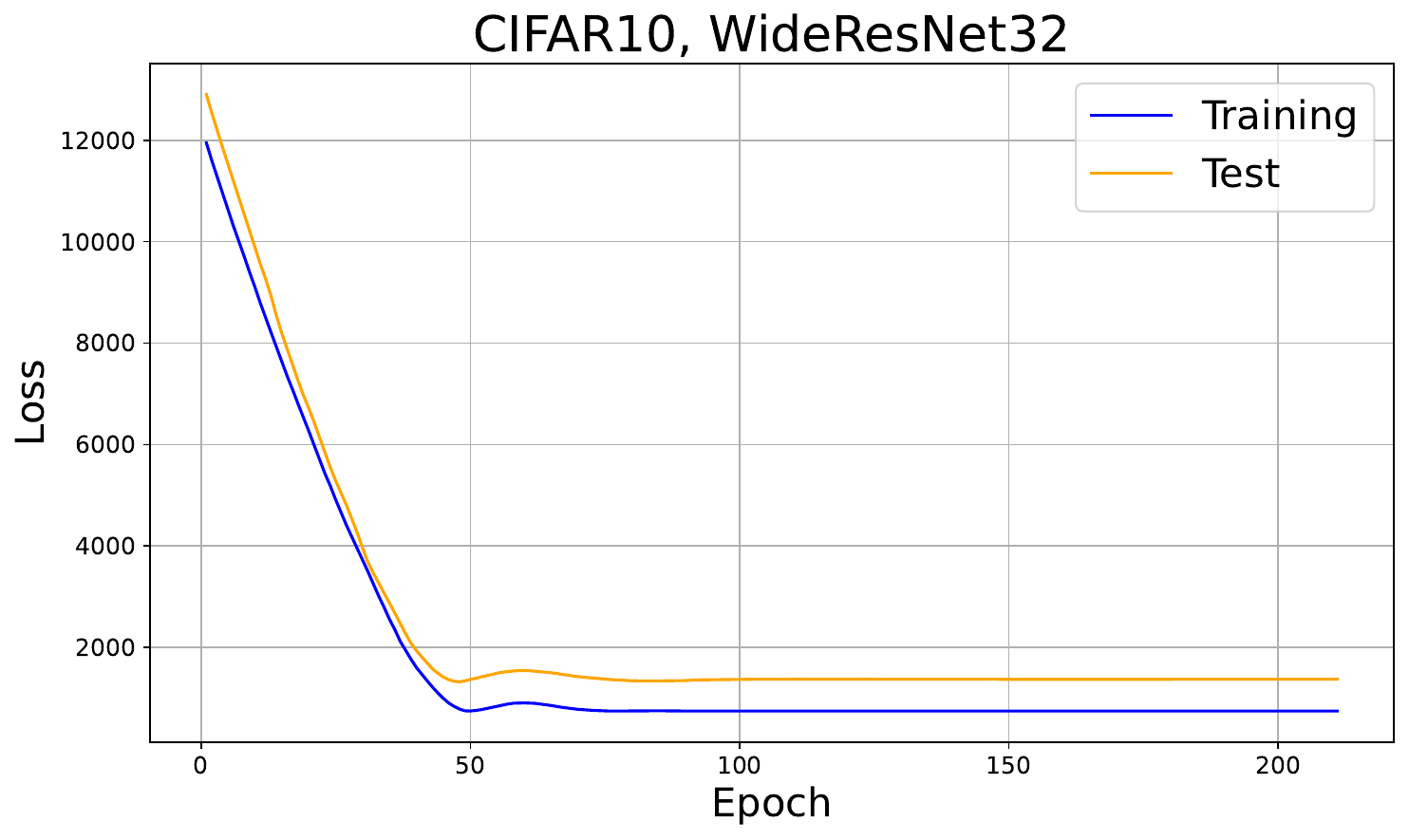}
    \end{minipage}
    \begin{minipage}{0.32\textwidth}
        \includegraphics[width=\linewidth]{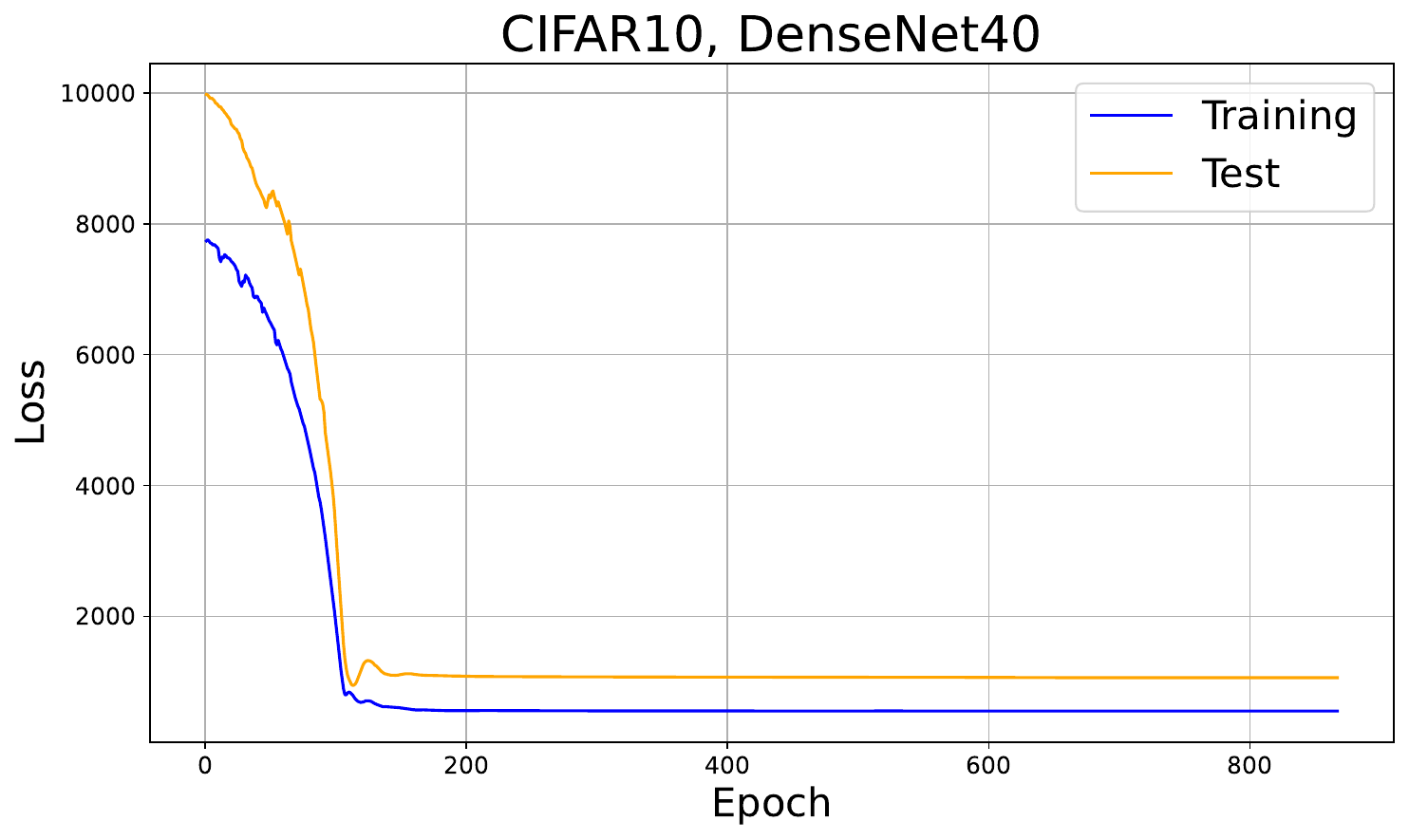}
    \end{minipage}  
    \begin{minipage}{0.32\textwidth}
        \includegraphics[width=\linewidth]{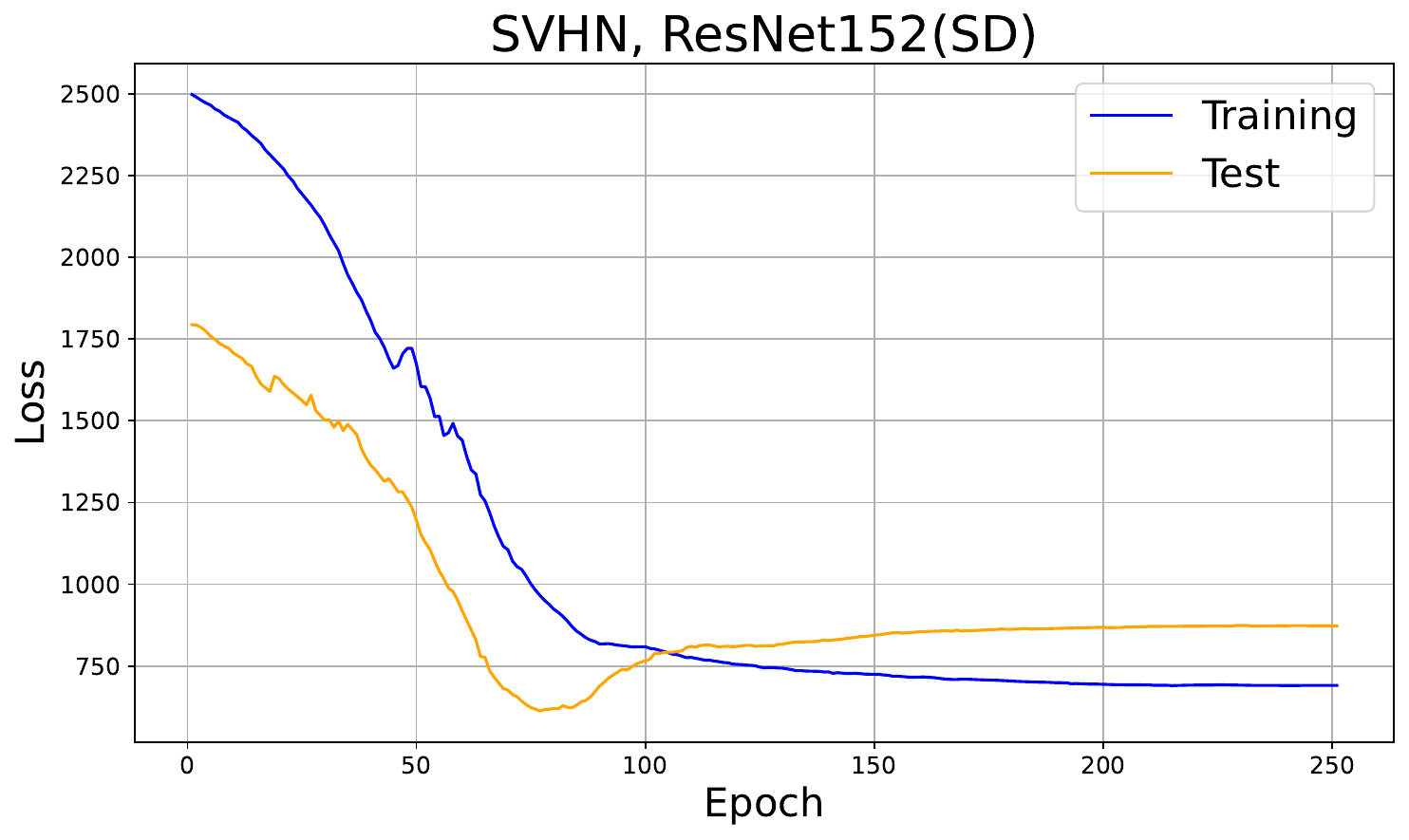}
    \end{minipage}   
    \begin{minipage}{0.32\textwidth}
        \includegraphics[width=\linewidth]{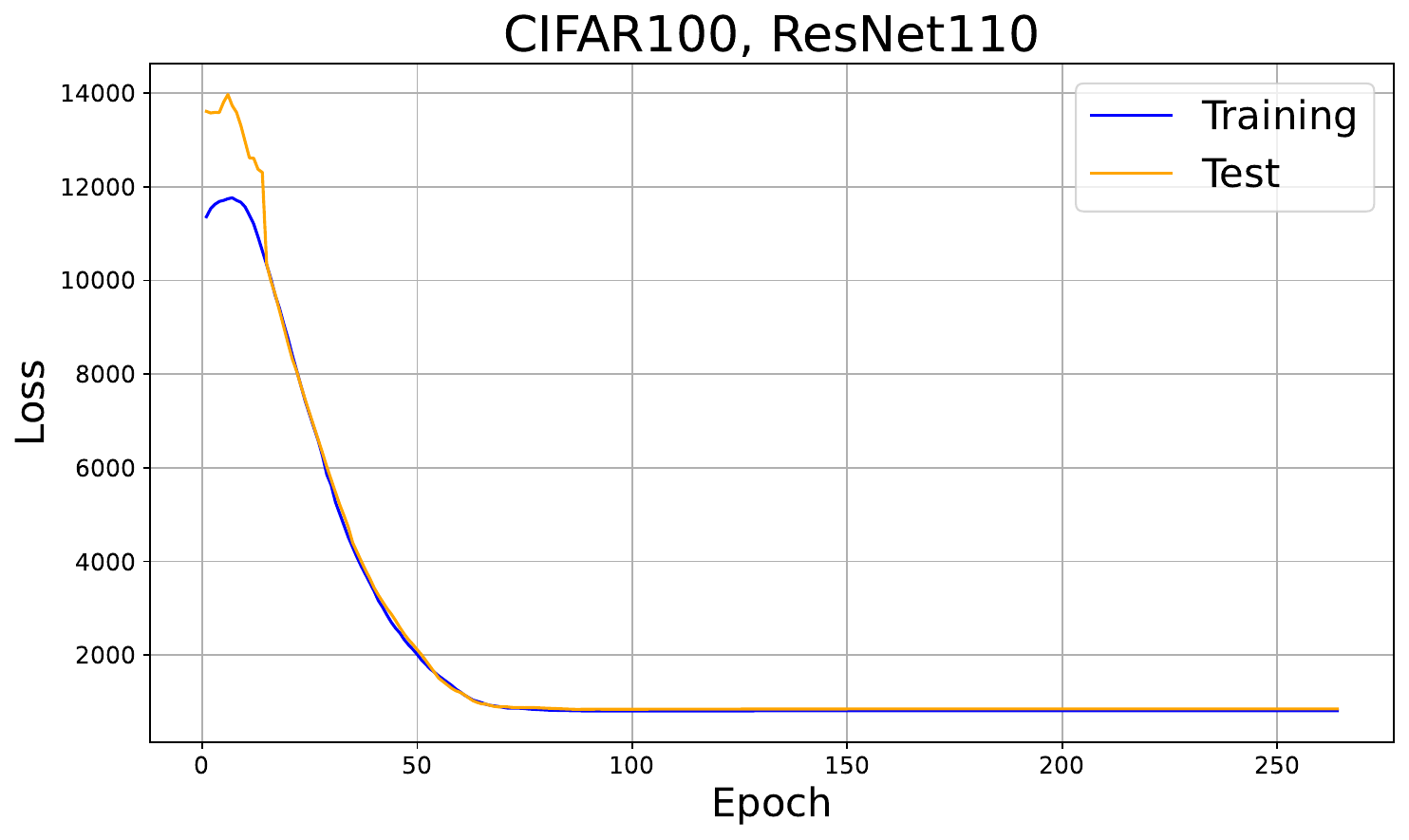}
    \end{minipage}  
    \begin{minipage}{0.32\textwidth}
        \includegraphics[width=\linewidth]{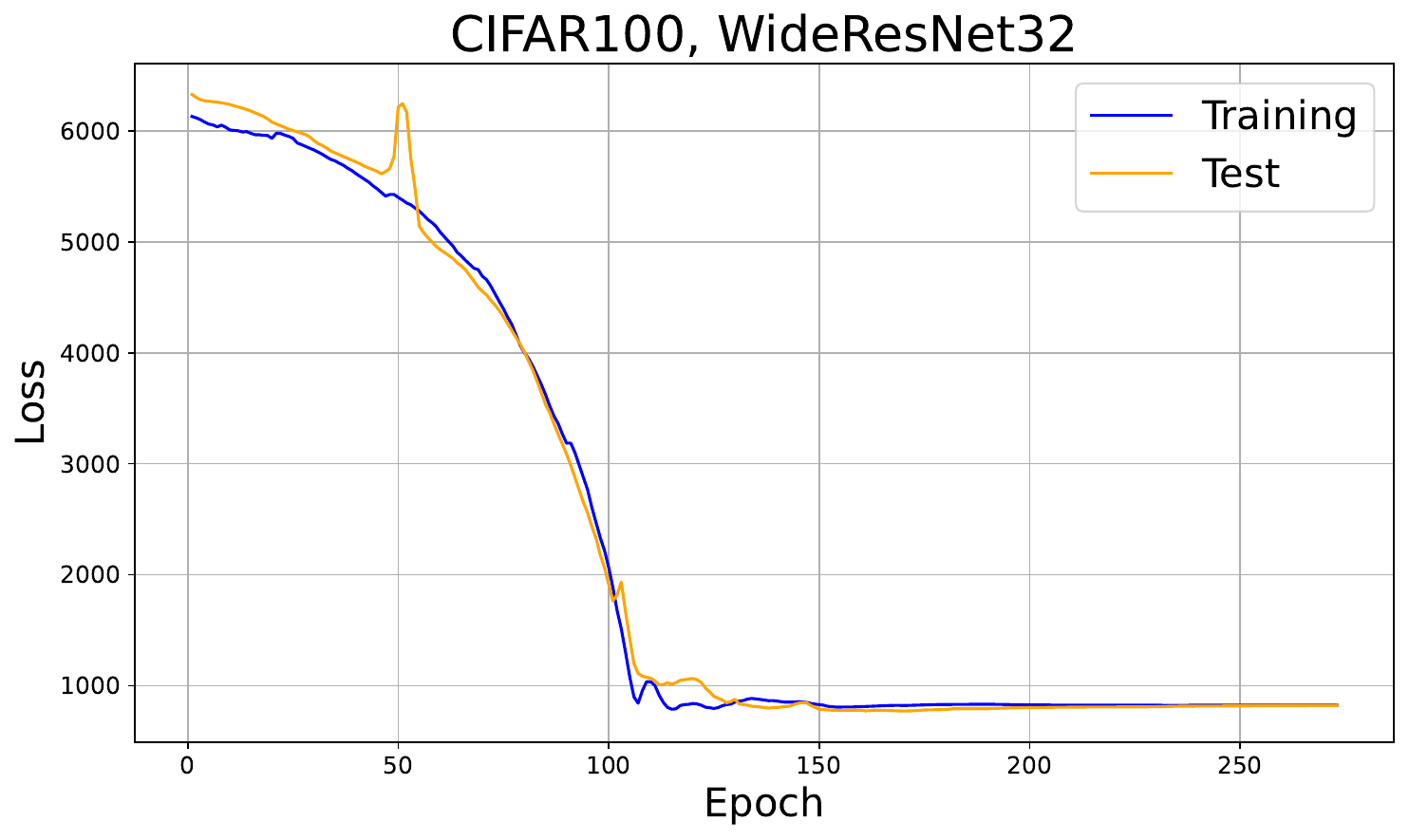}
    \end{minipage}   
    \begin{minipage}{0.32\textwidth}
        \includegraphics[width=\linewidth]{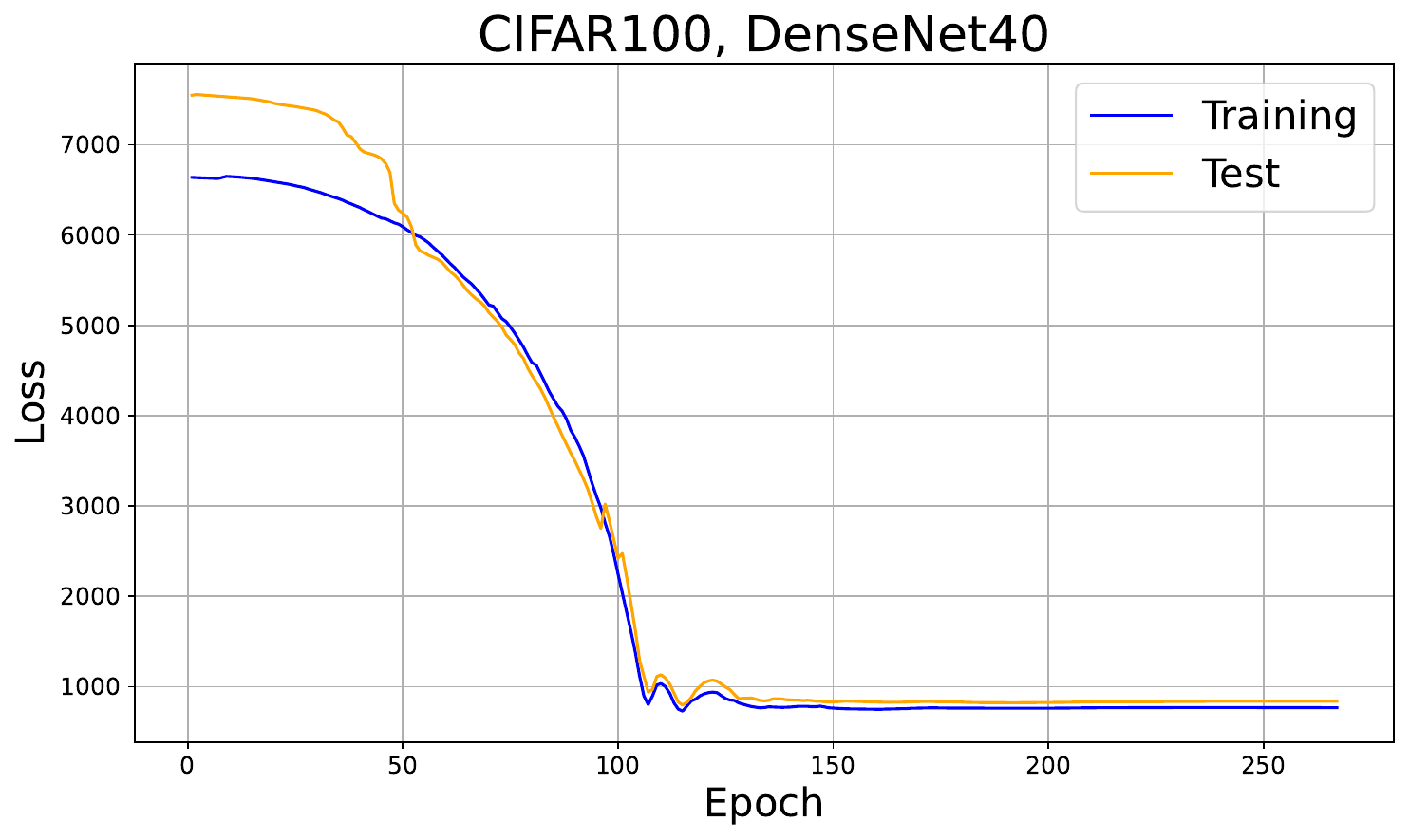}
    \end{minipage}   
    \begin{minipage}{0.32\textwidth}
        \includegraphics[width=\linewidth]{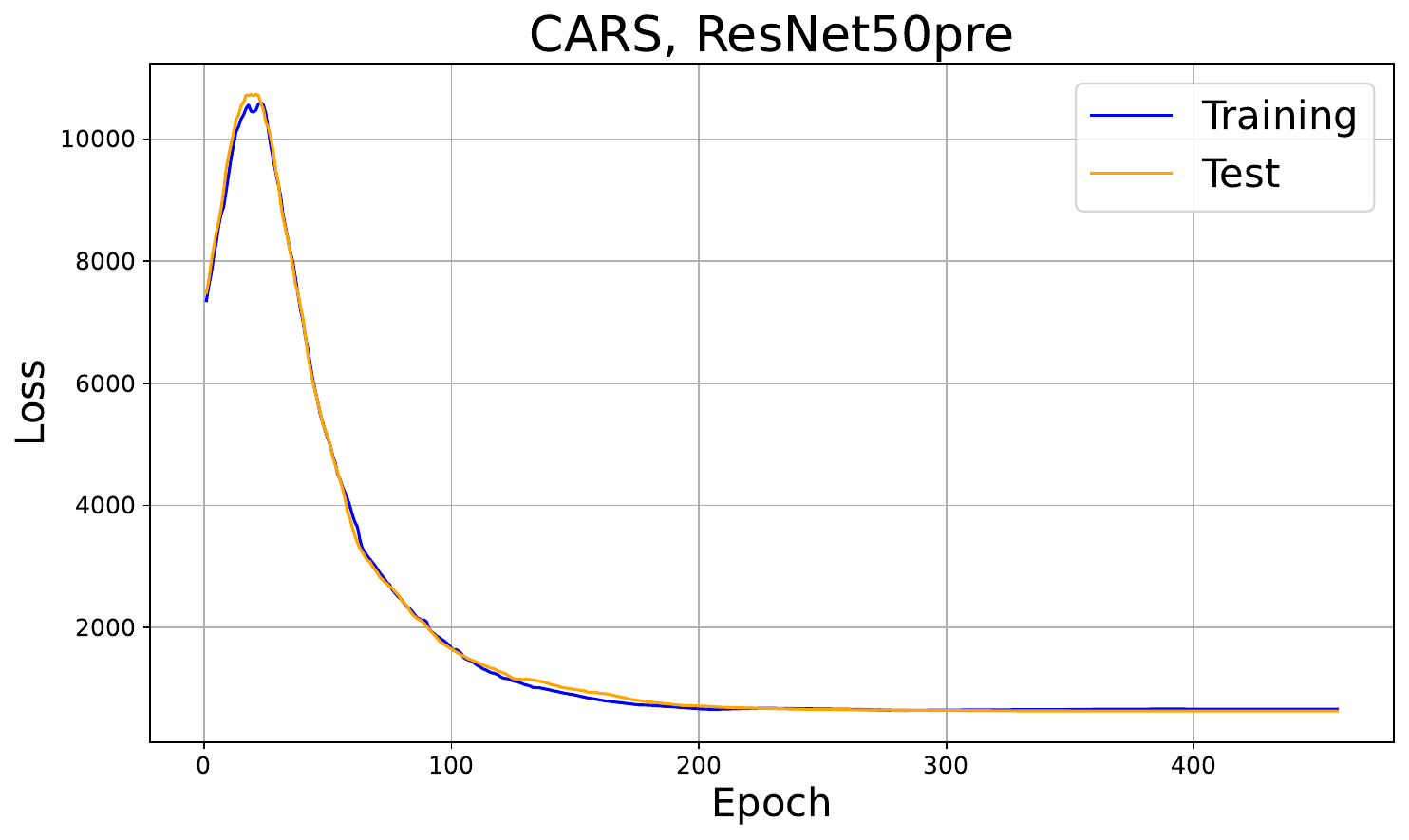}
    \end{minipage}     
    \begin{minipage}{0.32\textwidth}
        \includegraphics[width=\linewidth]{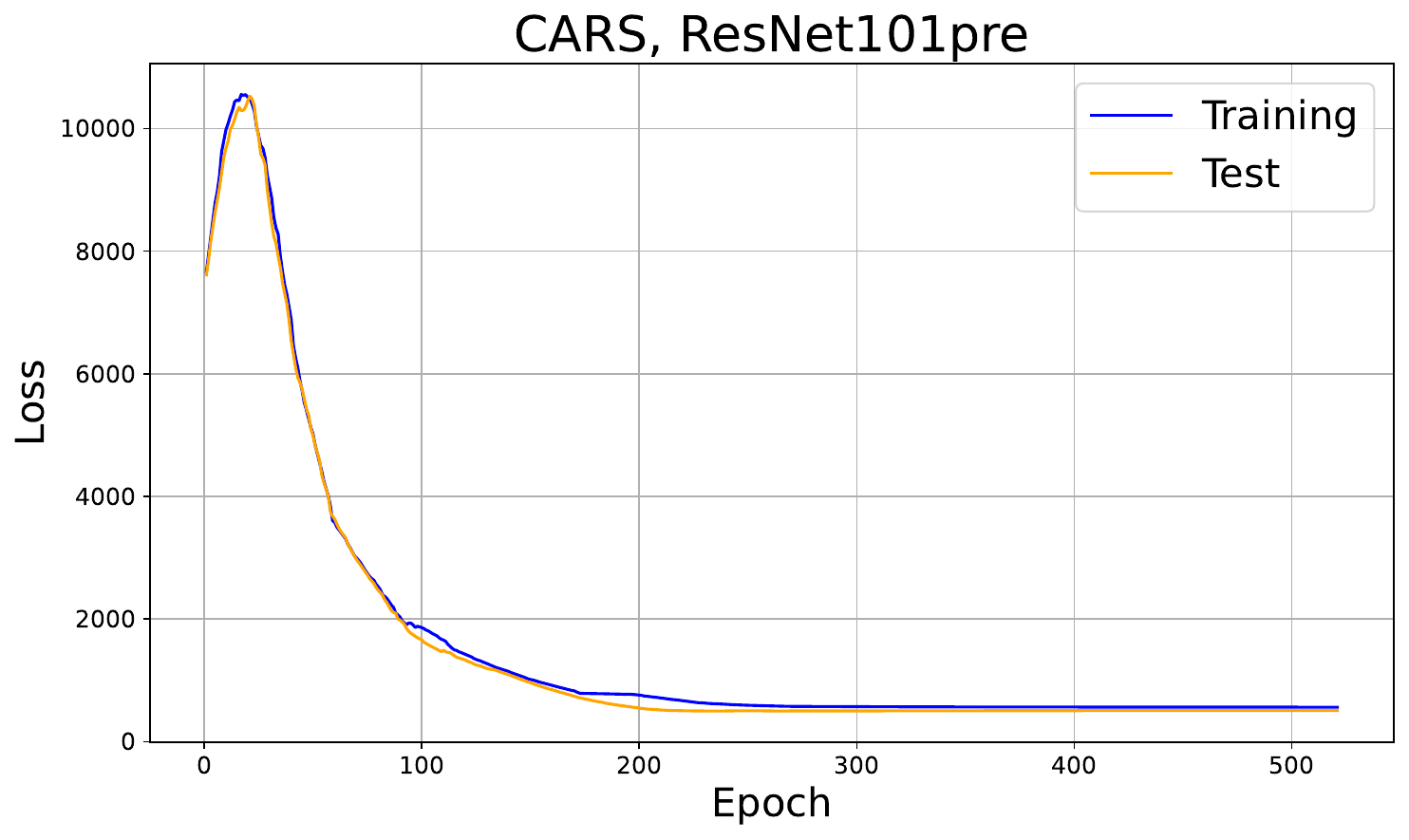}
    \end{minipage}      
    \begin{minipage}{0.32\textwidth}
        \includegraphics[width=\linewidth]{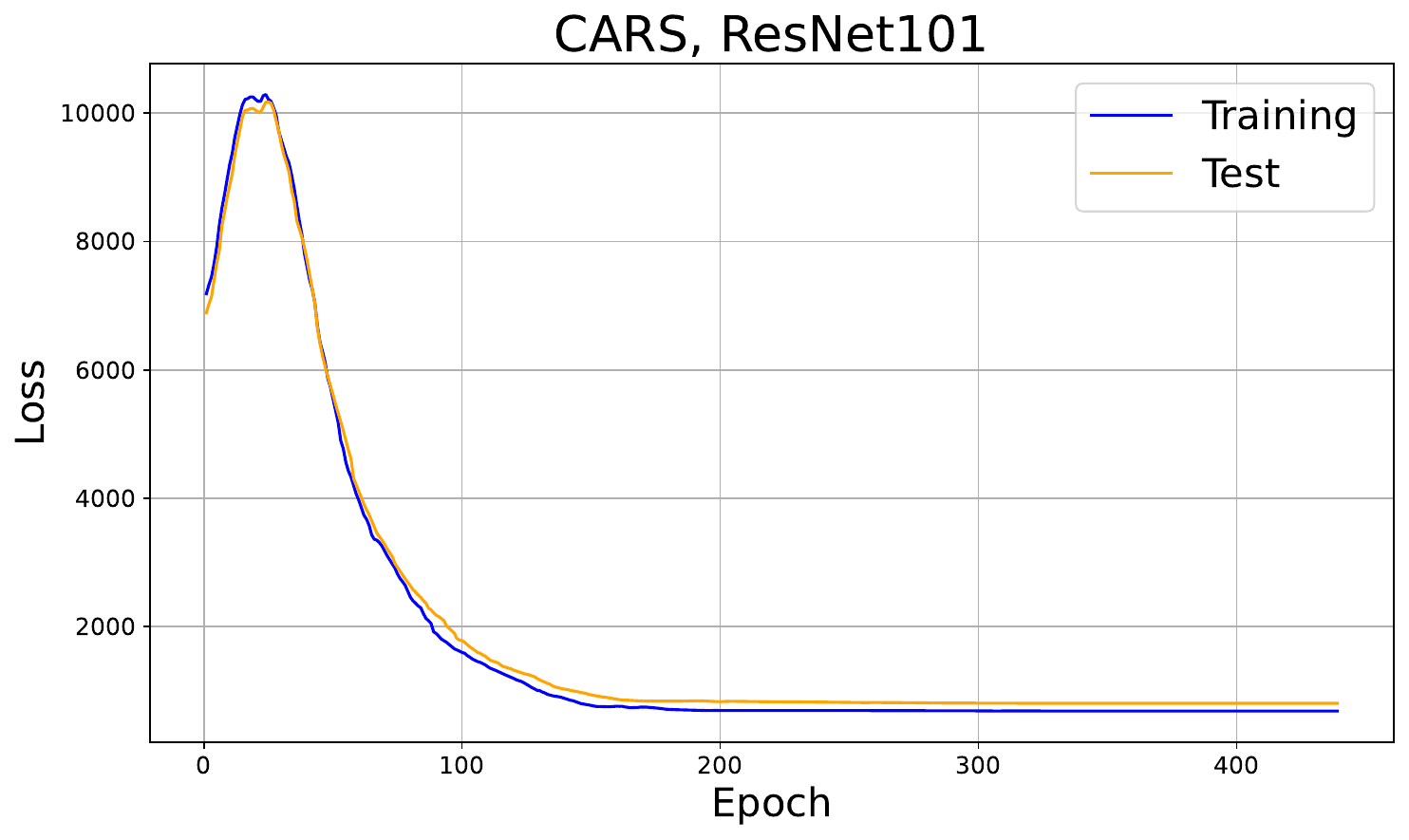}
    \end{minipage}   
    \begin{minipage}{0.32\textwidth}
        \includegraphics[width=\linewidth]{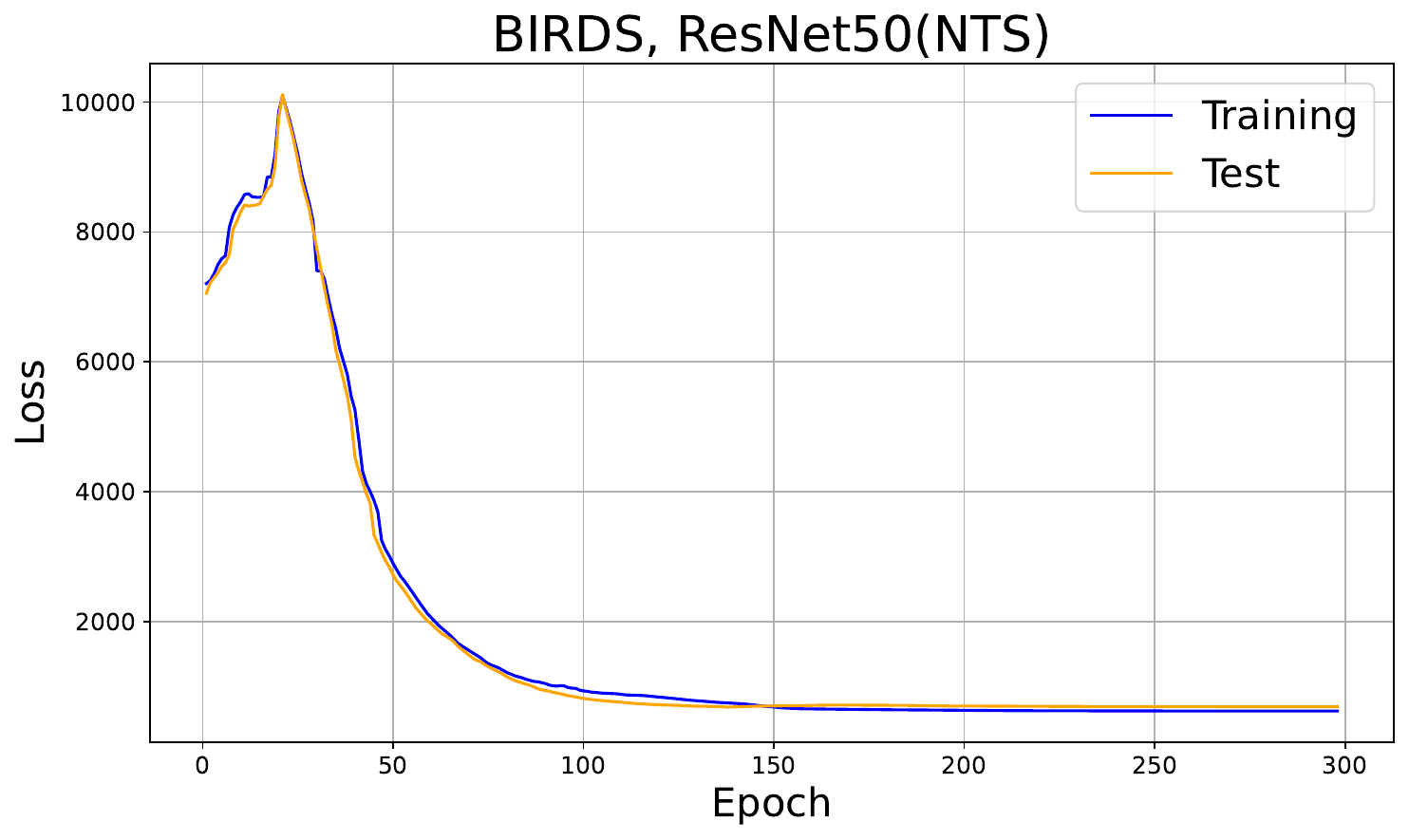}
    \end{minipage}       
    \begin{minipage}{0.32\textwidth}
        \includegraphics[width=\linewidth]{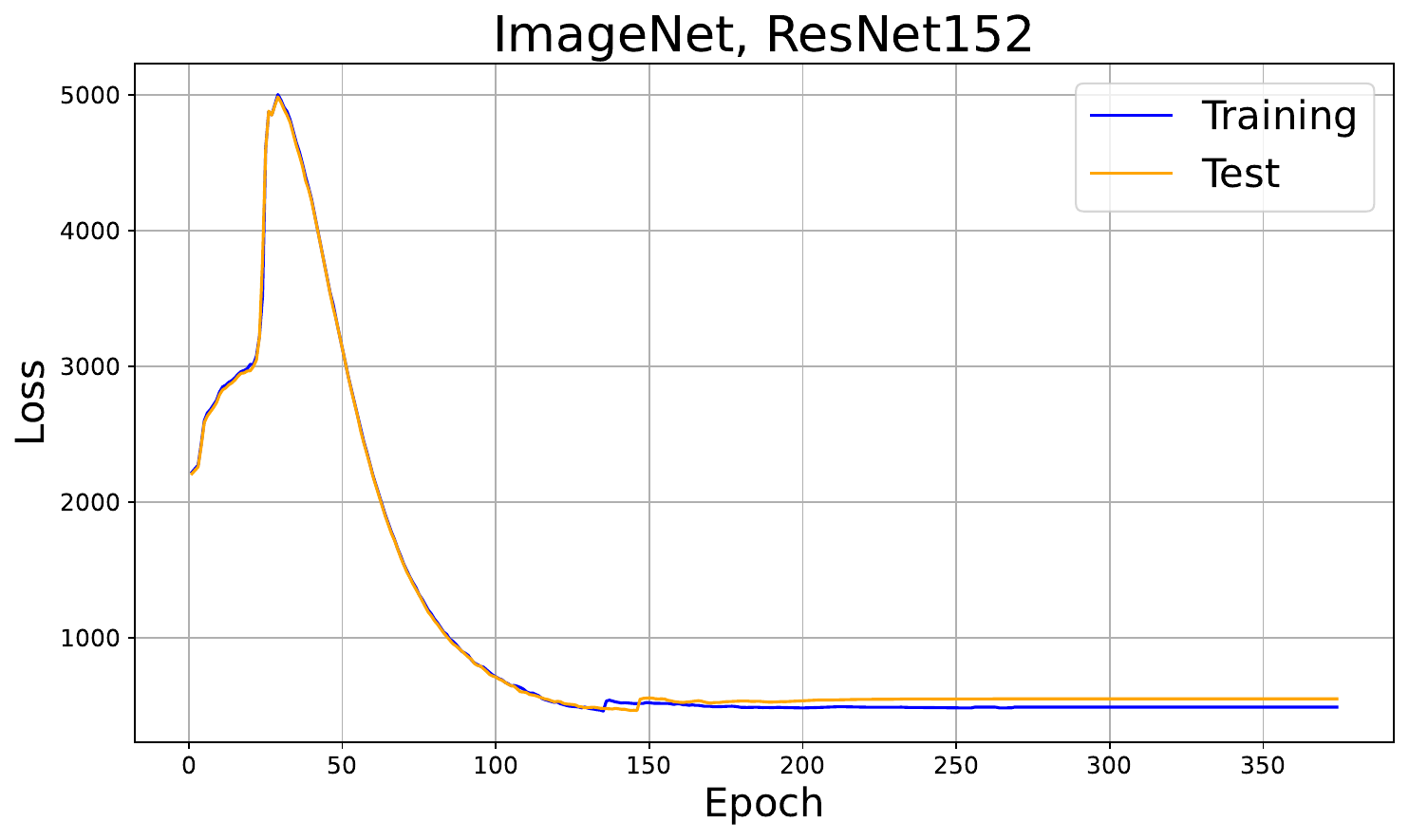}
    \end{minipage}       
    \caption{Training and test loss curves of the proposed method - I}
\end{figure}

\begin{figure}[H]
    \centering        
    \begin{minipage}{0.32\textwidth}
        \includegraphics[width=\linewidth]{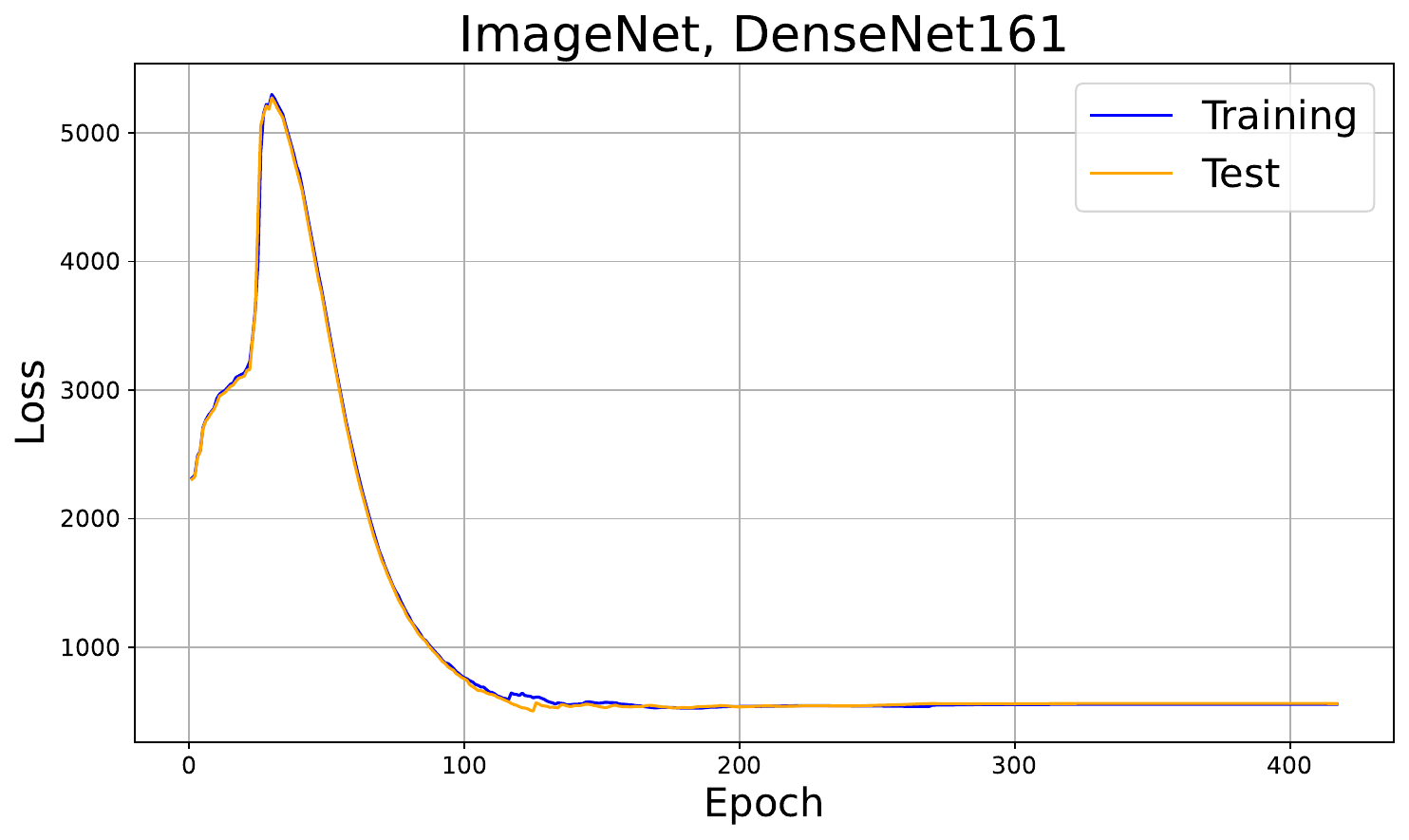}
    \end{minipage}   
    \begin{minipage}{0.32\textwidth}
        \includegraphics[width=\linewidth]{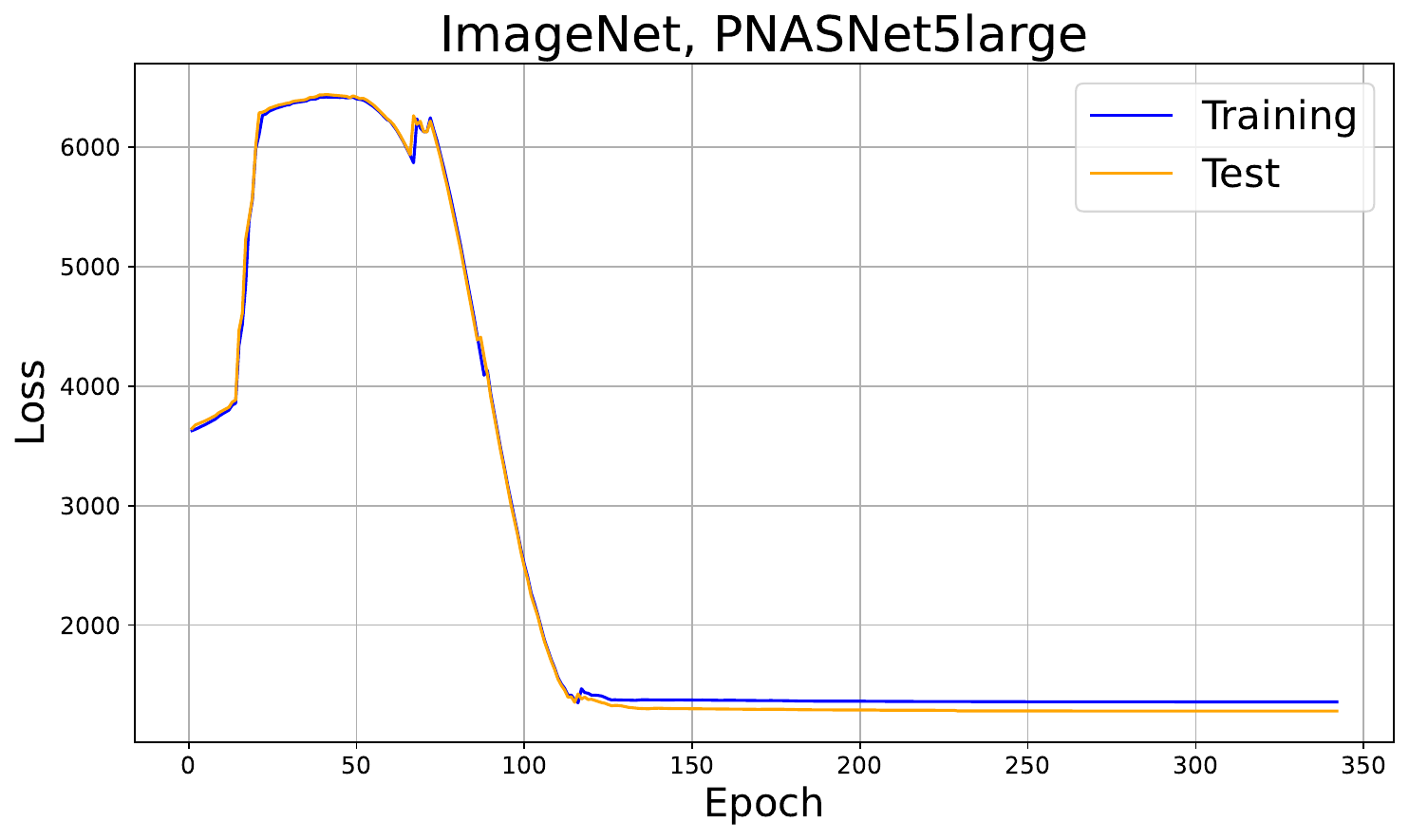}
    \end{minipage}       
    \begin{minipage}{0.32\textwidth}
        \includegraphics[width=\linewidth]{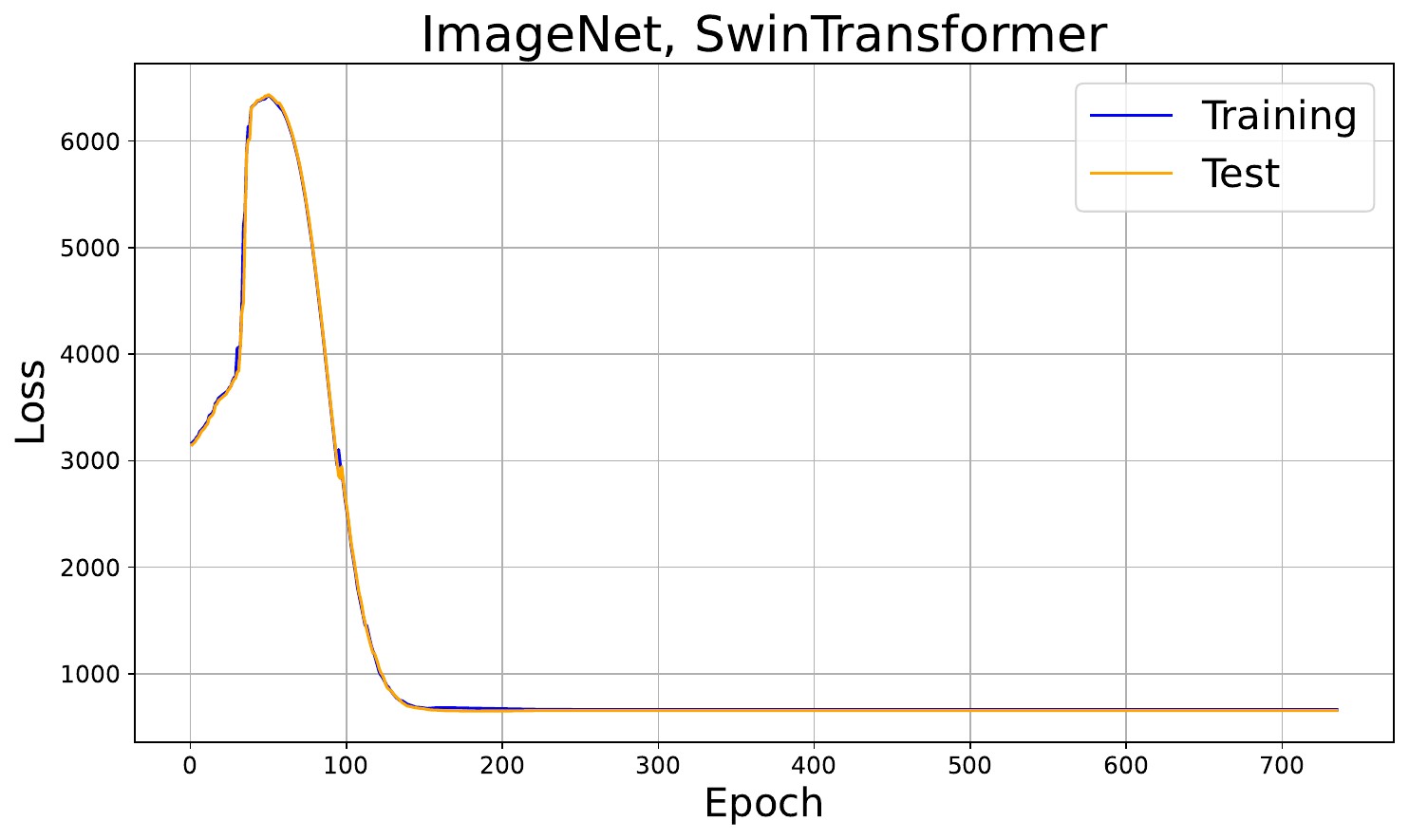}
    \end{minipage}           
    \caption{Training and test loss curves of the proposed method - II}
\end{figure}

\section{Visualized Performance Comparison with Existing Methods}
\label{sec:apdx-visulaizationcomparison}
{\mdseries

\noindent

\begin{landscape}
\begin{figure}[ht]
\centering
\includegraphics[width=1.25\textwidth]{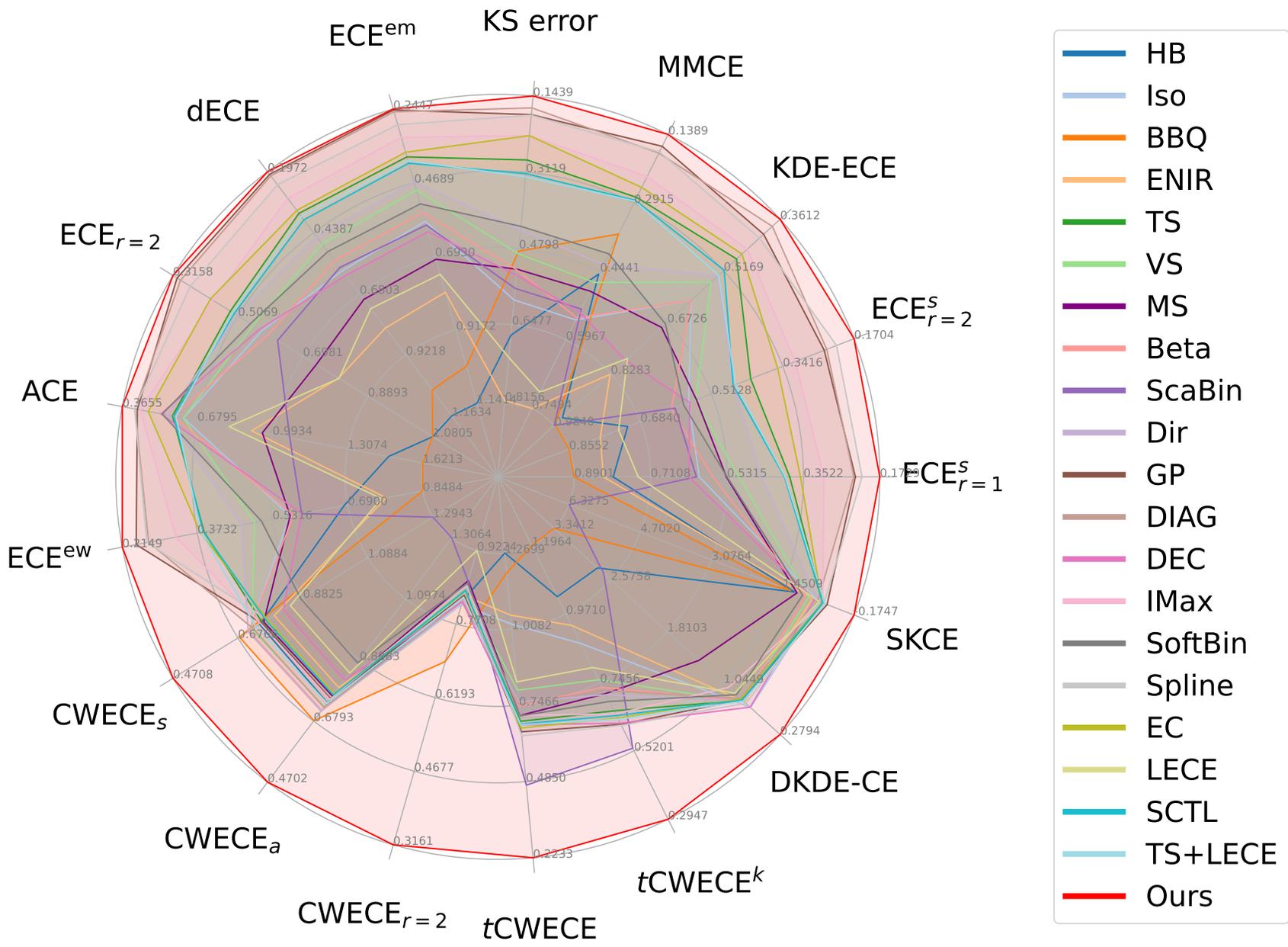}
\caption{Overall comparison by average relative calibration error (ARE) across all metrics}
\label{fig:image1}
\end{figure}

\begin{figure}[ht]
\centering
\includegraphics[width=1.25\textwidth]{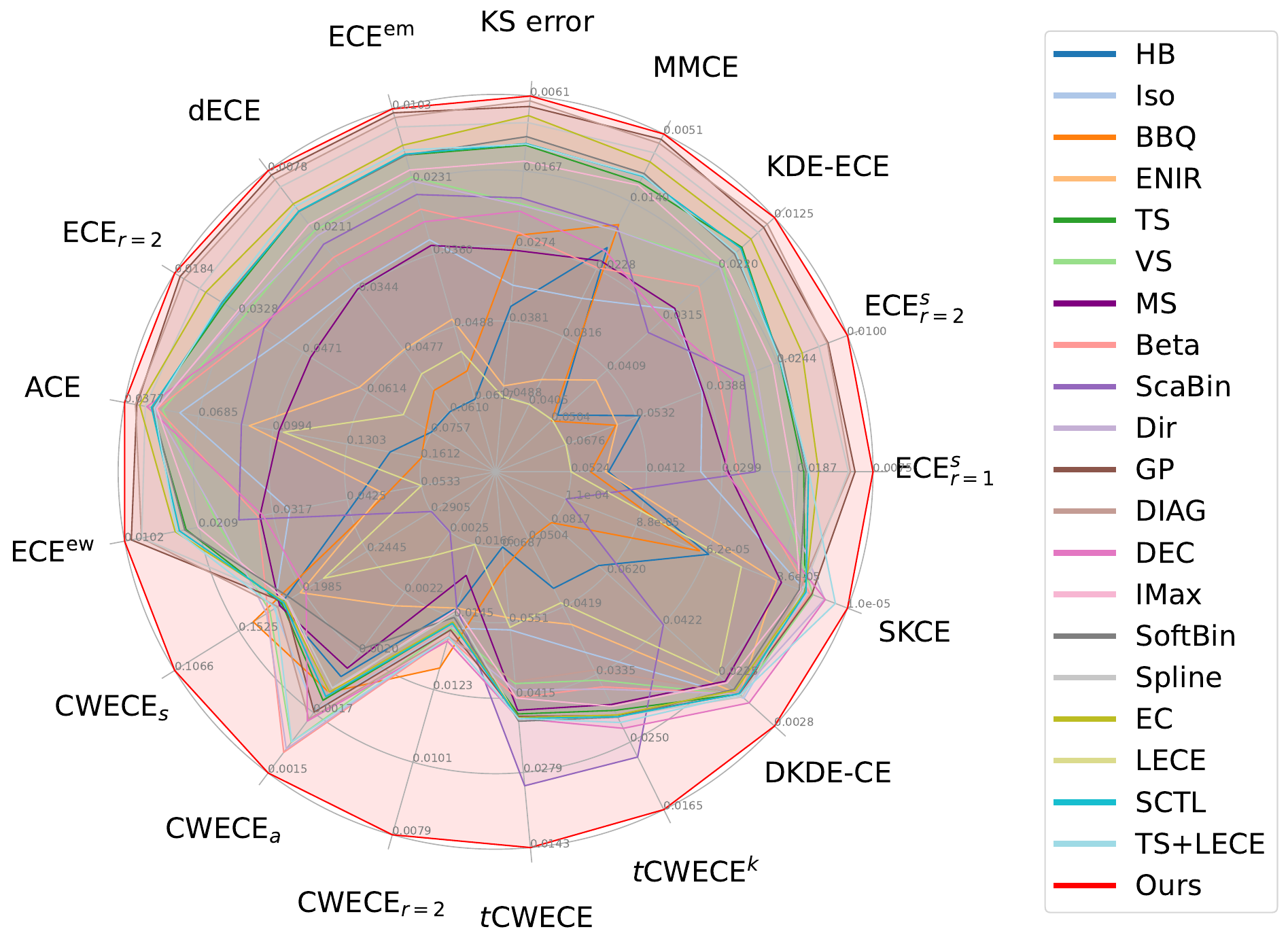}
\caption{Overall comparison by average calibration error (AE) across all metrics}
\label{fig:image2}
\end{figure}
\end{landscape}

\begin{figure}[htb]
    \centering
    \begin{minipage}{0.48\textwidth}
        \includegraphics[width=\linewidth]{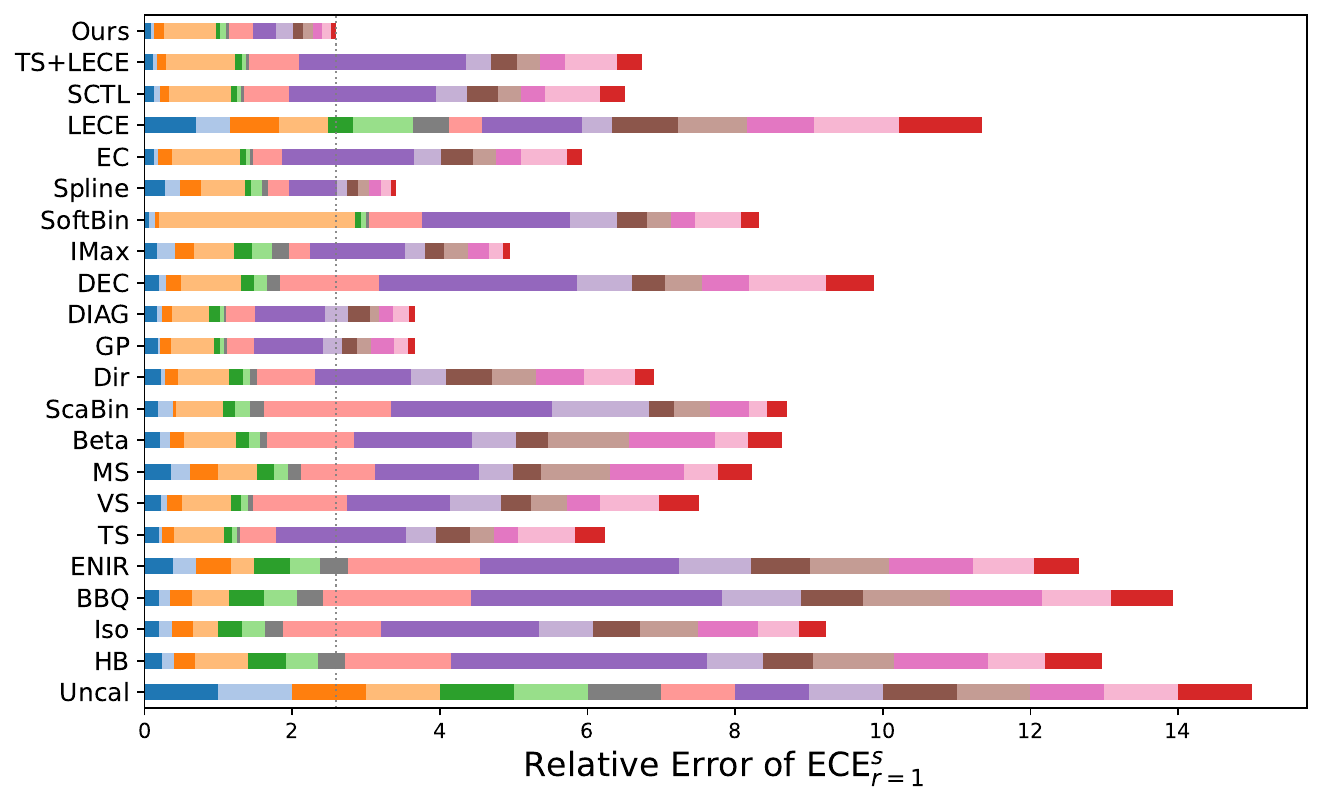}
    \end{minipage}
    \begin{minipage}{0.48\textwidth}
        \includegraphics[width=\linewidth]{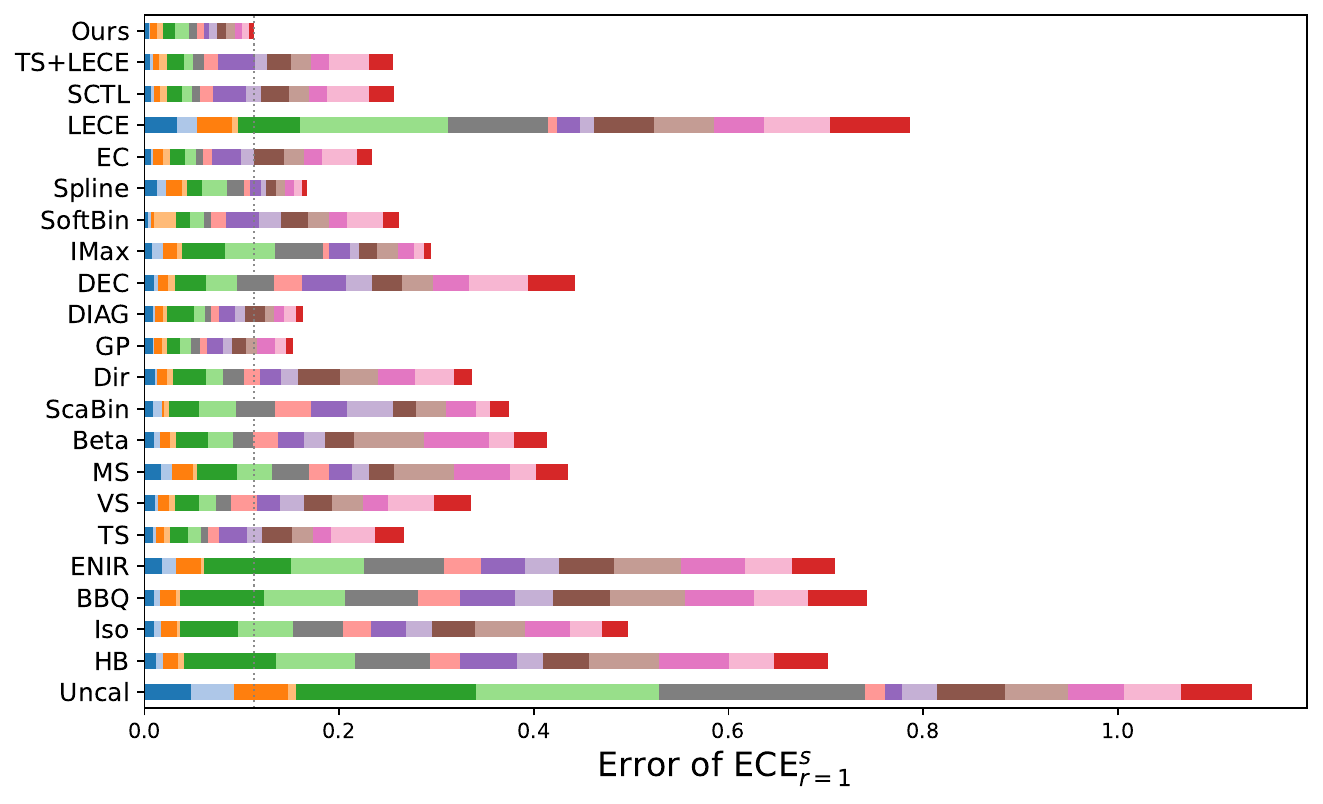}
    \end{minipage}
    \\[5pt]
    \begin{minipage}{0.48\textwidth}
        \includegraphics[width=\linewidth]{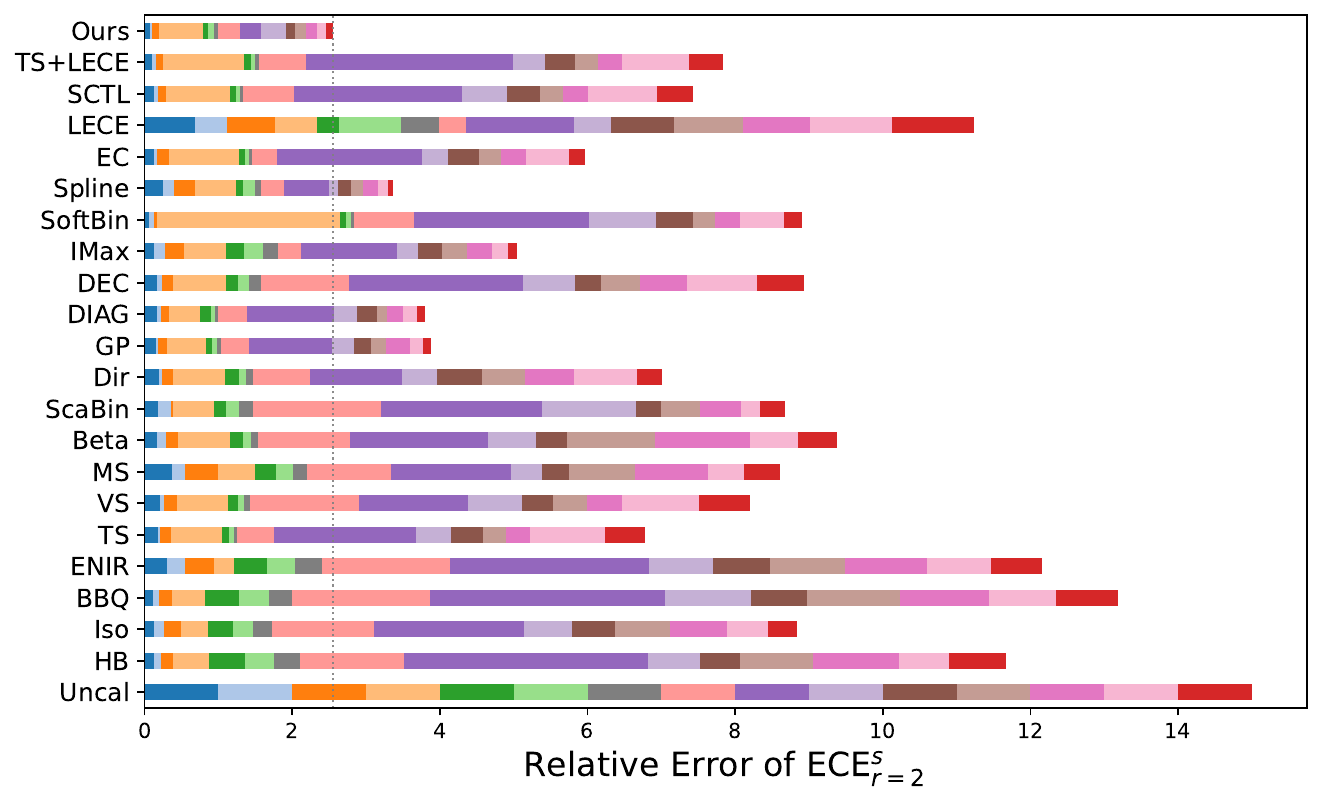}
    \end{minipage}
    \begin{minipage}{0.48\textwidth}
        \includegraphics[width=\linewidth]{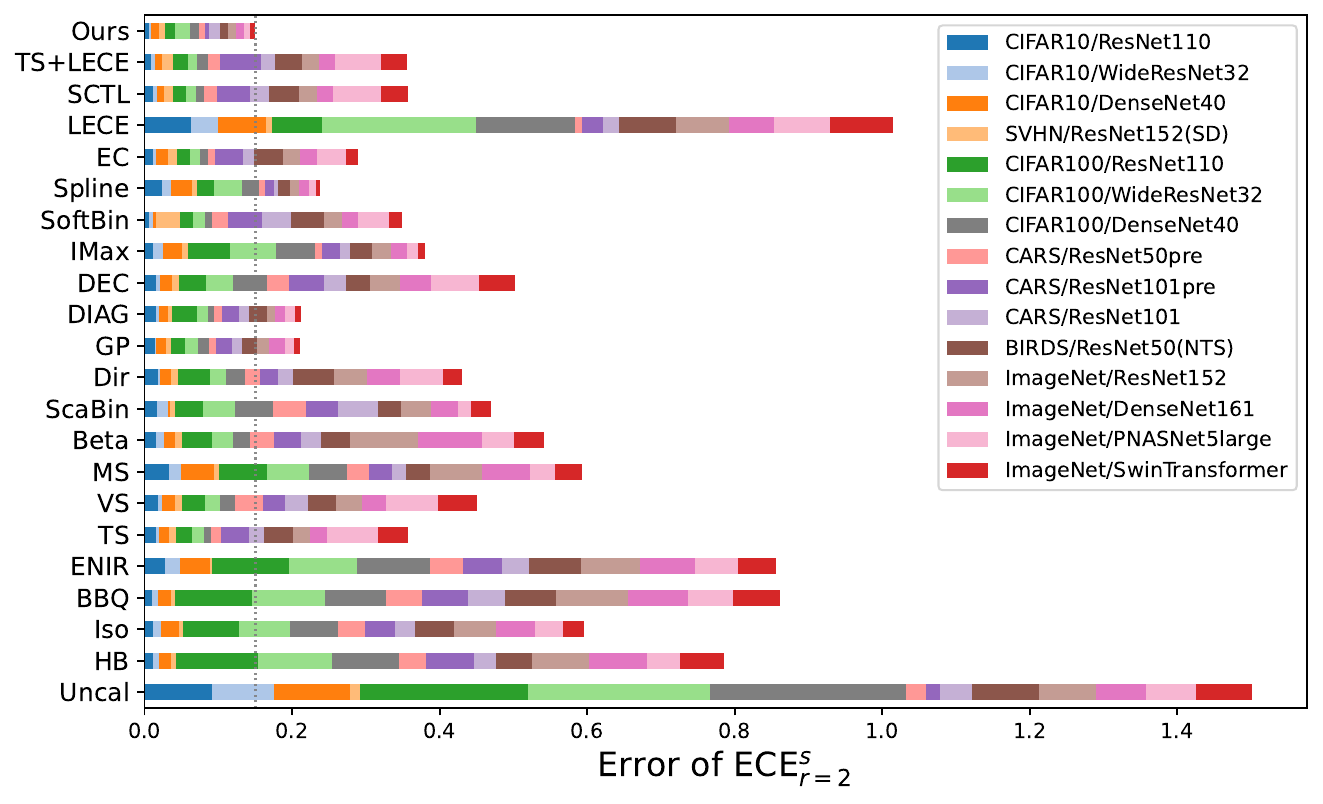}
    \end{minipage}
    \\[5pt]
    \begin{minipage}{0.48\textwidth}
        \includegraphics[width=\linewidth]{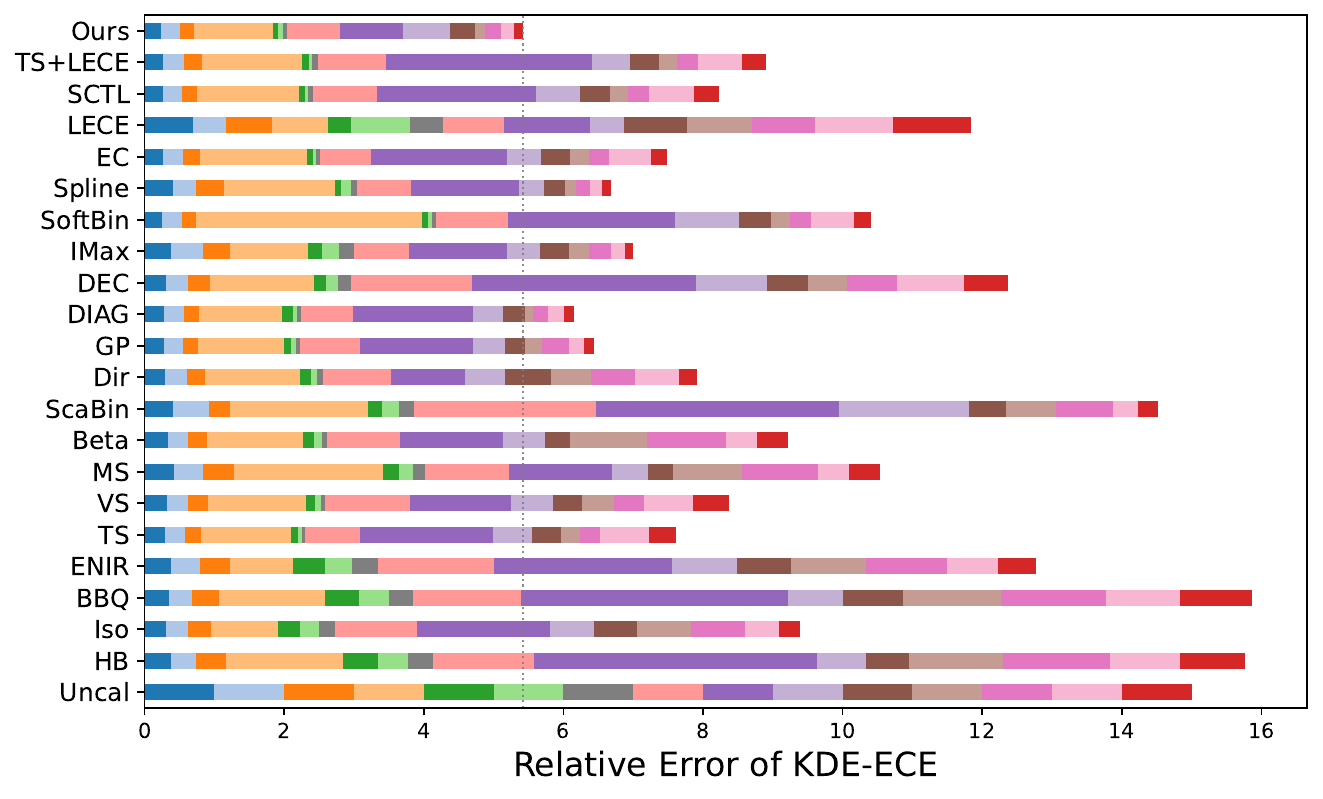}
    \end{minipage}
    \begin{minipage}{0.48\textwidth}
        \includegraphics[width=\linewidth]{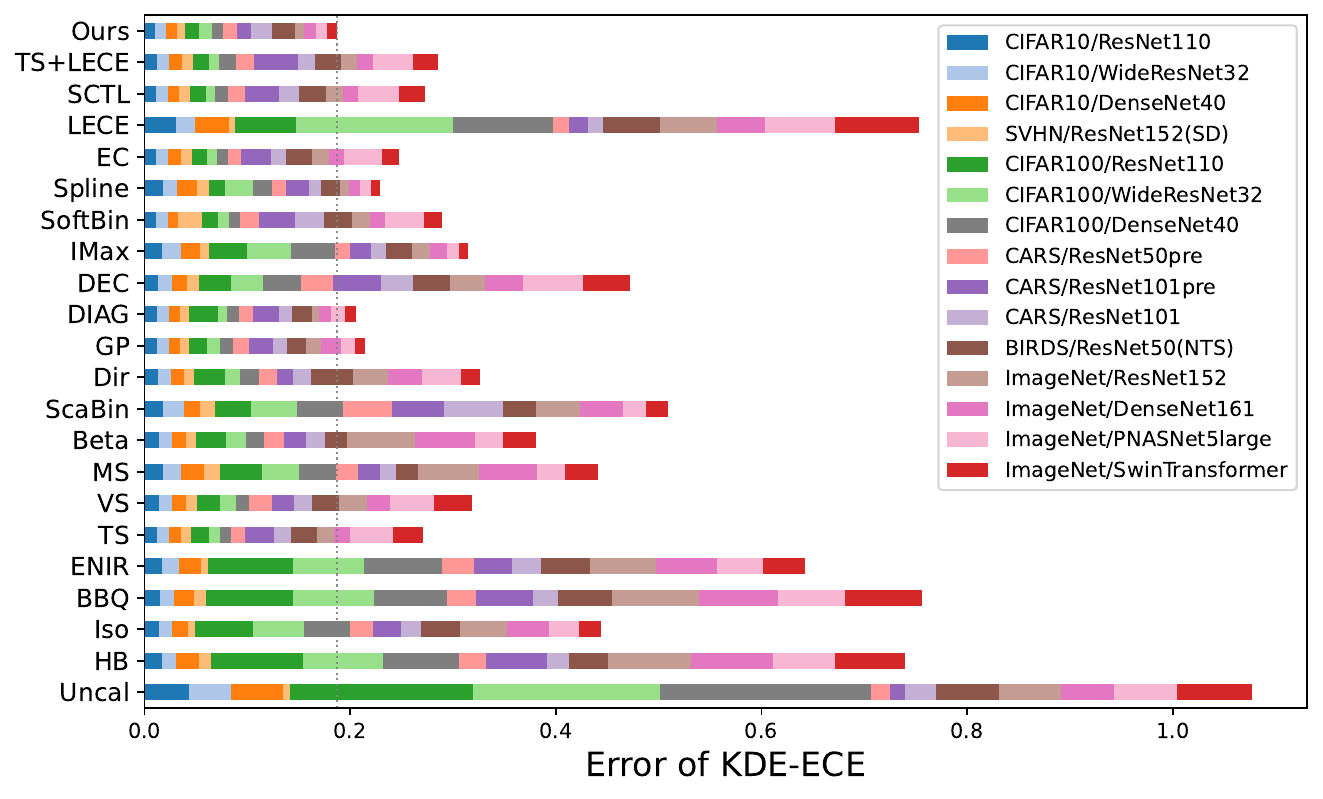}
    \end{minipage}
    \\[5pt]    
    \begin{minipage}{0.48\textwidth}
        \includegraphics[width=\linewidth]{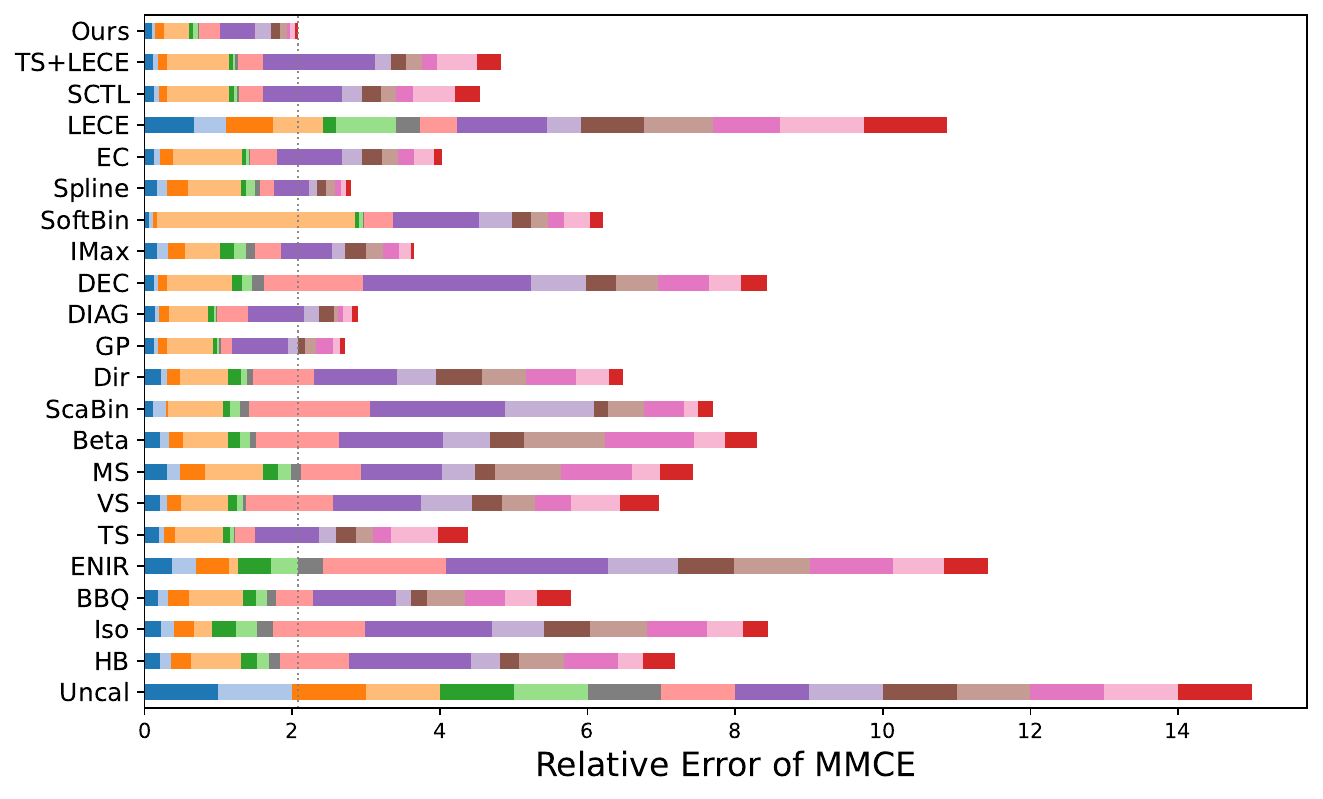}
    \end{minipage}
    \begin{minipage}{0.48\textwidth}
        \includegraphics[width=\linewidth]{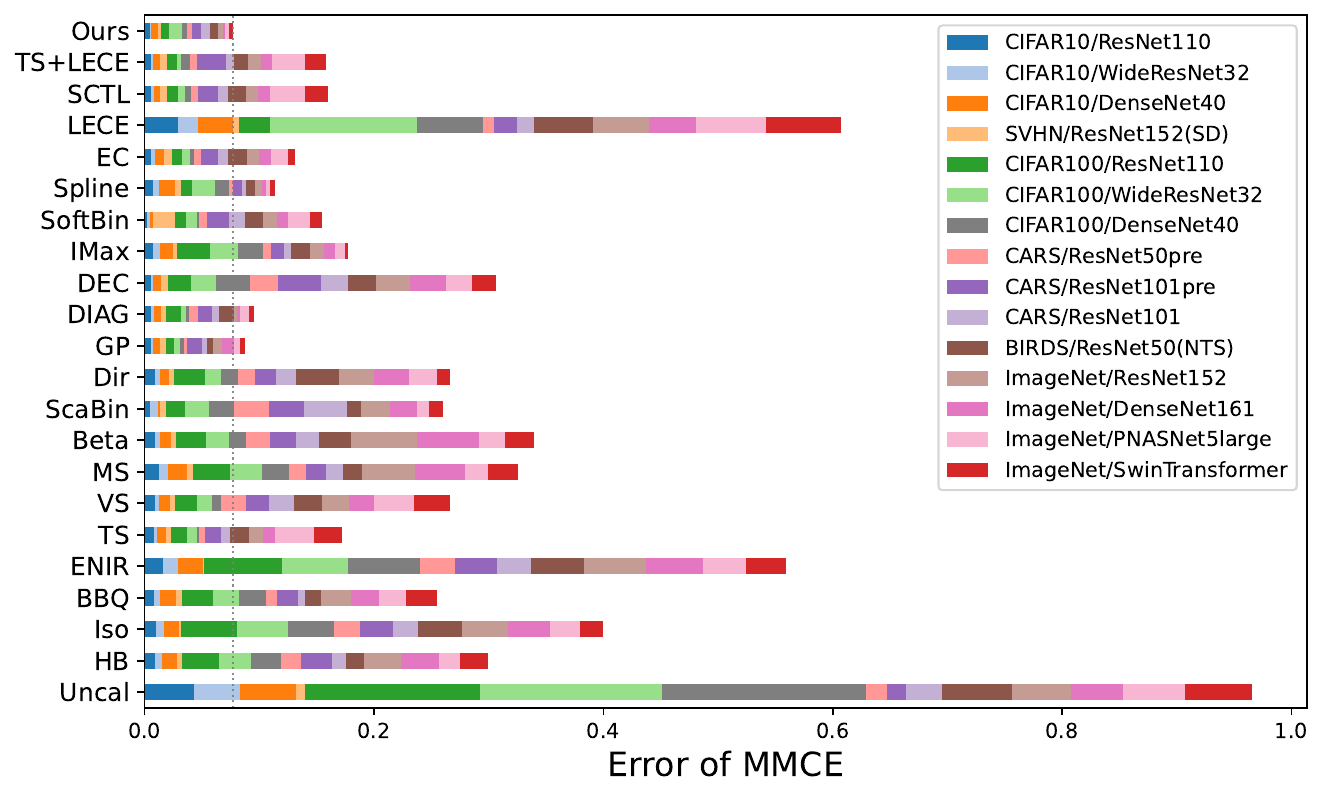}
    \end{minipage}        
    \caption{Metric-specific relative/absolute calibration errors across all tasks – Part I}
\end{figure}

\begin{figure}[htb]
    \centering 
    \begin{minipage}{0.48\textwidth}
        \includegraphics[width=\linewidth]{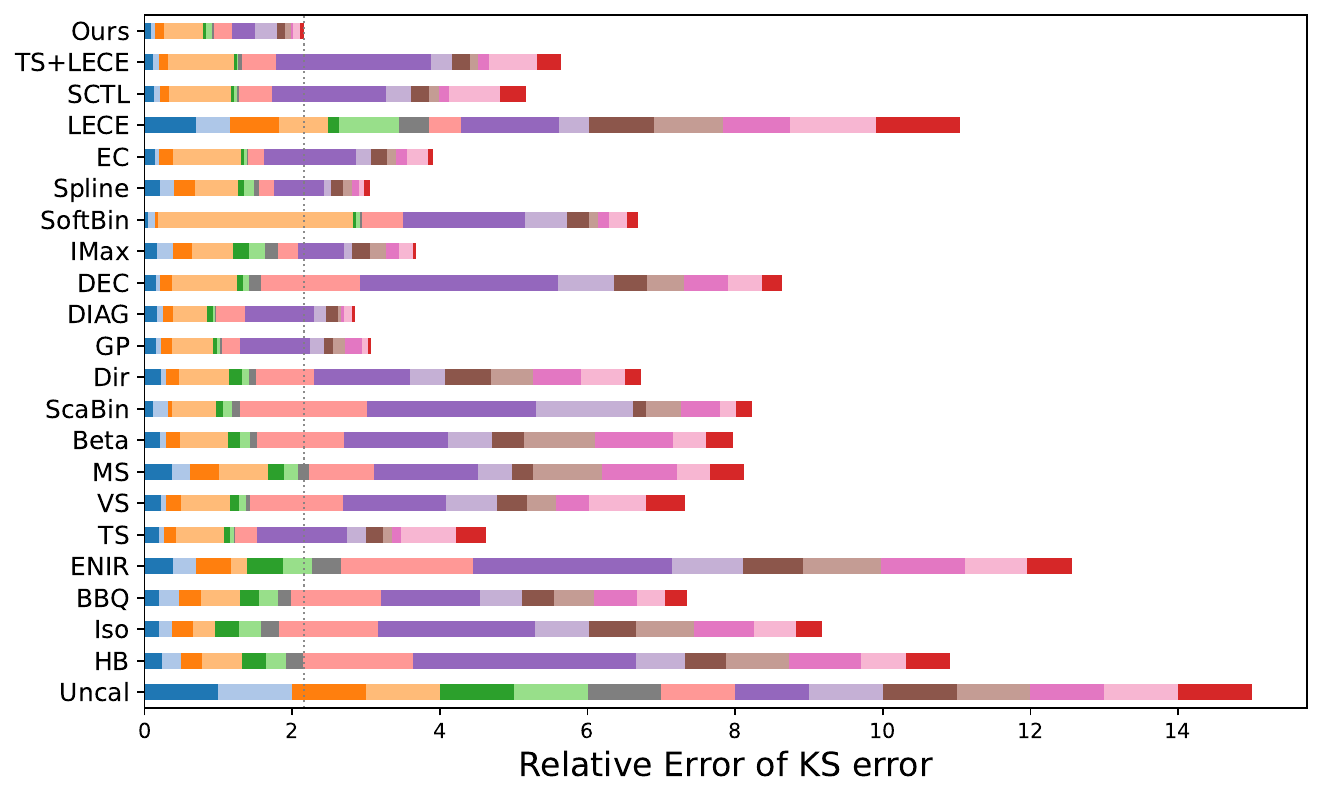}
    \end{minipage}
    \begin{minipage}{0.48\textwidth}
        \includegraphics[width=\linewidth]{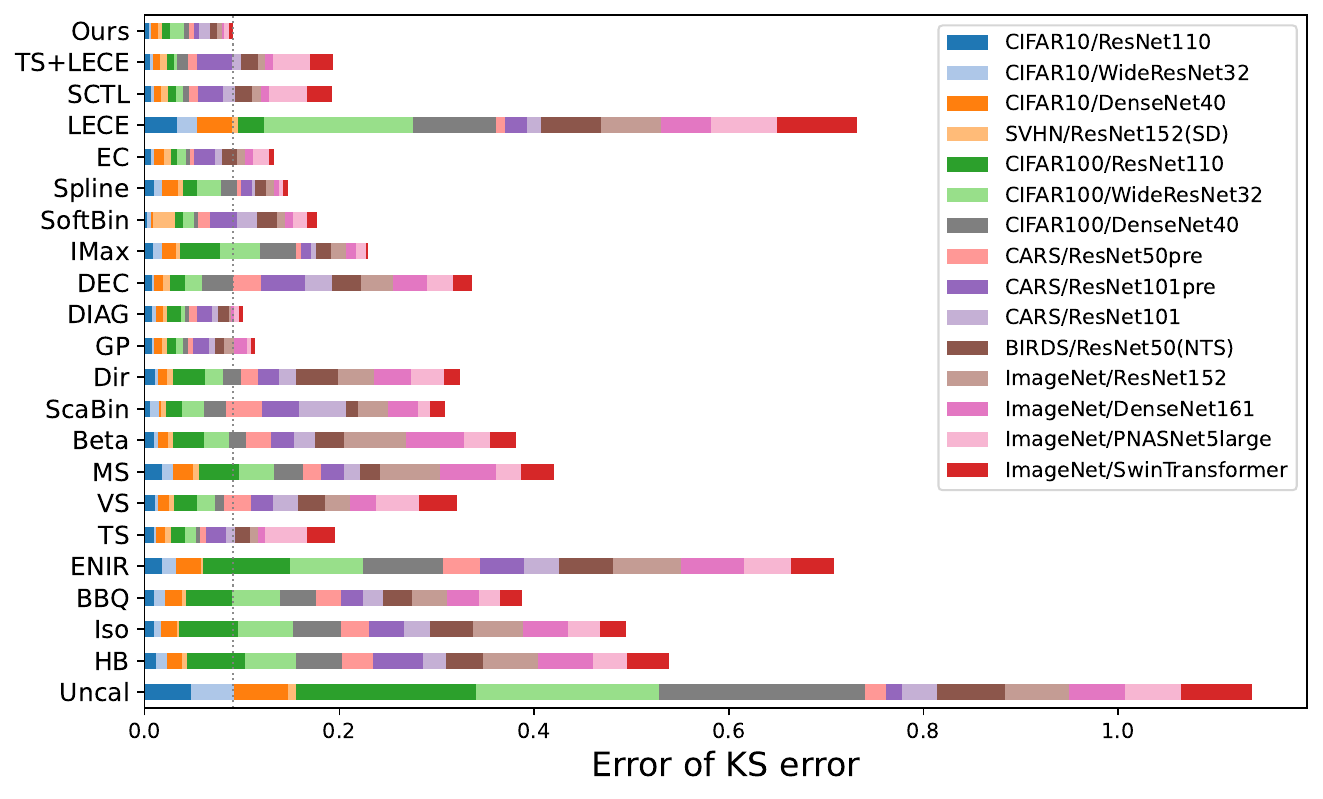}
    \end{minipage}        
    \\[5pt]    
    \begin{minipage}{0.48\textwidth}
        \includegraphics[width=\linewidth]{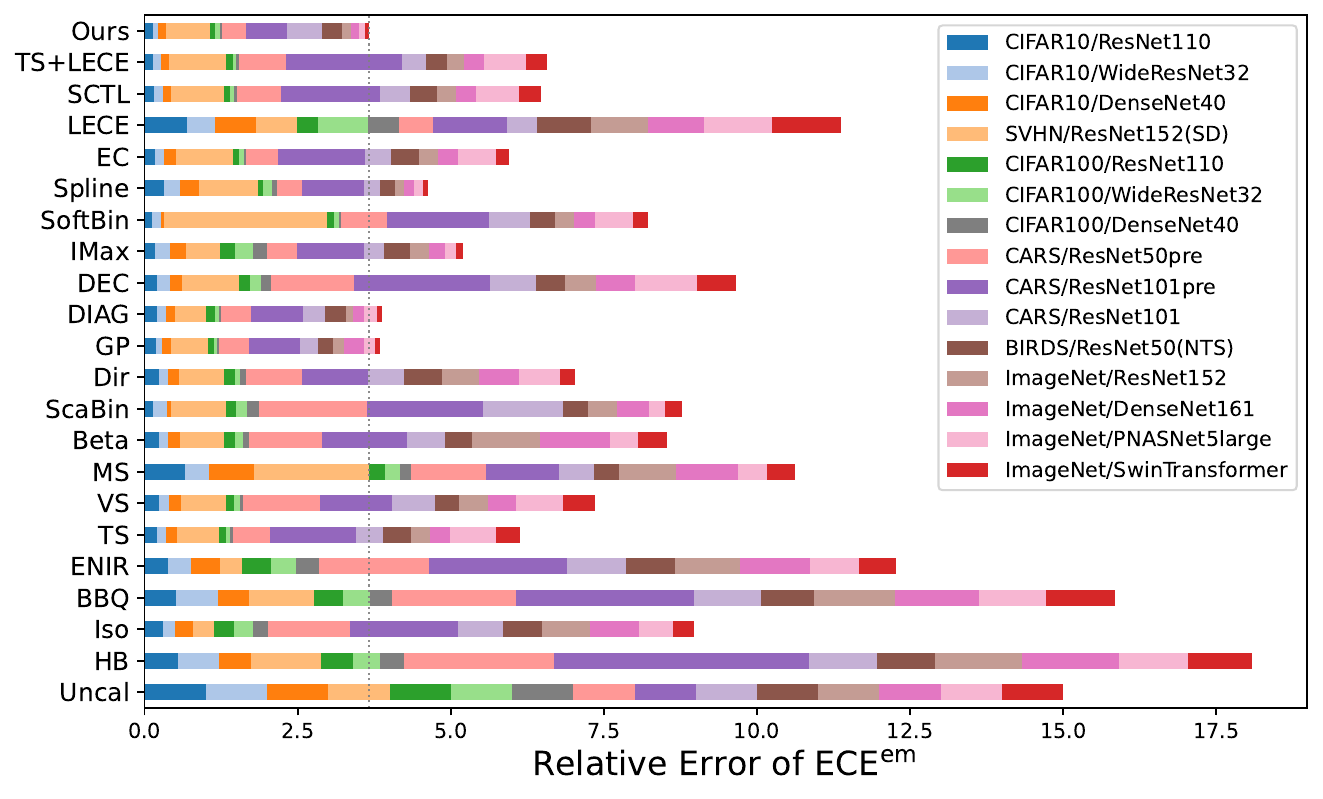}
    \end{minipage}
    \begin{minipage}{0.48\textwidth}
        \includegraphics[width=\linewidth]{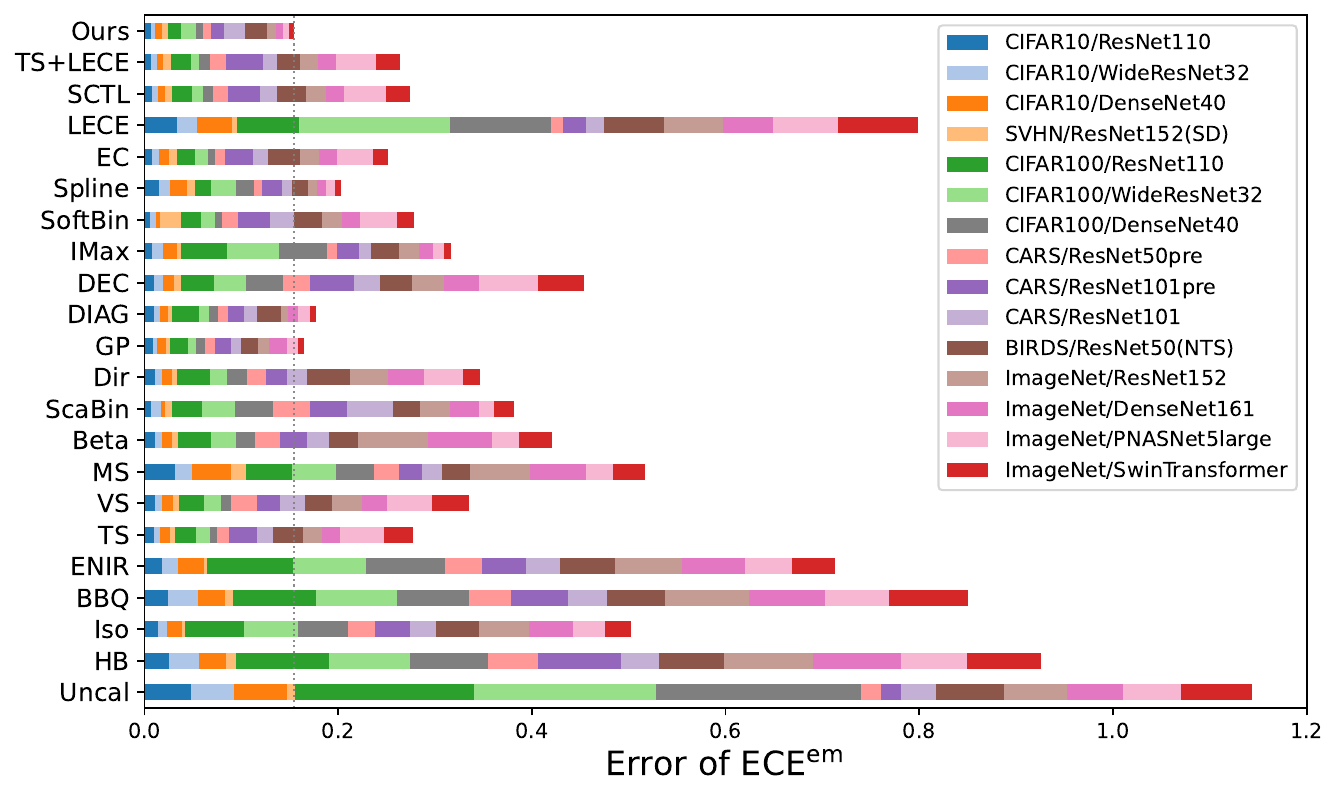}
    \end{minipage}          
    \\[5pt]   
    \begin{minipage}{0.48\textwidth}
        \includegraphics[width=\linewidth]{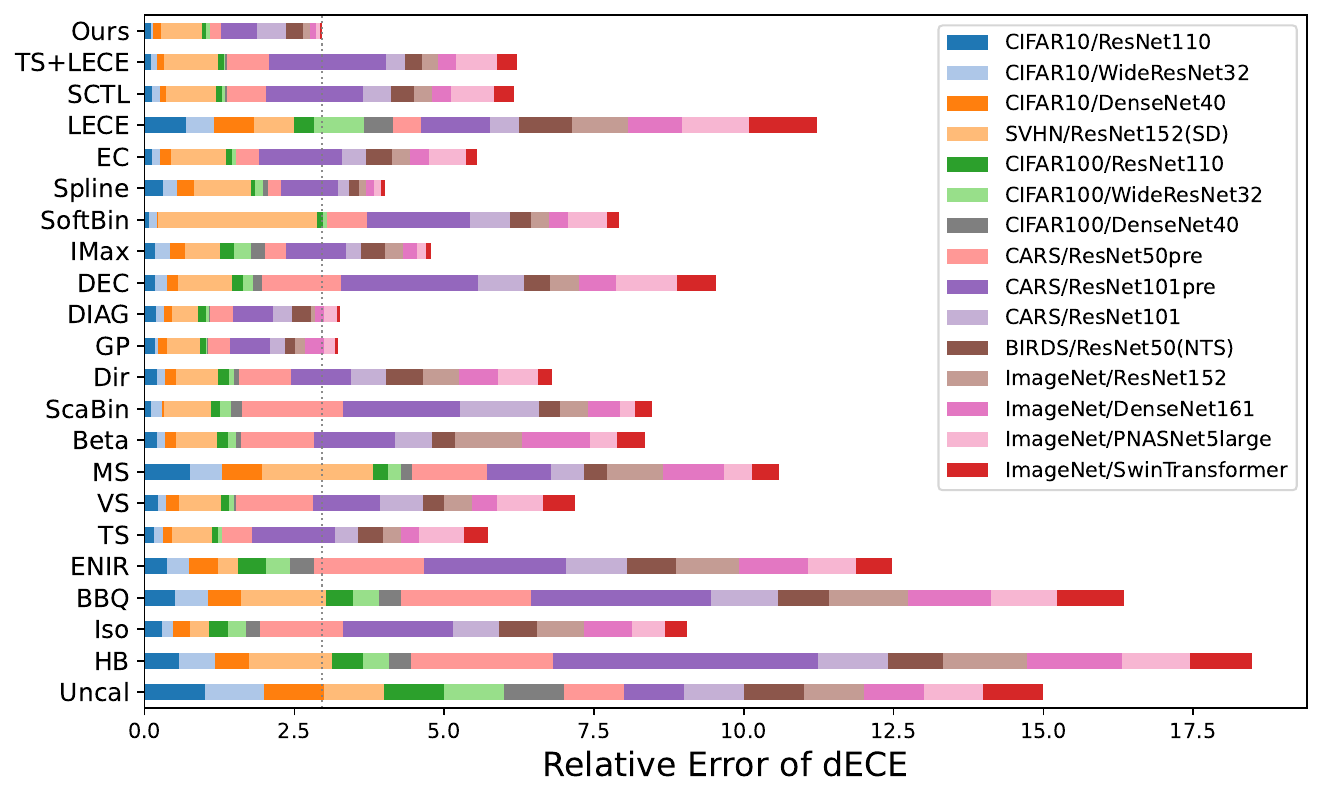}
    \end{minipage}
    \begin{minipage}{0.48\textwidth}
        \includegraphics[width=\linewidth]{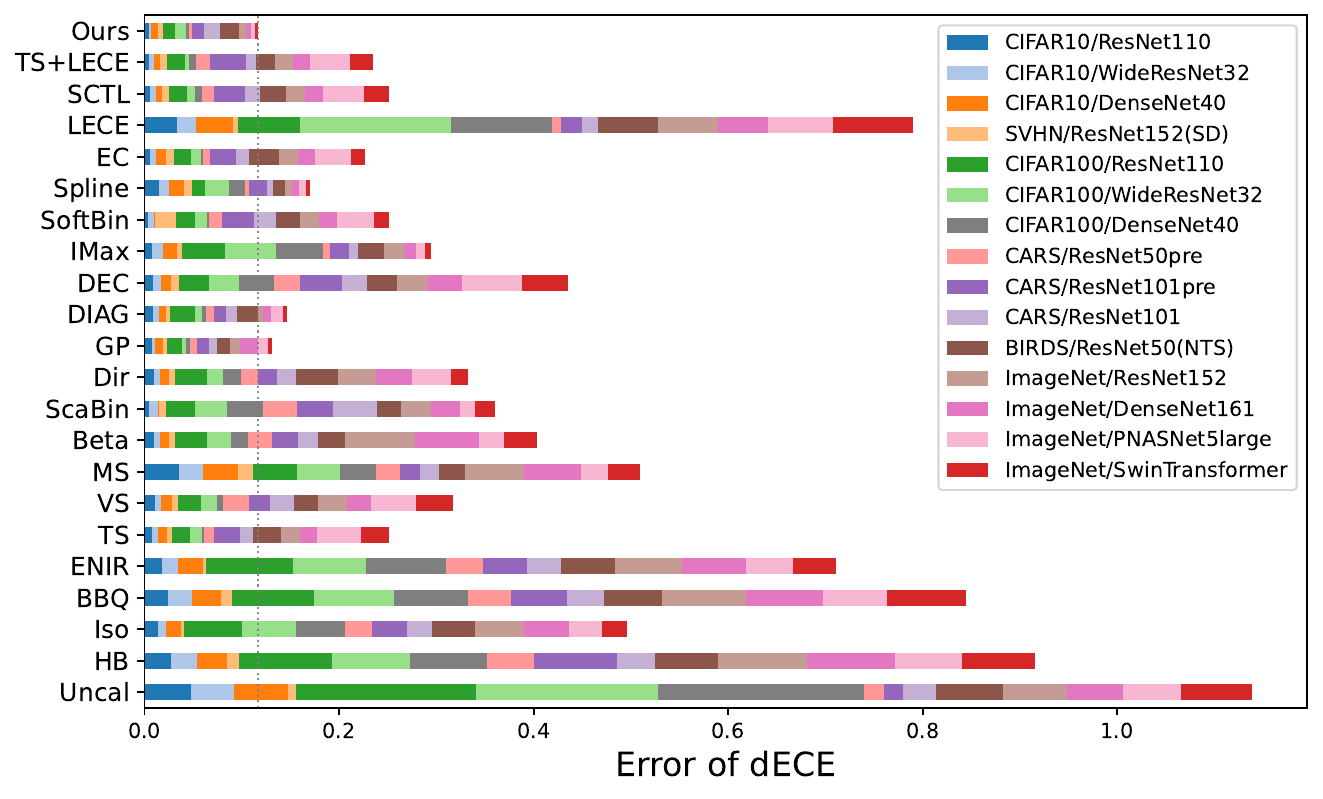}
    \end{minipage}
    \\[5pt]    
    \begin{minipage}{0.48\textwidth}
        \includegraphics[width=\linewidth]{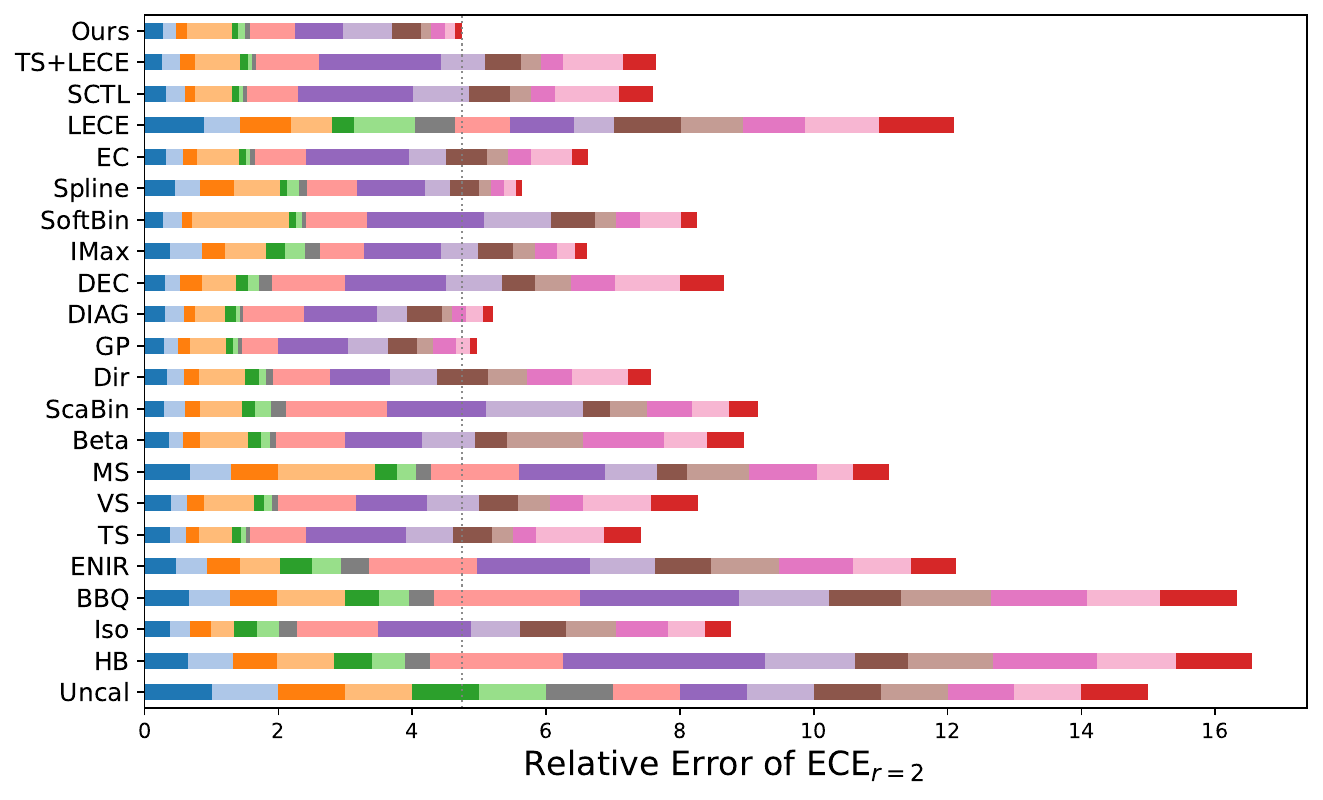}
    \end{minipage}
    \begin{minipage}{0.48\textwidth}
        \includegraphics[width=\linewidth]{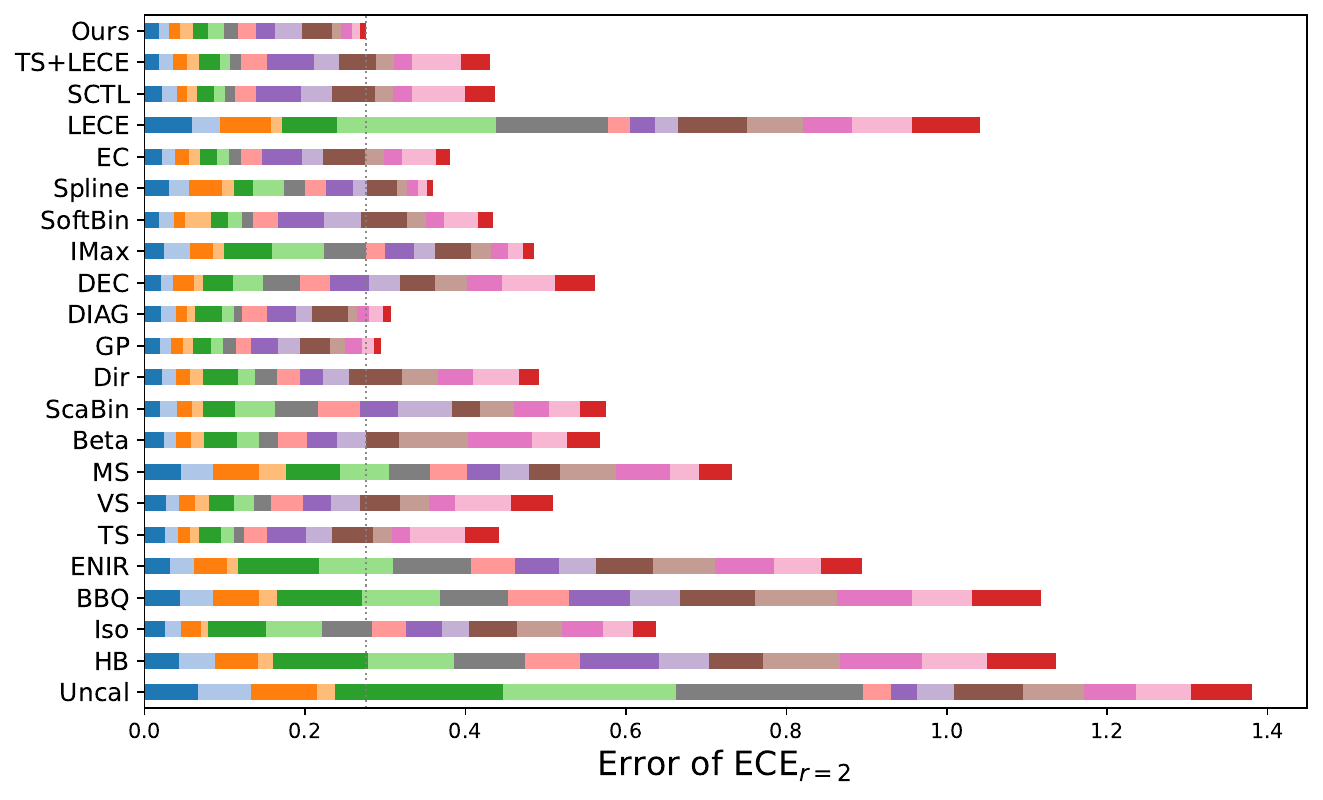}
    \end{minipage}    
    \caption{Metric-specific relative/absolute calibration errors across all tasks – Part II}
\end{figure}

\begin{figure}[htb]
    \centering 
    \begin{minipage}{0.48\textwidth}
        \includegraphics[width=\linewidth]{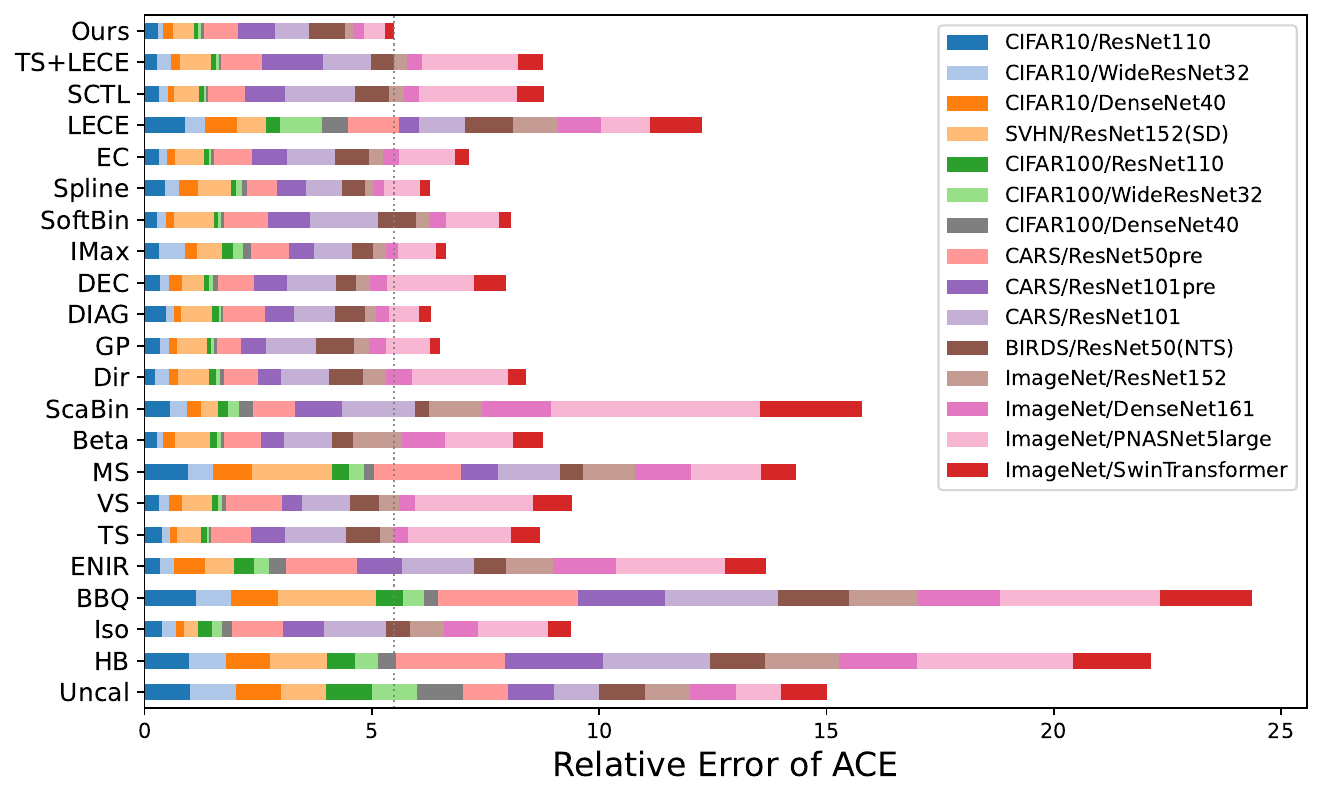}
    \end{minipage}
    \begin{minipage}{0.48\textwidth}
        \includegraphics[width=\linewidth]{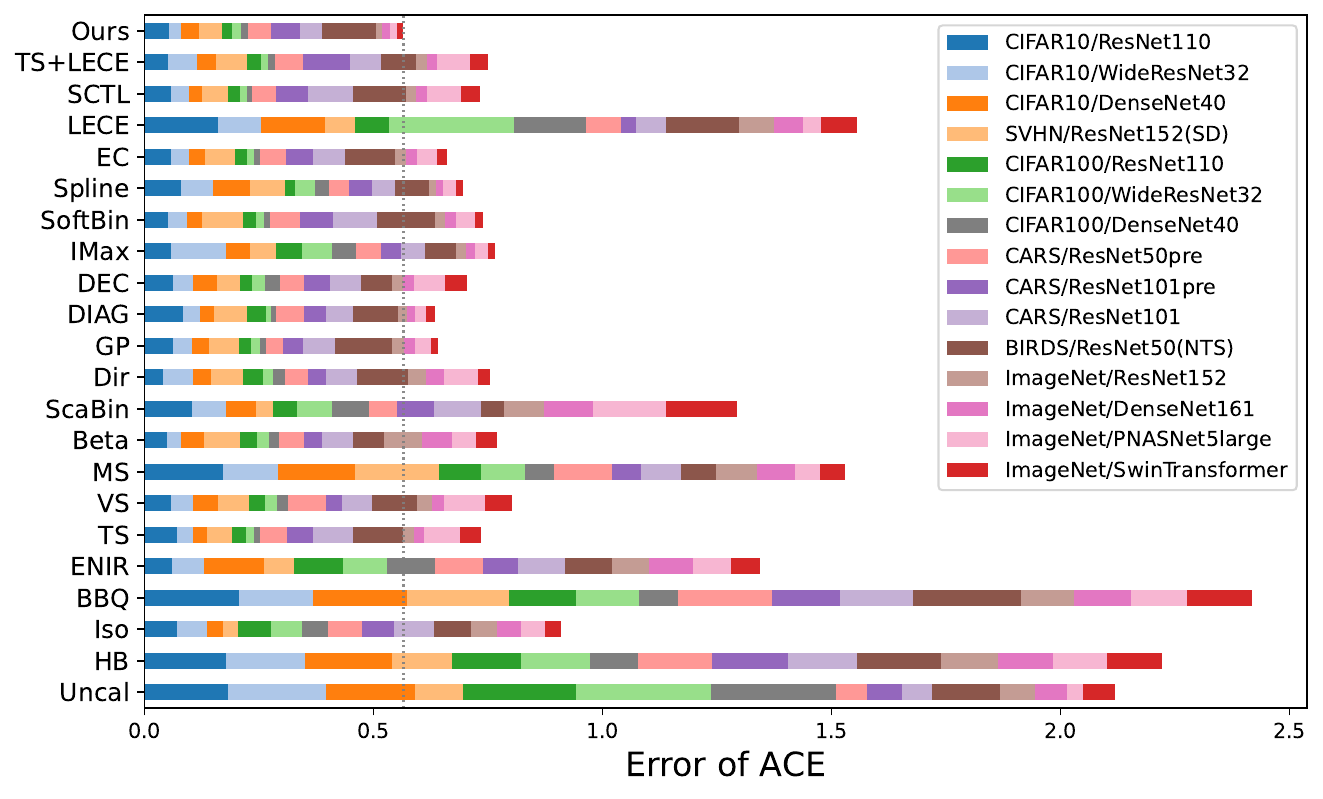}
    \end{minipage}        
    \\[5pt]    
    \begin{minipage}{0.48\textwidth}
        \includegraphics[width=\linewidth]{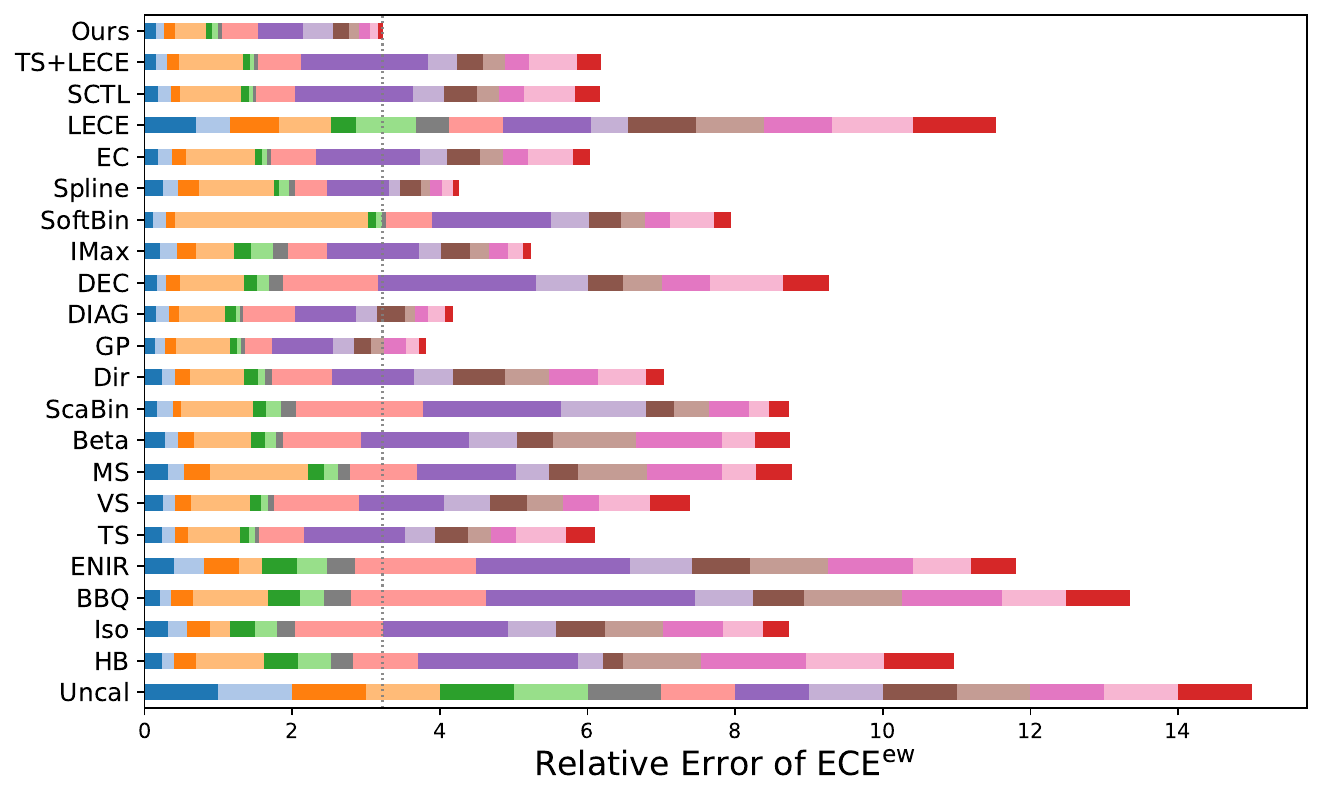}
    \end{minipage}
    \begin{minipage}{0.48\textwidth}
        \includegraphics[width=\linewidth]{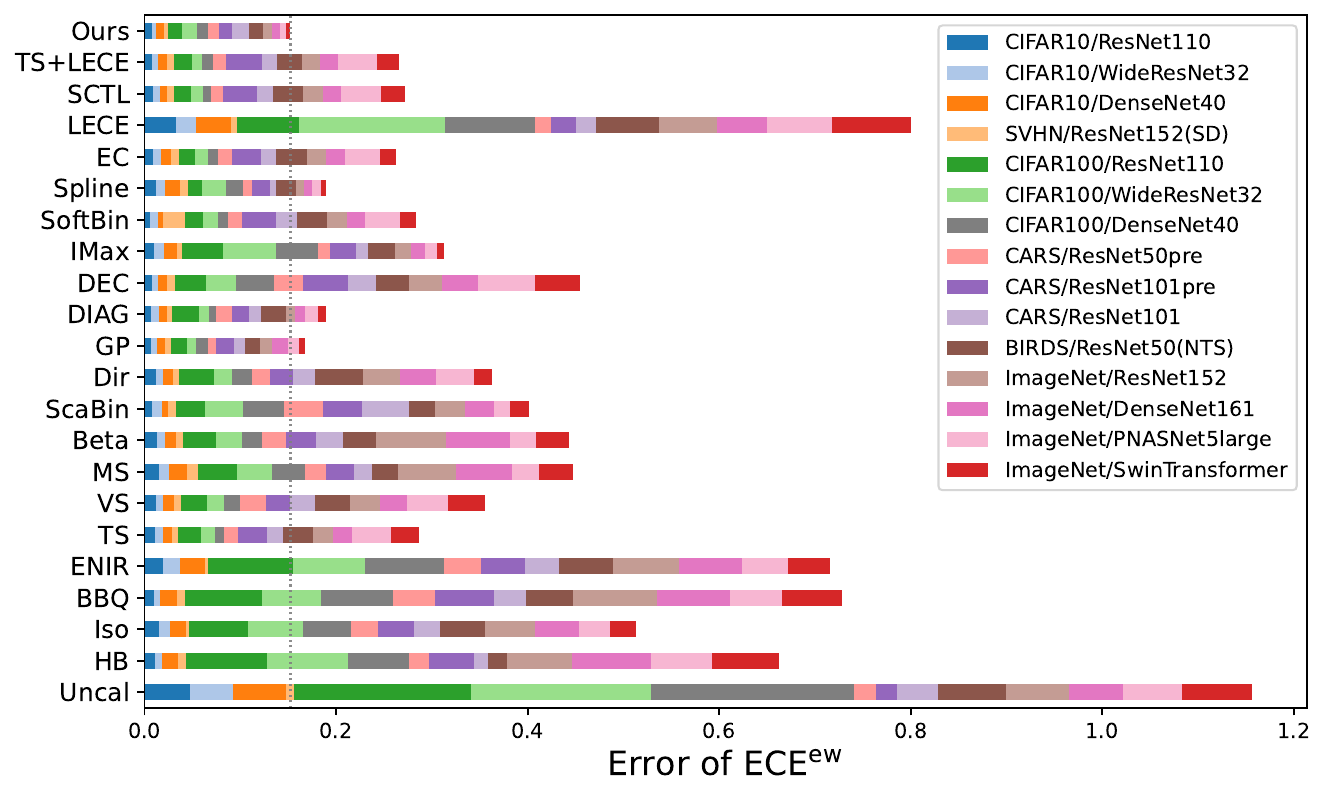}
    \end{minipage}      
    \\[5pt]  
    \begin{minipage}{0.48\textwidth}
        \includegraphics[width=\linewidth]{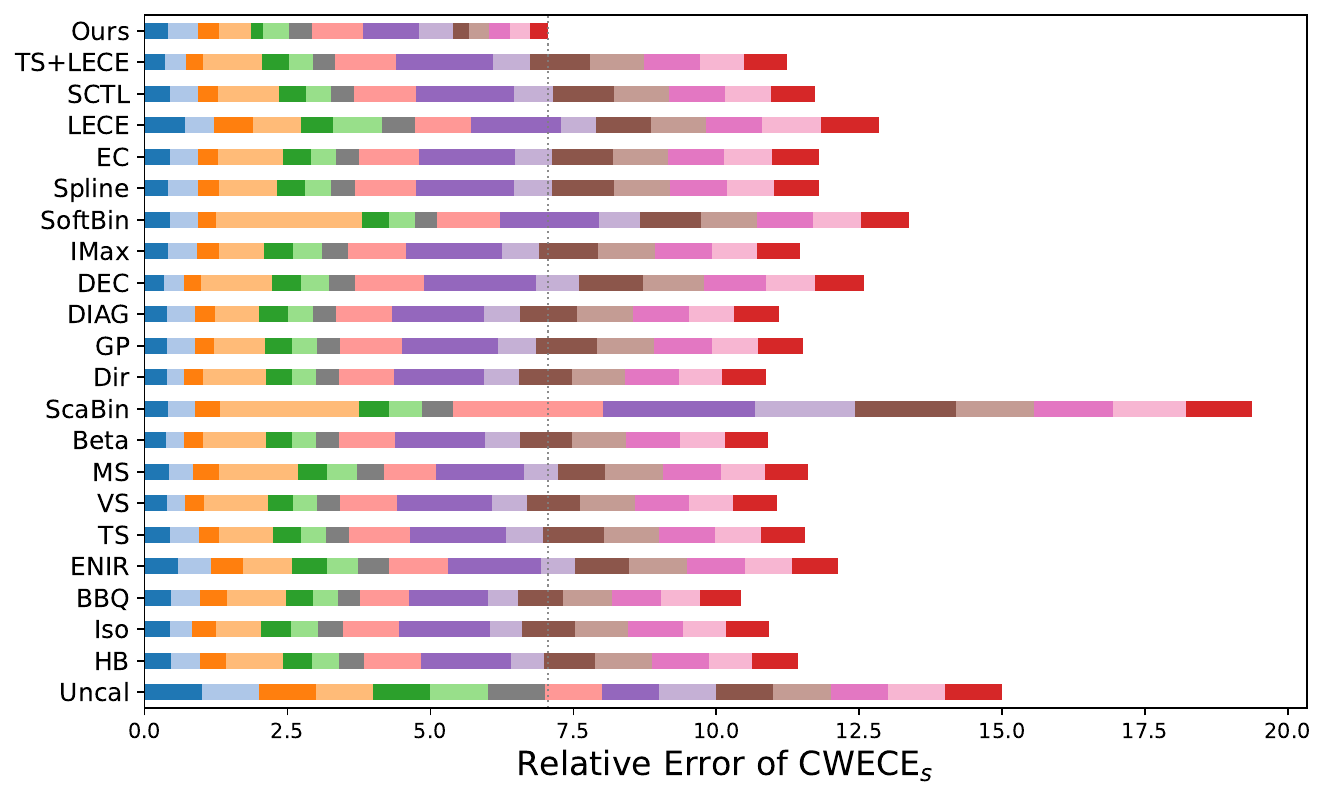}
    \end{minipage}
    \begin{minipage}{0.48\textwidth}
        \includegraphics[width=\linewidth]{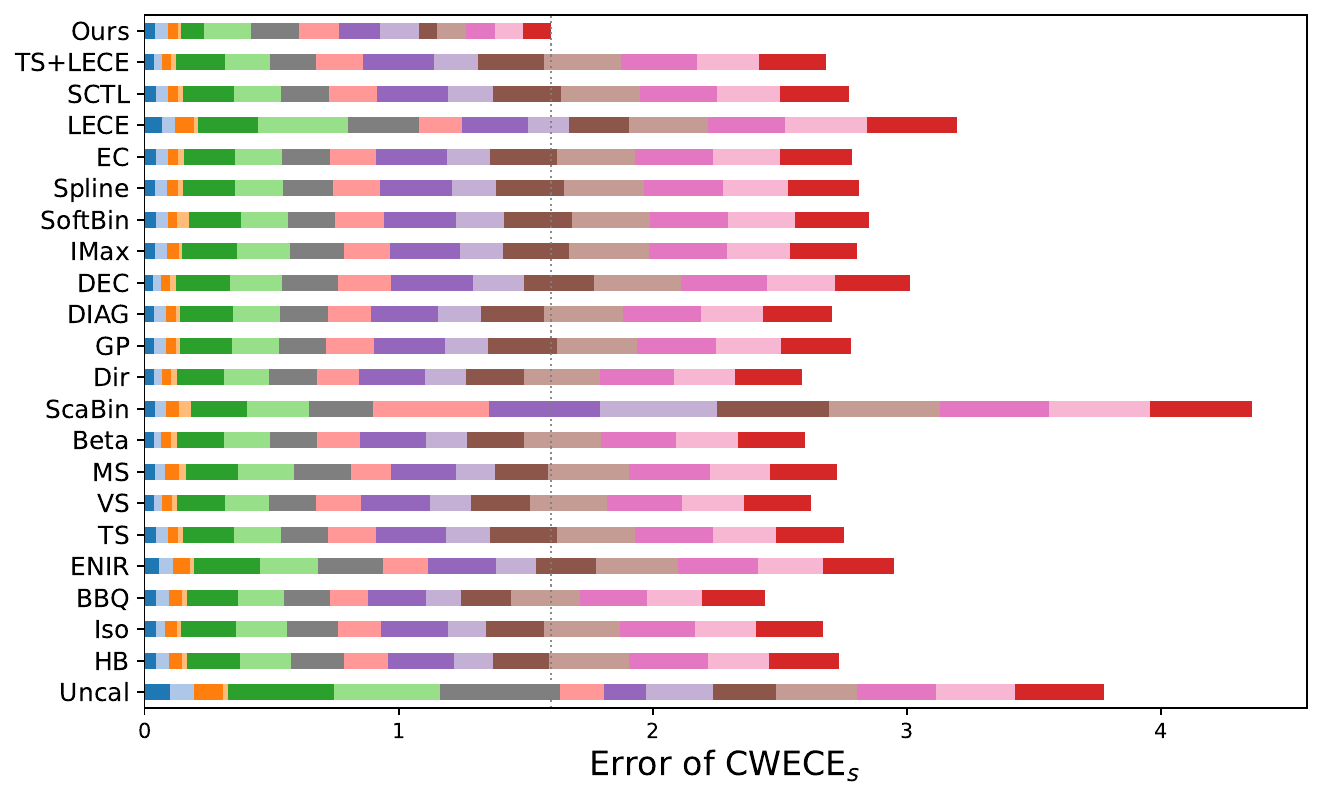}
    \end{minipage}
    \\[5pt]    
    \begin{minipage}{0.48\textwidth}
        \includegraphics[width=\linewidth]{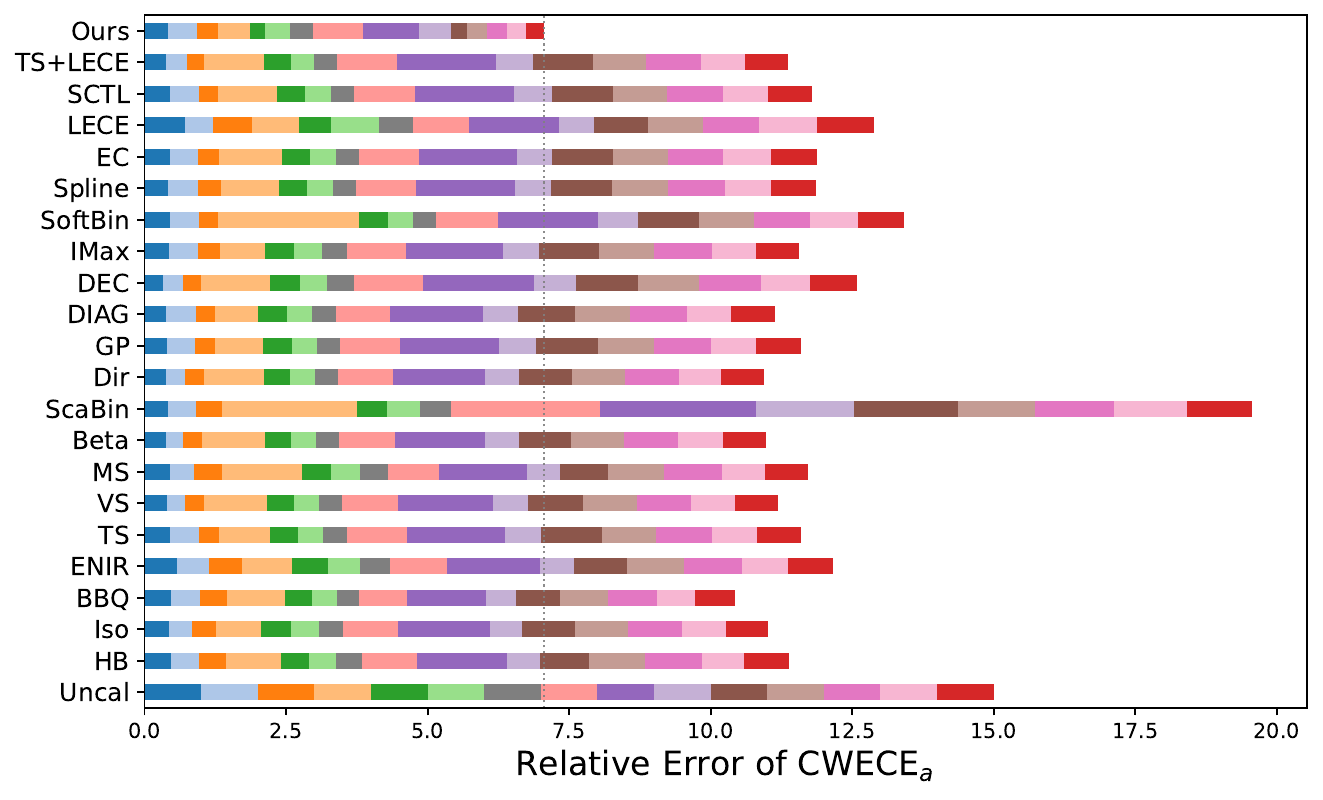}
    \end{minipage}
    \begin{minipage}{0.48\textwidth}
        \includegraphics[width=\linewidth]{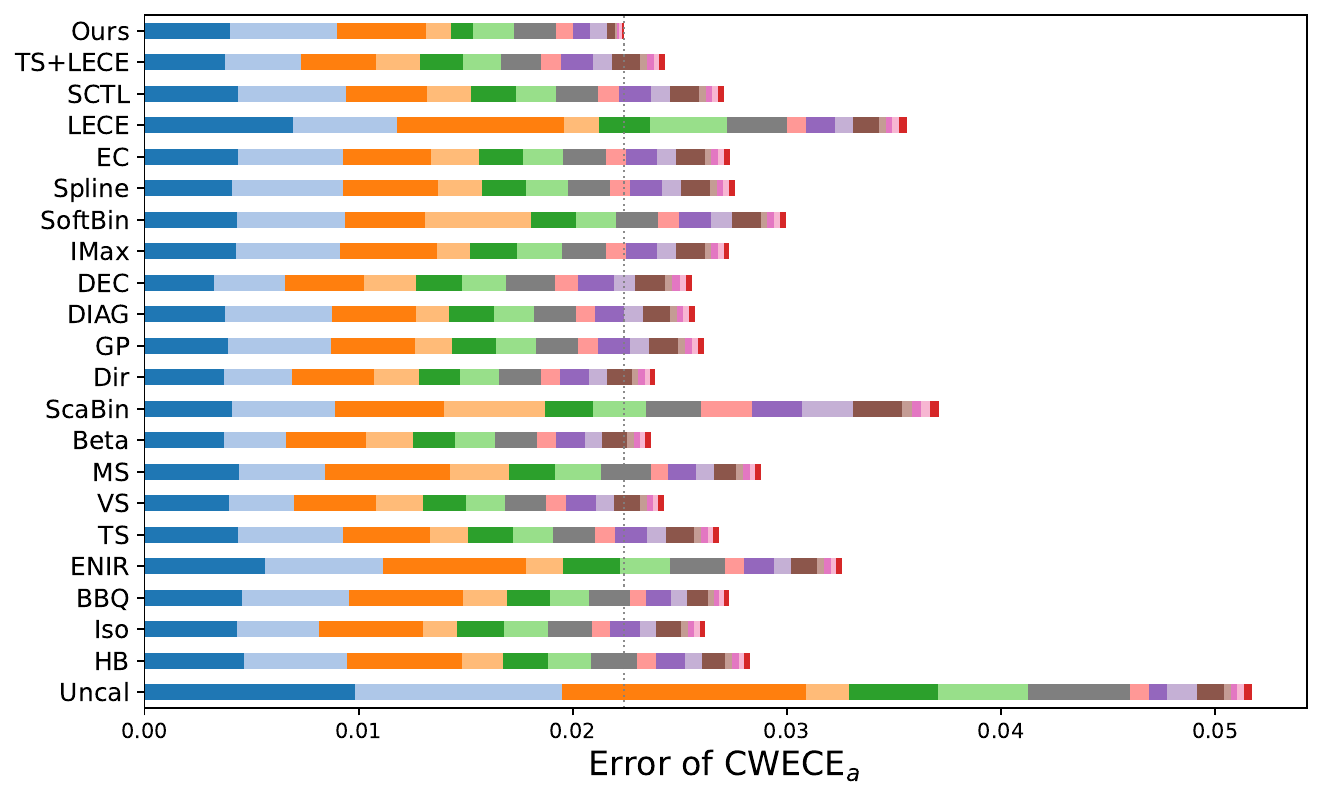}
    \end{minipage}    
    \caption{Metric-specific relative/absolute calibration errors across all tasks – Part III}
\end{figure}

\begin{figure}[htb]
    \centering
    \begin{minipage}{0.48\textwidth}
        \includegraphics[width=\linewidth]{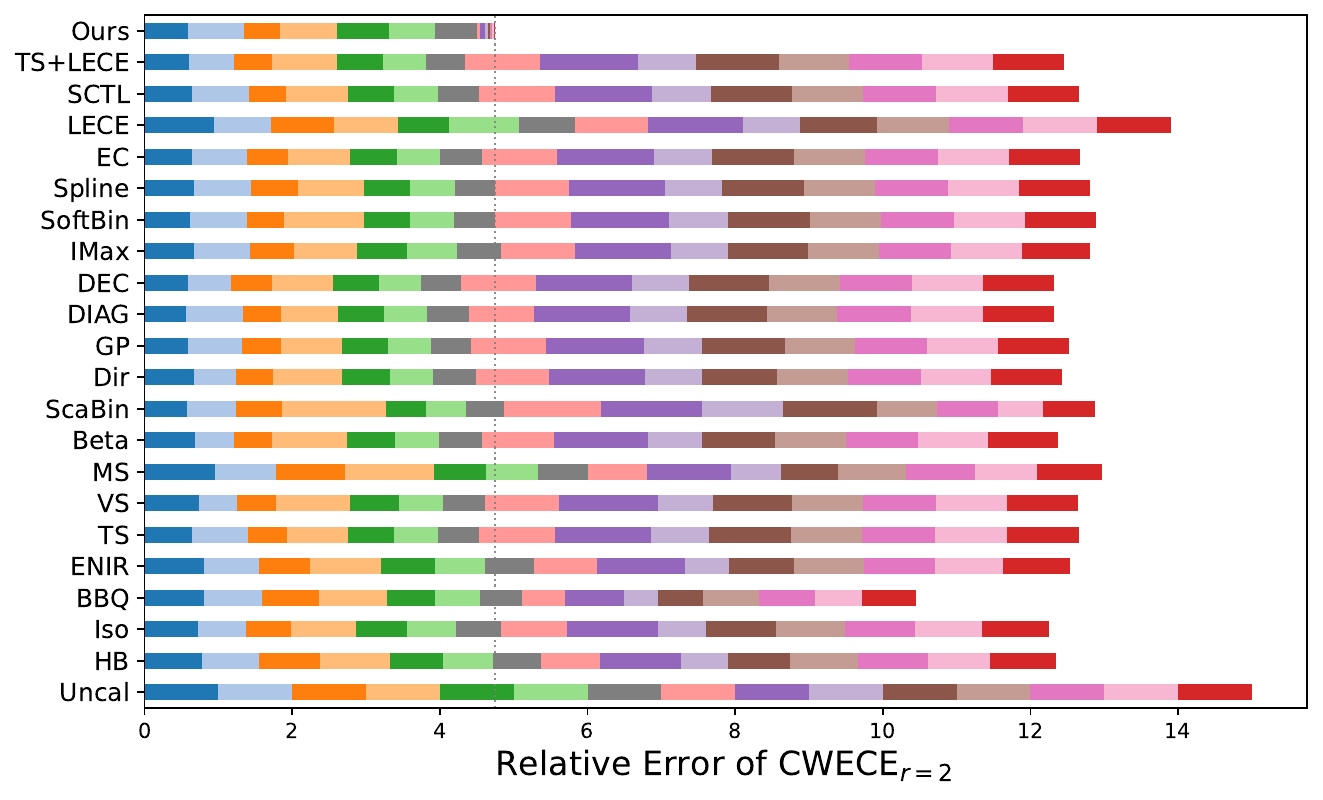}
    \end{minipage}
    \begin{minipage}{0.48\textwidth}
        \includegraphics[width=\linewidth]{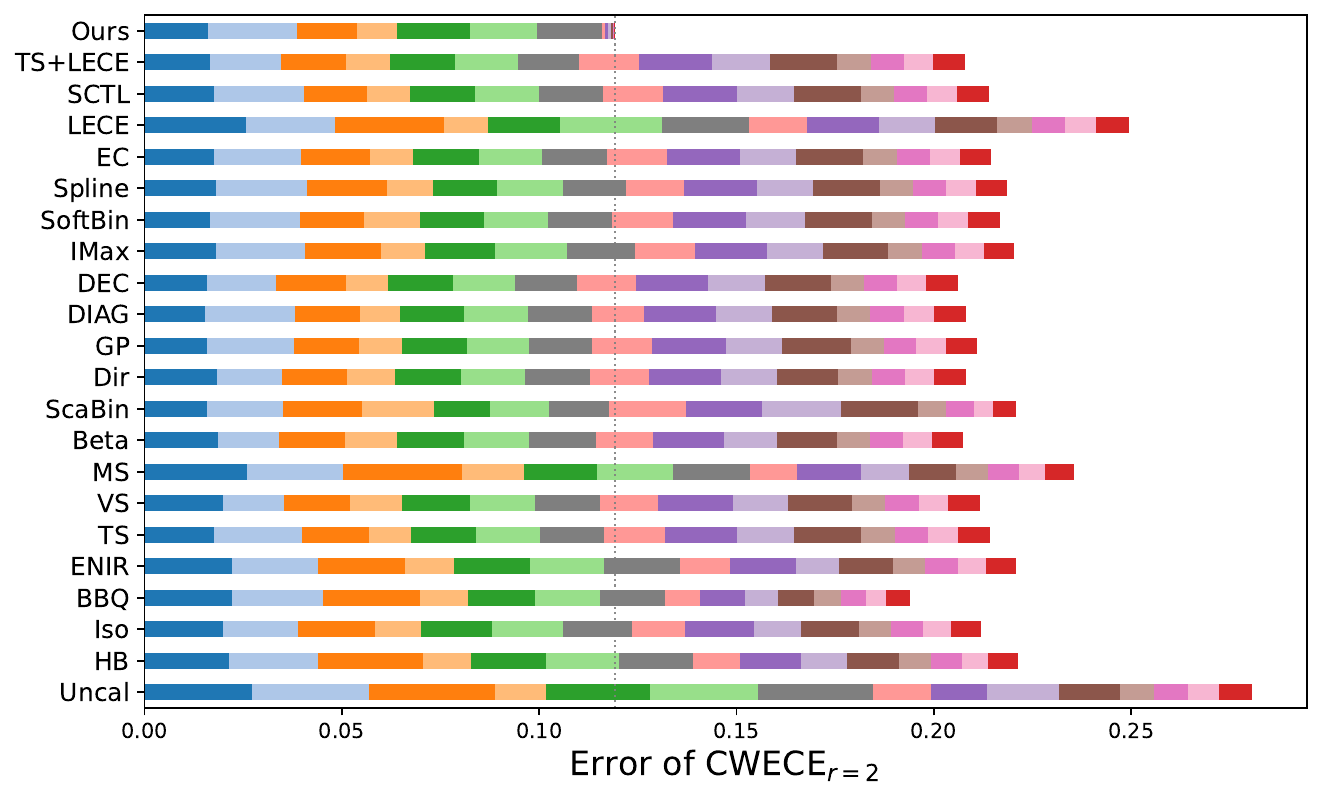}
    \end{minipage}        
    \\[5pt]    
    \begin{minipage}{0.48\textwidth}
        \includegraphics[width=\linewidth]{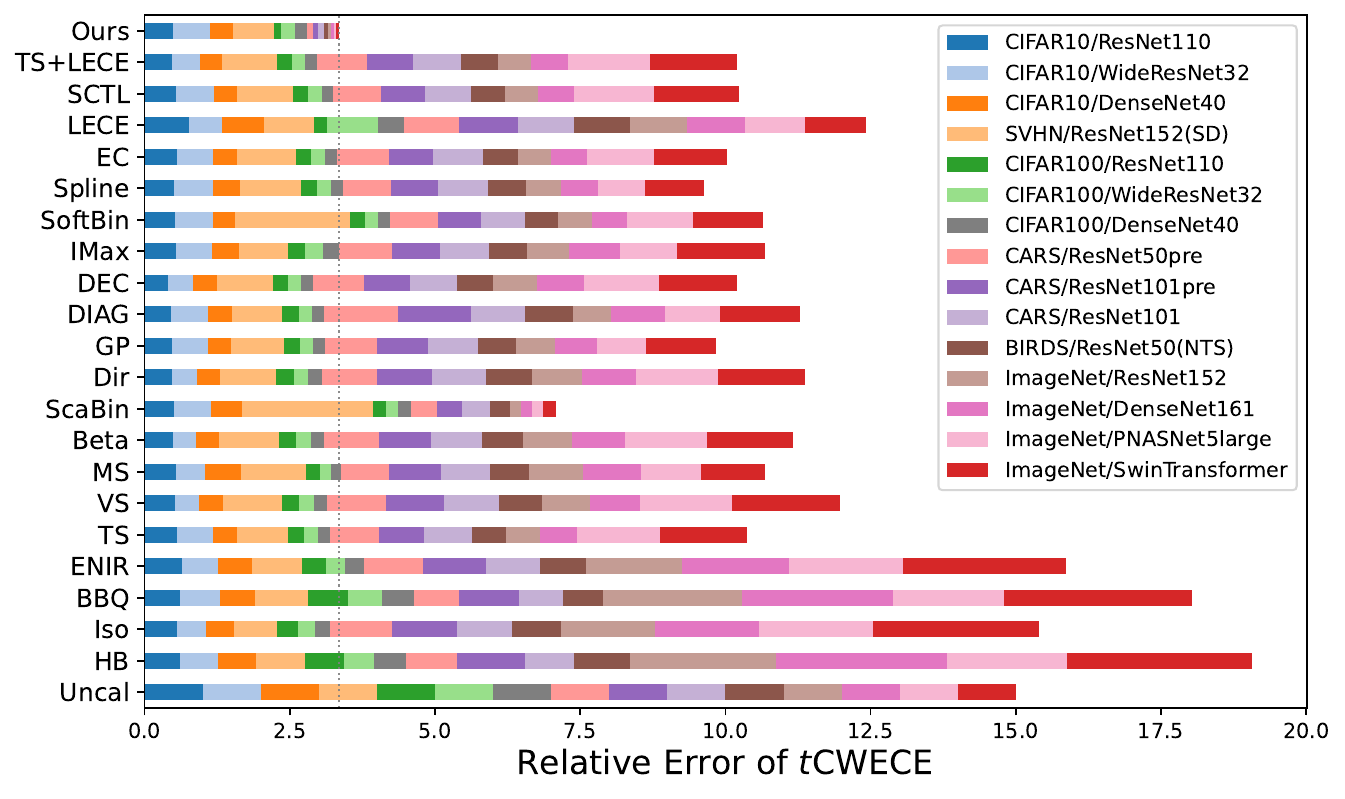}
    \end{minipage}
    \begin{minipage}{0.48\textwidth}
        \includegraphics[width=\linewidth]{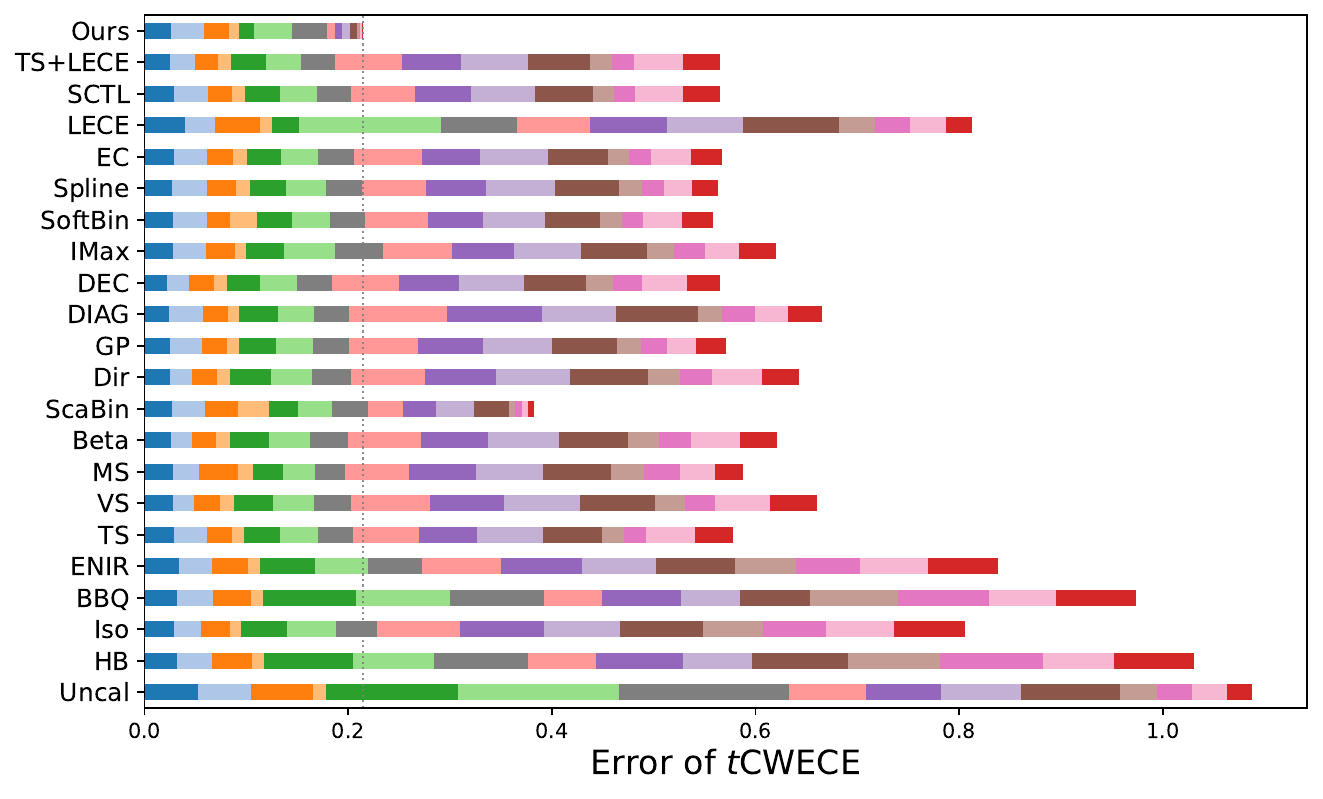}
    \end{minipage}       
    \\[5pt]    
    \begin{minipage}{0.48\textwidth}
        \includegraphics[width=\linewidth]{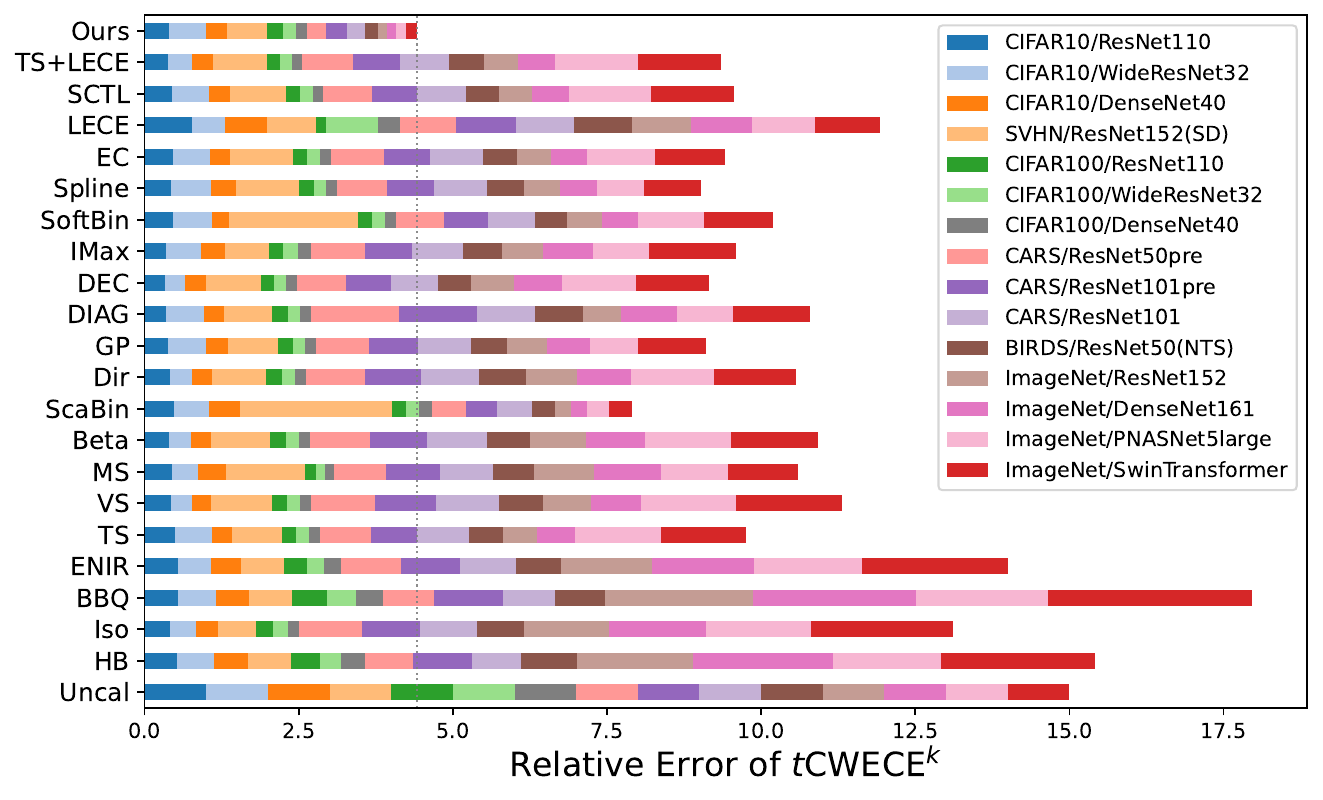}
    \end{minipage}
    \begin{minipage}{0.48\textwidth}
        \includegraphics[width=\linewidth]{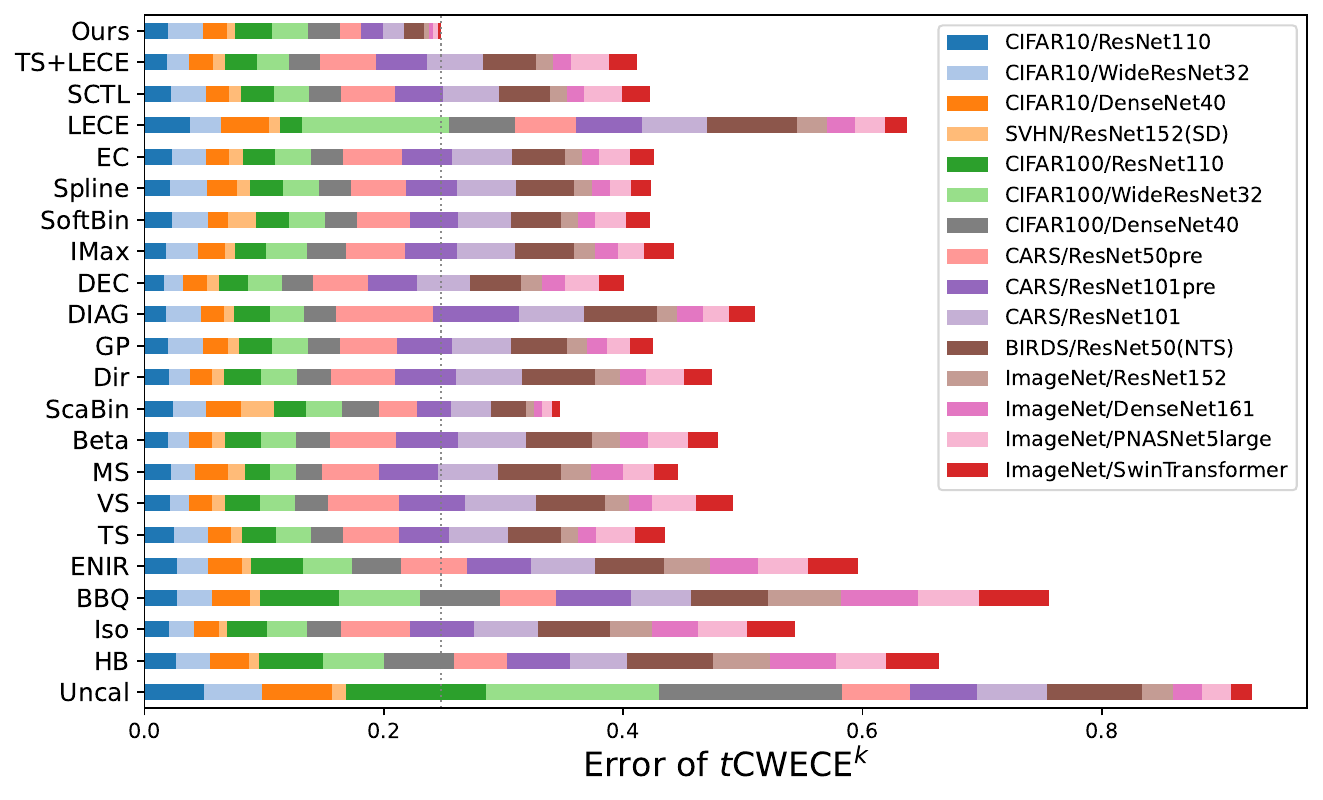}
    \end{minipage}
    \\[5pt]    
    \begin{minipage}{0.48\textwidth}
        \includegraphics[width=\linewidth]{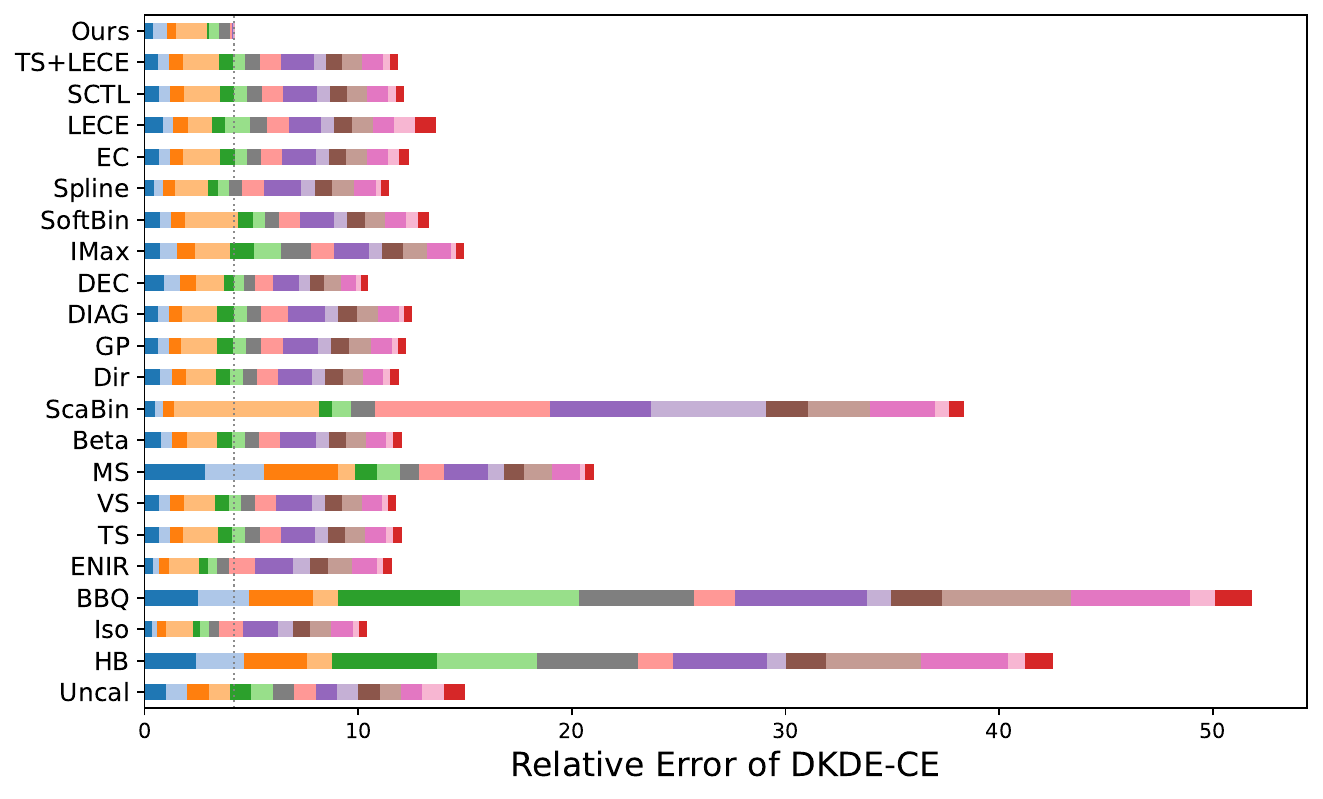}
    \end{minipage}
    \begin{minipage}{0.48\textwidth}
        \includegraphics[width=\linewidth]{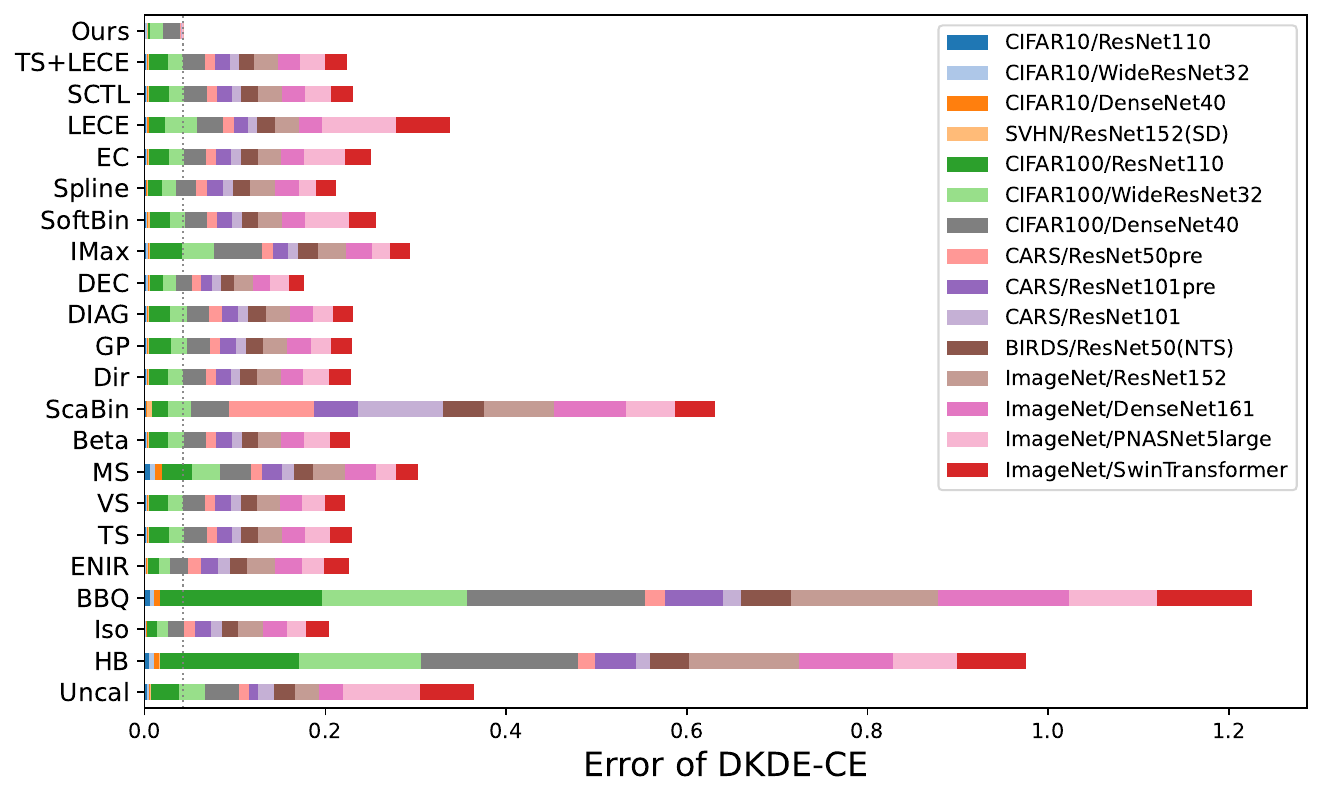}
    \end{minipage}    
    \caption{Metric-specific relative/absolute calibration errors across all tasks – Part IV}
\end{figure}

\clearpage
\vspace*{-2cm}
\begin{figure}[!t]
    \centering
    \begin{minipage}{0.48\textwidth}
        \includegraphics[width=\linewidth]{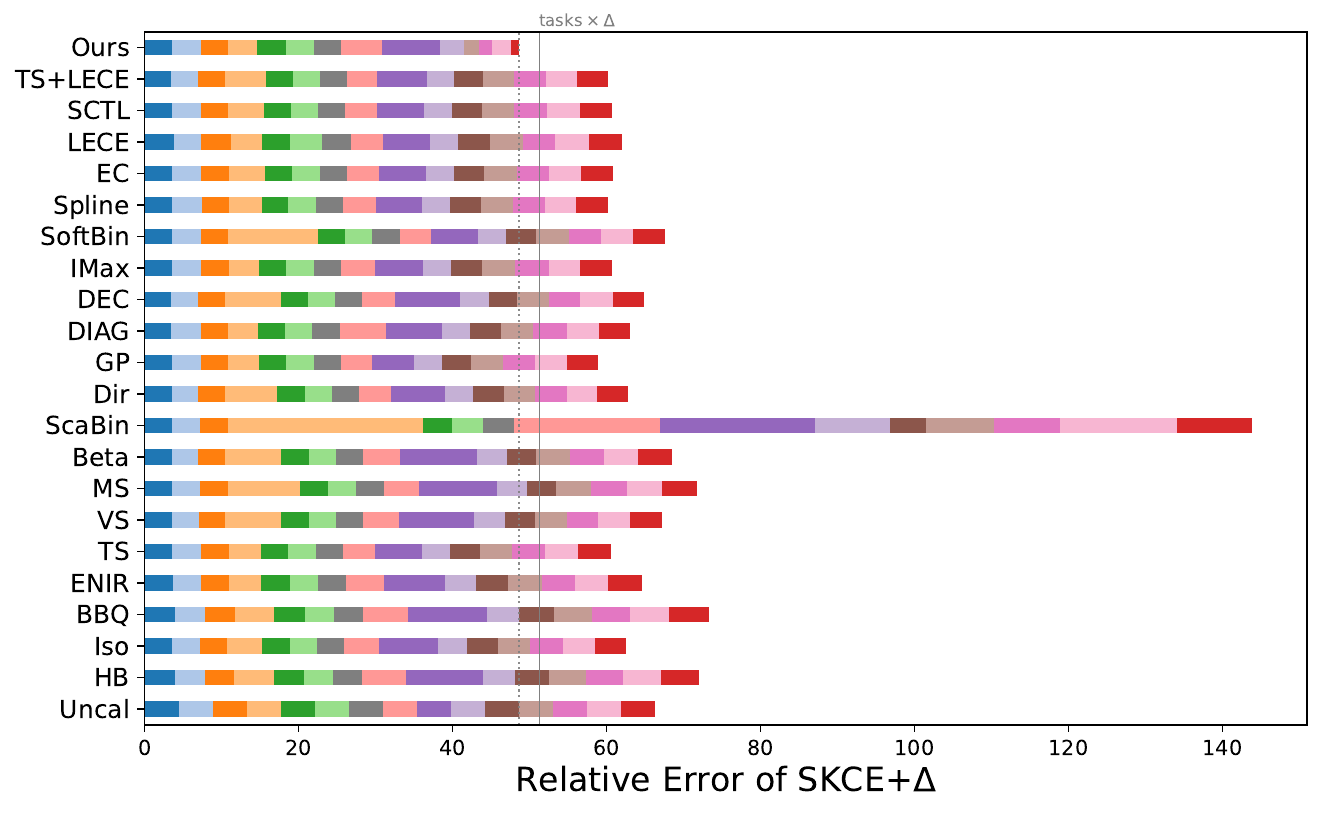}
    \end{minipage}
    \begin{minipage}{0.48\textwidth}
        \includegraphics[width=\linewidth]{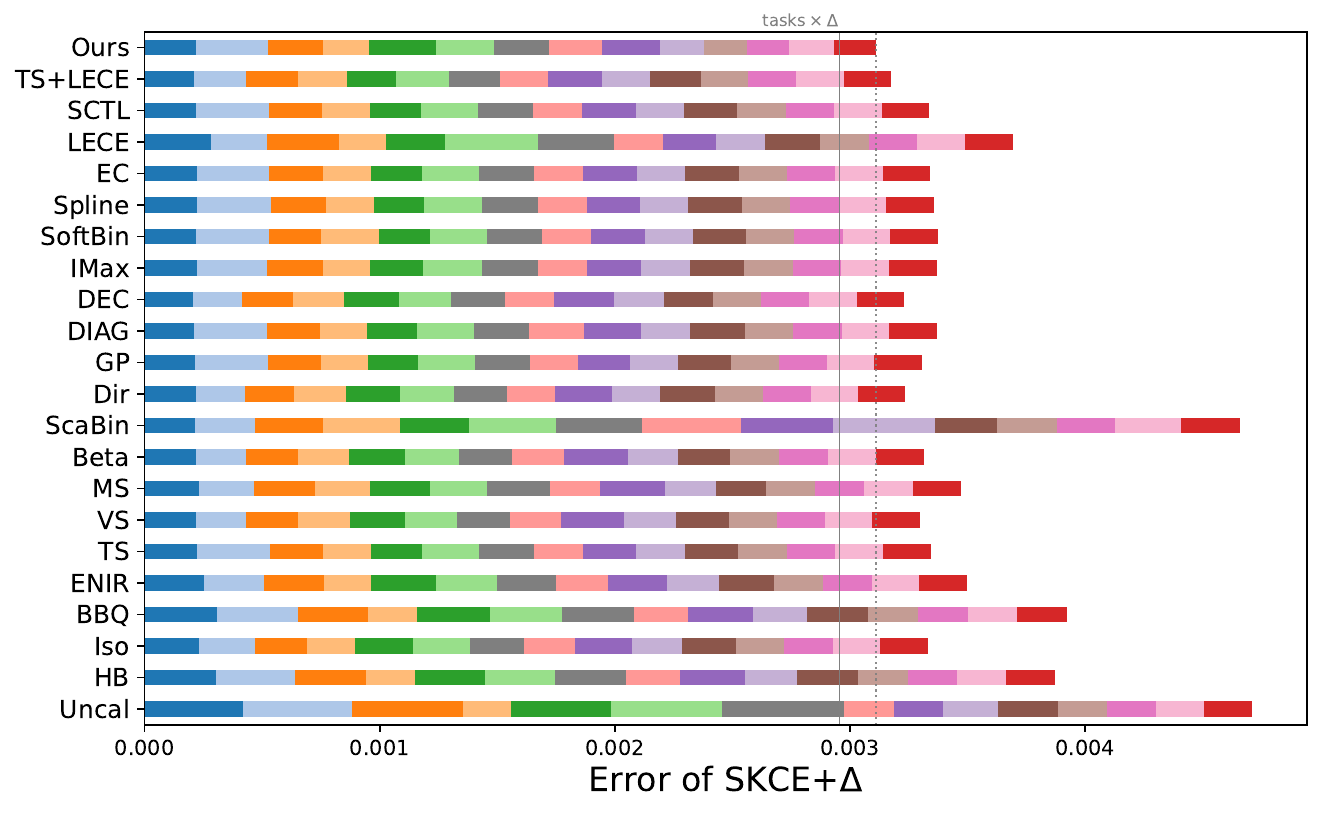}
    \end{minipage}                
    \caption{Metric-specific relative/absolute calibration errors across all tasks – Part V}
\end{figure}
}

\begin{landscape}
\section{Overall Comparison by ARE and AE across All Metrics for Different Methods}
\label{sec:apdx-overall_are_ae_number}
{\mdseries

\begin{table}[H]
\centering
\caption{Overall Comparison by Average Relative Calibration Error (ARE) (Best in \textcolor{red}{Red}, Second-best in \textcolor{blue}{Blue})}
\setlength{\tabcolsep}{2pt}
\resizebox{\linewidth}{!}{

}
\end{table}
}
\end{landscape}

\section{Sample Reliability Diagrams from ImageNet-SwinTransformer Experiment}
\label{sec:apdx-reliabilitydiagrams}
{\mdseries
\begin{figure}[H]
\centering
\includegraphics[width=0.95\textwidth]{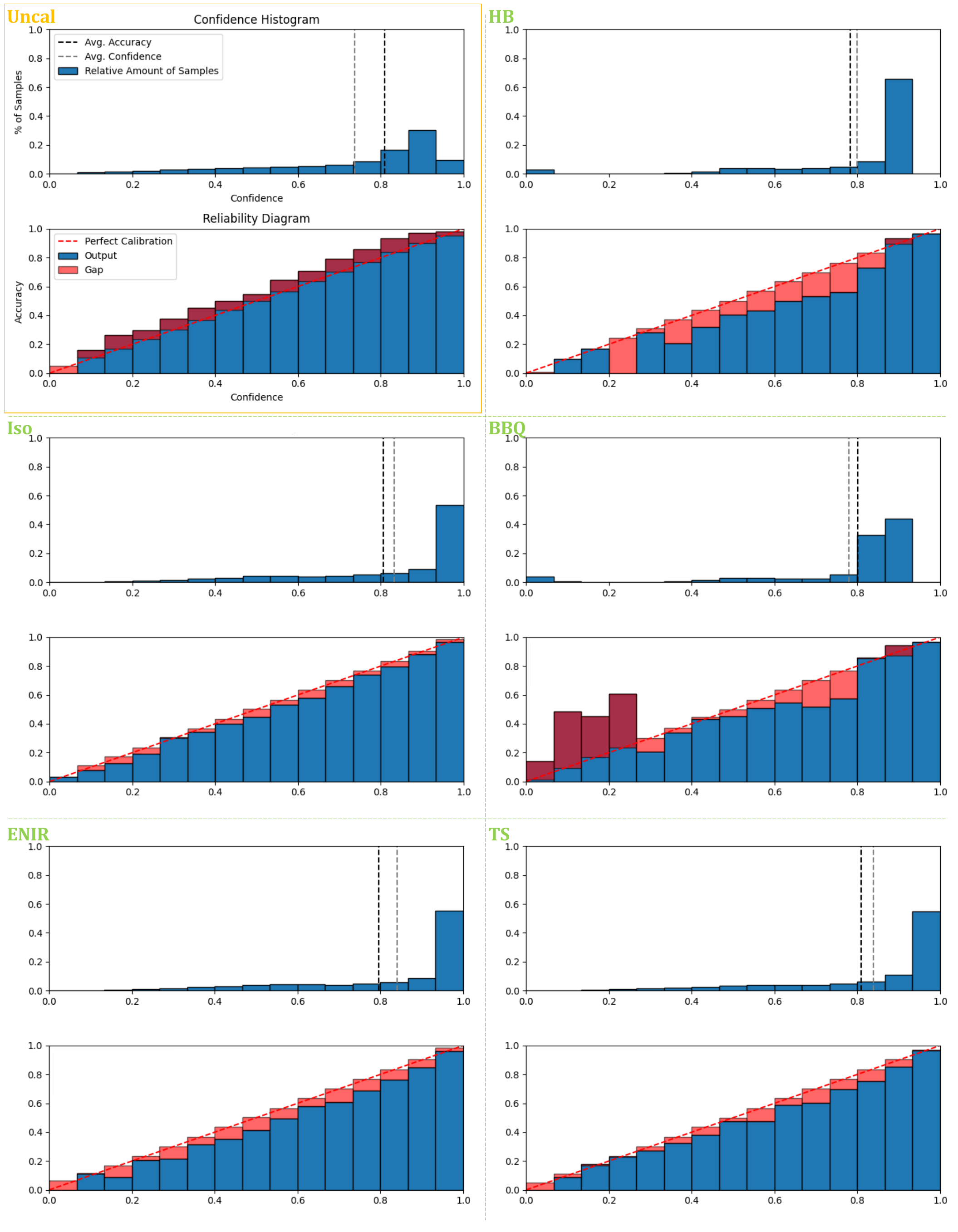}
\caption{Reliability Diagrams - Part I}
\label{reliability1}
\end{figure}
}
\clearpage
{\mdseries
\begin{figure}[H]
\centering
\includegraphics[width=0.95\textwidth]{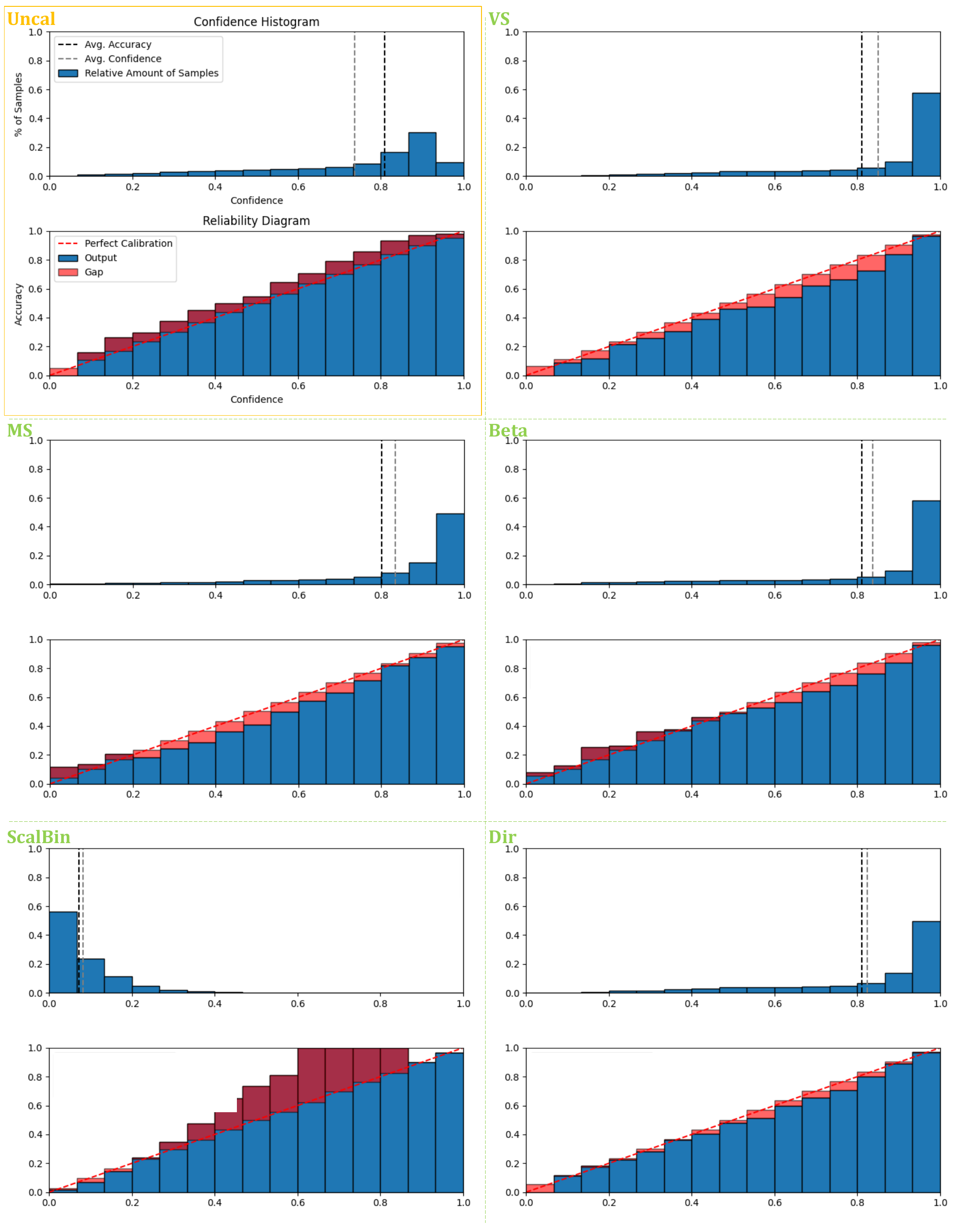}
\caption{Reliability Diagrams - Part II}
\label{reliability2}
\end{figure}

\begin{figure}[H]
\centering
\includegraphics[width=0.95\textwidth]{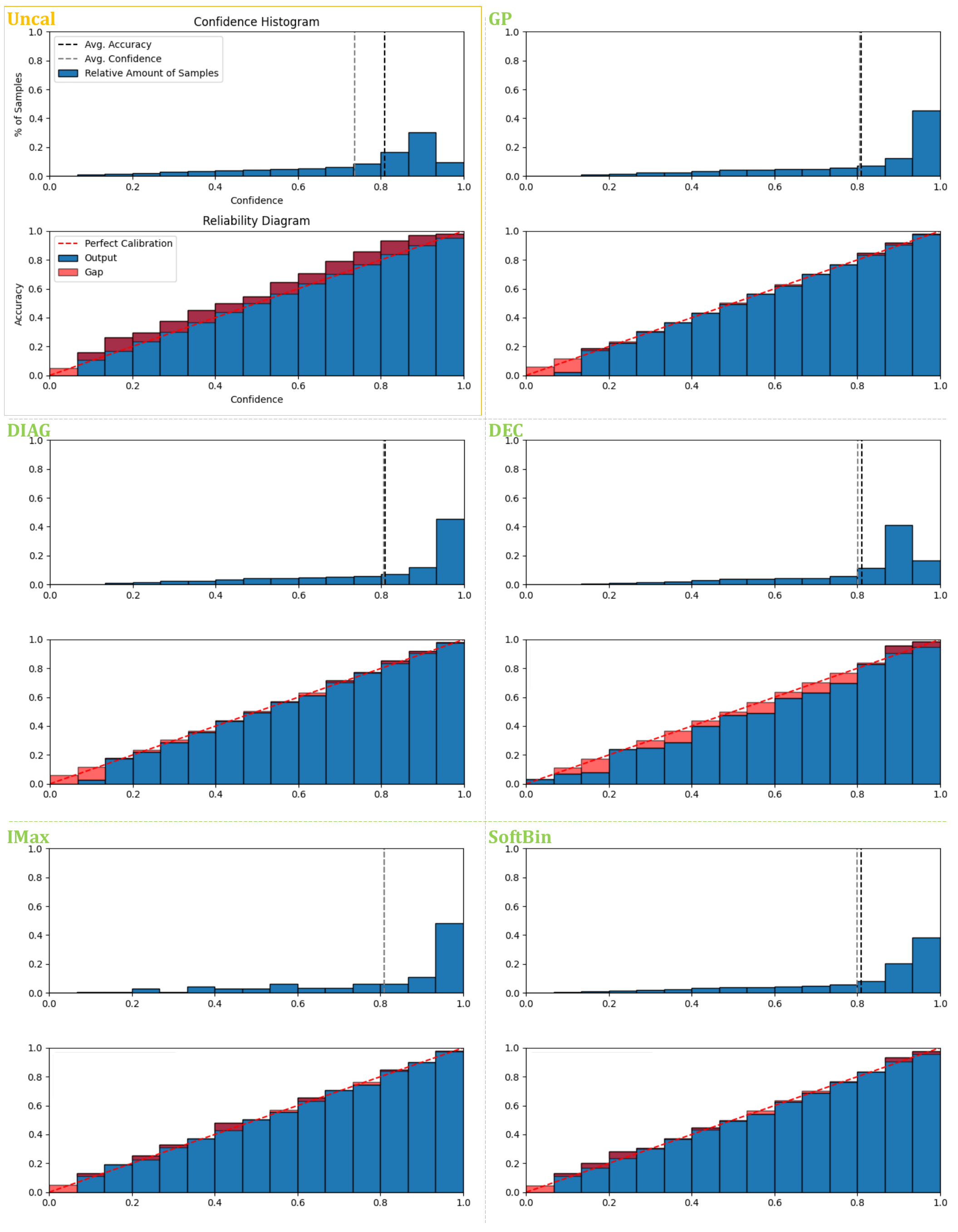}
\caption{Reliability Diagrams - Part III}
\label{reliability3}
\end{figure}

\begin{figure}[H]
\centering
\includegraphics[width=0.95\textwidth]{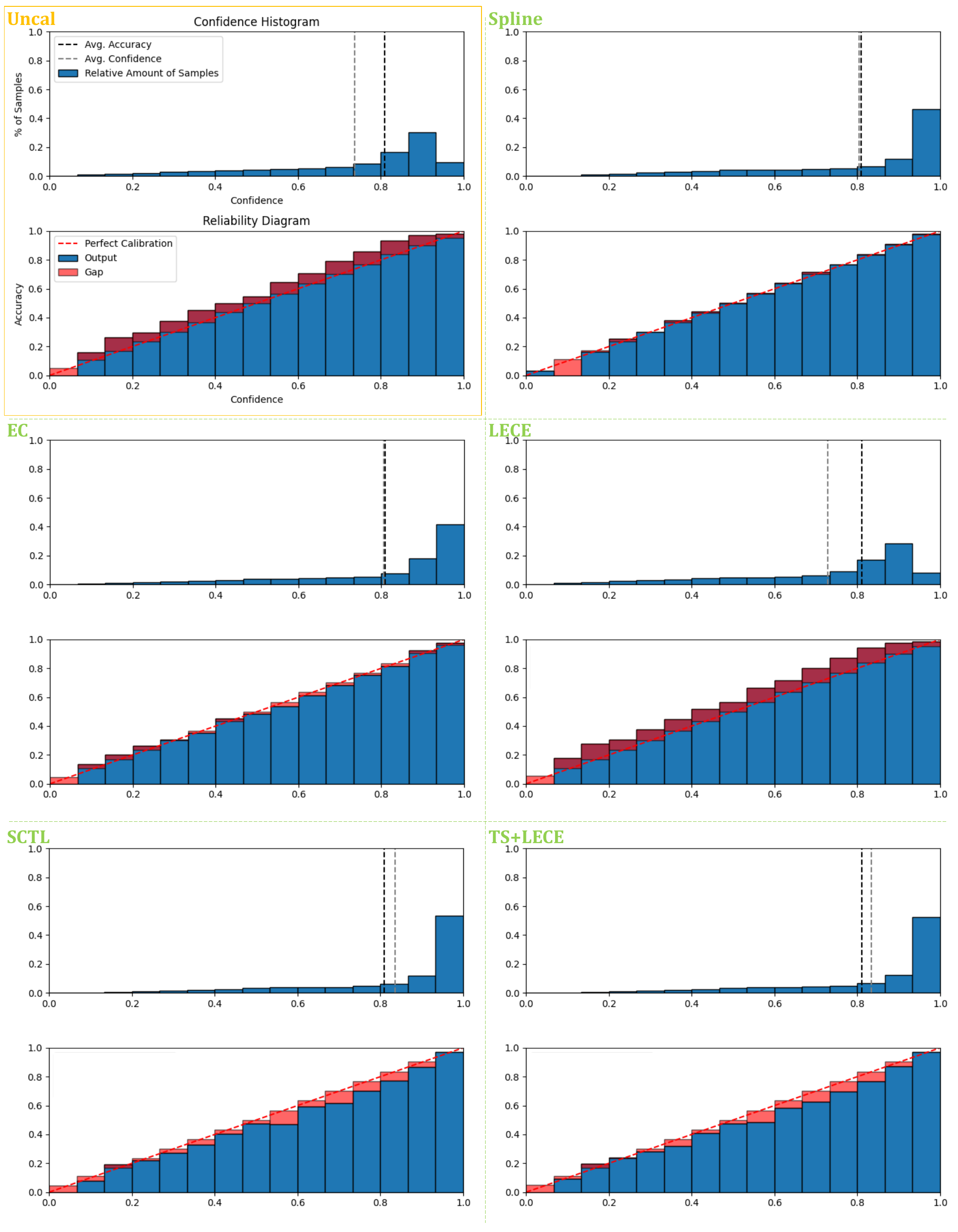}
\caption{Reliability Diagrams - Part IV}
\label{reliability4}
\end{figure}

\begin{figure}[H]
\centering
\includegraphics[width=0.95\textwidth]{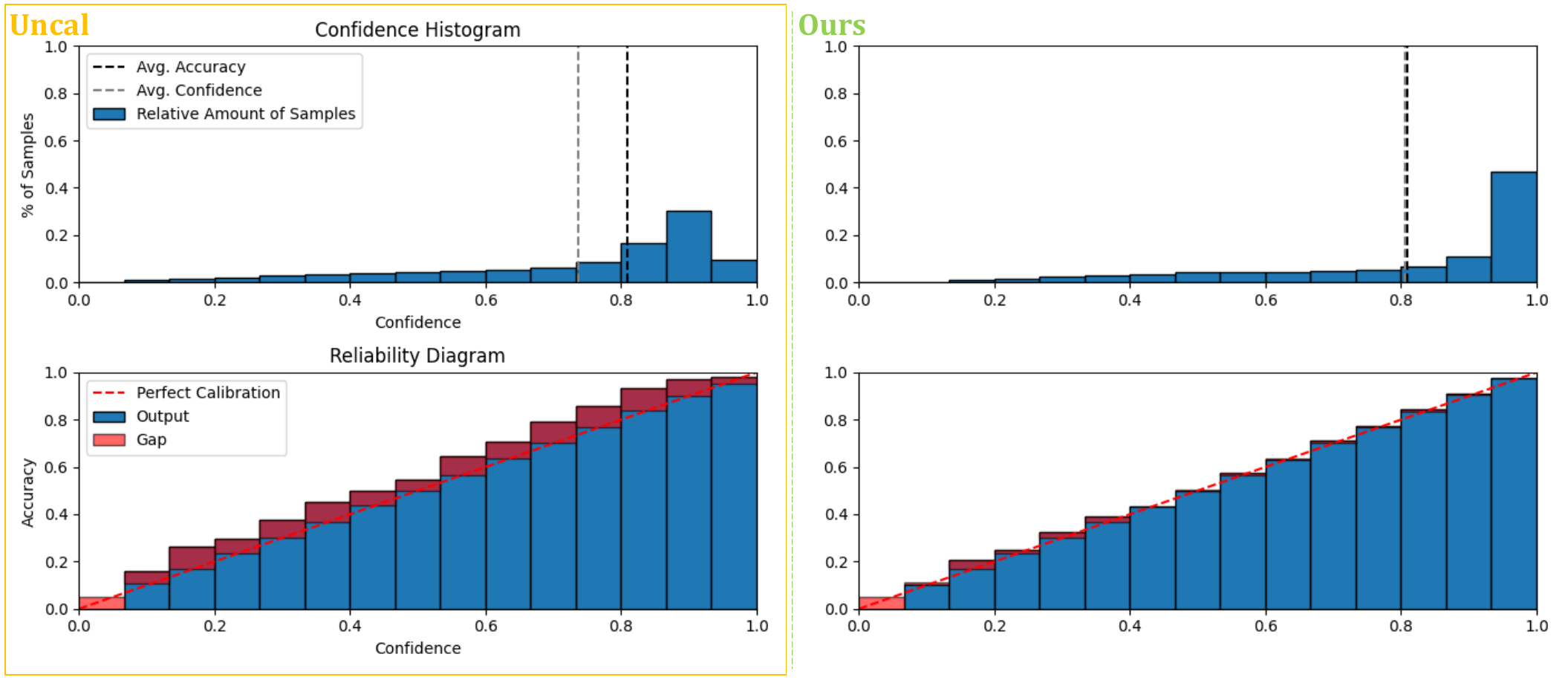}
\caption{Reliability Diagrams - Part V}
\label{reliability5}
\end{figure}
}
\clearpage

\section{Results Comparing Our Objective and Proper Scoring Rules}
\label{sec:apdx-psrcomparison}
{\mdseries

\begin{table}[H]
\centering
\caption{Metric-Specific Results across All Tasks - Part I}
\label{tab:apdx-psrcomparison1}
%
}%
\end{minipage}%
\vspace{5pt}
\hspace{5pt}
\end{tabular}
\end{tabular}
\end{table}

\subsection{Further Remarks on Future Work}
\label{sec:apdx-futurework}

While this study mainly focuses on building a calibrator, our equivalent formulation of bounded calibration error also offers the potential for developing canonical evaluation metrics. In contrast, current metrics are mostly non-canonical, and some canonical ones can suffer from the curse of dimensionality due to high-dimensional distribution estimation, which is avoided in our error statistics. However, given the plethora of existing metrics, systematically exploring their differences and advantages is beyond the scope of this study and is reserved for future work.

Lastly, our experiments were conducted for calibrating supervised discriminative deep networks. Recently, significant attention has been given to generative foundation models, such as large language models (LLMs). It has been highlighted that calibration biases in LLMs are closely related to model hallucination and generation quality \cite{no.209,no.210,no.211,no.212}. Specific training stages in foundation models, such as instruction tuning, can negatively impact calibration \cite{no.209}. Accordingly, some traditional calibration techniques, such as temperature scaling, have been successfully extended to suit the calibration demands of these large generative models \cite{no.210,no.212}. Our research potentially offers similar prospects for applying calibration in these models, which we plan to investigate in future studies.


}

\end{document}